\documentclass{article}
\usepackage[numbers]{natbib}

\usepackage[margin=1in]{geometry}
\usepackage{tikzit}
% TiKZ style file generated by TikZiT. You may edit this file manually,
% but some things (e.g. comments) may be overwritten. To be readable in
% TikZiT, the only non-comment lines must be of the form:
% \tikzstyle{NAME}=[PROPERTY LIST]

% Node styles
\tikzstyle{new style 0}=[fill=white, draw=black, shape=circle]

% Edge styles
\tikzstyle{new edge style 0}=[->]

\usepackage[utf8]{inputenc} % allow utf-8 input
\usepackage[T1]{fontenc}    % use 8-bit T1 fonts
\usepackage{hyperref}      % hyperlinks
\usepackage{url}            % simple URL typesetting

\usepackage{booktabs}       % professional-quality tables
\usepackage{amsfonts}       % blackboard math symbols
\usepackage{nicefrac}       % compact symbols for 1/2, etc.
\usepackage{microtype}      % microtypography
\usepackage{xcolor}         % colors
\usepackage{enumerate}
\usepackage{bm}
\usepackage{bbm}
\usepackage{amsmath}
\usepackage{amssymb}
\usepackage{pifont}
\usepackage{amsthm}
\usepackage{graphicx}
\usepackage{subcaption}
\usepackage{mathtools}

\usepackage{algorithm}
\usepackage{algpseudocode}

\usepackage{multicol}
\usepackage{diagbox}
\usepackage[capitalise]{cleveref}

\newtheorem{theorem}{Theorem}
\newtheorem{definition}{Definition}
\newtheorem{proposition}{Proposition}
\newtheorem{lemma}{Lemma}
\newtheorem{assumption}{Assumption}
\newtheorem{remark}{Remark}

\usepackage{wrapfig}

\usepackage{enumitem} 
\usepackage{color}
\usepackage{hyperref}
\usepackage{url}
\hypersetup{
    colorlinks,
    linkcolor={red!80!black},
    citecolor={blue!90!black},
    urlcolor={blue!90!black}
}
% \usepackage[T1]{fontenc}
% \usepackage{tgbonum}
%%%%% NEW MATH DEFINITIONS %%%%%

\usepackage{amsmath,amsfonts,bm}

% Mark sections of captions for referring to divisions of figures

% Highlight a newly defined term

% Figure reference, lower-case.

% Figure reference, capital. For start of sentence

% Section reference, lower-case.

% Section reference, capital.

% Reference to two sections.

% Reference to three sections.

% Reference to an equation, lower-case.
\def\eqref#1{equation~\ref{#1}}
% Reference to an equation, upper case

% A raw reference to an equation---avoid using if possible

% Reference to a chapter, lower-case.

% Reference to an equation, upper case.

% Reference to a range of chapters

% Reference to an algorithm, lower-case.

% Reference to an algorithm, upper case.

% Reference to a part, lower case

% Reference to a part, upper case

\def\1{\bm{1}}

% Random variables

% rm is already a command, just don't name any random variables m

% Random vectors

% Elements of random vectors

% Random matrices
\def\rmA{{\mathbf{A}}}
\def\rmB{{\mathbf{B}}}
\def\rmC{{\mathbf{C}}}
\def\rmD{{\mathbf{D}}}
\def\rmE{{\mathbf{E}}}
\def\rmF{{\mathbf{F}}}
\def\rmG{{\mathbf{G}}}
\def\rmH{{\mathbf{H}}}
\def\rmI{{\mathbf{I}}}

\def\rmP{{\mathbf{P}}}

\def\rmV{{\mathbf{V}}}

\def\rmX{{\mathbf{X}}}
\def\rmY{{\mathbf{Y}}}

% Elements of random matrices

% Vectors
\def\vzero{{\bm{0}}}
\def\vone{{\bm{1}}}

\def\vb{{\bm{b}}}

\def\vr{{\bm{r}}}

\def\vu{{\bm{u}}}
\def\vv{{\bm{v}}}
\def\vw{{\bm{w}}}
\def\vx{{\bm{x}}}
\def\vy{{\bm{y}}}
\def\vz{{\bm{z}}}

% Elements of vectors

% Matrix

\def\mP{{\bm{P}}}

% Tensor
\DeclareMathAlphabet{\mathsfit}{\encodingdefault}{\sfdefault}{m}{sl}
\SetMathAlphabet{\mathsfit}{bold}{\encodingdefault}{\sfdefault}{bx}{n}

% Graph

\def\gM{{\mathcal{M}}}

\def\gS{{\mathcal{S}}}

% Sets

\def\sB{{\mathbb{B}}}
\def\sC{{\mathbb{C}}}

% Don't use a set called E, because this would be the same as our symbol
% for expectation.
\def\sF{{\mathbb{F}}}
\def\sG{{\mathbb{G}}}

\def\sJ{{\mathbb{J}}}

\def\sP{{\mathbb{P}}}

\def\sS{{\mathbb{S}}}
\def\sT{{\mathbb{T}}}

\def\sV{{\mathbb{V}}}

\def\sZ{{\mathbb{Z}}}

% Entries of a matrix

% entries of a tensor
% Same font as tensor, without \bm wrapper

% The true underlying data generating distribution

% The empirical distribution defined by the training set

% The model distribution

% Stochastic autoencoder distributions

 % Laplace distribution

% Wolfram Mathworld says $L^2$ is for function spaces and $\ell^2$ is for vectors
% But then they seem to use $L^2$ for vectors throughout the site, and so does
% wikipedia.

 % See usage in notation.tex. Chosen to match Daphne's book.

% \title{Efficient Pareto front searching on Multi-objective MDP}
% \title{Precise Pareto front searching on Multi-objective MDP with One-time Policy Optimization}
\title{How to Find the Exact Pareto Front for Multi-Objective MDPs?}
% \author{Yining Li, Peizhong Ju, Ness Shroff}
\author{
  Yining Li\textsuperscript{1}, Peizhong Ju\textsuperscript{2}, Ness Shroff\textsuperscript{1,3} \\
  \textsuperscript{1}Department of Electrical and Computer Engineering, The Ohio State University \\
  \textsuperscript{2}Department of Computer Science, University of Kentucky \\
  \textsuperscript{3}Department of Computer Science and Engineering, The Ohio State University
}
\date{}

\begin{document}

\newcommand{\bc}{\mathbf{c}}
\newcommand{\bo}{\mathbf{o}}
\newcommand{\br}{\mathbf{r}}
\newcommand{\bw}{\mathbf{w}}
\newcommand{\bx}{\mathbf{x}}
\newcommand{\btheta}{\mathbf{\theta}}

\newcommand{\bB}{\mathbf{B}}
\newcommand{\bR}{\mathbf{R}}
\newcommand{\bP}{\mathbf{P}}
\newcommand{\bV}{\mathbf{V}}

\newcommand{\mcalA}{\mathcal{A}}
\newcommand{\mcalB}{\mathcal{B}}
\newcommand{\mcalC}{\mathcal{C}}
\newcommand{\mcalD}{\mathcal{D}}
\newcommand{\mcalE}{\mathcal{E}}
\newcommand{\mcalF}{\mathcal{F}}
\newcommand{\mcalG}{\mathcal{G}}
\newcommand{\mcalH}{\mathcal{H}}
\newcommand{\mcalI}{\mathcal{I}}
\newcommand{\mcalJ}{\mathcal{J}}
\newcommand{\mcalK}{\mathcal{K}}
\newcommand{\mcalL}{\mathcal{L}}
\newcommand{\mcalM}{\mathcal{M}}
\newcommand{\mcalN}{\mathcal{N}}
\newcommand{\mcalO}{\mathcal{O}}
\newcommand{\mcalP}{\mathcal{P}}
\newcommand{\mcalQ}{\mathcal{Q}}
\newcommand{\mcalR}{\mathcal{R}}
\newcommand{\mcalS}{\mathcal{S}}
\newcommand{\mcalT}{\mathcal{T}}
\newcommand{\mcalU}{\mathcal{U}}
\newcommand{\mcalV}{\mathcal{V}}
\newcommand{\mcalW}{\mathcal{W}}
\newcommand{\mcalX}{\mathcal{X}}
\newcommand{\mcalY}{\mathcal{Y}}
\newcommand{\mcalZ}{\mathcal{Z}}

\newcommand{\PiD}{\Pi_D}

\newcommand{\mbE}{\mathbb{E}}

\newcommand{\hatp}{\hat{p}}
\newcommand{\hatJ}{\hat{J}}

\newcommand{\bropt}{\mathbf{r}^*}
\newcommand{\ropt}{r^*}
\newcommand{\piopt}{\pi^*}
\newcommand{\piref}{\pi_{\text{ref}}}

\newcommand{\DKL}{D_{\text{KL}}}

\newcommand{\Vpi}{V_{\pi}}
\newcommand{\Vpit}[1]{V_{\pi_{#1}}}
\newcommand{\Vpiwt}[1]{V_{\pi_{#1}}^w}
\newcommand{\pitheta}{\pi_{\theta}}
\newcommand{\pithetat}[1]{\pi_{\theta_[#1]}}
\newcommand{\pithetawt}[1]{\pi_{\theta_[#1]}^{w}}

\newcommand{\dmupi}{d_{\mu}^{\pi}}
\newcommand{\bdmupi}{\bar{d}_{\mu}^{\pi}}
\newcommand{\Jmupi}{J_{\mu}^{\pi}}

\newcommand{\CCS}{\text{CCS}}
\newcommand{\conv}{\mathrm{Conv}}
\newcommand{\N}{\mathrm{N}}
\newcommand{\Npi}{\mathrm{N}_{\pi}}
\newcommand{\nd}{\text{ND}}
\newcommand{\pareto}{\mathbb{P}}
\newcommand{\LHS}{\text{LHS}}
\newcommand{\RHS}{\text{RHS}}
\newcommand{\rank}{\mathrm{rank}}
\newcommand{\nullsp}{\mathrm{null}}
\newcommand{\push}{\text{push}}
\newcommand{\pop}{\text{pop}}
\newcommand{\add}{\text{add}}

\newcommand*\circled[1]{\tikz[baseline=(char.base)]{
            \node[shape=circle,draw,inner sep=0.5pt] (char) {#1};}}

\maketitle

\begin{abstract}
Multi-Objective Markov Decision Processes (MO-MDPs) are receiving increasing attention, as real-world decision-making problems often involve conflicting objectives that cannot be addressed by a single-objective MDP. 
The Pareto front identifies the set of policies that cannot be dominated, providing a foundation for finding Pareto optimal solutions that can efficiently adapt to various preferences.
However, finding the Pareto front is a highly challenging problem. Most existing methods either (i) rely on traversing the \emph{continuous preference space}, which is impractical and results in approximations that are difficult to evaluate against the true Pareto front, or (ii) focus solely on deterministic Pareto optimal policies, from which there are no known techniques to characterize the full Pareto front. Moreover, finding the structure of the Pareto front itself remains unclear even in the context of dynamic programming, where the MDP is fully known in advance.
In this work, we address the challenge of efficiently discovering the Pareto front, involving both deterministic and stochastic Pareto optimal policies.
By investigating the geometric structure of the Pareto front in MO-MDPs, we uncover a key property: the Pareto front is on the boundary of a convex polytope whose vertices all correspond to deterministic policies, and neighboring vertices of the Pareto front differ by only one state-action pair of the deterministic policy, almost surely.
This insight transforms the global comparison across all policies into a localized search among deterministic policies that differ by only one state-action pair, drastically reducing the complexity of searching for the exact Pareto front. 
We develop an efficient algorithm that identifies the vertices of the Pareto front by solving a single-objective MDP only once and then traversing the edges of the Pareto front, making it more efficient than existing methods. 
Our empirical studies demonstrate the effectiveness of our theoretical strategy in discovering the Pareto front efficiently. 
%We validate our method through both theoretical analysis and empirical evaluations, demonstrating its effectiveness in discovering the Pareto front efficiently. 

% Pareto vertices itself: solve multiple policy optimization
% Pareto vertices-> find front
% 

\end{abstract}

\section{Introduction}
In recent years, there has been growing interest in multi-objective Markov Decision Processes (MO-MDPs), where the reward involves multiple implicitly conflicting objectives~\citep{xu2020non,rame2024rewarded}.
Traditional methods developed for single-objective MDPs are not directly applicable to MO-MDPs, as they fail to address the trade-offs between conflicting objectives. 
Consequently, attention has shifted toward developing approaches for finding Pareto optimal policies, policies whose returns cannot be dominated by any other policies, in MO-MDPs and, more broadly, in multi-objective Reinforcement Learning (RL)~\citep{abdolmaleki2020distributional,yang2019provably,lu2022multi,zhou2024finite}.

Finding Pareto optimal policies based on specific preferences is often sensitive to the scales of the objectives and struggles to adapt to changing preferences ~\citep{zhou2024finite,qiu2024traversing}. 
Accurately capturing true preferences across multiple objectives is challenging, as the balance between objectives can be distorted by differences in scale~\citep{abdolmaleki2020distributional, kim2024scale}.
When one reward objective has a much larger scale than others, even assigning a small weight to it cannot prevent it from dominating the scalarized reward, leading to an undesired Pareto optimal policy.
In such cases, multiple adjustments to the preferences may be necessary to achieve the desired balance across objectives.
Additionally, preferences may shift over time, requiring quick adaption to new preferences~\citep{mossalam2016multi,yang2019generalized,jang2023personalized}.
In such cases, solving for a Pareto optimal policy from scratch using a scalarization-based approach does not meet the need for fast adaptability.

% find a whole Pareto front
On the other hand, the Pareto front which consists of all Pareto optimal policies directly represents the trade-offs between multiple objectives, thus avoiding sensitivity to differences in objective scales. 
Once the Pareto front is obtained, the Pareto optimal policies corresponding to various preference vectors can be selected from the Pareto front without recalculating from scratch.

% two categories:
The Pareto front approximation methods can be broadly divided into two categories. 
The first straightforward approach derives the Pareto front by solving single-objective problems obtained from scalarizing the multi-objective MDP for each preference and combining the resulting Pareto optimal policies~\citep{qiu2024traversing}.
% Although state-of-the-art methods in the field of MO-RL leverage the similarity between Q-functions or policy representations of Pareto optimal policies for nearby preferences to avoid solving the optimal policy from scratch for each preference, thus reducing complexity~\citep{parisi2014policy,yang2019generalized,lu2022multi}, these methods still require traversing the continuous preference space to approximate the Pareto front.
Although state-of-the-art methods in the field of MO-RL leverage the similarity between Q-functions or policy representations of Pareto optimal policies for nearby preferences to avoid solving the optimal policy from scratch for each preference, thus reducing complexity~\citep{parisi2014policy,yang2019generalized,lu2022multi}, these methods still require sampling the continuous preference space to approximate the Pareto front. 
In cases where the Pareto front contains low-dimensional faces, sampling preference vectors on those faces makes it nearly impossible for approximation methods to identify such edge cases and accurately reconstruct the exact Pareto front.
Another Pareto front finding method iteratively finds the preference that improves the current Pareto front the most to avoid blind traversing the preference space~\citep{roijers2016multi,mossalam2016multi}.  
% introduced the Optimistic Linear Support (OLS) algorithm, which iteratively finds the preference that improves the current Pareto front the most and gets the new Pareto optimal policy via a single-objective solver for that updated preference. 
% However, as the updated preference can change significantly between iterations, the scalarized single-objective MDP is solved independently for each step. Also, OLS and similar methods can only identify deterministic policies on the Pareto front.
However, this method only identifies deterministic policies on the Pareto front, which does not capture the entire Pareto front.
% This method does not capture the entire Pareto front, making it challenging to derive the complete Pareto front solely from these policies.
% The Pareto front primarily consists of stochastic policies, with deterministic policies forming only a small subset. 
In 2D (2-objectives) cases, constructing the whole Pareto front from deterministic Pareto optimal policies simply connects adjacent deterministic points by a straight line. However, in general cases, it is not that simple.
For example, in a 3D (3-objectives) case shown in \cref{fig:paretofront_MOMDP},  deriving the entire Pareto front (as in \cref{fig:pareto_faces_MOMDP})  from the vertices corresponding to deterministic Pareto optimal policies (as in \cref{fig:pareto_vertices_MOMDP}) alone still requires significant effort. 
% For example, in a 3D (3-objectives) case shown in \cref{fig:paretofront_MOMDP}, even though the Pareto front lies on the boundary of a convex set, deriving the entire Pareto front (as in \cref{fig:pareto_faces_MOMDP})  from the vertices corresponding to deterministic Pareto optimal policies (as in \cref{fig:pareto_vertices_MOMDP}) alone still requires many efforts. 
% Calculating the convex hull of all vertices is computationally expensive, and determining whether the faces of the convex hull are Pareto optimal is especially challenging when rewards have high dimensionality. 
% Current literature has mainly focused on the properties of Pareto optimal deterministic policies but has not explored the full structure of the Pareto front in MO-MDPs. 
Therefore, even in scenarios where the MDP is fully known, deriving the entire Pareto front through dynamic programming remains complex. 
This raises a fundamental question: 
\begin{center}
    \textit{How can we efficiently obtain the full exact Pareto front in MO-MDPs?}
\end{center}

In this work, we address the problem of efficiently finding the Pareto front in MO-MDPs. 
We explore the geometric properties of the Pareto front in MO-MDP and reveal a surprising property of deterministic policies on the Pareto front: the Pareto front lies on the boundary of the convex polytope, and the neighboring deterministic policies, corresponding to neighboring vertices of the Pareto front, differ by one state-action pair almost surely.
% We dive into the geometric properties of the Pareto front in MO-MDP and reveal several key insights of the Pareto optimal polices.
Building on this key insight, we propose an efficient algorithm for finding the Pareto front. 
% Instead of traversing over the preference space and solving a single-objective planning problem for each iteration,  our method identifies the exact Pareto front by moving along its edges and finding all Pareto front faces from the convex hull formed by neighboring policies. 
% This approach only requires solving a single-objective MDP in the first iteration, and the full Pareto front is discovered within limited iterations where the number of iterations equals to the number of vertices on the Pareto front.

Specifically, our contributions are as follows:
\begin{enumerate}
    \item \textbf{Geometric Properties of the Pareto Front}: We reveal several key geometric properties of the Pareto front, which form the foundation of our efficient Pareto front searching algorithm. First, we demonstrate that the Pareto front lies on the boundary of a convex polytope, with its vertices corresponding to deterministic policies. Notably, any neighboring policies on this boundary differ by only one state-action pair, almost surely\footnote{For almost all $\rmP$ and $\vr$ in the space of valid transition kernels and rewards (with Lebesgue measure $1$), the neighboring deterministic policies on the convex polytope differs by at most one state-action pair.}. 
    Furthermore, we prove that every deterministic policy on the Pareto front is connected to at least one deterministic Pareto optimal policy. Additionally, we show that all Pareto-optimal faces incident to a deterministic policy can be efficiently extracted from the convex hull formed by that policy and its neighbors.
    
    \item \textbf{Efficient Pareto Front Searching Algorithm}: We propose an efficient Pareto front searching algorithm that can discover the exact Pareto front, while also identifying all deterministic Pareto-optimal policies as a byproduct. The proposed algorithm only requires solving the single-objective optimal policy once for initialization, and the total iteration number is the same as the number of vertices on the Pareto front.
\end{enumerate}

\begin{figure}[thbp]
\vspace{-15pt}
    \centering
    \begin{subfigure}{0.3\linewidth}
        \centering
        \includegraphics[width=\linewidth]{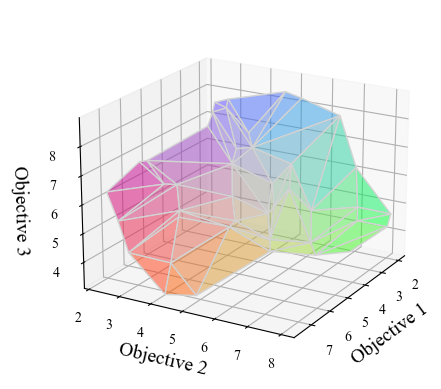}
        \caption{Pareto front}
        \label{fig:pareto_faces_MOMDP}
    \end{subfigure}
    \hspace{0.1\linewidth}
    \begin{subfigure}{0.3\linewidth}
        \centering
        \includegraphics[width=\linewidth]{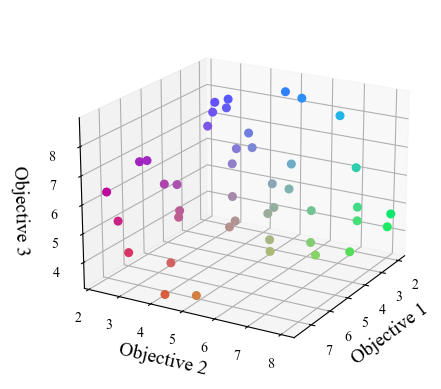}
        \caption{Vertices on Pareto front}
        \label{fig:pareto_vertices_MOMDP}
    \end{subfigure}
    \caption{Finding Pareto front and Pareto front vertices in MO-MDP}
    \label{fig:paretofront_MOMDP}
\vspace{-15pt}
\end{figure}

\section{Related work}
In the literature on both multi-objective optimization and reinforcement learning, two categories of problems are commonly considered. 
The first assumes that the (sequential) decision maker has preset or deduced preferences over multiple objectives in advance, and the goal is to find a Pareto optimal solution that performs well under these given preferences~\citep{fernando2023mitigating, chen2024three,mahapatra2020multi, xiao2024direction}. The methods addressing this problem in multi-objective MDPs are specifically referred to as single-policy methods~\citep{siddique2020learning,peschl2021moral,hwang2023promptable,zhou2024finite,qiu2024traversing}. 
The second category focuses on finding a set of (approximate) Pareto optimal solutions that are diverse enough to represent the full range of the Pareto front~\citep{liu2021profiling,zhang2023hypervolume,ye2024evolutionary}. This allows the (sequential) decision maker to choose the most preferable solution afterward. Due to its flexibility in adapting to changing preferences, this approach has received significant attention. However, finding the Pareto optimal front in the multi-objective optimization field does not take advantage of the specific structure of MO-MDPs, where multiple objectives share the same state and action spaces, as well as identical transition dynamics. Exploiting this structure in MO-MDPs could potentially lead to significant reductions in computational complexity.
This section overviews the methods to get the Pareto optimal front estimation in MO-MDPs and MO-RL.
% This section gives an overview of the methods to get the Pareto optimal front estimation in multi-objective optimization and RL area. 

% \paragraph{Multi-objective optimization}
% In a general multi-objective optimization problem without any special structure, the Pareto front estimation can be traced back to compromise programming \citep{zeleny1973compromise, steuer1986multiple}, where various Pareto optimal solutions can be obtained through solving the minimization problem over the distance between the reference point and feasible objectives scalarized with different weight vectors. However, this method only generates a Pareto optimal solution for each weight vector, and it is necessary to traverse over the weight vector space, which is computationally inefficient, to get a Pareto set estimation that nearly covers the entire range of the Pareto front. Another representative method is 
% Multi-objective Evolutionary Algorithms (MOEA), which includes NSGA-II~\citep{deb2002fast}, MOEA/D~\citep{zhang2007moea}, and NSGA-III~\citep{deb2013evolutionary}. MOEA extends the single-objective evolutionary algorithms with non-dominance criteria and can generate a set of solutions that perform well in a single run. However, MOEA is not guaranteed to reach the real Pareto front, as it only produces solutions that are not dominated by current populations and are not necessarily Pareto optimal~\citep{deb2016multi}.

% \paragraph{Multi-objective MDPs} 
As early as \cite{white1982multi}, the dynamic programming-based method was proposed to find the optimal Pareto front for MO-MDPs, though the size of the candidate non-stationary policy set grows exponentially with the time horizon. To efficiently find the vertices on the Pareto front, \cite{mossalam2016multi} introduced the Optimistic Linear Support (OLS) method, which solves a single-objective problem along the direction that improves the current set of candidate policies the most. However, OLS can only find vertices on the Pareto front and does not provide a method for constructing the entire Pareto front from these vertices, which remains non-trivial.

\cite{moffaert2014multi} proposed Pareto Q-learning, which learns a set of Pareto optimal policies in the MO-RL setting. 
Building on this, subsequent work has investigated training a universal Pareto optimal policy set, where the policy network takes preference embedding as input and outputs the corresponding Pareto optimal policy~\citep{yang2019generalized,abels2019dynamic,zhou2020provable,lu2022multi}. 
These Q-learning variants rely on the maximization of Q-function estimations over the action space, making them suitable only for discrete action spaces. Recent works have extended Q-learning with policy networks to handle continuous action spaces~\citep{xu2020prediction,basaklar2022pd}. 
While these Q-learning methods update the Q-function for each preference vector by leveraging the Q-functions of similar visited preference vectors thus avoiding the need to retrain from scratch, they essentially require updating the Q-function for each visited preference vector. Consequently, they still need to explore the preference space to achieve a near-accurate Pareto front. This process becomes computationally inefficient and impractical as the reward dimension increases.

There are also policy-based MORL algorithms designed to find Pareto optimal policies~\citep{chen2021reinforcement,cai2023two,zhou2024finite,parisi2014policy}. 
A concept similar to our approach, which directly searches over the Pareto front, is the Pareto-following algorithm proposed by \cite{parisi2014policy}. 
This algorithm employs a modified policy-gradient method that adjusts the gradient direction to follow the Pareto front. 
While it can reduce the complexity of finding the Pareto front by avoiding the need to converge to the optimal policy from scratch, it still requires multiple policy-gradient stpng to identify even a nearby Pareto optimal policy. 
Furthermore, it cannot guarantee comprehensive coverage of the estimated policies, nor can it ensure discovering the true Pareto front.

In addition to value-based and policy-based methods that extend from single-objective RL, there are also heuristic approaches for combining policies that have shown promising performance in specific settings. 
For instance, rewarded soup~\citep{rame2024rewarded,jang2023personalized} proposes learning optimal policies for individual preference criteria and then linearly interpolating these policies to combine their capabilities. 
While \cite{rame2024rewarded} demonstrated that rewarded soup is optimal when individual rewards are quadratic with respect to policy parameters, this quadratic reward condition is highly restrictive and does not apply to general MDPs.

% {\color{blue}Some works in multi-objective optimization \cite{deb2016multi,liu2021profiling,zhang2023hypervolume} can be extended to solve MO-MDPs. }

% To sum up, previous multi-objective optimization methods do not use the MDP characteristics to reduce the complexity, while the multi=objective MDP and RL method either has to traverse the weight vector space (which grows exponentially with the reward dimension) or cannot guarantee the performance of the estimated Pareto front.

\section{Pareto front searching on MO-MDP}
We begin with an introduction to the basic concepts underlying MO-MDPs. Next, we provide an overview of the proposed efficient Pareto front searching algorithm to offer intuition into its workings. Finally, we present a detailed explanation of the algorithm.
A proof sketch of the proposed Pareto front searching algorithm will be provided in \cref{sec:theoretical_reasons}.
  
\subsection{Preliminaries on MO-MDP}
\label{subsec:preliminaries}
We consider a discounted finite MO-MDP $(\mcalS,\mcalA,\rmP,\vr,\gamma)$, where $\mcalS$ and $\mcalA$ are the sets of states and actions, with cardinalities $S$ and $A$, respectively.
The transition kernel $\rmP$ defines the probability $\rmP(s'|s,a)$ of transitioning from $s$ to $s'$ given action $a$. 
The reward tensor $\vr\in\mathbb{R}^{S\times A\times D}$ specifies the 
$D$-dimensional reward $\vr(s,a)$ obtained by taking action $a$ in state $s$. $\gamma<1$ is the discount factor. 
\begin{assumption}
\label{assumption:initial_coverage}
(Sufficient Coverage of the Initial State Distribution)
We assume that the initial state distribution $\mu>0$, meaning that the probability of starting from any state is larger than zero.
\end{assumption}
The sufficient coverage of the initial state distribution assumption is commonly used in RL \citep{agarwal2020optimality}.
Define $\Pi$ as the set of stationary policies, where for any $\pi\in\Pi$, $\pi(a|s)$ represents the probability of selecting $a$ at state $s$. 
% Similarly, let $\PiD$ denote the set of deterministic stationary policies, where for any $\pi\in\PiD$, $\pi$ maps each state $s\in\mcalS$ to a specific action $a\in\mcalA$, representing a deterministic action choice at state $s$.
% \begin{assumption}
%     The MO-MDP is ergodic, meaning that any state is reachable from any other state by following any policy. 
% \end{assumption}
% The assumption of MDP ergodicity is commonly used in RL literature~\citep{lu2022multi,wei2020model}.
The value function with policy $\pi$ is denoted as $\rmV^{\pi}(s)\in\mathbb{R}^d$, which is written as
\[
\rmV^{\pi}(s)=\mathbb{E}\left[\sum_{t}\gamma^t\vr(s_t,a_t)|s_0=s, a_t\sim\pi(\cdot|s_t), s_{t+1}\sim \rmP(\cdot|s_t,a_t)\right].
\]
We define the long-term return of policy $\pi$ as $J^{\pi}(\mu)=\mathbb{E}_{s\sim\mu(\cdot)}\left[\rmV^{\pi}(s)\right]$.
Let $\sJ(\mu)$ denote the set of expected long-term returns for all stationary policies under the initial state distribution $\mu$, i.e., $\sJ(\mu)=\{J^{\pi}(\mu)|\pi\in\Pi\}$. For simplicity of notation, we will omit $\mu$ in the following sections.

$\sJ$ is convex as proven in \cite{lu2022multi}. We further emphasize that $\sJ$ is not only convex but also a convex polytope, with each vertex corresponding to a deterministic policy, as established in \cref{lem:Jmu_polytope_vertices_deterministic}\footnote{Proof of this lemma is in \cref{appendix:d1_property}.}.
\begin{lemma}
\label{lem:Jmu_polytope_vertices_deterministic}
In discounted finite MO-MDP, under \cref{assumption:initial_coverage}, $\sJ$ is a closed convex polytope, and its vertices are achieved by deterministic policies.
\end{lemma}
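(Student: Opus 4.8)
The plan is to exhibit $\sJ$ as the image of the polytope of valid state-action occupancy measures under a linear map, and then argue that vertices pull back to deterministic policies. First I would recall the occupancy-measure reformulation: for a policy $\pi$, define the discounted state-action occupancy measure $d^\pi(s,a) = (1-\gamma)\sum_t \gamma^t \Pr[s_t = s, a_t = a \mid s_0 \sim \mu, \pi]$. It is standard (and follows from the Bellman flow equations) that the set $\Omega = \{ d^\pi : \pi \in \Pi\}$ equals the set of all $d \in \R^{S\times A}$ satisfying $d \ge 0$, $\sum_a d(s',a) = (1-\gamma)\mu(s') + \gamma\sum_{s,a} \rmP(s'|s,a) d(s,a)$ for every $s'$. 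This is a finite system of linear equalities and inequalities, so $\Omega$ is a (closed, bounded — boundedness because $\sum_{s,a} d(s,a) = 1$) convex polytope. Moreover $J^\pi = \tfrac{1}{1-\gamma}\sum_{s,a} d^\pi(s,a)\, \vr(s,a)$, i.e. $J^\pi = L(d^\pi)$ for a fixed linear map $L:\R^{S\times A}\to\R^D$. Hence $\sJ = L(\Omega)$ is the image of a polytope under a linear map, which is again a closed convex polytope; this gives the first half of the claim and supersedes the mere convexity from \cite{lu2022multi}.

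Next I would handle the vertex characterization. Since $\sJ = L(\Omega)$ with $\Omega$ a polytope, every vertex (extreme point) $v$ of $\sJ$ is the image of some face $F_v = L^{-1}(v)\cap\Omega$ of $\Omega$, and in particular $v = L(e)$ for some vertex $e$ of $\Omega$ (a vertex of $\sJ$ has at least one preimage that is extreme in $\Omega$: pick any vertex of the polytope $L^{-1}(v)\cap\Omega$). So it suffices to show that every vertex of the occupancy polytope $\Omega$ is realized by a deterministic policy. The key structural fact is the bijection between $\Omega$ and $\Pi$ up to the usual identification: given $d\in\Omega$, the policy $\pi_d(a|s) = d(s,a)/\sum_{a'} d(s,a')$ (well-defined at every $s$ because Assumption~\ref{assumption:initial_coverage} forces $\mu>0$, hence $\sum_a d(s,a) \ge (1-\gamma)\mu(s) > 0$) recovers $d = d^{\pi_d}$. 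I would then argue that if $d$ is a vertex of $\Omega$ but $\pi_d$ is not deterministic, then some state $s_0$ has two actions $a_1\ne a_2$ with $d(s_0,a_1), d(s_0,a_2) > 0$; using the flow equations one can perturb the policy at $s_0$ by shifting probability mass between $a_1$ and $a_2$ and rescaling, producing two distinct occupancy measures $d^+, d^-\in\Omega$ with $d = \tfrac12(d^+ + d^-)$, contradicting extremality. The cleanest way to produce $d^{\pm}$ is to note that the map $\pi\mapsto d^\pi$ restricted to policies that agree with $\pi_d$ everywhere except at $s_0$ is itself affine (or monotone-rational) in the single parameter $\pi(a_1|s_0)$, so two nearby values of that parameter straddle $d$.

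The main obstacle I anticipate is making the perturbation step airtight: I must verify that the two perturbed occupancy measures actually lie in $\Omega$ (i.e., remain nonnegative and satisfy the flow constraints) and are genuinely distinct in $\R^{S\times A}$, not merely distinct as policies. Distinctness follows because the $(s_0,a_1)$ coordinate changes; nonnegativity and the flow constraints follow because any policy produces a valid occupancy measure, so $d^{\pm} = d^{\pi^{\pm}}$ automatically lies in $\Omega$ once $\pi^{\pm}$ is a legitimate policy. One subtlety worth flagging is that changing $\pi$ only at $s_0$ can change $d(s,a)$ at other states $s$ reachable from $s_0$, so the perturbation is not coordinate-local; but convexity of the line segment $\{d^{\pi^t}\}$ in the one-dimensional family still does the job, and this is exactly where I would spend the most care. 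Finally, closedness of $\sJ$ is immediate since a linear image of a compact set is compact, so no separate argument is needed there.
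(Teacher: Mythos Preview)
Your overall architecture is exactly the paper's: exhibit $\sJ$ as the linear image of the occupancy-measure polytope $\Omega$, then pull vertices back. The paper simply cites \cite{altman2021constrained} for the fact that the extreme points of $\Omega$ correspond to deterministic policies and proves the easy lemma that a linear image of a polytope is a polytope whose vertices lie among the images of the original vertices; your argument reproves both pieces from scratch, which is fine.

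The gap is in your perturbation step for the vertex characterization of $\Omega$. You claim that perturbing $\pi$ at $s_0$ in the single parameter $t=\pi(a_1\mid s_0)$ yields $d = \tfrac12(d^{\pi^+}+d^{\pi^-})$ for two nearby policies, and you hedge that the map $t\mapsto d^{\pi^t}$ is ``affine (or monotone-rational)''. It is \emph{not} affine: the state occupancy is $d^{\pi}(s_0)=(1-\gamma)\mu(s_0)/(1-\gamma\,\mathrm{self\text{-}return}(t))$ and hence genuinely rational in $t$, so the symmetric midpoint identity fails. (A two-state example with $P(s_0\mid s_0,a_1)=1$, $P(s_1\mid s_0,a_2)=1$ already exhibits $d^{\pi^{1/2}}\neq\tfrac12(d^{\pi^0}+d^{\pi^1})$.) A curve in $\Omega$ through $d$ does not by itself contradict extremality; you need a \emph{line segment} through $d$.

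The standard fix, and the content behind the Altman citation, is the basic-feasible-solution argument: $\Omega$ is cut out by $S$ independent flow equalities and the $SA$ nonnegativity constraints, so any vertex has at most $S$ strictly positive coordinates. Under Assumption~\ref{assumption:initial_coverage} you already observed $\sum_a d(s,a)\ge (1-\gamma)\mu(s)>0$ for every $s$, so each state contributes at least one positive coordinate; hence exactly one, and the recovered policy is deterministic. Alternatively, you can repair your perturbation by taking the \emph{tangent direction} $v=\tfrac{d}{dt}d^{\pi^t}$ at the interior parameter and noting that $v$ lies in the null space of the equality constraints while $v(s,a)=0$ whenever $d(s,a)=0$ (since a nonnegative smooth function vanishing at an interior point has zero derivative there), so $d\pm \varepsilon v\in\Omega$ for small $\varepsilon$; but this is more delicate than the LP count.
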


As we investigate the boundary of $\sJ$, which is a convex polytope, we introduce relevant definitions concerning convex polytopes. Let $ \sC \subseteq \mathbb{R}^d $ be a convex polytope. A \textit{face} of $ \sC $ is any set of the form $ \sF = \sC \cap \{ \vx \in \mathbb{R}^d : \vw^{\top} \vx = c \} $, where $ \vw^{\top} \vy \leq c $ for any $\vy\in \sC $. The faces of dimension $0$, $1$, and $\text{dim}(\sC) - 1$ are called vertices, edges, and facets, respectively.

To address the trade-offs between conflicting objectives, we introduce the well-known notions of dominance and Pareto optimality. For any two vectors $\vu$ and $\vv$,  $\vu$ \textit{strictly dominates} $\vv$, denoted as $\vu \succ \vv$, if each component of $\vu$ is at least as large as the corresponding component of $\vv$, and at least one component is strictly larger. 
% $\vu$ and $\vv$ are \textit{incomparable} if $\vu$ improves on $\vv$ in at least one objective, and $\vv$ also improves on $\vu$ in at least one objective. 
A vector $\vu$ is \textit{Pareto optimal} in a set $\sS$ if any other vector $\vv \in \sS$ does not strictly dominate $\vu$. The \textit{Pareto front} of a set $\sS$, denoted as $\pareto(\sS)$, is the set of Pareto optimal vectors in $\sS$.
% , i.e.,
% \[
% \pareto(\sC) = \{\vu \in \sC \mid \text{there does not exist } \vv \in \sC \text{ such that } \vv \succ \vu \}.
% \]

In the context of MO-MDP, we say that policy $\pi_1$ dominates policy $\pi_2$, denoted as $\pi_1 \succ \pi_2$ if $J^{\pi_1} \succ J^{\pi_2}$. A policy $\pi$ is Pareto optimal if $J^{\pi}$ is Pareto optimal in $\sJ$. We similarly define the Pareto front of $\Pi$ as $\pareto(\Pi) = \{\pi \in \Pi \mid J^{\pi} \in \pareto(\sJ)\}$\footnote{We consider the Pareto optimality definition based on the long-term return concerning an initial distribution. A detailed comparison with other definitions of Pareto optimality can be found in \cref{appendix:discussion_pareto_opt}.}.

\subsection{Algorithm Overview}
% The structure of the Pareto front and how to efficiently obtain it remains unclear, even in a dynamic programming setting. 
% Current algorithms that approximate the entire front often rely on blindly sampling the preference vector to update the Pareto front, which may gradually converge to the true front but lacks theoretical guarantees on performance~\cite{yang2019generalized,lu2022multi}. 

This section provides an overview of our proposed algorithm for efficiently identifying the entire Pareto front. By using a single-objective solver \emph{only once} to get an initial Pareto optimal point and then searching along the surface of the Pareto front, our method ensures complete coverage of the Pareto front.

The algorithm maintains a queue $Q$ containing Pareto optimal policies whose neighboring policies have yet to be explored.
The queue is initialized with a deterministic policy $\pi_0$, derived by solving a single-objective MDP, which is formulated from the original multi-objective MDP using an arbitrary positive preference vector, 
% The choice of the preference vector is flexible, provided it is positive, 
ensuring that $\pi_0$ lies on the Pareto front.
The obtained policy $\pi_0$ serves as the starting point for systematically traversing the entire Pareto front.

In each loop, we pop out one element $\pi$ in the queue $Q$. The following steps are applied to explore the Pareto front in the neighborhood of $\pi$. We also use \cref{fig:Pareto_steps} to give an illustration of those steps in each loop.
\begin{enumerate}[label=(\alph*)]
    \item {\textbf{(Neighbor Search)}} First, we identify and evaluate all deterministic policies that differ from $\pi$ by only one state-action pair. From these, we discard any policies whose returns are dominated by returns of others\footnote{We say that $\vu$ is dominated by $\vv$ if $\vu$ is less than or equal to $\vv$ in all objectives. A more formal definition of dominance is provided in \cref{subsec:preliminaries}.}.
    
    As shown in \cref{fig:step1}, the gray planes and edges represent previously discovered parts of the Pareto front, the black point corresponds to the returns of $\pi$ (the deterministic Pareto-optimal policy being explored), and the red points represent the policies that are not dominated among those differing from $\pi$ by only one state-action pair.

    \item {\textbf{(Incident Faces Calculation)}} Next, we compute the incident faces of the return of $\pi$ on the convex hull formed by the returns of $\pi$ and the policies identified in the first step, i.e., the non-dominated policies among deterministic policies differing from $\pi$ by only one state-action pair.
    
    As depicted in \cref{fig:step2}, the incident faces of $\pi$ on the constructed convex hull are illustrated with purple planes, and the black lines represent the edges of these incident faces.
    
    \item \textbf{(Pareto Face Extraction)}. 
    In this final step, we select the faces from the incident faces identified in the second step. 
    We add a face from the incident faces to the Pareto front if a non-negative combination of the normal vectors of all intersecting facets on the face yields a strictly positive vector.
    Once a face is selected, all neighboring vertices on this face are added to the set of deterministic Pareto optimal policies. 
    These vertices are also added to the queue $Q$ to explore their neighboring faces in the next iterations.
    % In this final step, we select the Pareto optimal faces from the incident faces identified in the second step. 
    % We add a face from incident faces to the Pareto front if there exists a non-negative combination of the normal vectors of all intersecting facets that yield a strictly positive vector.
    % Once a face is selected as Pareto optimal, all neighboring vertices on this face are added to the set of deterministic Pareto optimal policies. 
    % These vertices are also added to the queue $Q$ to explore their neighboring faces in the next iterations.
    
    As illustrated in \cref{fig:step3}, the selected faces are added to the set of Pareto front (marked in blue), and all neighboring vertices on the incident faces are shown as orange dots.

\end{enumerate}

Applying the above steps to all policies in the queue $Q$ until the queue is empty ensures that the neighboring policies of all possible deterministic policies on the Pareto front have been visited. 
Since all vertices and their incident faces are traversed by the proposed algorithm, it guarantees full exploration of all faces of the Pareto front.

The intuition behind the proposed algorithm is that any deterministic Pareto optimal policy can identify all neighboring policies on the Pareto front by considering deterministic policies that differ by only one state-action pair. This ensures that all Pareto optimal policies are discovered during the search process. Additionally, the Pareto optimality justification excludes any policies that are not Pareto optimal, guaranteeing that the search trajectory remains on the Pareto front. The detailed theoretical results will be shown in~\cref{sec:theoretical_reasons}.
As the algorithm is guaranteed to consistently search along the Pareto front, the total number of iterations is the same as the number of deterministic Pareto optimal policies.

% why can we say that the algorithm moves along the Pareto front?

\begin{figure}[tbp]
    \centering
    \begin{subfigure}[b]{0.3\textwidth}
        \centering
        \includegraphics[width=\textwidth]{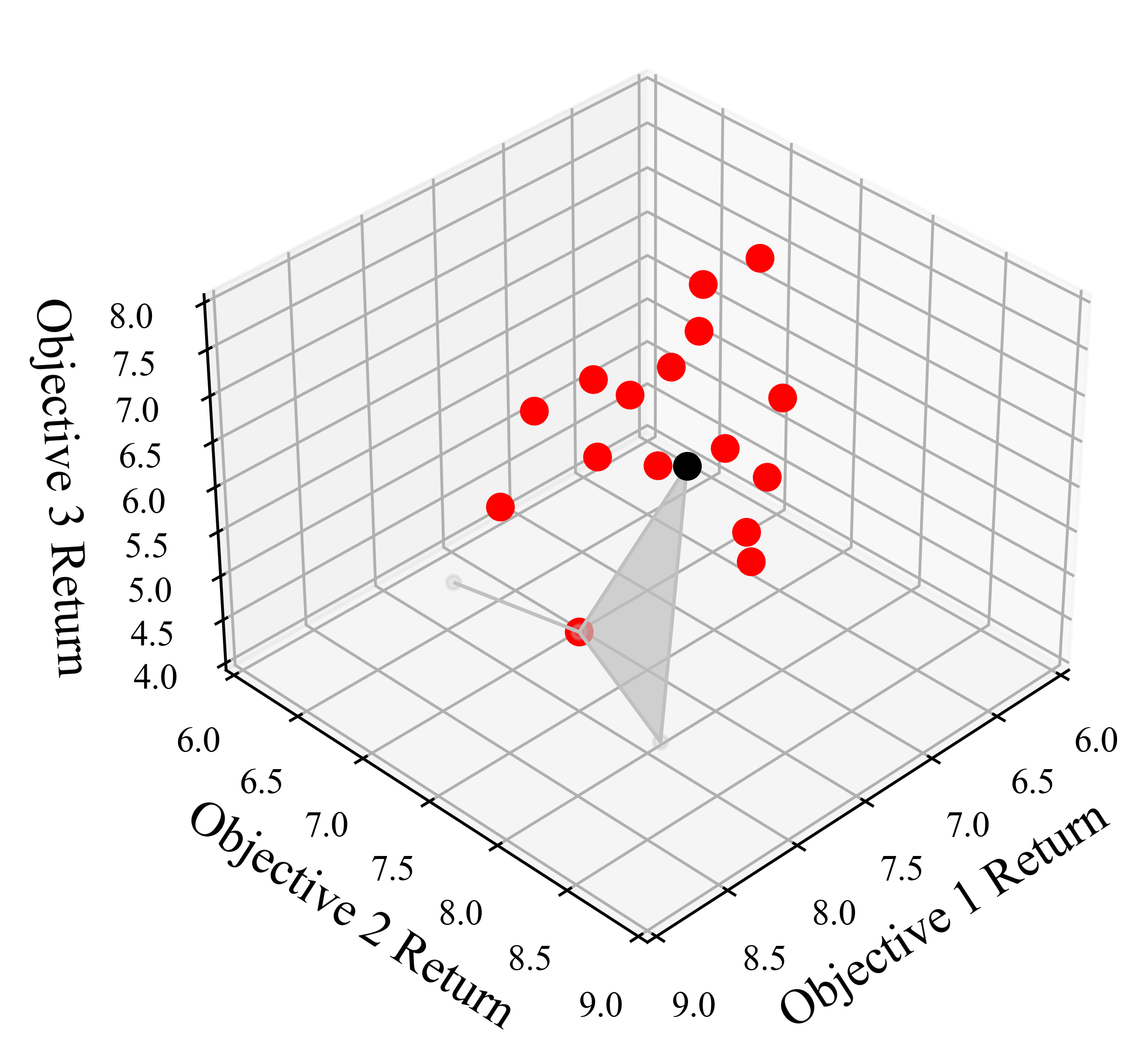}
        \caption{Step 1: Selecting non-dominated distance-1 policies}
        \label{fig:step1}
    \end{subfigure}
    \hfill
    \begin{subfigure}[b]{0.3\textwidth}
        \centering
        \includegraphics[width=\textwidth]{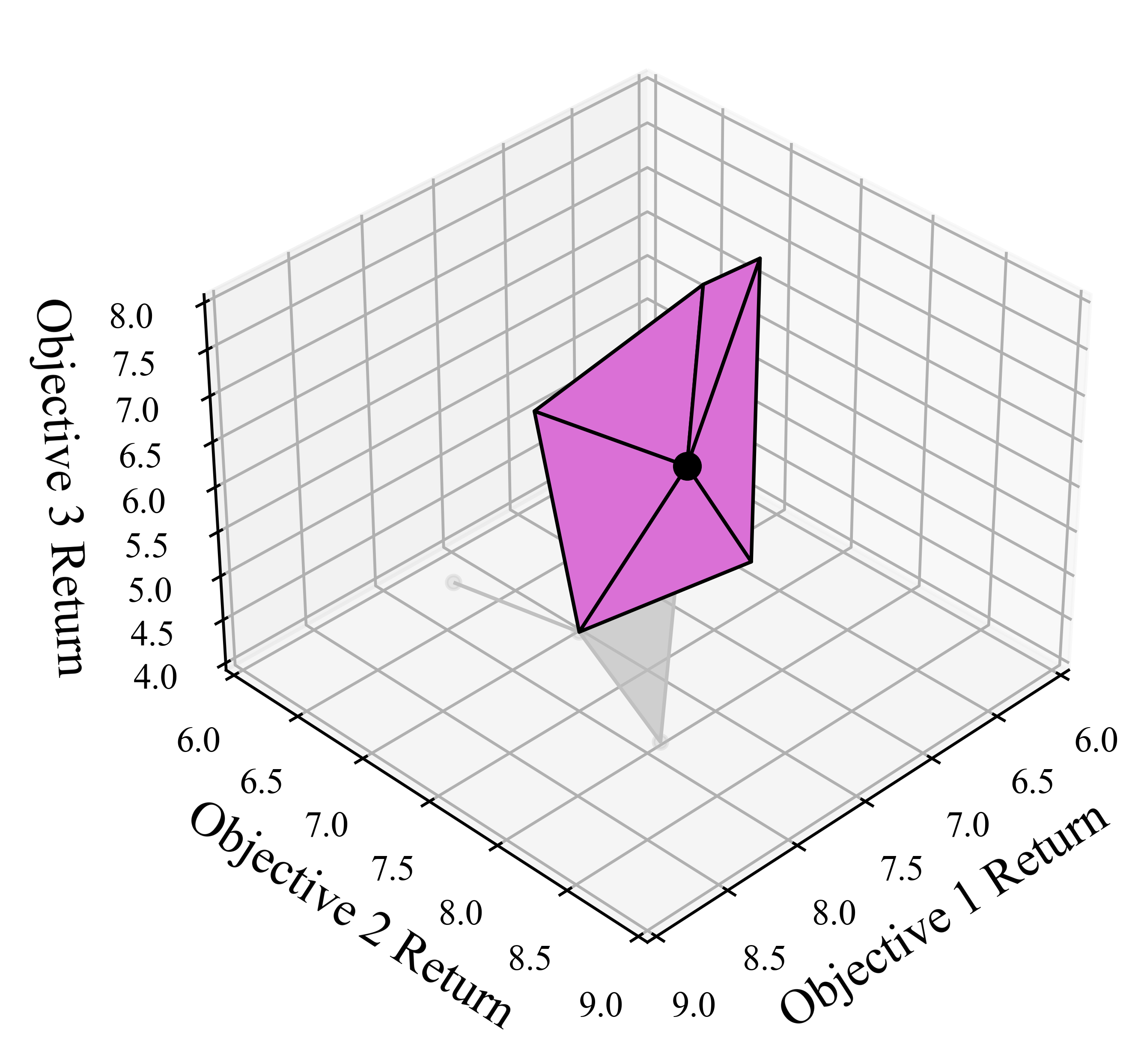}
        \caption{Step 2: Generate convex hull of current policy and non-dominated policies}
        \label{fig:step2}
    \end{subfigure}
    \hfill
    \begin{subfigure}[b]{0.3\textwidth}
        \centering
        \includegraphics[width=\textwidth]{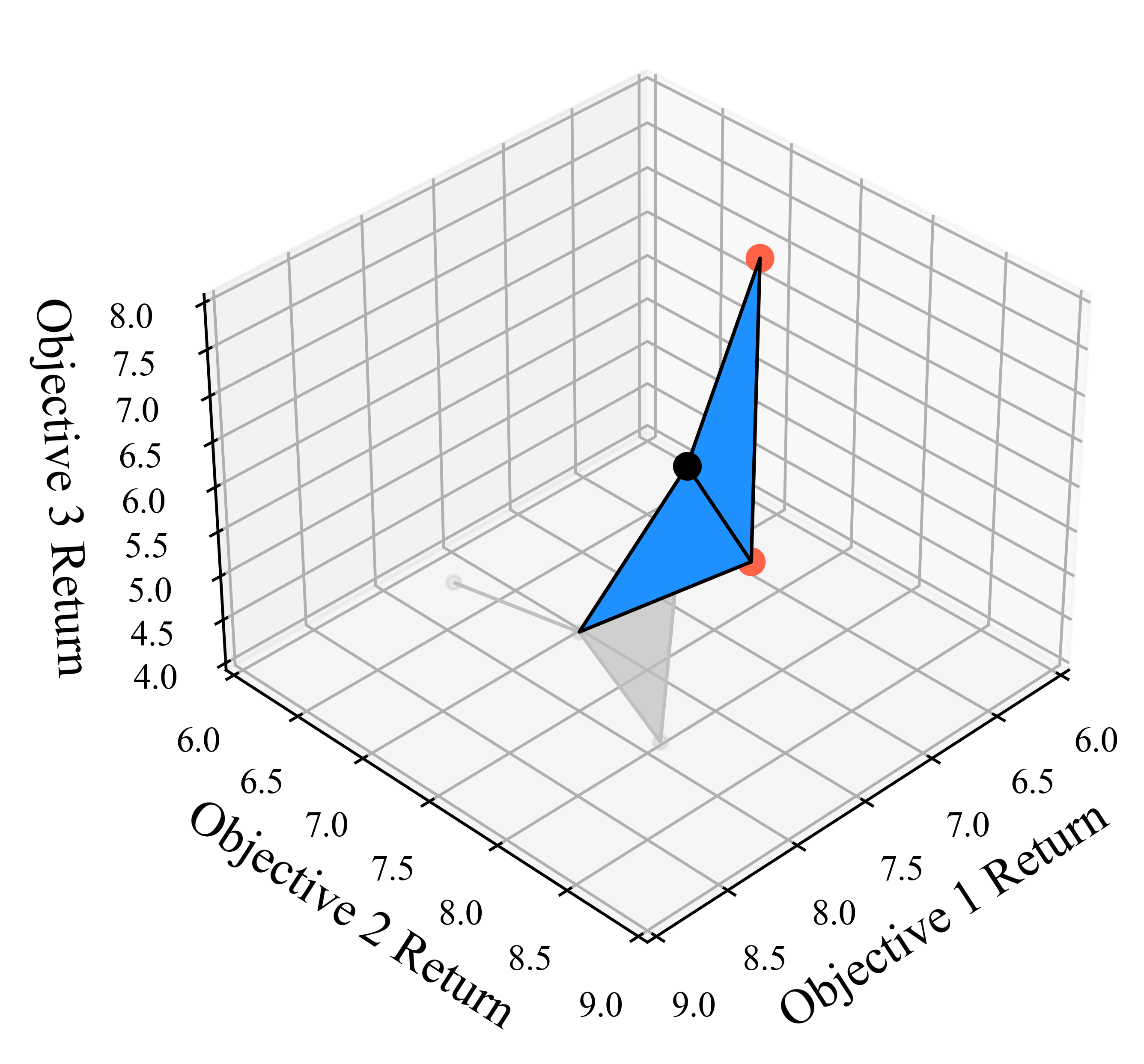}
        \caption{Step 3: Extract incident Pareto optimal faces}
        \label{fig:step3}
    \end{subfigure}
    \caption{Illustrations of the steps for finding Pareto-optimal front at each iteration. $S=5$, $A=5$, and $D=3$.}
    \label{fig:Pareto_steps}
\end{figure}

\subsection{Pareto front searching}
\label{subsec:alg_detailed}
This section presents the pseudocode and details of the proposed Pareto front searching algorithm.
Before diving into details, we introduce some necessary notations as preparation for presenting our algorithms\footnote{Notation table is provided in \cref{tab: notation} of \cref{appendix: notation}.}.  
We denote the set of deterministic policies that differ from $\pi$ by exactly one state-action pair as $\Pi_1(\pi)$, and the set of non-dominated policies among $\Pi_1(\pi)$ as $\Pi_{1,\nd}(\pi)$. Let $\conv(J^{\Pi_{1,\nd}} \cup \{J^{\pi}\})$ represent the convex hull constructed from the long-term returns of $\Pi_{1,\nd}(\pi)$ and $\pi$. The set of incident faces of $J^{\pi}$ on the convex hull $\conv(J^{\Pi_{1,\nd}} \cup \{J^{\pi}\})$ is denoted by $\sF(J^{\pi}, \conv(J^{\Pi_{1,\nd}} \cup \{J^{\pi}\}))$.

\begin{algorithm}[ht]
\caption{Pareto Front Searching Algorithm}
\begin{algorithmic}[1]
\Statex \textbf{Input:} MDP settings: $(\mcalS,\mcalA,\rmP,\vr,\gamma)$
\Statex \textbf{Output:} the set of Pareto optimal faces $\pareto(\Pi)$ and the set of deterministic Pareto optimal policies $\sV(\Pi)$
\State Initialize $\pareto(\Pi) \gets \varnothing$, $\sV(\Pi) \gets \varnothing$.
\State Initialize queue $Q \gets \varnothing$. \hfill \textit{// Queue of Pareto optimal policies to explore}
\State Initialize set $\sZ \gets \varnothing$. \hfill \textit{// Set of visited deterministic policies}
\State \label{alg_step:ini_1}\textbf{Initialization:} Sample a weight vector $\vw \succ \vzero$, solve the optimal deterministic policy $\pi_0$ for $(\mcalS, \mcalA, \rmP, \vw^{\top}\vr, \gamma)$.
\State \label{alg_step:ini_2}$Q.\push(\pi_0)$, $\sV(\Pi).\add(\pi_0)$.
\While{$Q$ is not empty}
    \State  $\pi \gets Q.\pop()$.
    \State Select $\Pi_1(\pi)$. \hfill \textit{//Select all deterministic policies that differ from $\pi$ by one state-action pair}
    \State \label{alg_step:non_dom1}Evaluate $\Pi_1(\pi) \setminus \sZ$ and update $\sZ$ by $\sZ.\add(\Pi_1(\pi))$. \hfill \textit{//Evaluate policies not yet visited}
    \State \label{alg_step:non_dom2}Select non-dominated policies $\Pi_{1,\nd}(\pi)$ from $\Pi_1(\pi)$.
    \State \label{alg_step:cvx_hull_facets0}Calculate the convex hull $\conv(J^{\Pi_{1,\nd}} \cup \{J^{\pi}\})$ formed by $\{J^{\Tilde{\pi}}\mid \Tilde{\pi} \in \Pi_{1,\nd}(\pi) \cup \{\pi\}\}$.
    \State \label{alg_step:cvx_hull_facets}Extract  $\sF(J^{\pi}, \conv(J^{\Pi_{1,\nd}} \cup \{J^{\pi}\}))$.\hfill \textit{//the set of incident facets of $J^{\pi}$ on $\conv(J^{\Pi_{1,\nd}} \cup \{J^{\pi}\})$} 
    % \Statex \hfill \textit{//the set of incident facets of $J^{\pi}$ on $\conv(J^{\Pi_{1,\nd}} \cup \{J^{\pi}\})$}
    \State Extract Pareto optimal faces $\Tilde{\mathcal{F}}$ and vertices $\Tilde{\mathcal{V}}$ from $\sF(J^{\pi}, \conv(J^{\Pi_{1,\nd}} \cup \{J^{\pi}\}))$ using \cref{alg:add_faces}.
    \State $Q.\push(\Tilde{\mathcal{V}}\setminus \sV(\Pi))$.
    \State $\pareto(\Pi).\add(\Tilde{\mathcal{F}})$, $\sV(\Pi).\add(\Tilde{\mathcal{V}})$.
\EndWhile
\end{algorithmic}
\label{alg:pareto_front_searching}
\end{algorithm}

The pseudocode for the Pareto front searching algorithm is presented in \cref{alg:pareto_front_searching}. While the primary objective of \cref{alg:pareto_front_searching} is to identify the entire Pareto front, a byproduct of the algorithm is the identification of all vertices on the Pareto front that correspond to all deterministic Pareto optimal policies. Therefore, the output of \cref{alg:pareto_front_searching} includes both the Pareto front, denoted as $\pareto(\Pi)$, and the set of deterministic Pareto optimal policies, denoted as $\sV(\Pi)$.

We begin by finding a Pareto optimal policy through scalarization and adding it to the queue $Q$ as part of the initialization, as detailed in \cref{alg_step:ini_1,alg_step:ini_2} of \cref{alg:pareto_front_searching}.

\textbf{Neighbour Search} As detailed in \cref{alg_step:non_dom1,alg_step:non_dom2} of \cref{alg:pareto_front_searching}, for any deterministic Pareto optimal policy $\pi$ popped from $Q$, we first evaluate all policies in $\Pi_1(\pi)\setminus \sZ$, which differ from $\pi$ by only one state-action pair and have not yet been visited (with the visited policies stored in $\sZ$). Once the long-term returns of $\Pi_1(\pi)$ are known, we select the set of non-dominated policies, $\Pi_{1,\nd}(\pi)$, the set of policies in $\Pi_1(\pi)$ that cannot be dominated by any other policies in $\Pi_1(\pi)$. Specifically, the PPrune algorithm from \cite{roijers2016multi} efficiently identifies the set of non-dominated elements from a given set, whose details are shown in \cref{alg:PPrune} in the appendix.

Although the number of policies differing by one state-action pair is proportional to the state and action space, i.e., $|\Pi_1(\pi)|=S\times(A-1)$, the number of non-dominated policies $\Pi_{1,\nd}(\pi)$ is usually small. This helps reduce the complexity of calculating the convex hull and extracting Pareto front faces in the subsequent steps.

\textbf{Incident Facets Calculation} In \cref{alg_step:cvx_hull_facets0,alg_step:cvx_hull_facets}, we aim to compute $\sF(J^{\pi}, \conv(J^{\Pi_{1,\nd}} \cup \{J^{\pi}\}))$, the set of $(D-1)$-dimensional facets of $\conv(J^{\Pi_{1,\nd}} \cup \{J^{\pi}\})$ that intersect at $J^{\pi}$. 
A straightforward approach is to calculate the convex hull of the set of vertices $J^{\Pi_{1,\nd}} \cup \{J^{\pi}\}$ (as shown in \cref{alg_step:cvx_hull_facets0}) and select all facets that intersect at $J^{\pi}$. However, since we are only interested in the facets of $\conv(J^{\Pi_{1,\nd}} \cup \{J^{\pi}\})$ that involve $J^{\pi}$, computing the entire convex hull is unnecessary. Specifically, facets that do not involve $J^{\pi}$ do not need to be considered.
An efficient solution is to adapt the Quickhull algorithm~\citep{barber1996quickhull}, a well-established method for computing the convex hull of a set of points, to our algorithm. This adaptation focuses on updating only the facets related to $J^{\pi}$, thereby avoiding unnecessary calculations of irrelevant facets.

{\textbf{Pareto Face Extraction}} We apply \cref{alg:add_faces} to extract all faces where the Pareto front intersects at $J^{\pi}$ from $\sF(J^{\pi}, \conv(J^{\Pi_{1,\nd}} \cup \{J^{\pi}\}))$.

\begin{wrapfigure}{r}{0.33\linewidth}  % 'r' for right, 'l' for left, and {width} for width
    \centering
    \vspace{-30pt}  
    \includegraphics[width=\linewidth]{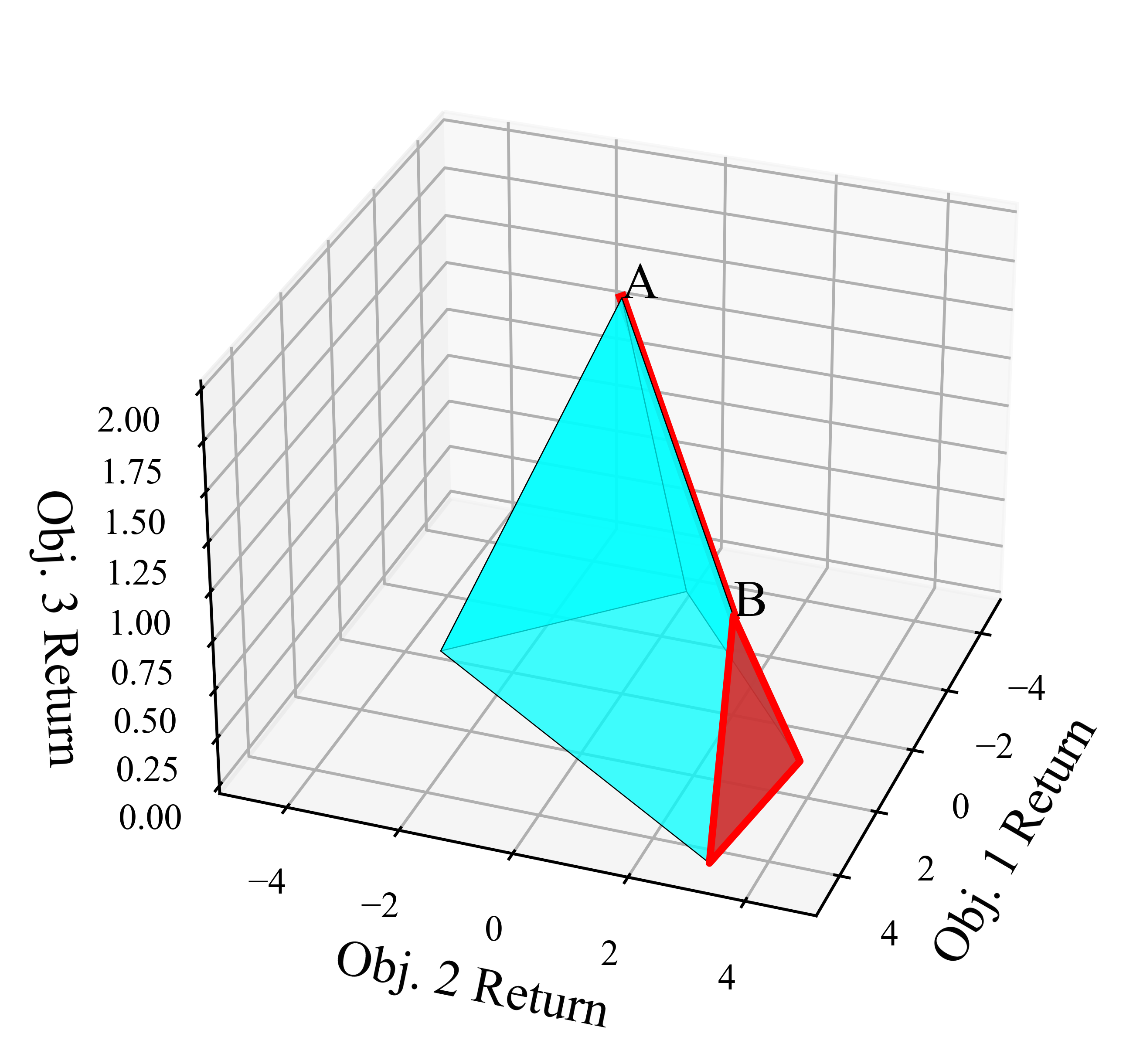}
    \caption{Convex polytope and its Pareto front (red edge and plane)}
    \label{fig:neighbour_facet_doesnot_exist}
    \vspace{-30pt} 
\end{wrapfigure}
% \begin{figure}[tbp]
%     \centering
%     \includegraphics[width=0.2\linewidth]{figs/paper_}
%     \caption{Convex polytope and its Pareto front}
%     \label{fig:neighbour_facet_doesnot_exist}
%     \vspace{-10pt} 
% \end{figure}

Consider a $k$-dimensional face of a $D$-dimensional polytope. Let the normal vectors to the $D-1$-dimensional facets that intersect at this face be denoted as $\{\vw_i\}_{i=1}^{n}$. Specifically, when $k = D-1$, then the facet that intersects on the face is itself, and $n = 1$. The face is Pareto optimal if $\sum_{i=1}^{n} \alpha_i^* [\mathbf{w}_i]_j > 0$ for any $j=1,\cdots,D$, where $\boldsymbol{\alpha}^*$ is the solution to the following linear programming problem:
\begin{equation}
\label{eq:pareto_finding}
    \begin{aligned}
    \text{maximize}_{\boldsymbol{\alpha}} \quad & \min_{j\in\{1,\cdots,D\}}\sum_{i=1}^{n} \alpha_i [\mathbf{w}_i]_j, \\
    \text{subject to} \quad & \alpha_i \geq 0, \quad \forall i \in \{1, 2, \dots, n\}.
\end{aligned}
\end{equation}
Here, $\mathbf{w}_i=[[\mathbf{w}_i]_1,\cdots,[\mathbf{w}_i]_D]^{\top}$ represents the normal vectors, and $[\mathbf{w}_i]_j$ refers the $j$-th component of the $i$-th normal vector. The objective function aims the maximize the smallest component of the weighted sum across all dimensions $j=1,\cdots,D$, ensuring the face is Pareto optimal.
A detailed lemma showing the Pareto front criterion and its proof is presented in \cref{appendix:proof_find_pareto_front}.

It is possible that a high-dimensional face is not on the Pareto front, while some lower-dimensional components of that face still belong to the Pareto front. \Cref{fig:neighbour_facet_doesnot_exist} illustrates this scenario. The cyan and red planes form a 3D polytope, where the red line and red planes represent the Pareto front of the polytope. Notably, the Pareto front includes not only 2D planes but also 1D edges, such as the line segment AB. 
This observation implies that even if a high-dimensional face (e.g., the plane containing AB) is not Pareto optimal, it is necessary to examine its lower-dimensional faces (e.g., edge AB) to determine if they are Pareto optimal.

In \cref{alg:add_faces}, We traverse from $(D-1)$-dimensional faces down to $1$-dimensional faces (edges) to verify if each face is Pareto optimal. Faces that have not yet been verified are pushed into the queue $Q_{\text{face}}$. If the face $H$ popped from $Q_{\text{face}}$ is verified to be Pareto optimal via \cref{eq:pareto_finding}, it is added to the Pareto face set, and its vertices are added to the set of Pareto optimal vertices. Otherwise, we proceed to examine the subfaces of $F$ that are one dimension lower.

Note that if a Pareto optimal face has vertices corresponding to deterministic policies $\{\pi_i\}_{i=1}^{k}$, then the policies constructing the face are convex combinations of $\{\pi_i\}_{i=1}^{k}$, expressed as $\{\pi|\pi(a|s)=\sum_{i=1}^{k}\alpha_i\pi_i(a|s),\forall s,a, \alpha_i \geq 0, \sum_{i=1}^{k} \alpha_i = 1\}$. The lemma demonstrating this result, along with its detailed proof, can be found in \cref{lem:find_pareto_opt_policies} of \cref{appendix:proof_find_pareto_front}.

\begin{algorithm}[ht]
\caption{Pareto Optimal Face Selection}
\begin{algorithmic}[1]
\Statex \textbf{Input:} Incident faces $\sF(J^{\pi}, \conv(J^{\Pi_{1,\nd}} \cup \{J^{\pi}\}))$
% \Statex \textbf{Output:} A set of Pareto optimal faces $\mathcal{F}$ and a set of Pareto optimal vertices $\mathcal{V}$
% \State Initialize set $\mathcal{F} \gets \varnothing$, set $\mathcal{V} \gets \varnothing$, queue $Q_{\text{face}}\gets \varnothing$. 
\State Initialize the set of Pareto optimal faces $\mathcal{F} \gets \varnothing$, the set of Pareto optimal vertices $\mathcal{V} \gets \varnothing$, the queue of faces for checking Pareto optimality $Q_{\text{face}}\gets \varnothing$. 
% and $w(G) \gets \varnothing$ for any $G$. 
% \For{$G \in \sF(J^{\pi}, \conv(J^{\Pi_{1,\nd}} \cup \{J^{\pi}\}))$}
% \State $Q_{\text{face}}.\push(G)$, $w(G) \gets \text{normal vector of } G$.
% \EndFor
\State For all $G \in \sF(J^{\pi}, \conv(J^{\Pi_{1,\nd}} \cup \{J^{\pi}\}))$, $Q_{\text{face}}.\push(G)$ and $w(G) \gets \text{normal vector of } G$. 
\Statex \hfill \textit{// $w(G)$ represents the set of normal vectors of facets that intersect $G$}
% \Statex \hspace{1.5em} $Q_{\text{face}}.\push(F)$, and $w(F) \gets \text{normal vector of } F$ 
\While{$Q_{\text{face}}$ is not empty}
    \State $H \gets Q_{\text{face}}.\pop()$.
    \If{there exist vertices of $H$ not in $\mathcal{F}$} \hfill \textit{// $H \notin \mathcal{F}$}
        \If{$w(H)$ has a feasible solution to \cref{eq:pareto_finding}}
            \State $\mathcal{F}.\add(H)$ and $\mathcal{V}.\add(\text{vertices of }H)$.
        \ElsIf{$\text{dim}(H) > 1$}
            \State Let ${\mathcal{S}}$ be the set of $(\text{dim}(H) - 1)$-dimensional faces of $H$ that intersect $J^{\pi}$.
            \For{each $\Tilde{H} \in {\mathcal{S}}$}
                % \State $Q_{\text{face}}.\push(\Tilde{H})$
                % \State $w(\Tilde{H}).\add(w(H))$.
                \State $Q_{\text{face}}.\push(\Tilde{H})$
                and $w(\Tilde{H}).\add(w(H))$.
            \EndFor
        \EndIf
    \EndIf
\EndWhile
\State Return $\sF$ and $\sV$.
\end{algorithmic}
\label{alg:add_faces}
\end{algorithm}

\section{Why can the Pareto-front searching algorithm work?}
\label{sec:theoretical_reasons}

In this section, we present the theoretical foundations behind \cref{alg:pareto_front_searching}. This section is divided into three parts, and we give an overview of each part as follows.
\begin{enumerate}
    \item \textbf{Distance-one Property on Boundary of $\sJ(\mu)$}: We prove that neighboring deterministic policies on the boundary of $\sJ$ differ by only one state-action pair. This ensures that we can efficiently find all neighboring policies of $\pi$ on both the boundary of $\sJ$ and the Pareto front by searching within the $\Pi_1(\pi)$.

    \item \textbf{Sufficiency of Traversing Over Edges}: We show that for any vertex on the Pareto front, at least one edge on the Pareto front connects it to another Pareto-optimal vertex. This property guarantees that we can traverse the entire Pareto front by exploring neighboring deterministic policies.

    \item \textbf{Locality Property of the Pareto front}: We establish that the faces of the Pareto front intersecting at a vertex can be found by computing the convex hull of the vertex and returns of non-dominated deterministic policies that differ by one state-action pair. This ensures efficient discovery of Pareto front faces through local convex hull computation, reducing the computational overhead.
\end{enumerate}

\subsection{Distance-one Property on Boundary of \texorpdfstring{$\sJ$}{J(mu)}}
We present a theorem stating that the endpoints of any edge on the boundary of $\sJ$ correspond to deterministic policies that differ by only one state-action pair, almost surely. Specifically, for a finite MDP $(\mcalS,\mcalA,\rmP,\vr,\gamma)$, where $\rmP$ is the transition probability kernel and $\vr\geq 0$ is the reward function, the set of $\rmP$ and $\vr$ for which deterministic policies on the edge of the Pareto front differ by more than one state-action pair has Lebesgue measure zero. The proof of this theorem is provided in \cref{appendix:d1_property}.

\begin{theorem}
\label{thm:d1_property_simple}
In a discounted finite MO-MDP $(\mcalS,\mcalA,\rmP,\vr,\gamma)$ and under \cref{assumption:initial_coverage}, any edge on the boundary of $\sJ$ connecting two deterministic policies corresponds to policies that differ by only one state-action pair, almost surely.
\end{theorem}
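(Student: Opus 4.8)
\textbf{Proof proposal for \Cref{thm:d1_property_simple}.}

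The plan is to study how an edge of the polytope $\sJ$ arises from the geometry of the set of occupancy measures, and to show that ``long'' edges (joining deterministic policies that differ in two or more states) force an algebraic coincidence among the entries of $\rmP$ and $\vr$ that only holds on a measure-zero set. First I would pass from the return space to the state-action occupancy polytope $\mathcal{M}=\{d^\pi : \pi\in\Pi\}\subseteq\mathbb{R}^{S\times A}$, which is a polytope cut out by the Bellman flow constraints $\sum_a d(s,a) = \mu(s) + \gamma\sum_{s',a'} \rmP(s\mid s',a') d(s',a')$ together with $d\ge 0$; under \Cref{assumption:initial_coverage} every deterministic policy gives a vertex of $\mathcal{M}$, and $J^\pi = \langle d^\pi, \vr\rangle$ componentwise, so $\sJ$ is the linear image $D\mapsto \vr^\top D$ of $\mathcal{M}$. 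An edge of $\sJ$ with endpoints $J^{\pi_1},J^{\pi_2}$ pulls back to a face $F$ of $\mathcal{M}$ containing the vertices $d^{\pi_1},d^{\pi_2}$; the key point is that $F$ is itself a polytope and $\pi_1,\pi_2$ are adjacent vertices of the \emph{edge graph restricted to $F$}, hence (by taking a path of vertices within $F$) it suffices to control the combinatorial distance between \emph{adjacent} vertices of $\mathcal{M}$ that are then mapped to the \emph{same} edge of $\sJ$.

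The second step is the combinatorial heart: I would characterize when two deterministic vertices $d^{\pi_1},d^{\pi_2}$ of $\mathcal{M}$ are adjacent (share an edge of $\mathcal{M}$). Standard LP vertex theory says two vertices are adjacent iff the convex hull of their supports, intersected with the feasible region, is one-dimensional; equivalently, the only ``circuits'' of the flow polytope connecting them are single cycles in an appropriate bipartite state-action graph. A change at a single state $s$ (switching $\pi(s)$ from $a$ to $a'$) corresponds to exactly such an elementary modification of the occupancy flow, so distance-one changes always give edges of $\mathcal{M}$. Conversely, if $\pi_1$ and $\pi_2$ differ at two or more states, the segment $[d^{\pi_1},d^{\pi_2}]$ generically passes through the interior of a higher-dimensional face of $\mathcal{M}$, so it is \emph{not} an edge of $\mathcal{M}$ — but it could still be mapped onto an edge of $\sJ$ if the linear map $\vr^\top(\cdot)$ collapses a $2$-dimensional (or higher) face $F$ of $\mathcal{M}$ down to a segment. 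So I must show: for a $2$-face $F$ of $\mathcal{M}$ spanned by (say) three deterministic vertices $d^{\pi_0},d^{\pi_1},d^{\pi_2}$ whose images are affinely dependent in $\mathbb{R}^D$, the defining data must satisfy a nontrivial polynomial equation.

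For that third step I would argue as follows. The vertices $d^\pi$ are rational functions of $\rmP$ (solving a linear system whose matrix is $I-\gamma P_\pi^\top$ and whose right side is $\mu$), so each coordinate of $J^\pi = \vr^\top d^\pi$ is a rational function in $(\rmP,\vr)$ with denominators $\det(I-\gamma P_\pi)$ that are nonzero for all stochastic $P_\pi$ since $\gamma<1$. The event ``the images of $d^{\pi_0},d^{\pi_1},d^{\pi_2}$ are affinely dependent'' is the vanishing of all $3\times3$ minors of the matrix with rows $J^{\pi_1}-J^{\pi_0}$ and $J^{\pi_2}-J^{\pi_0}$ — wait, more precisely it is the condition that $\mathrm{rank}[\,J^{\pi_1}-J^{\pi_0}\,;\,J^{\pi_2}-J^{\pi_0}\,]\le 1$, i.e.\ the vanishing of all $2\times2$ minors of that $2\times D$ matrix. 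Clearing denominators, each such minor is a polynomial in $(\rmP,\vr)$. It then remains to show this polynomial is not identically zero — i.e.\ exhibit one MO-MDP with $D\ge 2$ where three deterministic policies that pairwise differ in $\ge 2$ states have affinely independent returns — and then invoke the standard fact that the zero set of a nonzero polynomial has Lebesgue measure zero; a union over the finitely many triples/tuples of deterministic policies and the finitely many faces keeps the bad set measure zero. I would also handle the reward sign constraint $\vr\ge0$ by noting the positive orthant has full dimension, so ``measure zero in the space of valid $(\rmP,\vr)$'' is unaffected.

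The main obstacle I anticipate is the bookkeeping in the second/third steps: making precise the claim that an edge of $\sJ$ joining two-states-apart deterministic policies must be the image of a $\ge 2$-dimensional face of $\mathcal{M}$ (one has to rule out the degenerate possibility that the edge's preimage is a union of lower-dimensional faces, and handle the case where the two endpoints are themselves joined by a chain of single-state switches inside $\mathcal{M}$ all of whose images happen to be collinear), and then writing the affine-dependence condition as an honestly nonzero polynomial — the nonvanishing requires actually constructing a witness MDP, which is the one place a concrete (if small) computation seems unavoidable. A secondary subtlety is that I should confirm the non-degeneracy (simplicity) of the relevant vertices of $\mathcal{M}$ holds generically, so that ``adjacent vertices'' behaves as expected; this again follows from a measure-zero argument on the minors of the active-constraint matrices.
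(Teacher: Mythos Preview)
Your approach is correct and takes a genuinely different route from the paper. The paper never passes to the occupancy polytope; instead it first shows (via single-objective optimality and \Cref{assumption:initial_coverage}) that the edge $[J^{\pi_1},J^{\pi_2}]$ coincides with the \emph{image of the policy segment} $\{\beta\pi_1+(1-\beta)\pi_2:\beta\in[0,1]\}$ under $\pi\mapsto J^\pi$, and then analyses that nonlinear curve directly: it decomposes $V^\pi$ through the hitting time of the set $\gM$ of states where $\pi_1,\pi_2$ differ, derives an explicit formula $V^\pi=\tilde V+\Gamma(I_M-F^\pi)^{-1}E^\pi$ with $F^\pi,E^\pi$ affine in $\pi$, and computes the directional derivative along $\pi_2-\pi_1$ to show the curve is a straight line iff a certain $M\times M$ matrix identity holds --- automatic when $M=1$ (all quantities are scalars) and a Lebesgue-null constraint on $(\rmP,\vr)$ when $M\ge2$. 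Your argument instead uses that $\mathcal{M}$ is simple and that the minimal face of $\mathcal{M}$ containing $d^{\pi_1},d^{\pi_2}$ has dimension exactly $d(\pi_1,\pi_2)$ (this holds for \emph{all} $\rmP$ under \Cref{assumption:initial_coverage}, not just generically), so an edge of $\sJ$ joining distance-$\ge2$ policies forces the reward map to collapse a $\ge2$-face of $\mathcal{M}$ to a segment, i.e.\ forces three distinct deterministic returns to be collinear --- a nonzero polynomial condition in $\vr$ alone once $\rmP$ is fixed. Your route is shorter and more combinatorial; the paper's route is heavier but yields the explicit hitting-time representation and the constructive ``$d=1\Rightarrow$ line'' direction that it reuses for the locality results. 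Two small clean-ups: (i) the preimage of the edge is always a single face of $\mathcal{M}$ (pull back the supporting functional $w$), so your ``union of lower-dimensional faces'' worry is unfounded; (ii) the non-vanishing witness must work for \emph{every} triple of distinct deterministic policies, not just those pairwise $\ge2$ apart --- but this is automatic, since three distinct vertices of a polytope are never collinear in $\mathcal{M}$, and a generic rank-$\ge2$ reward map (which exists once $D\ge2$) separates them.
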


Since the Pareto front lies on the boundary of $\sJ$, this theorem guarantees that for any policy $\pi$ on the Pareto front, the neighboring policies on the Pareto front can be found by searching among $\Pi_1(\pi)$, a set of policies that differ from $\pi$ by only one state-action pair. This significantly reduces the search space, making it more efficient to traverse the Pareto front.

\paragraph{Proof Sketch} The proof follows two key steps:
(1) First, we show that any face on the boundary of $\sJ$ is formed by the long-term returns of policies that are the convex combinations of the deterministic policies corresponding to the vertices of the face. Specifically, any point on an edge between two deterministic policies is the long-term return of a convex combination of these policies.
(2) Next, we establish that if two deterministic policies differ by only one state-action pair, the long-term returns of their convex combinations form a straight line. In contrast, if two deterministic policies differ by more than one state-action pair, the long-term returns of their convex combinations do not form a straight line between them for almost all MDPs.

By combining these two observations, we conclude that the endpoints of any edge on the boundary of $\sJ$ correspond to deterministic policies differing by exactly one state-action pair almost surely. Since the Pareto front lies on the boundary of $\sJ$, it follows that neighboring policies on the Pareto front differ by only one state-action pair. This completes the proof.

\subsection{Sufficiency of Traversing Over Edges}
From the previous section, we have the properties of the neighboring policies on the Pareto front. Hence, a natural approach would be to traverse all vertices of the Pareto front by moving along the edges connecting them. However, this approach relies on the assumption that the vertices on the Pareto front are connected. If the vertices are not connected, we risk being stuck at a single vertex on the Pareto front without a way to reach other unconnected vertices. To address this, we present \cref{lem:suf_ccs_pareto_simple} showing that the Pareto front of $\sJ$ is connected (proof is provided in \cref{appendix:d1_traversing}).

\begin{lemma}
(Existence of Neighboring Edges on the Pareto Front of $\sJ$)  
\label{lem:suf_ccs_pareto_simple}  
Suppose $\sJ$ contains multiple Pareto optimal vertices. Let $J^{\pi_1}$ be a Pareto optimal vertex on $\sJ$. Let $J_N(\pi_1, \sJ) = \{J^{\pi} \mid \pi \in \Pi_N(\pi_1, \sJ)\}$ represent all neighboring vertices of $J^{\pi_1}$ on $\sJ$. Then, there exists at least one neighboring vertex $J \in J_N(\pi_1, \sJ)$ such that the edge connecting $J$ and $J^{\pi_1}$ lies on the Pareto front.
\end{lemma}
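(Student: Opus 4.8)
\textbf{Proof proposal for Lemma~\ref{lem:suf_ccs_pareto_simple}.}

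The plan is to argue by contradiction: suppose $J^{\pi_1}$ is a Pareto optimal vertex of $\sJ$, yet \emph{no} edge of $\sJ$ incident to $J^{\pi_1}$ lies on the Pareto front. I would first use standard polytope theory: since $J^{\pi_1}$ is a vertex of the $D$-dimensional convex polytope $\sJ$, the edges incident to $J^{\pi_1}$ span the directions in which one can move away from $J^{\pi_1}$ while staying in $\sJ$; more precisely, the cone $\mathrm{cone}\{J - J^{\pi_1} : J \in J_N(\pi_1,\sJ)\}$ equals the tangent cone of $\sJ$ at $J^{\pi_1}$. Because $J^{\pi_1}$ is Pareto optimal, there is a supporting hyperplane with strictly positive normal $\vw \succ \vzero$ such that $\vw^\top J \le \vw^\top J^{\pi_1}$ for all $J \in \sJ$; hence $\vw^\top(J - J^{\pi_1}) \le 0$ for every neighboring vertex, i.e. every incident edge direction lies in the halfspace $\{\vd : \vw^\top \vd \le 0\}$.

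The core of the argument is then to show that some incident edge must itself be Pareto optimal, which means finding a neighbor $J$ with $J - J^{\pi_1}$ lying in a supporting-hyperplane face: an edge $[J^{\pi_1}, J]$ is on the Pareto front iff there is $\vw' \succ \vzero$ with $\vw'^\top \vx$ maximized over $\sJ$ on the whole segment. I would construct such a $\vw'$ by a perturbation/continuity argument on the normal cone. Consider the normal cone $N$ of $\sJ$ at $J^{\pi_1}$ (the set of $\vw$ for which $J^{\pi_1}$ maximizes $\vw^\top \vx$ over $\sJ$); it is a full-dimensional polyhedral cone since $J^{\pi_1}$ is a vertex, and by Pareto optimality it contains a strictly positive vector $\vw$. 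The facets of $N$ correspond exactly to the edges of $\sJ$ incident to $J^{\pi_1}$: for each incident edge $e$, the set of $\vw \in N$ that are also optimal along $e$ is a facet $N_e$ of $N$, and $\bigcup_e N_e = \partial N$. Now take the strictly positive $\vw$ in the interior of $N$ and move it toward the boundary; since the positive orthant is open and $N$ is full-dimensional, I can find a point $\vw' $ still strictly positive (the orthant is open, so a small enough move stays inside it) that lies on $\partial N$, hence in some $N_e$. That edge $e$ is then optimal for the strictly positive weight $\vw'$, so $e$ lies on the Pareto front — contradiction, unless we need to be slightly careful that $\vw'$ can be kept strictly positive.

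That last point is exactly where the main obstacle lies, and it is where Assumption~\ref{assumption:initial_coverage} and the structure of $\sJ$ should enter. If the strictly positive part of $N$ were only an extreme ray of $N$ — i.e. $N \cap \mathbb{R}^D_{>0}$ were a single ray touching no facet's relative interior in the positive orthant — then pushing $\vw$ to $\partial N$ might force a coordinate to $0$. I would rule this out by using that $\sJ$ has more than one Pareto optimal vertex (the hypothesis): if $N \cap \mathbb{R}^D_{>0}$ were a single ray $\mathbb{R}_{>0}\vw$, then $\vw$ would be the \emph{unique} strictly positive direction maximized at $J^{\pi_1}$, and one can show this forces $J^{\pi_1}$ to be the unique maximizer of $\vw^\top\vx$ while all other Pareto vertices are maximizers only of weights with some zero coordinate; a limiting/compactness argument on the Pareto front (it is closed, being a union of faces of the compact polytope $\sJ$) then shows the Pareto front is ``disconnected'' at $J^{\pi_1}$ in a way incompatible with $\sJ$ being a polytope of dimension $\ge 1$ with $J^{\pi_1}$ a vertex adjacent to the rest of the boundary. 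Concretely: take any other Pareto vertex $J^{\pi_2}$ and any $\vw_2 \succ \vzero$ supporting $\sJ$ at $J^{\pi_2}$; the segment $\{(1-t)\vw + t\vw_2 : t \in [0,1]\}$ stays strictly positive (the orthant is convex), and as $t$ increases from $0$ the maximizer of $((1-t)\vw+t\vw_2)^\top\vx$ over $\sJ$ changes from the face containing $J^{\pi_1}$ to the face containing $J^{\pi_2}$; at the first breakpoint $t^*$ the optimal face is a positive-dimensional face $F$ containing $J^{\pi_1}$ and supported by a strictly positive weight, so $F$ is Pareto optimal, and $F$ contains an edge of $\sJ$ incident to $J^{\pi_1}$ (every positive-dimensional face of a polytope containing a vertex $v$ contains an edge through $v$). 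That edge is the desired neighboring Pareto-optimal edge. I expect the write-up difficulty to be entirely in making the ``first breakpoint gives a positive-dimensional supported face through $J^{\pi_1}$'' step rigorous via the normal-fan / face-lattice correspondence, which is routine polytope combinatorics but needs to be stated carefully.
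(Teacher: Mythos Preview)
Your argument is correct and takes a genuinely different route from the paper. The paper's proof (in Appendix~\ref{appendix:d1_traversing}) argues by contradiction at the level of the neighboring vertices themselves: it assumes no incident edge admits a strictly positive supporting weight, asserts that this forces every neighboring vertex $J$ to be comparable to $J^{\pi_1}$ in the dominance order (and hence, by Pareto optimality of $J^{\pi_1}$, strictly dominated by it), and then uses that the tangent cone of $\sJ$ at $J^{\pi_1}$ is generated by the edge directions $J_i-J^{\pi_1}$ to conclude $\vw^\top(\vx-J^{\pi_1})<0$ for every $\vx\in\sJ$ and every $\vw\succ\vzero$, so $J^{\pi_1}$ dominates all of $\sJ$ --- contradicting the existence of a second Pareto vertex. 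You instead work constructively through the normal fan: taking strictly positive weights $\vw,\vw_2$ supporting $J^{\pi_1}$ and a second Pareto vertex $J^{\pi_2}$ respectively, the segment $(1-t)\vw+t\vw_2$ stays inside the open positive orthant, and the first $t^*$ at which it exits the interior of the normal cone of $J^{\pi_1}$ yields a strictly positive weight supporting a positive-dimensional face through $J^{\pi_1}$; that face then contains a Pareto-optimal edge incident to $J^{\pi_1}$. Your approach buys a cleaner and fully explicit argument (the parametric-LP breakpoint plus the face-lattice fact that a positive-dimensional face through a vertex contains an edge through it are both standard), and it sidesteps the dominance-dichotomy step, whose justification the paper leaves implicit. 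The paper's approach is more elementary in that it uses only the tangent-cone description and dominance, without invoking normal-fan machinery. One minor note: Assumption~\ref{assumption:initial_coverage} does not actually enter here --- the lemma is a statement about an abstract convex polytope $\sJ$ with multiple Pareto vertices, and neither proof uses the MDP structure beyond that.
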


\paragraph{Proof Sketch} 
We prove \cref{lem:suf_ccs_pareto_simple} by contradiction. Assume there exists a vertex on the convex polytope $\sJ$ that is not connected by an edge lying on the Pareto front. In this case, we show that this vertex would dominate all other points on the polytope, implying that it is the only Pareto optimal point, which leads to a contradiction. Therefore, if the Pareto front contains more than one deterministic Pareto optimal policy, the vertex must be connected to at least one edge on the Pareto front. This completes the proof.

\cref{lem:suf_ccs_pareto_simple} only shows that a Pareto optimal vertex is connected to an edge (1-dimensional) on the Pareto front. However, it does not guarantee that one of the neighboring facets of a Pareto optimal vertex must also lie on the Pareto front. 
A facet of $\sJ$ lies on the Pareto front if and only if its normal vector is strictly positive (which will be shown later in \cref{lem:find_pareto_from_cvxhull_simple}), meaning that all points on the facet are not dominated by any other points in $\sJ$. However, the normal vectors of all facets intersecting at a Pareto optimal vertex are not necessarily strictly positive. 
This observation is consistent with \cref{fig:neighbour_facet_doesnot_exist}, where point A is a Pareto optimal vertex, but it only lies on a Pareto optimal edge AB, and all its incident faces are not Pareto optimal. 
This also justifies that constructing the Pareto front in a 2D case (with two reward objectives) is relatively simple, as we only need to connect the neighbor vertices. However, in higher dimensions, the Pareto front becomes more complex, and it is impossible to build the Pareto front purely from vertices.

\subsection{Locality Property of the Pareto front}
Previous theoretical results show that it is possible to traverse the Pareto front through edges constructed by deterministic policies that differ by only one state-action pair. However, each policy has $S\times(A-1)$ neighboring policies, and the challenge lies in efficiently selecting the Pareto optimal policies among them and constructing the Pareto front based on these neighbors.

This section provides the theoretical foundations for retrieving the boundary of $\sJ$ from the convex hull constructed by a vertex and the neighboring policies that differ by only one state-action pair. Furthermore, we show that the Pareto front can be retrieved by (1) constructing the convex hull using the vertex and locally non-dominated policies, and (2) applying \cref{lem:find_pareto_from_cvxhull_simple} to extract all Pareto optimal faces from the convex hull.

Let deterministic policy $\pi$ correspond to a vertex on $\sJ$. 
We denote the set of neighbors on the boundary of $\sJ$ to $J^{\pi}$ as $\N(J^{\pi}, \sJ)$, and the set of deterministic policies whose long-term returns make up $\N(J^{\pi}, \sJ)$ as $\Npi(J^{\pi}, \sJ)$.
Let $\sF(J^{\pi}, \sJ)$ and $\sF(J^{\pi}, \conv(J^{\Pi_{1}}\cup \{J^{\pi}\})$ be the set of incident faces of $J^{\pi}$ on $\sJ$ and $\conv(J^{\Pi_{1}}\cup \{J^{\pi}\})$, respectively. 
 
\begin{theorem}
\label{thm:cvxhull_CCS_same_neighbour_simple}
The neighboring vertices and faces of $J^{\pi}$ on the convex hull $\conv(J^{\Pi_{1}} \cup \{J^{\pi}\})$ are the same as those on $\sJ$. Formally, $\N(J^{\pi}, \conv(J^{\Pi_{1}} \cup \{J^{\pi}\})) = \N(J^{\pi}, \sJ)$, $\Npi(J^{\pi}, \conv(J^{\Pi_{1}} \cup \{J^{\pi}\})) = \Npi(J^{\pi}, \sJ)$, and $\sF(J^{\pi}, \conv(J^{\Pi_{1}} \cup \{\{J^{\pi}\})) = \sF(J^{\pi}, \sJ)$.
\end{theorem}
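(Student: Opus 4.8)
# Proof Proposal for Theorem \ref{thm:cvxhull_CCS_same_neighbour_simple}

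The plan is to show that passing from $\sJ$ to the smaller convex hull $\conv(J^{\Pi_1}\cup\{J^\pi\})$ preserves exactly the local structure of the polytope around the vertex $J^\pi$. The key geometric fact to exploit is that $\conv(J^{\Pi_1}\cup\{J^\pi\})\subseteq\sJ$, so this hull sits inside $\sJ$ while sharing the vertex $J^\pi$; shrinking a polytope can only ever "cut away" structure, not add it, so one direction of each equality is essentially automatic. The real work is the reverse inclusion: every neighbor / incident face of $J^\pi$ on $\sJ$ must already be present in the small hull. Here I would invoke \cref{thm:d1_property_simple}: every edge of $\sJ$ incident to $J^\pi$ connects $J^\pi$ to a deterministic policy $\pi'$ differing from $\pi$ in exactly one state-action pair, i.e.\ $J^{\pi'}\in J^{\Pi_1}$. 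Hence all vertices adjacent to $J^\pi$ on $\sJ$ lie in the generating set of the small hull.

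First I would set up the containment $\conv(J^{\Pi_1}\cup\{J^\pi\})\subseteq\sJ$, which holds because $\sJ$ is convex (\cref{lem:Jmu_polytope_vertices_deterministic}) and contains all the generating points. Since $J^\pi$ is a vertex of $\sJ$, it is a fortiori a vertex (extreme point) of any subset that contains it, so $J^\pi$ is a vertex of the small hull. Second, for the neighbor equality: given a neighboring vertex $J^{\pi'}\in\N(J^\pi,\sJ)$, the segment $[J^\pi,J^{\pi'}]$ is an edge of $\sJ$; by \cref{thm:d1_property_simple} we have $\pi'\in\Pi_1(\pi)$, so $J^{\pi'}$ is one of the generators of the small hull, and the segment $[J^\pi,J^{\pi'}]$—being an extreme edge in the larger polytope $\sJ$—remains an edge in the contained polytope (any supporting hyperplane of $\sJ$ exposing this edge also supports the smaller hull and exposes the same face). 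This gives $\N(J^\pi,\sJ)\subseteq\N(J^\pi,\conv(J^{\Pi_1}\cup\{J^\pi\}))$. For the converse, any edge of the small hull from $J^\pi$ goes to some $J^{\pi'}$ with $\pi'\in\Pi_1(\pi)$, and I would argue it is also an edge of $\sJ$—the supporting hyperplane exposing it in the small hull, suitably chosen, exposes a face of $\sJ$ whose only generator among $\Pi_1(\pi)\cup\{\pi\}$ on that hyperplane is $\pi'$ and $\pi$; combined with \cref{thm:d1_property_simple} (which says edges of $\sJ$ at $J^\pi$ are exactly of distance-one type) this pins it down.

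Third, the face equality $\sF(J^\pi,\conv(J^{\Pi_1}\cup\{J^\pi\}))=\sF(J^\pi,\sJ)$ follows by the same supporting-hyperplane bookkeeping, now applied in all dimensions: a face of either polytope incident to $J^\pi$ is determined by a supporting functional $\vw$ maximized on a subset of vertices including $J^\pi$, and by the proof-sketch structure underlying \cref{thm:d1_property_simple} (item (1): any boundary face of $\sJ$ is the set of returns of convex combinations of the deterministic policies at its vertices), the vertices of any incident face of $\sJ$ lie in $\Npi(J^\pi,\sJ)\cup\{\pi\}\subseteq\Pi_1(\pi)\cup\{\pi\}$. So the exposed set is the same whether we take the max over $\sJ$ or over the generating set of the small hull; the two face lattices at $J^\pi$ coincide, and $\Npi$ equality is the $0$-dimensional case of this statement.

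I expect the main obstacle to be the careful verification that a face \emph{exposed} in the smaller hull is also exposed in $\sJ$ (the "small $\Rightarrow$ large" direction), since a priori a supporting hyperplane of $\conv(J^{\Pi_1}\cup\{J^\pi\})$ need not support $\sJ$—points of $\sJ$ outside the small hull could lie on the wrong side. Resolving this is exactly where \cref{thm:d1_property_simple} does the heavy lifting: it tells us the edge-graph of $\sJ$ at $J^\pi$ is \emph{already} contained in $\{J^\pi\}\cup J^{\Pi_1}$, so the small hull cannot have spurious extra edges/faces at $J^\pi$ beyond those of $\sJ$; any candidate extra face would have to be "visible" in $\sJ$ too. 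I would make this precise by a local/normal-cone argument: the normal cone of $\conv(J^{\Pi_1}\cup\{J^\pi\})$ at $J^\pi$ equals the normal cone of $\sJ$ at $J^\pi$, because both are cut out by the same set of facet normals (the facets incident to $J^\pi$ have, by \cref{thm:d1_property_simple} and its proof, all their vertices in $\Pi_1(\pi)\cup\{\pi\}$), and the face lattice at a vertex is dual to the normal-cone fan there.
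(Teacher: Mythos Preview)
Your approach is essentially the paper's: the containment $\conv(J^{\Pi_1}\cup\{J^\pi\})\subseteq\sJ$, plus \cref{thm:d1_property_simple} to force $\Npi(J^\pi,\sJ)\subseteq\Pi_1(\pi)$, plus supporting-hyperplane reasoning. The paper organizes the proof as a neighbor-equality theorem followed by a general polytope lemma (equal edge-neighbors $\Rightarrow$ equal incident-face structure), which is dual to your normal-cone suggestion.

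One concrete correction: your claim that ``the facets incident to $J^\pi$ have, by \cref{thm:d1_property_simple} and its proof, all their vertices in $\Pi_1(\pi)\cup\{\pi\}$'' is false in general---\cref{thm:d1_property_simple} only constrains \emph{edge}-neighbors of $J^\pi$, and a higher-dimensional facet through $J^\pi$ can contain vertices not adjacent to $J^\pi$ and hence outside $\Pi_1(\pi)$. This does not actually break your plan, because the normal cone at a vertex is determined by the edge directions alone (it is the polar of the tangent cone those directions generate); so once neighbor equality is in hand, normal-cone equality---and hence the local face-lattice equality---follows without the facet-vertex claim. For the converse neighbor inclusion itself (your ``main obstacle''), the paper's argument is cleaner than hunting for a supporting hyperplane of $\sJ$: having already shown $\Npi(J^\pi,\sJ)\subseteq\Npi(J^\pi,\text{small hull})$, suppose $v$ were a neighbor in the small hull but not in $\sJ$. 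Then the segment $[J^\pi,J^v]$ lies strictly on the interior side of every facet of $\sJ$ through $J^\pi$; but those same facet-halfspaces contain the small hull, and by the inclusion just established they already account for all facets of the small hull at $J^\pi$, so $[J^\pi,J^v]$ cannot be an edge of the small hull either---contradiction. This bootstraps part (2) directly from part (1).
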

The proof is provided in \cref{thm: cvxhull_CCS_same_neighbour} and \cref{prop:vertices_same_faces_same} of \cref{appendix:locality_property}.
This theorem guarantees that we can find all neighboring vertices and faces on the boundary of $\sJ$ from the local convex hull $\conv(J^{\Pi_{1}} \cup \{J^{\pi}\})$. By iterating through the process of constructing the convex hull of all distance-one policies and retaining the neighboring faces, we can eventually traverse the entire boundary of $\sJ$.

We also establish key properties of the Pareto front on a convex polytope (proof is provided in \cref{appendix:proof_find_pareto_front}).

\begin{lemma}
\label{lem:find_pareto_from_cvxhull_simple}
Given a convex polytope $\sC$, let $F$ be face of $\sC$ defined as the intersection of $n$ facets, i.e., $F = \cap_{i=1}^{n} F_i$, where $n\geq 1$ and each facet is of the form $F_i = \sC \cap \{ \vx \in \mathbb{R}^d : \vw_i^{\top} \vx = c_i \}$. Then the face $F$ lies on the Pareto front of $\sC$ if and only if there exists a linear combination of the normal vectors, weighted by $\boldsymbol{\alpha}_i \geq 0$ for any $i\in\{1,\cdots,n\}$, such that $\sum_{i} \boldsymbol{\alpha}_i [\vw_i]_j > 0$ where $[\vw_i]_j$ is the $j$-th component of $\vw_i$.
\end{lemma}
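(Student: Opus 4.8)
\textbf{Proof proposal for Lemma~\ref{lem:find_pareto_from_cvxhull_simple}.}

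The plan is to prove both directions of the equivalence by exploiting the fact that a point $\vx$ lies on the face $F = \cap_{i=1}^n F_i$ precisely when $\vw_i^\top \vx = c_i$ for all $i$, while every other point $\vy \in \sC$ satisfies $\vw_i^\top \vy \le c_i$ for all $i$. I would first recall the characterization that a point $\vx \in \sC$ is Pareto optimal if and only if there is no $\vy \in \sC$ with $\vy \succ \vx$, i.e.\ $\vy \ge \vx$ componentwise with at least one strict inequality; and then reduce ``$F$ lies on the Pareto front'' to ``every point of $F$ is Pareto optimal,'' which by convexity and the relative-interior structure of faces is equivalent to ``the relative interior of $F$ consists of Pareto optimal points.''

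For the ``if'' direction: suppose $\boldsymbol\alpha \ge 0$ with $\vw := \sum_i \alpha_i \vw_i$ strictly positive, $[\vw]_j > 0$ for all $j$. Let $c := \sum_i \alpha_i c_i$. Then for every $\vy \in \sC$ we have $\vw^\top \vy = \sum_i \alpha_i (\vw_i^\top \vy) \le \sum_i \alpha_i c_i = c$, with equality exactly when $\vw_i^\top \vy = c_i$ for every $i$ with $\alpha_i > 0$. In particular, for any $\vx \in F$ we get $\vw^\top \vx = c$, so $\vw^\top \vx \ge \vw^\top \vy$ for all $\vy \in \sC$. Now if some $\vy \in \sC$ dominated $\vx$, i.e.\ $\vy \ge \vx$ with $\vy \neq \vx$, then since $\vw$ is strictly positive we would get $\vw^\top \vy > \vw^\top \vx$, a contradiction. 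Hence every $\vx \in F$ is Pareto optimal, so $F$ lies on the Pareto front.

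For the ``only if'' direction: suppose no such nonnegative combination of the $\vw_i$ is strictly positive. I would phrase this as: the convex cone $C := \{\sum_i \alpha_i \vw_i : \alpha_i \ge 0\}$ does not meet the open positive orthant $\R^d_{>0}$. Since $C$ is a closed convex cone (finitely generated) and the positive orthant is convex, a separating-hyperplane argument yields a nonzero vector $\vz$ with $\vz^\top \vw_i \le 0$ for all $i$ and $\vz^\top \vp \ge 0$ for all $\vp$ in the closed positive orthant --- the latter forcing $\vz \ge 0$, hence $\vz \gneq 0$. (One must be slightly careful about the boundary; the cleanest route is Stiemke's lemma / Gordan's theorem of the alternative: either $\sum \alpha_i \vw_i > 0$ has a solution with $\alpha \ge 0$, or there is $\vz \gneq 0$ with $\vw_i^\top \vz \le 0$ for all $i$.) Take any point $\vx$ in the relative interior of $F$ and consider $\vx_t := \vx + t\vz$ for small $t > 0$. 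Since $\vw_i^\top \vx_t = c_i + t\,\vw_i^\top \vz \le c_i$ for all $i$, and all other facet inequalities of $\sC$ are strict at $\vx$ (as $\vx$ is in the relative interior of the face cut out by exactly $F_1,\dots,F_n$), for $t$ small enough $\vx_t \in \sC$. But $\vx_t \ge \vx$ and $\vx_t \neq \vx$ since $\vz \gneq 0$, so $\vx_t$ dominates $\vx$, contradicting Pareto optimality of $\vx$. Hence $F$ does not lie on the Pareto front.

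The main obstacle I anticipate is the boundary subtlety in the ``only if'' direction: I want to produce a \emph{nonnegative nonzero} $\vz$ with $\vw_i^\top \vz \le 0$, and a naive separation between $C$ and the \emph{open} positive orthant only gives $\vz$ in the closed dual cone of the orthant with $\vz^\top \vw_i \le 0$, but I must rule out $\vz$ being useless (e.g.\ the argument breaking if $\vz$ could be chosen with $\vw_i^\top \vz = 0$ for all $i$ --- actually that is fine, it still gives domination as long as $\vz \gneq 0$). The clean fix is to invoke Gordan's theorem (or Stiemke's lemma) directly on the matrix with rows $\vw_i^\top$, which is exactly the alternative needed, so the argument is routine once phrased correctly. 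A secondary point to handle carefully is the reduction ``$F$ on the Pareto front'' $\Longleftrightarrow$ ``every point of $F$ Pareto optimal'' $\Longleftrightarrow$ ``some relative-interior point of $F$ Pareto optimal'': the first is essentially the definition of the Pareto front being a union of faces, and the last equivalence follows because if a relative-interior point $\vx$ is dominated by $\vy$ via direction $\vz = \vy - \vx \gneq 0$, then every point of $F$ moved in direction $\vz$ stays in $\sC$ near $F$ and is dominated, while conversely a dominated boundary point of $F$ lies in a proper subface, handled by induction or by the explicit cone argument above applied to that subface. I would state this reduction as a short preliminary observation and then give the two cone arguments.
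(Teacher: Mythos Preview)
Your proof is correct. The ``if'' direction matches the paper's: set $\vw = \sum_i \alpha_i \vw_i > 0$, note that every $\vx \in F$ maximizes $\vw^\top(\cdot)$ over $\sC$, and conclude that no $\vy \in \sC$ can dominate $\vx$. For the ``only if'' direction you take a genuinely different route. The paper leans on its scalarization equivalence (\cref{lem:pareto_scalarization}): if $F$ is Pareto optimal then some strictly positive $\vw$ supports $\sC$ at every point of $F$, and (implicitly, via the normal-cone description of $F$) this $\vw$ must be a nonnegative combination of the $\vw_i$. Your contrapositive instead invokes a Gordan/Stiemke alternative to produce a nonnegative nonzero $\vz$ with $\vw_i^\top \vz \le 0$ for all $i$, and then perturbs a relative-interior point of $F$ along $\vz$ to exhibit an explicit dominating point inside $\sC$. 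The paper's route is shorter once \cref{lem:pareto_scalarization} is in hand, but your argument is more self-contained and constructive---and arguably more robust, since the paper's own proof of the ``only if'' half of \cref{lem:pareto_scalarization} is defective (it asserts that a Pareto optimal $\vx$ satisfies $\vx \ge \vy$ for all $\vy$), even though the statement is true for convex polytopes. Your anticipated obstacles about the separation boundary case and the relative-interior reduction are real but you handle them correctly.
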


Based on \cref{lem:find_pareto_from_cvxhull_simple}, we can also extract the Pareto optimal faces from the local convex hull. Let $\sF(J^{\pi}, \pareto(\sJ))$ be the set of incident faces of $J^{\pi}$ on the Pareto front $\pareto(\sJ)$. The proof is shown in \cref{prop:pareto_equality} of \cref{appendix:locality_property}.
\begin{proposition}
\label{prop:pareto_equality_simple}
Let $\sF_P(J^{\pi}, \conv(J^{\Pi_{1,\nd}} \cup \{J^{\pi}\}))$ denote the incident faces of $J^{\pi}$ on the convex hull $\conv(J^{\Pi_{1,\nd}} \cup \{J^{\pi}\})$ that satisfy the conditions of \cref{lem:find_pareto_from_cvxhull_simple}. Then, 
\[
\sF(J^{\pi}, \pareto(\sJ)) = \sF_P(J^{\pi}, \conv(J^{\Pi_{1,\nd}} \cup \{J^{\pi}\})).
\]
\end{proposition}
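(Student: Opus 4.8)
The plan is to prove \cref{prop:pareto_equality_simple} by combining \cref{thm:cvxhull_CCS_same_neighbour_simple} (locality of the convex hull) with \cref{lem:find_pareto_from_cvxhull_simple} (the normal-vector criterion for Pareto optimality of a face), plus one extra observation: restricting from $\conv(J^{\Pi_1}\cup\{J^\pi\})$ to $\conv(J^{\Pi_{1,\nd}}\cup\{J^\pi\})$ does not change the incident faces of $J^\pi$. So the first step is to argue $\sF(J^\pi,\conv(J^{\Pi_{1,\nd}}\cup\{J^\pi\})) = \sF(J^\pi,\conv(J^{\Pi_1}\cup\{J^\pi\}))$. Here I would note that any $J^{\tilde\pi}$ with $\tilde\pi\in\Pi_1(\pi)\setminus\Pi_{1,\nd}(\pi)$ is dominated by some other return in $J^{\Pi_1}$, hence (using that all these returns are neighbors or near-neighbors of the vertex $J^\pi$) lies in the convex hull of the remaining points together with $J^\pi$, or at worst contributes only faces not incident to $J^\pi$; combined with \cref{thm:cvxhull_CCS_same_neighbour_simple} which already identifies the neighbors of $J^\pi$ with $\Npi(J^\pi,\sJ)\subseteq\Pi_{1,\nd}(\pi)$, removing the dominated points leaves the local geometry at $J^\pi$ untouched.

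Next I would chain the equalities. By \cref{thm:cvxhull_CCS_same_neighbour_simple}, $\sF(J^\pi,\conv(J^{\Pi_1}\cup\{J^\pi\})) = \sF(J^\pi,\sJ)$, and with the previous paragraph this gives $\sF(J^\pi,\conv(J^{\Pi_{1,\nd}}\cup\{J^\pi\})) = \sF(J^\pi,\sJ)$; moreover the facets of the local hull through $J^\pi$ are exactly the facets of $\sJ$ through $J^\pi$, so corresponding incident faces have the same defining normal vectors. Now apply \cref{lem:find_pareto_from_cvxhull_simple} twice: a face $F$ incident to $J^\pi$ lies on $\pareto(\sJ)$ iff the normal vectors of the $\sJ$-facets whose intersection is $F$ admit a non-negative combination that is strictly positive in every coordinate; by the matching of faces and normal vectors, this is the exact same condition that defines $\sF_P(J^\pi,\conv(J^{\Pi_{1,\nd}}\cup\{J^\pi\}))$. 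One subtlety to handle: a face $F$ of $\sJ$ through $J^\pi$ may, as a polytope, be expressible as an intersection of facets in more than one way; I would fix this by always taking the normal cone description — the normal vectors of \emph{all} facets of $\sJ$ containing $F$ — which is intrinsic, and check that \cref{lem:find_pareto_from_cvxhull_simple}'s criterion is invariant under this canonical choice (it is, since adding more non-negative-combinable generators only enlarges the cone of achievable $\sum_i\alpha_i\vw_i$). Then $\sF(J^\pi,\pareto(\sJ)) = \sF_P(J^\pi,\conv(J^{\Pi_{1,\nd}}\cup\{J^\pi\}))$ follows.

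The main obstacle I anticipate is the first step — verifying rigorously that discarding the dominated neighbors in $\Pi_1(\pi)\setminus\Pi_{1,\nd}(\pi)$ does not delete or alter any face incident to $J^\pi$. A dominated return $J^{\tilde\pi}$ could in principle still be a vertex of $\conv(J^{\Pi_1}\cup\{J^\pi\})$ and participate in a facet through $J^\pi$; I need \cref{thm:cvxhull_CCS_same_neighbour_simple} to rule this out, namely that every actual neighbor of $J^\pi$ on $\sJ$ is non-dominated (equivalently $\Npi(J^\pi,\sJ)\cap(\Pi_1(\pi)\setminus\Pi_{1,\nd}(\pi))=\varnothing$), which should follow because a vertex dominated by another return cannot be Pareto-adjacent to a Pareto point $J^\pi$ — but since $J^\pi$ here need not itself be Pareto optimal, I may instead need the weaker fact that a dominated neighbor's edge to $J^\pi$ is never on the Pareto front, so it is irrelevant to $\sF_P$, and separately that it does not occlude any Pareto-relevant facet. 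Nailing down that "non-occlusion" claim cleanly, rather than via a messy case analysis on the Quickhull-type update, is where I expect to spend the most care.
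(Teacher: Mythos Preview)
Your plan hinges on first establishing the full equality
\[
\sF\bigl(J^\pi,\conv(J^{\Pi_{1,\nd}}\cup\{J^\pi\})\bigr)=\sF\bigl(J^\pi,\conv(J^{\Pi_1}\cup\{J^\pi\})\bigr),
\]
and you justify it by asserting that \cref{thm:cvxhull_CCS_same_neighbour_simple} gives $\Npi(J^\pi,\sJ)\subseteq\Pi_{1,\nd}(\pi)$. It does not: the theorem only yields $\Npi(J^\pi,\sJ)\subseteq\Pi_1(\pi)$. A neighbor $u$ of $J^\pi$ on $\sJ$ can be dominated by some other $v\in\Pi_1(\pi)$ --- domination does not prevent $J^u$ from being an extreme point of $\sJ$, nor from being adjacent to $J^\pi$. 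When that happens, dropping $J^u$ in passing from $\Pi_1$ to $\Pi_{1,\nd}$ genuinely alters the incident face structure at $J^\pi$, so your step-1 equality is false in general. Your final paragraph senses this, but the proposed ``non-occlusion'' patch would be as hard as the proposition itself.

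The paper never claims that full face equality after pruning. Instead it proves the two inclusions by different mechanisms. For $\sF(J^\pi,\pareto(\sJ))\subseteq\sF_P(\cdot)$: every vertex of a \emph{Pareto} face through $J^\pi$ is itself Pareto optimal in $\sJ$, hence a fortiori non-dominated inside $\Pi_1(\pi)$; thus the Pareto neighbors survive pruning, and a separate lemma (a Pareto face of a polytope remains a face of any sub-hull containing all its vertices) places the face in $\sF(J^\pi,\conv(J^{\Pi_{1,\nd}}\cup\{J^\pi\}))$. Since that hull sits inside $\sJ$, a face Pareto in $\sJ$ is trivially Pareto in the hull, giving membership in $\sF_P$. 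For the reverse inclusion: if $F$ satisfies \cref{lem:find_pareto_from_cvxhull_simple} on the non-dominated hull, the witnessing $\vw>0$ supports $F$ against every $J^{\tilde\pi}$ with $\tilde\pi\in\Pi_{1,\nd}(\pi)$; each dominated $\tilde\pi\in\Pi_1(\pi)\setminus\Pi_{1,\nd}(\pi)$ is then handled automatically, since $\vw>0$ and domination give $\vw^\top J^{\tilde\pi}\le\vw^\top J^{\tilde\pi'}\le c$ for some non-dominated $\tilde\pi'$. Hence $F$ is Pareto on $\conv(J^{\Pi_1}\cup\{J^\pi\})$, and only now --- on the \emph{unpruned} hull --- may you invoke \cref{thm:cvxhull_CCS_same_neighbour_simple} to identify incident facets with those of $\sJ$ and transfer the positive-combination witness to $\sJ$ via \cref{lem:find_pareto_from_cvxhull_simple}. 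The asymmetry between the two directions is precisely what your symmetric plan misses.
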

This proposition demonstrates that the faces of the Pareto front that intersect with a vertex are equivalent to the Pareto optimal faces of the local convex hull constructed by the vertex and its neighboring vertices.
Therefore, we conclude that all faces of the Pareto front that intersect with a vertex can be identified by computing the convex hull of neighboring non-dominated policies and then applying the Pareto front criterion in \cref{lem:find_pareto_from_cvxhull_simple}, which corresponds to \cref{eq:pareto_finding}, to extract the Pareto optimal faces from the local convex hull. 

\section{Algorithm Evaluation}
In this section, we empirically evaluate the performance of our algorithm in discovering the full Pareto front, including the vertices that correspond to deterministic Pareto optimal policies in MO-MDPs. To the best of our knowledge, this is the first method capable of precisely identifying the complete Pareto front without traversing the preference vector space. We start with giving an example where the shape of the Pareto front necessitates identifying the entire front, rather than focusing solely on deterministic policies. Then we compare our algorithm with existing methods in terms of efficiency in finding both the full Pareto front and the set of deterministic Pareto optimal policies. 
% Finally, we analyze the robustness of the Pareto front searching algorithm in a reinforcement learning setting, where the information of true MDP is not known. 

We demonstrate a Pareto front structure that is not immediately evident from the vertices alone even in a simple setting. We consider a basic MDP setting, where we only have $4$ states and $3$ actions, and the reward dimension is $3$. 
As illustrated in \cref{fig:special_case},  the Pareto front is more complex than simply being the convex hull of its vertices. 
For instance, A is a point on the Pareto front, but none of its incident facets are on the Pareto front. To find an edge connecting A to another policy on the Pareto front, we must further examine which edge that connects point A and other points corresponding to deterministic Pareto optimal policies is also on the Pareto front by \cref{lem:find_pareto_from_cvxhull_simple}. This edge is difficult to derive using only the knowledge of deterministic policies on the Pareto front.

\subsection{Ability and Efficiency in Finding the Pareto Front}
As the Pareto front searching algorithm is the first to recover the entire precise Pareto front, we provide a benchmark algorithm for comparison to demonstrate both the ability and efficiency of our method in identifying all faces of the true Pareto front.
The benchmark algorithm retrieves the Pareto front in two steps: (1) computing the convex hull of all non-dominated deterministic policies and (2) applying \cref{alg:add_faces} to identify the Pareto front.

Since the number of deterministic policies increases exponentially with the number of states, and the complexity of calculating the convex hull is proportional to the number of input points~\cite{barber1996quickhull}, the benchmark algorithm's complexity grows exponentially with the number of actions. Therefore, we compare the performance of our algorithm against the benchmark in small MDP settings. 
Specifically, we consider state spaces of size $5$ and $8$, action spaces ranging from $5$ to $7$, and a reward dimension of $3$.
Our algorithm successfully identifies all faces of the true Pareto front in all cases.
As shown in \cref{fig:true_comparison}, the runtime of the benchmark algorithm increases exponentially with the number of actions, making it impractical even in moderately sized action spaces. In contrast, our algorithm maintains a controllable runtime by efficiently traversing edges on the Pareto front to identify the Pareto faces.

\subsection{Efficiency in Finding the Pareto Front Vertices}

We compare the efficiency of finding all vertices on the Pareto front (i.e., deterministic Pareto optimal policies) with OLS~\cite{roijers2016multi}. 
Let $N$ denote the number of vertices on the Pareto front.

OLS iteratively calculates a new preference vector and solves the single-objective MDP using this updated preference vector. The complexity of calculating the new preference vector involves comparisons with all previously effective preference vectors to discard obsolete ones, resulting in a complexity of $\mathcal{O}(N^2D+NC_{\text{we}})$. Additionally, solving the single-objective planning problem in each iteration incurs a complexity of $\mathcal{O}(N(C_{\text{so}}+C_{\text{pe}}))$, where $C_{\text{we}}$, $C_{\text{so}}$, and $C_{\text{pe}}$ represent the complexity of computing the new preference vectors, solving the single-objective MDP, and policy evaluation, respectively. Since planning is significantly more computationally expensive than evaluation, and OLS requires planning in each iteration, this leads to a high overall complexity.

In contrast, our algorithm requires only a one-time single-objective planning step during the initialization phase to obtain the initial deterministic Pareto-optimal policy. As our algorithm directly searches the Pareto front, the total number of iterations corresponds to the number of vertices on the Pareto front, avoiding the quadratic complexity introduced by the preference vector updates in OLS. The overall complexity of our algorithm is $\mathcal{O}(C_{\text{so}}+N(S(A-1)C_{\text{pe}}+C_{\nd}+C_{\text{cvx}}))$, where $C_{\nd}$ refers to the complexity of finding $\Pi_{1,\nd}$, the set of non-dominated policies from $\Pi_1$, and $C_{\text{cvx}}$ is the complexity of computing the convex hull faces formed by $N_{\nd}+1$ points that intersect at the current policy. Since the number of policies in $\Pi_{1,\nd}$ is limited, the complexities $C_{\nd}$ and $C_{\text{cvx}}$ are similarly constrained.

We compare the running time of our algorithm against OLS in terms of efficiency for finding all vertices on the Pareto front. Specifically, we consider scenarios where the state space size is $5$ and $8$, and the reward dimension is $4$. As shown in \cref{fig:ols_comparison}, while both OLS and the proposed algorithm efficiently find all vertices when the action and state space are small, the running time for OLS increases significantly with even slight expansions in the state and action space. In contrast, our proposed algorithm maintains a more manageable running time.

% {\color{red} \cref{fig:ols_comparison}
% The number of vertices growing with the state, action,
% comparison with all deterministic policies growing with state,action
% }

% \begin{figure}[htbp]
%     \centering
%     \begin{minipage}[t]{0.32\linewidth}
%     \label{fig:true_comparison}
%         \centering
%         \includegraphics[width=\linewidth]{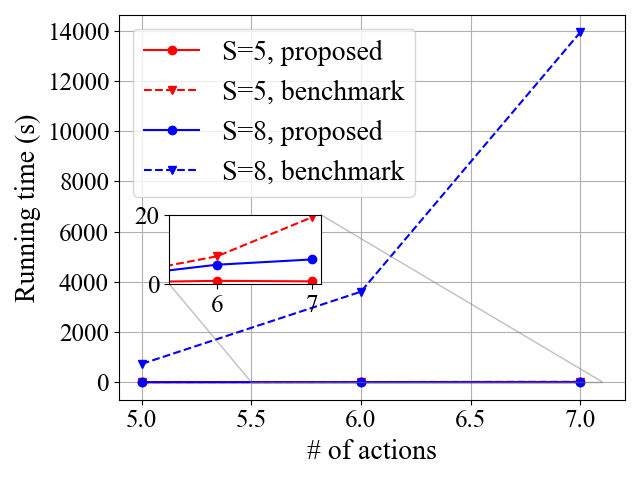}  
%         \caption{Comparison of our alg and benchmark}
%     \end{minipage}

%     \begin{minipage}[t]{0.32\linewidth}
%     \label{fig:ols_comparison}
%         \centering
%         \includegraphics[width=\linewidth]{}  
%         \caption{Comparison of our alg and OLS}
%     \end{minipage}
  
%     \begin{minipage}[t]{0.32\linewidth}
%     \label{fig:special_case}
%         \centering
%         \includegraphics[width=\linewidth]{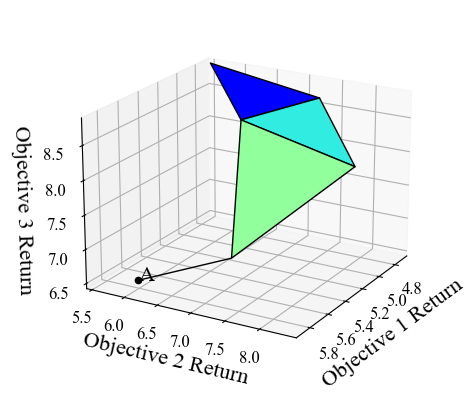}  
%         \caption{Special case}
%     \end{minipage}
% \end{figure}
\begin{figure}[htbp]
    \centering
    \begin{minipage}[t]{0.32\linewidth}
        \centering
        \includegraphics[width=\linewidth]{figs/paper_figs/special_cases/Pareto_front_S_4_A_3_seed_1_r_3.png}
        \caption{Pareto front of a simple MDP with $S=4$, $A=3$, and $D=3$.}
        \label{fig:special_case} 
    \end{minipage}
    \hfill
    \begin{minipage}[t]{0.32\linewidth}
        \centering
        \includegraphics[width=\linewidth]{figs/paper_figs/eval/true_alg2_Pareto_front_r_3.png}  
        \caption{Comparison between the proposed Pareto front searching algorithm and the benchmark algorithm when $D=3$.}
        \label{fig:true_comparison}  
    \end{minipage}
    \hfill  
    \begin{minipage}[t]{0.32\linewidth}
        \centering
        \includegraphics[width=\linewidth]{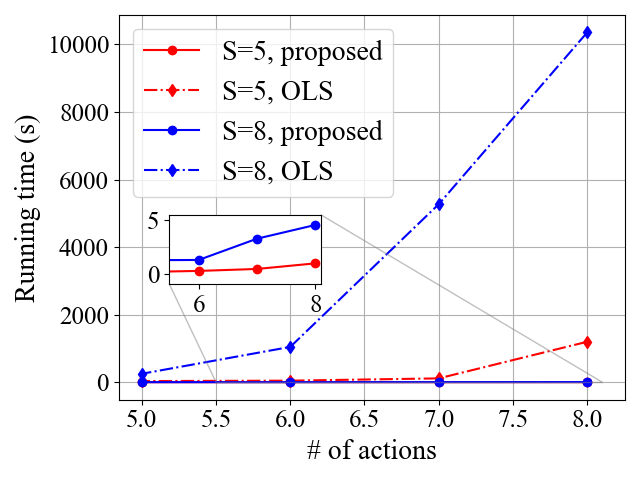}  
        \caption{Comparison between the proposed Pareto front searching algorithm and the benchmark algorithm when $D=4$.}
        \label{fig:ols_comparison}  
    \end{minipage}
    \hfill  
\end{figure}

% \subsection{Robustness}

\section{Conclusion and Future Directions}
In this paper, we investigated the geometric properties of the set of long-term returns of all policies and the Pareto front in MO-MDPs. We proved a key property of the Pareto front in MO-MDPs that any neighboring deterministic policies on the Pareto front differ in exactly one state-action pair almost surely. This result eliminates the need for blind searches in the policy or feature space. Instead, we can traverse the Pareto front by iteratively searching for policies that differ from the current policy in only one state-action pair, significantly reducing the computational complexity.
Then, we demonstrated that the Pareto front is continuous in MO-MDP, enabling traversing the Pareto front by moving along edges.
Moreover, we showed that the faces intersecting at a vertex on the Pareto front can be recovered by the faces of the convex hull constructed by the vertex and all non-dominated deterministic policies that have only one state-action difference.
Building on these theoretical insights, we proposed an efficient Pareto front searching algorithm, which was validated through experiments. The algorithm effectively explores the Pareto front by leveraging the structure of the deterministic policies.

For future work, we plan to explore the robustness of the proposed algorithm in scenarios where the MDP is not known in advance. In such cases, policy evaluation errors may cause deviations between the estimated Pareto front and the true Pareto front. Additionally, integrating modern reinforcement learning algorithms could lead to more efficient approaches for handling large-scale problems.

\bibliography{MARL_Ref}
\bibliographystyle{unsrt}

\appendix
\newpage
\section{Discussion on Pareto optimality}
\label{appendix:discussion_pareto_opt}
We define the Pareto optimal policy set based on an initial state distribution \(\mu\). Unlike single-objective reinforcement learning (RL), where the optimal policy maximizes the long-term return for all states under any initial state distribution, Pareto optimality with respect to a specific initial state distribution \(\mu\) does not ensure Pareto optimality across all states~\citep{lu2022multi}. \cite{lu2022multi} introduces the concept of aggregate Pareto efficiency, where a policy \(\pi\) is considered aggregate Pareto optimal if there exists no policy \(\pi'\) such that \(\rmV^{\pi'} \succ \rmV^{\pi}\). The Pareto front \(\pareto(\Pi)\) defined in \cref{subsec:preliminaries} only forms a subset of the aggregate Pareto optimal set, as some policies may perform better than any other policies in a single objective for specific states, even though they are dominated when states are weighted by \(\mu\).

We focus on the definition of Pareto optimality over a given state distribution for two key reasons:
\begin{enumerate}
    \item \textbf{Relevance to Practical Applications}: In real-world applications, the starting state often follows a specific distribution. For example, in RLHF applications in Large Language Models (LLM), input texts typically follow a certain distribution. Thus, it is unnecessary to optimize for Pareto efficiency over every individual state and objective, as this would not align with the natural distribution of inputs in such settings.
    \item \textbf{Convergence to Aggregate Pareto Optimality}: As the discount factor \(\gamma \rightarrow 1\), \(\pareto(\Pi)\) approaches the aggregate Pareto optimal set~\citep{lu2022multi}. Therefore, in settings with a high discount factor, focusing on \(\pareto(\Pi)\) provides a close approximation to the aggregate Pareto optimal set.
\end{enumerate}

\section{Notations}
\label{appendix: notation}
Let $\rmI_{M}$ denote the identity matrix of size $M$, $\vone_M$ denote the $M$-dimensional vector where all elements are $1$, and $\vzero_{M \times N}$ denote a zero matrix with dimensions $M \times N$. 
Let $\rmA^{\top}$ denote the transpose of the matrix $\rmA$.
Let $\rmA > (\geq) \rmB$ denote that each element of $\rmA$ is strictly greater than (or greater than or equal to) the corresponding element of $\rmB$.
Given an MDP $(\mcalS, \mcalA, \rmP, \vr, \gamma)$, for any policy $\pi$, define the state transition probability matrix $\rmP^{\pi}$ and the expected reward vector $\vr^{\pi}$ as:
\[
\rmP^{\pi}(s,s') \coloneqq \sum_{a \in \mcalA} \rmP(s'|s, a) \pi(a|s), \quad \vr^{\pi}(s) \coloneqq \sum_{a \in \mcalA} \vr(s, a) \pi(a|s).
\]

We provide the notation table as follows.

\begin{table}[ht]
\centering
\renewcommand{\arraystretch}{1.2} % Adjusts row height
\setlength{\tabcolsep}{10pt} % Adjusts column separation
\caption{Notation table}
\begin{tabular}{|c|p{0.7\linewidth}|} % Adjust column width for Description
\hline
\textbf{Notation} & \textbf{Description} \\ \hline
$\Pi$             & Set of stationary policies \\ \hline
$\PiD$            & Set of deterministic stationary policies \\ \hline
$J^{\pi}(\mu)$    & Long-term return of policy $\pi$ given initial state distribution $\mu$ \\ \hline
$J^{U}(\mu)$   & Set of long-term returns of all stationary policies in $U$ with initial distribution $\mu$ \\ \hline
$\sJ(\mu)$        & Set of long-term returns of all stationary policies with initial distribution $\mu$ \\ \hline
$\pareto(\sS)$    & Pareto front of the set $\sS$ \\ \hline
$\sV(\sC)$        & Set of vertices of the polytope $\sC$ \\ \hline
$\sB(\sC)$        & Boundary of the polytope $\sC$ \\ \hline
$\pareto(\Pi)$    & Set of Pareto optimal policies $\{\pi \in \Pi \mid J^{\pi} \in \pareto(\sJ(\mu))\}$ \\ \hline
$\conv(\sS)$    & Convex hull of the set $\sS$\\ \hline
$\N(\vx, \sC)$    & Set of neighboring vertices of vertex $\vx$ on $\sC$  \\ \hline
$\sF(\vx, \sC)$   & Set of incident faces of vertex $\vx$ on $\sC$  \\ \hline
$\Pi_1(\pi)$      & Set of deterministic policies differing from $\pi$ by exactly one state-action pair \\ \hline
$\Pi_{1,\nd}(\pi)$& Set of non-dominated policies among $\Pi_1(\pi)$ \\ \hline
$\Pi_{N}(\pi,\gM)$&  Set of deterministic policies that differ from $\pi$ at all states in $\gM$ and matching $\pi$ at all other states\\ \hline 
\end{tabular}
\label{tab: notation}
\end{table}

\section{Supporting Lemmas and Theorems}
\label{appendix:supporting_lemmas}

In this section, for the sake of completeness in the paper, we present some well-known lemmas and theorems, along with a few easily proven lemmas, which will be used in the subsequent proofs. 

%%%%%%%%%%%%%%%%%% Linear Algebra Related %%%%%%%%%%%%%%%%%%%%%%%%%%%%%%%%
We begin by presenting some well-known theorems and lemmas in the field of linear algebra. 
\begin{theorem}{(Rank-nullity theorem)}
\label{thm: rank_nullity thm}
Let $\rmA$ be an $m \times n$ matrix. Then, the following relationship holds: $\rank(\rmA) + \dim(\nullsp(\rmA)) = n$, where $\rank(\rmA)$ is the dimension of the column space of $\rmA$, and $\nullsp(\rmA)$ is the null space of $\rmA$. 
\end{theorem}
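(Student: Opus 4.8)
The final statement to prove is the Rank-nullity theorem, stated as Theorem~\ref{thm: rank_nullity thm}. The plan is to give the standard linear-algebra proof, which proceeds by exhibiting an explicit basis for the column space of $\rmA$ in terms of a carefully chosen basis of $\mathbb{R}^n$ that is adapted to the null space.

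First I would let $k = \dim(\nullsp(\rmA))$ and pick a basis $\{\vv_1,\ldots,\vv_k\}$ of $\nullsp(\rmA) \subseteq \mathbb{R}^n$. Since this is a linearly independent set in $\mathbb{R}^n$, I can extend it to a full basis $\{\vv_1,\ldots,\vv_k,\vv_{k+1},\ldots,\vv_n\}$ of $\mathbb{R}^n$ by the basis extension theorem. The key claim is then that $\{\rmA\vv_{k+1},\ldots,\rmA\vv_n\}$ is a basis for the column space $\mathrm{col}(\rmA) = \{\rmA\vx : \vx \in \mathbb{R}^n\}$. Establishing this claim immediately gives $\rank(\rmA) = n - k$, which rearranges to the desired identity.

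The claim splits into two parts. For spanning: any element of $\mathrm{col}(\rmA)$ has the form $\rmA\vx$ for some $\vx \in \mathbb{R}^n$; writing $\vx = \sum_{i=1}^n c_i \vv_i$ and applying $\rmA$, the terms $i \le k$ vanish because $\vv_i \in \nullsp(\rmA)$, leaving $\rmA\vx = \sum_{i=k+1}^n c_i \rmA\vv_i$, so the proposed set spans. For linear independence: suppose $\sum_{i=k+1}^n d_i \rmA\vv_i = \vzero$. Then $\rmA\big(\sum_{i=k+1}^n d_i \vv_i\big) = \vzero$, so $\sum_{i=k+1}^n d_i \vv_i \in \nullsp(\rmA)$, hence it is a linear combination $\sum_{i=1}^k e_i \vv_i$ of the null-space basis. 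Moving everything to one side gives a linear dependence among $\{\vv_1,\ldots,\vv_n\}$, which forces all coefficients (in particular all $d_i$) to be zero since that set is a basis. This completes the proof.

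I do not anticipate a genuine obstacle here, since this is a textbook result; the only point requiring care is the linear-independence argument, where one must remember to feed the combination back through $\rmA$ and then invoke that $\{\vv_1,\ldots,\vv_n\}$ (not just the tail) is a basis to conclude. If the paper prefers to treat this as a cited black-box fact rather than reproving it, an alternative is simply to reference a standard linear algebra text; but given it is stated as a numbered theorem in the supporting-lemmas appendix, presenting the short self-contained argument above is the appropriate choice.
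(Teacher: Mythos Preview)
Your proof is correct and is the standard textbook argument. However, the paper does not actually prove this theorem: it is listed in the appendix of supporting lemmas and theorems as a well-known result, stated without proof (the appendix explicitly says it collects ``some well-known lemmas and theorems'' for completeness). So the paper treats rank-nullity as a cited black-box fact, exactly the alternative you mentioned at the end of your proposal. Your self-contained argument is a fine addition, but it goes beyond what the paper itself provides.
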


\begin{lemma}
\label{lem: nonhomo_homo_solsp_dim}
Let $\rmA $ be an $ m \times n $ matrix and $ \vb\neq \vzero $ be an $ m $-dimensional vector. 
If $ \rmA \vx = \vb $ has at least one solution, then the dimension of the solution space of $ \rmA \vx = \vb $ is equal to $\dim(\nullsp(\rmA))$, where $\nullsp(\rmA)$ is the null space of $\rmA$.
\end{lemma}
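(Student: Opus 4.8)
\textbf{Proof proposal for Lemma~\ref{lem: nonhomo_homo_solsp_dim}.}

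The plan is to reduce this to the rank--nullity theorem (Theorem~\ref{thm: rank_nullity thm}) by an affine-shift argument. First I would fix one particular solution $\vx_0$ of the system $\rmA \vx = \vb$, which exists by hypothesis. The key observation is that $\vx$ solves $\rmA \vx = \vb$ if and only if $\vx - \vx_0$ solves the associated homogeneous system $\rmA \vy = \vzero$: indeed $\rmA(\vx - \vx_0) = \rmA\vx - \rmA\vx_0 = \vb - \vb = \vzero$ in one direction, and conversely if $\rmA \vy = \vzero$ then $\rmA(\vy + \vx_0) = \vb$. Hence the solution set of the nonhomogeneous system is exactly the affine translate $\vx_0 + \nullsp(\rmA)$.

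Next I would invoke the standard fact that translation by a fixed vector is an affine bijection from $\nullsp(\rmA)$ onto $\vx_0 + \nullsp(\rmA)$, so the two sets have the same dimension; that is, the dimension of the solution space of $\rmA \vx = \vb$ equals $\dim(\nullsp(\rmA))$. (If the paper's notion of ``dimension of the solution space'' is taken to mean the dimension of the affine hull, this affine-equivalence statement is precisely what is needed; if it means the dimension of the direction space of that affine set, the identification with $\nullsp(\rmA)$ is even more direct.) This is essentially the whole argument.

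I do not anticipate a genuine obstacle here — the lemma is elementary linear algebra. The only point requiring a little care is the role of the hypothesis $\vb \neq \vzero$ together with ``$\rmA\vx = \vb$ has at least one solution'': the nonzero hypothesis is not actually needed for the dimension count (the argument works verbatim when $\vb = \vzero$), but stating it makes explicit that this is the genuinely nonhomogeneous case as opposed to a restatement of rank--nullity, and the solvability hypothesis is essential since an inconsistent system has empty solution set with no well-defined dimension. So the write-up will mainly consist of exhibiting the bijection $\vy \mapsto \vy + \vx_0$ and noting it preserves dimension.
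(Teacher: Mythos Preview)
Your argument is correct and is exactly the standard affine-shift proof one would expect: fix a particular solution, identify the full solution set with the coset $\vx_0 + \nullsp(\rmA)$, and conclude. The paper in fact states this lemma without proof, treating it as a well-known supporting fact alongside the rank--nullity theorem; your proposal fills in precisely the argument a reader would supply, so there is nothing to compare or correct.
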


\begin{lemma}
\label{lem:full_col_rank_ABACsame_BCsame}
Let $\rmA$ be an $m \times n$ matrix with full column rank. If $\rmA\rmB = k\rmA\rmC$, then $\rmB = k\rmC$.
\end{lemma}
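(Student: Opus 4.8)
The plan is to reduce the matrix identity $\rmA\rmB = k\rmA\rmC$ to a statement about the null space of $\rmA$, and then invoke the full-column-rank hypothesis to conclude the null space is trivial. First I would rewrite the hypothesis as $\rmA(\rmB - k\rmC) = \vzero$, which is legitimate since $k\rmA\rmC = \rmA(k\rmC)$ and matrix multiplication distributes over subtraction. Reading this columnwise, every column of the matrix $\rmB - k\rmC$ lies in $\nullsp(\rmA)$.

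Next I would use \cref{thm: rank_nullity thm}: since $\rmA$ is $m \times n$ with full column rank, $\rank(\rmA) = n$, so $\dim(\nullsp(\rmA)) = n - \rank(\rmA) = 0$, meaning $\nullsp(\rmA) = \{\vzero\}$. Combining this with the previous paragraph, each column of $\rmB - k\rmC$ must be the zero vector, hence $\rmB - k\rmC = \vzero$, i.e., $\rmB = k\rmC$. (One can equally argue that $\rmA^{\top}\rmA$ is invertible under full column rank and left-multiply the identity by $(\rmA^{\top}\rmA)^{-1}\rmA^{\top}$, but the null-space argument is cleaner and uses only results already stated in the excerpt.)

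There is essentially no obstacle here — the only point requiring any care is the dimension bookkeeping (ensuring $\rmB$ and $\rmC$ have a common column dimension so that the columnwise argument makes sense, which is forced by the product $\rmA\rmB$ being well-defined), and making sure to cite the trivially-zero null space rather than asserting it. The argument is short and self-contained given \cref{thm: rank_nullity thm}.
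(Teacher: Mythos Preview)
Your proposal is correct and essentially identical to the paper's proof: both rewrite the hypothesis as $\rmA(\rmB - k\rmC) = \vzero$, invoke \cref{thm: rank_nullity thm} to conclude that full column rank forces a trivial null space, and deduce $\rmB = k\rmC$. Your columnwise reading is a slightly more careful way of handling the fact that $\rmB - k\rmC$ is a matrix rather than a vector, but the argument is the same.
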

\begin{proof}
From $\rmA\rmB = k\rmA\rmC$, we can subtract $k\rmA\rmC$ from both sides to get $\rmA(\rmB - k\rmC) = \vzero_{m}$.
Since $\rmA$ has full column rank, it follows from \cref{thm: rank_nullity thm} that the dimension of its null space must be zero, and the only solution is $\rmA\vx=\vzero_{m}$ is $\vx=\vzero_{n}$. 
Therefore, the only solution to $\rmA(\rmB - k\rmC) = \vzero_{m}$ is $\rmB - k\rmC = \vzero_{n}$.
Hence, $\rmB = k\rmC$.
\end{proof}

\begin{lemma}\citep{horn2012matrix}
\label{lem:spectral_radius_bd}
% (Theorem 5.6.9 of \citep{horn2012matrix})
% Let $ \rmA \in \mathbb{R}^{n \times n} $ be a square matrix. The \textbf{spectral radius} $ \rho(\rmA) $ of the matrix $ \rmA $, which is the maximum absolute value of the eigenvalues of $ \rmA $, satisfies the following inequality:
Let $ \rmA \in \mathbb{R}^{n \times n} $ be a square matrix. The maximum absolute value of the eigenvalues of $ \rmA $, denoted as $ \rho(\rmA) $, satisfies the following inequality:
\[
\rho(\rmA) \leq \|\rmA\|_\infty,
\]
where $ \| \rmA\|_\infty = \max_{1 \leq i \leq n} \sum_{j=1}^{n} |A_{ij}|. $ is the infinity norm of the matrix $ \rmA $.
\end{lemma}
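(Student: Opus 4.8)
The plan is to use the standard eigenvalue–eigenvector argument that bounds every eigenvalue of $\rmA$ by its largest absolute row sum. Since $\rmA$ is real but may have complex eigenvalues, I would work over $\mathbb{C}^n$: let $\lambda \in \mathbb{C}$ be an arbitrary eigenvalue of $\rmA$ with a nonzero eigenvector $\vv = (v_1,\dots,v_n)^{\top} \in \mathbb{C}^n$, so that $\rmA\vv = \lambda\vv$. The goal is to show $|\lambda| \le \|\rmA\|_\infty$, after which taking the maximum over eigenvalues finishes the proof.

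First I would choose an index $k$ with $|v_k| = \max_{1\le i\le n} |v_i|$; this maximum is strictly positive because $\vv \neq \vzero$. Reading off the $k$-th coordinate of $\rmA\vv = \lambda\vv$ gives $\lambda v_k = \sum_{j=1}^{n} A_{kj} v_j$. Taking moduli and applying the triangle inequality,
\[
|\lambda|\,|v_k| = \Bigl| \sum_{j=1}^{n} A_{kj} v_j \Bigr| \le \sum_{j=1}^{n} |A_{kj}|\,|v_j| \le |v_k| \sum_{j=1}^{n} |A_{kj}| \le |v_k|\,\|\rmA\|_\infty,
\]
where the middle step uses $|v_j| \le |v_k|$ for every $j$ and the last step uses the definition of $\|\rmA\|_\infty$ as the maximum row sum. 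Dividing by $|v_k| > 0$ yields $|\lambda| \le \|\rmA\|_\infty$.

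Since this bound holds for every eigenvalue $\lambda$ of $\rmA$, taking the maximum over all eigenvalues gives $\rho(\rmA) = \max_{\lambda} |\lambda| \le \|\rmA\|_\infty$, as claimed. There is essentially no obstacle in this argument; the only point that needs care is that the eigenvector may be complex even though $\rmA$ is real, so the triangle inequality must be applied to complex moduli, and the index $k$ should be selected to maximize $|v_k|$ over the complex entries rather than a real coordinate.
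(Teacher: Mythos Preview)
Your argument is correct and is exactly the standard proof of this inequality. The paper itself does not prove this lemma; it simply states it with a citation to \cite{horn2012matrix}, treating it as a known supporting result, so there is no in-paper proof to compare against.
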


%%%%%%%%%%%%%%%% Multi-obj Opt Related %%%%%%%%%%%%%%%%%%%%%%%%%%%%%%%%
Then we present some lemmas related to the multi-objective optimization.

\begin{lemma} 
\label{lem: Pareto_front_boundary} \citep{boyd2004convex}
% {Pareto optimal points and the boundary of the set of achievable values } 
% {Pareto optimal points and the boundary of the set of achievable values, Ex.4.52 of \citep{boyd2004convex}} 
Let $\sS$ denote the set of achievable objective values and let $\sP(\sS)$ denote the set of Pareto optimal values in $\sC$. Then $\sP(\sS) \subseteq \sB(\sS)$, i.e., every Pareto optimal value lies in the boundary of the set of achievable objective values.
\end{lemma}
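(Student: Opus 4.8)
\textbf{Proof proposal for \cref{lem: Pareto_front_boundary}.}

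The plan is to prove the contrapositive: if a point $\vy^\star \in \sS$ lies in the interior of $\sS$, then $\vy^\star$ is not Pareto optimal. This is the cleaner direction to attack because ``interior'' gives us a concrete geometric handle—an open ball around $\vy^\star$ contained in $\sS$—whereas ``Pareto optimal'' is a statement that must be negated anyway. So first I would fix $\vy^\star \in \operatorname{int}(\sS)$ and choose $\epsilon > 0$ such that the ball $B(\vy^\star, \epsilon) \subseteq \sS$.

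Next, I would exhibit an explicit point in $\sS$ that strictly dominates $\vy^\star$. The natural candidate is $\vy' = \vy^\star + \tfrac{\epsilon}{2}\vone_d$, i.e., bump up every coordinate by the same positive amount. Since $\|\vy' - \vy^\star\| = \tfrac{\epsilon}{2}\sqrt{d} $ — wait, I need to be slightly careful with the norm; if the ball is in the Euclidean norm I would instead take $\vy' = \vy^\star + \delta \vone_d$ with $\delta = \tfrac{\epsilon}{2\sqrt d}$, so that $\|\vy' - \vy^\star\|_2 = \tfrac{\epsilon}{2} < \epsilon$, hence $\vy' \in B(\vy^\star,\epsilon) \subseteq \sS$. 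By construction every coordinate of $\vy'$ is strictly larger than the corresponding coordinate of $\vy^\star$, so $\vy' \succ \vy^\star$ in the sense of strict dominance defined in \cref{subsec:preliminaries}. Therefore $\vy^\star$ is strictly dominated by a point of $\sS$ and is not Pareto optimal, which is exactly the contrapositive of the claim. Taking contrapositive back: $\sP(\sS) \subseteq \sB(\sS)$.

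I do not anticipate a serious obstacle here; the only thing to be careful about is bookkeeping around which norm defines the ``boundary''/``interior'' (it does not matter, since all norms on $\mathbb{R}^d$ are equivalent, so an interior point in one is an interior point in all), and making sure the perturbation vector stays inside the ball while strictly increasing all coordinates—both handled by scaling the all-ones vector appropriately. One could alternatively cite \citep{boyd2004convex} directly, as the lemma statement already does, and give this short argument only for self-containedness; I would include the two-line direct argument above since it makes the appendix self-contained and costs almost nothing.
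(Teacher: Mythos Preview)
Your argument is correct and is the standard proof of this fact: take the contrapositive, use an open ball around an interior point, and perturb along the all-ones direction to produce a strictly dominating achievable point. The paper does not actually supply its own proof of this lemma; it simply cites \citep{boyd2004convex} and states the result, so your self-contained two-line argument is, if anything, more than what the paper provides.
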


% \begin{theorem}\citep{ziegler2012lectures}
% \label{thm:linear_programming}
% (Fundamental Theorem of Linear Programming)
%     Find a point $x_0 \in \sC$ that maximizes a linear function $\va^{\top}\vx$, that is, such that $\va^{\top}\vx_0 = \max \{\va^{\top}\vx:\vx \in \sC \}=c_0 $, the maximum is achieved in a vertex. 
% \end{theorem}

\begin{lemma}
\label{lem:pareto_scalarization}
Let $\vx$ be a point in a set $\sS$. $\vx$ lies on the Pareto front of $\sS$ if and only if there exists a vector $\vw > 0$ such that $\vw^{\top}(\vx - \vy) \geq 0$ for all $\vy \in \sS$.
\end{lemma}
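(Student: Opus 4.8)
The plan is to prove both directions directly, using the definition of strict dominance and a standard separation argument. For the ``if'' direction, suppose there is some $\vw > 0$ with $\vw^{\top}(\vx - \vy) \geq 0$ for all $\vy \in \sS$. I want to show $\vx$ is Pareto optimal, i.e., no $\vy \in \sS$ strictly dominates $\vx$. Assume for contradiction that some $\vy \in \sS$ satisfies $\vy \succ \vx$, meaning $\vy_j \geq \vx_j$ for all $j$ with strict inequality in at least one coordinate $j_0$. Then, since every component of $\vw$ is strictly positive, $\vw^{\top}(\vy - \vx) = \sum_j w_j (\vy_j - \vx_j) \geq w_{j_0}(\vy_{j_0} - \vx_{j_0}) > 0$, which contradicts $\vw^{\top}(\vx - \vy) \geq 0$. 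Hence no such $\vy$ exists and $\vx \in \pareto(\sS)$.

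For the ``only if'' direction, suppose $\vx$ lies on the Pareto front of $\sS$. The statement as written is not true for arbitrary sets $\sS$ — it requires convexity of $\sS$ (or at least of a suitable relaxation), which holds in our setting since $\sJ$ is a convex polytope by \cref{lem:Jmu_polytope_vertices_deterministic}; I would invoke that here, or restrict the claim to the convex case. Consider the set $\sS - \R^d_{\geq 0} \;=\; \{\vy - \vu : \vy \in \sS,\ \vu \geq \vzero\}$, the ``downward closure'' of $\sS$, which is convex whenever $\sS$ is. Pareto optimality of $\vx$ implies that the open positive orthant shifted to $\vx$, namely $\vx + \mathrm{int}(\R^d_{\geq 0})$, does not meet $\sS$, and more is true: $\vx$ is on the boundary of $\sS - \R^d_{\geq 0}$ and the relative interior of $\vx + \R^d_{> 0}$ is disjoint from it. By the supporting hyperplane theorem applied to the convex set $\sS - \R^d_{\geq 0}$ at the boundary point $\vx$, there is a nonzero $\vw$ with $\vw^{\top}\vx \geq \vw^{\top}\vz$ for all $\vz \in \sS - \R^d_{\geq 0}$; in particular $\vw^{\top}(\vx - \vy) \geq 0$ for all $\vy \in \sS$. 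It remains to upgrade $\vw \neq \vzero$ to $\vw > 0$: because $\sS - \R^d_{\geq 0}$ contains $\vx - t\ve_j$ for every $t \geq 0$ and every standard basis vector $\ve_j$, the supporting inequality forces $\vw^{\top}\ve_j \geq 0$, i.e., $w_j \geq 0$ for all $j$; strict positivity then follows from the fact that $\vx$ is Pareto optimal (not merely weakly Pareto optimal) — if some $w_j = 0$ one can perturb to contradict Pareto optimality, or one argues via the polytope structure that the supporting functional at a point not dominated in any coordinate direction must be strictly positive.

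The main obstacle is the last step of the ``only if'' direction: obtaining \emph{strict} positivity of $\vw$ rather than just nonnegativity. Weak support gives only $\vw \geq \vzero$, which corresponds to weak Pareto optimality, whereas the lemma asserts the stronger equivalence with genuine Pareto optimality. Handling this cleanly likely uses the polytopal structure of $\sJ$: at a Pareto optimal point the normal cone intersects the open orthant, and one can average a supporting functional with small positive perturbations while preserving the supporting property on the (compact) polytope. I would make this precise by noting that if $\vw \geq \vzero$ supports $\sS$ at $\vx$ but has a zero coordinate, then along that coordinate direction $\vx$ could be improved within $\sS$ (using compactness/polytope structure), contradicting Pareto optimality; alternatively, perturb $\vw$ to $\vw + \epsilon\vone$ and check the inequality is preserved for small $\epsilon$ using that the violation, if any, would already manifest as a dominating point. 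Everything else is routine.
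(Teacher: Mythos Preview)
Your ``if'' direction is correct and matches the paper's argument essentially verbatim.

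For the ``only if'' direction you are in fact more careful than the paper, whose argument here is flawed: it asserts that Pareto optimality of $\vx$ means $\vx$ is at least as good as every $\vy\in\sS$ in every coordinate, and then concludes the inequality for \emph{all} $\vw>0$; both claims are false (take any two incomparable Pareto points). Your observation that the statement fails for non-convex $\sS$ and that a separation / supporting-hyperplane argument is what is really needed is exactly right. However, the step you flag as the obstacle is a genuine gap, and neither of your sketched fixes closes it. The ``$w_j=0\Rightarrow\vx$ improvable in coordinate $j$'' heuristic fails already for $\sS=[0,1]^2$, $\vx=(1,1)$, $\vw=(1,0)$: here $w_2=0$ yet $\vx$ is Pareto optimal and cannot be improved in the second coordinate. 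The perturbation $\vw\mapsto\vw+\epsilon\vone$ also fails: with $\sS=\conv\{(3,0),(0,2),(-1,-1)\}$, the vector $\vw=(2,3)$ supports $\sS$ at the Pareto optimal point $\vx=(0,2)$, but $(2+\epsilon)\cdot 0+(3+\epsilon)\cdot 2=6+2\epsilon<6+3\epsilon=(2+\epsilon)\cdot 3+(3+\epsilon)\cdot 0$, so the perturbed vector is maximized at $(3,0)$, not at $\vx$, for every $\epsilon>0$. What is actually true, and what you need to supply, is that for a convex polytope the normal cone at any Pareto optimal point meets the \emph{open} positive orthant --- equivalently, Pareto coincides with properly Pareto on polyhedra --- and this requires a real argument (e.g.\ via LP duality or the face lattice), not a perturbation of one fixed supporting functional.
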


\begin{proof}
($\impliedby$)
Assume, for the sake of contradiction, that $\vx$ is not on the Pareto front of $\sS$. This means there exists a point $\vy \in \sS$ that strictly dominates $\vx$, i.e., $\vy \succ \vx$. 
This implies that $\vy$ is at least as large as the corresponding objective of $\vx$ and $\vy$ is strictly better than $\vx$ in at least one objective.
Therefore, given $\vw > 0$, we have $\vw^{\top}(\vx - \vy) < 0$.
However, this contradicts the assumption that $\vw^{\top}(\vx - \vy) \geq 0$ for all $\vy \in \sS$. Thus, no point in $\sS$ strictly dominates $\vx$, which implies that $\vx$ is on the Pareto front.

($\implies$) If $\vx$ is Pareto optimal, then for any $\vy\in\sS$, $\vx$ is at least as large as the corresponding objective of $\vy$ and $\vx$ is strictly better than $\vy$ in at least one objective. Given $\vw>0$, $\vw^{\top}(\vx - \vy) \geq 0$. 
\end{proof}

% \begin{theorem}\citep{boyd2004convex}
% \label{thm:supporting_hyperplane}
% (Supporting hyperplane theorem)
% % (Supporting hyperplane theorem, Section 2.5.2, \citep{boyd2004convex})
%     Suppose $\sC$ is a nonempty convex set, and $x_0$ is a point in its boundary $\sB(\sC)$. There exists a supporting hyperplane $\{x|\va^{\top}\vx = \va^{\top}\vx_0\}$ to $C$ at $x_0$ such that $\va^{\top}\vx\leq \va^{\top}\vx_0$ for any $\vx\in\sC$.
% \end{theorem}

%%%%%%%%%%%%% Polytope Related %%%%%%%%%%%%%%%%%%%%%%%%%%%%%%%%%%%%%%%%
Next, we restate some definitions in the main body of the paper and present several lemmas related to polytopes.

\begin{definition} \citep{ziegler2012lectures}
% {Def 2.1 of \citep{ziegler2012lectures}}
\label{def: face}
Let $ \sC \subseteq \mathbb{R}^d $ be a convex polytope. A linear inequality
$ \vw^{\top} x \leq c_0 $ is \textit{valid} for $ \sC $ if it is satisfied for all points $ x \in \sC $. A \textit{face} of $ \sC $
is any set of the form $\sF = \sC \cap \{ x \in \mathbb{R}^d : \vw^{\top} x = c_0 \}$
where $ \vw^{\top} \vx \leq c_0 $ is a valid inequality for $ \sC$. The dimension of a face is the dimension of its affine hull, i.e., $\dim(F):= \dim(\text{aff}(F)).$
The faces of dimension smaller than $\dim(\sC)$ are called proper faces.
The faces of $0$, $1$, $\dim(\sC)-2$, $\dim(\sC)-1$ are called vertices, edges, ridges, and facets, respectively.
\end{definition}

\begin{lemma}\citep{grunbaum2013convex}
% {Theorem 2.6.5 of \citep{grunbaum2013convex}}
\label{lem: face_facets_intersection}
% A polytope is also a polyhedral set of $K$ since a polytope only considers the closed case while a polyhedral set can be open.
% Each poonem of a polyhedral set K is a face of K (Theorem 2.6.1). since face can be open while the poonem is always closed. 
% Each face of a closed convex set $K$ is the poonem of $K$. (see page 20)
% Here we only consider the convex polytope, so we rephrase theorem 2.6.5 in this lemma.
Every nonempty proper face of a convex polytope $\sC$ is an intersection of facets of $\sC$.
\end{lemma}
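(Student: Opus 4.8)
The plan is to prove Lemma \ref{lem: face_facets_intersection} by the standard polytope argument: identify a nonempty proper face $F$ with the collection of facet inequalities that are tight on all of $F$, and then show those facets alone carve out exactly $F$. First I would reduce to the full-dimensional case by working inside $\text{aff}(\sC)$, so that $\sC$ admits an irredundant halfspace representation $\sC = \bigcap_{i=1}^{m} \{\vx : \vw_i^{\top}\vx \le c_i\}$ in which every inequality is facet-defining, i.e. $F_i := \sC \cap \{\vx : \vw_i^{\top}\vx = c_i\}$ is a facet and $F_1,\dots,F_m$ are precisely all the facets of $\sC$. Given a nonempty proper face $F = \sC \cap \{\vx : \vw^{\top}\vx = c_0\}$ with $\vw^{\top}\vx \le c_0$ valid, set $I = \{ i : F \subseteq F_i \}$, the indices whose inequality is tight on all of $F$. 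I claim $F = \bigcap_{i \in I} F_i$ and $I \neq \varnothing$, which together give the statement.

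Next I would show $I \neq \varnothing$. Pick $\vx_0$ in the relative interior of $F$. If $\vw_i^{\top}\vx_0 < c_i$ held for every $i$, then $\vx_0$ would lie in the (relative) interior of $\sC$; but a face containing an interior point of $\sC$ must equal $\sC$, contradicting properness of $F$. Hence $\vw_i^{\top}\vx_0 = c_i$ for some $i$. Then $G := F \cap \{\vx : \vw_i^{\top}\vx = c_i\}$ is again a face of $\sC$ (it is cut out by the valid inequality $(\vw+\vw_i)^{\top}\vx \le c_0+c_i$), it is contained in $F$, and it contains the relative-interior point $\vx_0$, so $G = F$; that is, $F \subseteq F_i$ and $i \in I$. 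The inclusion $F \subseteq \bigcap_{i\in I} F_i$ is immediate from the definition of $I$.

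For the reverse inclusion I would use the ``push slightly past a relative interior point'' trick. The same face argument as above shows that at $\vx_0$ we in fact have $\vw_i^{\top}\vx_0 < c_i$ for every $i \notin I$. Now take any $\vy \in \bigcap_{i\in I} F_i$, so $\vy \in \sC$ and $\vw_i^{\top}\vy = c_i$ for $i \in I$. Put $\vz_\epsilon := \vx_0 + \epsilon(\vx_0 - \vy)$. For $i \in I$ we get $\vw_i^{\top}\vz_\epsilon = c_i$, and for $i \notin I$ we get $\vw_i^{\top}\vz_\epsilon < c_i$ once $\epsilon > 0$ is small enough, so $\vz_\epsilon \in \sC$. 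Since $\vx_0 = \tfrac{1}{1+\epsilon}\vz_\epsilon + \tfrac{\epsilon}{1+\epsilon}\vy$ expresses the relative-interior point $\vx_0$ of the face $F$ as a convex combination of two points $\vz_\epsilon,\vy \in \sC$, the defining property of a face ($\vw^{\top}\vx \le c_0$ valid, $\vw^{\top}\vx_0 = c_0$) forces $\vw^{\top}\vz_\epsilon = \vw^{\top}\vy = c_0$, hence $\vy \in F$. This yields $F = \bigcap_{i\in I} F_i$.

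The main obstacle is purely the bookkeeping around relative interiors: making precise that at a relative interior point of $F$ the tight facet inequalities are exactly those indexed by $I$, and handling non-full-dimensional $\sC$ by passing to $\text{aff}(\sC)$. Both are routine once one invokes the basic facts that $\sC \cap \{\vx : \vw^{\top}\vx = c\}$ is a face whenever $\vw^{\top}\vx \le c$ is valid for $\sC$, and that a face of $\sC$ meeting the relative interior of a larger face must equal that larger face. (Alternatively, the result can simply be cited from \citep{grunbaum2013convex,ziegler2012lectures}, but the argument above is short and self-contained.)
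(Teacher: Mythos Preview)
The paper does not give its own proof of this lemma: it is stated in the appendix as a supporting result and simply cited from \citep{grunbaum2013convex}, with no argument supplied. Your proposal is a correct, standard proof of the statement (the usual ``tight-index-set plus perturbation past a relative-interior point'' argument, as in Gr\"unbaum or Ziegler), so it goes strictly beyond what the paper does; there is nothing to compare against since the paper treats the result as a black-box citation.
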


%%%%%%%%%%%%%% MDP Related %%%%%%%%%%%%%%%%%%%%%%%%%%%%%%%%%%%%%%%%
Finally, we give some lemmas related to MDP.

\begin{lemma}
\citep{altman2021constrained}
\label{lem:visit_dist_polytope}
The occupancy measure of any policy $\pi$ and initial distribution $\mu$ is defined as $\dmupi(s)\coloneqq(1-\gamma)\sum_{t=0}^{\infty}\gamma^{t}P(s_t=s|s_0\sim\mu(\cdot))$. Similarly, define $\bdmupi(s,a)\coloneqq(1-\gamma)\sum_{t=0}^{\infty}\gamma^{t}P(s_t=s,a_t=a|s_0\sim\mu(\cdot))=\dmupi(s)\pi(a|s)$.
Define for any set of policies $U$, $L^{U}(\mu)\coloneqq\cup_{\pi\in  U}\{\bdmupi\}$ and define $L(\mu)\coloneqq\cup_{\pi\in\Pi}\{\bdmupi\}$.
% Define $L_{\mu}=\cup_{\pi\in\Pi}\bdmupi$.
Then, $L(\mu)$ is closed convex polytope and $L(\mu) = \conv(L^{\Pi_D}(\mu))$.
\end{lemma}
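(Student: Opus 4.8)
The plan is to identify $L(\mu)$ with the Bellman flow polytope and then read off both claims from standard polyhedral facts. Concretely, set
\[
\mathcal{K} \coloneqq \Bigl\{ d \in \mathbb{R}^{S\times A} : d \ge 0,\ \textstyle\sum_{a} d(s,a) = (1-\gamma)\mu(s) + \gamma \sum_{s',a'} \rmP(s\mid s',a')\, d(s',a') \ \ \forall s \in \mcalS \Bigr\}.
\]
I would first prove $L(\mu) = \mathcal{K}$. The inclusion $L(\mu) \subseteq \mathcal{K}$ is a direct computation: for a policy $\pi$ the quantities $P(s_t = s, a_t = a \mid s_0 \sim \mu)$ obey the one-step recursion $P(s_{t+1}=s') = \sum_{s,a} \rmP(s'\mid s,a)\,P(s_t = s, a_t = a)$; multiplying by $(1-\gamma)\gamma^{t}$, summing over $t \ge 0$, and isolating the $t = 0$ term produces exactly the flow equations for $\bdmupi$, while nonnegativity is immediate. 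For the reverse inclusion, given $d \in \mathcal{K}$ set $d(s) \coloneqq \sum_a d(s,a)$; the flow equation yields $d(s) \ge (1-\gamma)\mu(s) > 0$ by \cref{assumption:initial_coverage}, so $\pi_d(a\mid s) \coloneqq d(s,a)/d(s)$ is well defined. Rewriting the flow equations in matrix form as $(\rmI_S - \gamma (\rmP^{\pi_d})^{\top})\, d(\cdot) = (1-\gamma)\mu$ and noting that $\gamma (\rmP^{\pi_d})^{\top}$ has spectral radius at most $\gamma < 1$ by \cref{lem:spectral_radius_bd}, the matrix $\rmI_S - \gamma (\rmP^{\pi_d})^{\top}$ is invertible; since the state-occupancy measure of $\pi_d$ solves the same system by the forward inclusion and $\bdmupi(s,a) = \dmupi(s)\pi(a\mid s)$, uniqueness forces the occupancy measure of $\pi_d$ to equal $d$, so $d \in L(\mu)$.

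Once $L(\mu) = \mathcal{K}$ is established, closedness and convexity are automatic, as $\mathcal{K}$ is an intersection of finitely many closed half-spaces and hyperplanes. Boundedness follows by summing the flow constraints over all $s$ and using $\sum_{s}\rmP(s\mid s',a') = 1$ and $\sum_s \mu(s) = 1$, which gives $\sum_{s,a} d(s,a) = 1$ for every $d \in \mathcal{K}$; a bounded polyhedron is a polytope, which proves the first claim.

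For the second claim, since $L(\mu) = \mathcal{K}$ is a polytope it equals $\conv(\sV(\mathcal{K}))$, so it suffices to show every vertex of $\mathcal{K}$ is the occupancy measure of a deterministic policy. A vertex is a basic feasible solution: the gradients of its active constraints must span $\mathbb{R}^{SA}$. The $S$ flow equalities contribute at most $S$ to this span, so at least $SA - S$ of the sign constraints $d(s,a) = 0$ are active, i.e.\ at most $S$ entries of $d$ are nonzero; but $d(s) > 0$ for every $s$ forces at least one nonzero action-entry per state, hence exactly one, so $\pi_d$ is deterministic and the vertex lies in $L^{\Pi_D}(\mu)$. Combining, $L(\mu) = \conv(\sV(\mathcal{K})) \subseteq \conv(L^{\Pi_D}(\mu)) \subseteq \conv(\mathcal{K}) = \mathcal{K} = L(\mu)$, giving equality.

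The main obstacle is the reverse inclusion $\mathcal{K} \subseteq L(\mu)$: one must both justify that $\pi_d$ is well defined — this is precisely where \cref{assumption:initial_coverage} is used — and verify that $\pi_d$ genuinely reproduces $d$, which relies on uniqueness of solutions to the discounted flow system via the spectral-radius bound. The vertex count in the final step is routine once the basic-feasible-solution characterization is invoked, though some care is needed to argue that the $S$ flow equalities add only $S$ to the rank of the active-constraint gradients.
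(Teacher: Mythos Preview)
The paper does not supply its own proof of this lemma: it is stated with a citation to \cite{altman2021constrained} and used as a black box in the proof of \cref{lem:Jmu_polytope_vertices_deterministic}. So there is no in-paper argument to compare against.

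Your argument is the standard one and is correct in outline. Two small points worth tightening. First, in the vertex step you write that the $S$ flow equalities ``contribute at most $S$'' to the span; in fact they contribute exactly $S$ (one checks that $(\rmI_S - \gamma \rmP^{a})$ is invertible for any fixed action $a$, so no nontrivial linear combination of the equality gradients vanishes), and you implicitly use this when concluding that at most $S$ entries of $d$ are nonzero---otherwise the count would only give ``at most $S + (S - r)$'' nonzero entries for some unknown rank $r$. Second, your use of \cref{assumption:initial_coverage} to force $d(s) > 0$ is exactly what makes $\pi_d$ well defined everywhere; the cited result in Altman handles the general case by defining the policy arbitrarily on states with $d(s) = 0$, but since the paper works throughout under this assumption your shortcut is fine here.
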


\begin{lemma} \citep{sutton2018reinforcement}
% (\citep{sutton2018reinforcement} section 3.6)
 \label{def:optimal_policy}
 Given MDP $(\mcalS,\mcalA,\rmP,\vr,\gamma)$, there exists at least one policy that is always better than or equal to all other policies in all states, denoted as optimal policy. The optimal policy $\pi^*$ satisfies $\pi^*(s)=\arg\max_{a\in\mcalA} Q^*(s,a)$, where $Q^*(s,a)$ is the solution to the Bellman optimality equation $Q^*(s, a)=\mathbb{E}\left[\vr(s,a)+\gamma \max_{a'}Q^*(s', a') \mid s, a \right]$.   
\end{lemma}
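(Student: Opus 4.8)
The plan is to treat this as the classical existence-and-optimality result for \emph{single-objective} discounted MDPs and prove it via the Banach fixed-point theorem applied to the Bellman optimality operator. Note first that the statement is only meaningful when the reward is scalar, since both $\max_{a'}Q^*(s',a')$ and the notion of one policy being ``better than or equal to'' another in every state require a total order on returns; this is exactly the setting in which the lemma is invoked, namely after scalarizing the MO-MDP by a preference vector $\vw$ to form $(\mcalS,\mcalA,\rmP,\vw^\top\vr,\gamma)$. I therefore write $r(s,a) = \vw^\top\vr(s,a)$ for the resulting scalar reward and work throughout with scalar value functions.

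First I would introduce the Bellman optimality operator $\mathcal{T}$ acting on state-action value functions $Q\in\mathbb{R}^{S\times A}$ by $(\mathcal{T}Q)(s,a)=\mathbb{E}[\,r(s,a)+\gamma\max_{a'}Q(s',a')\mid s,a\,]$, and for each fixed policy $\pi$ the evaluation operator $(\mathcal{T}^\pi Q)(s,a)=\mathbb{E}[\,r(s,a)+\gamma\sum_{a'}\pi(a'|s')Q(s',a')\mid s,a\,]$. The first step is to show both operators are $\gamma$-contractions in the $\ell_\infty$ norm: the $\max$ (respectively the convex $\pi$-average) over actions is a nonexpansive map, so $\|\mathcal{T}Q_1-\mathcal{T}Q_2\|_\infty\le\gamma\|Q_1-Q_2\|_\infty$, and likewise for $\mathcal{T}^\pi$. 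Since $\gamma<1$, the Banach fixed-point theorem yields a unique fixed point $Q^*$ of $\mathcal{T}$, which is precisely the solution of the stated Bellman optimality equation, and a unique fixed point $Q^\pi$ of each $\mathcal{T}^\pi$, which is the value of $\pi$.

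Next I would define the greedy (deterministic) policy $\pi^*(s)\in\arg\max_a Q^*(s,a)$ and prove the two inequalities that together give optimality. For the easy direction, because $\pi^*$ selects a maximizing action, $\mathcal{T}^{\pi^*}Q^*=\mathcal{T}Q^*=Q^*$, so $Q^*$ is a fixed point of $\mathcal{T}^{\pi^*}$; by uniqueness $Q^{\pi^*}=Q^*$, i.e.\ the greedy policy's own value equals $Q^*$. For the harder direction, showing $Q^*$ dominates \emph{every} policy, I would use monotonicity of $\mathcal{T}$ (if $Q_1\le Q_2$ pointwise then $\mathcal{T}Q_1\le\mathcal{T}Q_2$) together with the pointwise bound $\mathcal{T}Q\ge\mathcal{T}^\pi Q$ (the max over actions is at least the $\pi$-average). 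Applying the latter at $Q=Q^\pi$ gives $\mathcal{T}Q^\pi\ge\mathcal{T}^\pi Q^\pi=Q^\pi$; iterating $\mathcal{T}$ and invoking monotonicity shows $\mathcal{T}^nQ^\pi$ is nondecreasing in $n$, and since it converges to $Q^*$ by the contraction, we obtain $Q^*\ge Q^\pi$ for every $\pi$. Combining the two directions gives $Q^{\pi^*}=Q^*\ge Q^\pi$ for all $\pi$, so the corresponding value $\rmV^{\pi^*}(s)=\max_a Q^*(s,a)$ dominates $\rmV^{\pi}(s)$ for every state $s$, establishing both existence of an optimal policy and its greedy characterization.

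I expect the main obstacle to be the global domination step $Q^*\ge Q^\pi$: establishing that the fixed point of $\mathcal{T}$ not only is self-consistent but genuinely upper-bounds the return of \emph{all} policies simultaneously. The care lies in justifying the monotone-iteration argument, namely verifying monotonicity of $\mathcal{T}$, the interchange inequality $\mathcal{T}Q\ge\mathcal{T}^\pi Q$, and the convergence $\mathcal{T}^nQ^\pi\to Q^*$ supplied by the contraction. Everything else, the contraction estimates in the $\ell_\infty$ norm and the greedy fixed-point identity $\mathcal{T}^{\pi^*}Q^*=Q^*$, is routine.
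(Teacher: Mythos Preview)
Your proof via the Banach fixed-point theorem and the monotone-iteration argument is correct and is the standard textbook approach. However, the paper does not prove this lemma at all: it is listed in the appendix among ``well-known lemmas and theorems'' and is simply cited to \citep{sutton2018reinforcement} without proof, so there is no paper proof to compare against.
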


\begin{lemma}
\label{lem:optimal_consistency}
    If there exists $\mu$ where $\mu(s)>0$ for any $s\in\mcalS$ such that a deterministic policy $\pi^*$ maximizes for $J^{\pi^*}(\mu)=\mu^{\top} \rmV^{\pi^*}$, then $\pi^*$ is the optimal policy, i.e., $\rmV^{\pi^*}(s)=\max_{\pi}\rmV^{\pi}(s)$ for any $s$.
\end{lemma}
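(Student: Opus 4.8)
The plan is to argue by contradiction, using the existence of a policy that is simultaneously optimal in every state. First I would invoke \cref{def:optimal_policy} to obtain such a policy $\pi^{\mathrm{opt}}$, so that $\rmV^{\pi^{\mathrm{opt}}}(s) = \max_{\pi}\rmV^{\pi}(s) \geq \rmV^{\pi^*}(s)$ for every $s \in \mcalS$. Suppose, toward a contradiction, that $\pi^*$ is \emph{not} optimal; then the inequality above must be strict at some state, i.e.\ there is $s_0 \in \mcalS$ with $\rmV^{\pi^{\mathrm{opt}}}(s_0) > \rmV^{\pi^*}(s_0)$.

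Next I would compare the $\mu$-weighted returns directly. Writing
\[
J^{\pi^{\mathrm{opt}}}(\mu) - J^{\pi^*}(\mu) = \sum_{s \in \mcalS} \mu(s)\big(\rmV^{\pi^{\mathrm{opt}}}(s) - \rmV^{\pi^*}(s)\big),
\]
every term is nonnegative by the domination established in the first step, and the $s_0$ term is strictly positive because $\mu(s_0) > 0$ under \cref{assumption:initial_coverage} (the hypothesis $\mu > 0$). Hence $J^{\pi^{\mathrm{opt}}}(\mu) > J^{\pi^*}(\mu)$, contradicting the assumption that the deterministic policy $\pi^*$ maximizes $J^{\cdot}(\mu)$. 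Therefore no such $s_0$ exists, which gives $\rmV^{\pi^*}(s) = \max_{\pi}\rmV^{\pi}(s)$ for all $s$, i.e.\ $\pi^*$ is optimal.

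I do not expect a genuine obstacle here: the single load-bearing fact is the strict positivity of $\mu$, which is exactly what upgrades a pointwise domination with a strict gain at one state into a strict gain in the scalar objective (the statement would fail without $\mu>0$). If one wishes to be meticulous, one can additionally remark that \cref{def:optimal_policy} guarantees the supremum of $J^{\pi}(\mu)$ over stationary policies is attained, so the hypothesis is not vacuous, but this is not needed for the implication itself.
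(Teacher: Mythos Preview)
Your proposal is correct and follows essentially the same route as the paper: a contradiction argument that invokes the existence of a state-wise optimal policy (via \cref{def:optimal_policy}), uses pointwise domination with a strict gap at one state, and then leverages $\mu>0$ to turn that into a strict improvement in $J^{\cdot}(\mu)$. The paper's version is just terser, but the logic and the load-bearing step are identical.
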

\begin{proof}
We prove this by contradiction. If $\pi^*$ is not the optimal policy, and there exists optimal policy $u\neq \pi^*$ such that $V^{\pi^*}(s)\leq V^{u}(s)$ for any $s\in\mcalS$, and on at least one state $V^{\pi^*}(s)< V^{u}(s)$. Then $J^{u}(\mu)>J^{\pi^*}(\mu)$ since $\mu(s)>0$ for any $s$. This contradicts the condition that $\pi^*$ maximizes for $J^{\pi}(\mu)$.   
So $\pi^*$ is also an optimal policy.
\end{proof}
    
\begin{lemma}
\label{lem:optimal_convex_hull}
    Given a single-objective MDP $(\mcalS,\mcalA, \rmP, r, \gamma)$. If $\Pi^*$ is the set of the optimal deterministic policies, then the set of all optimal policies (including stochastic and deterministic policies) is the convex hull constructed by $\Pi^*$.
\end{lemma}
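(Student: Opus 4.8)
The plan is to reduce the claim to an elementary fact about Cartesian products of simplices, after first establishing a ``greedy support'' characterization of optimality. Let $Q^*$ be the solution of the Bellman optimality equation (as in \cref{def:optimal_policy}), set $V^*(s) := \max_{a\in\mcalA} Q^*(s,a)$, and let $B_s := \argmax_{a\in\mcalA} Q^*(s,a)$ denote the set of greedy actions at state $s$. The first step is to show that a (possibly stochastic) policy $\pi$ is optimal, i.e.\ $\rmV^{\pi}(s) = V^*(s)$ for every $s$, if and only if $\mathrm{supp}(\pi(\cdot\mid s)) \subseteq B_s$ for every $s$. For the ``if'' direction, the support condition makes $V^*$ a fixed point of the policy Bellman operator $T^{\pi}$; since $T^{\pi}$ is a $\gamma$-contraction with unique fixed point $\rmV^{\pi}$, we get $\rmV^{\pi} = V^*$. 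For the ``only if'' direction, $\rmV^{\pi} = V^*$ implies $Q^{\pi} = Q^*$ (substitute $V^*$ into the one-step backup), hence $V^*(s) = \sum_a \pi(a\mid s) Q^*(s,a) \le \max_a Q^*(s,a) = V^*(s)$; equality of this average with the maximum, given $Q^*(s,\cdot) \le V^*(s)$, forces $\pi(a\mid s) > 0 \Rightarrow a \in B_s$.

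Given this characterization, the second step is to identify the two policy sets as products over states. Regarding a policy as the tuple $\big(\pi(\cdot\mid s)\big)_{s\in\mcalS} \in \prod_{s\in\mcalS} \Delta(\mcalA)$, the set of all optimal policies is exactly $\prod_{s\in\mcalS} \Delta(B_s)$, where $\Delta(B_s)$ is the set of distributions on $\mcalA$ supported on $B_s$, while the set $\Pi^*$ of optimal deterministic policies is exactly $\prod_{s\in\mcalS} \{\delta_a : a \in B_s\}$, the product of the vertex sets of those simplices (here $\delta_a$ is the point mass at $a$). Moreover, a convex combination of policies is taken coordinatewise, so $\conv(\Pi^*)$ (taken inside $\R^{S\times A}$) is precisely the convex hull of this product of finite vertex sets.

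The third step invokes the elementary identity $\conv\!\big(\prod_{s} \sS_s\big) = \prod_{s} \conv(\sS_s)$ for finite point sets $\sS_s$: a tuple of convex combinations can be rewritten as a single convex combination over index tuples with coefficients equal to the corresponding products of the per-state weights, and the reverse inclusion is immediate since a product of convex sets is convex. Applying this with $\sS_s = \{\delta_a : a \in B_s\}$ yields $\conv(\Pi^*) = \prod_{s} \conv(\{\delta_a : a\in B_s\}) = \prod_{s} \Delta(B_s)$, which by the first two steps is exactly the set of all optimal policies, completing the proof.

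I expect the only delicate point to be the ``only if'' direction of the characterization: one must ensure $\rmV^{\pi} = V^*$ pins down $\mathrm{supp}(\pi(\cdot\mid s)) \subseteq B_s$ at \emph{every} state, including states not visited under $\pi$ (this exactness is what makes the product description tight), which is handled cleanly once we note $Q^{\pi} = Q^*$. The contraction argument and the product-of-convex-hulls identity are standard and pose no real obstacle; note that no appeal to \cref{assumption:initial_coverage} is needed, since optimality here is defined with respect to all states.
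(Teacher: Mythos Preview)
Your proof is correct. Both your argument and the paper's rest on the same core fact---that a policy is optimal iff at every state it places mass only on the greedy set $B_s = \argmax_a Q^*(s,a)$---but you make this characterization explicit and then close via a structural route the paper does not take. The paper argues the two inclusions directly: it checks that a convex combination of policies in $\Pi^*$ still selects only greedy actions (hence is optimal), and then, by contradiction, writes a putative optimal policy outside $\conv(\Pi^*)$ as a combination involving at least one non-optimal deterministic policy and asserts this forces a non-greedy choice somewhere. Your approach instead identifies the set of all optimal policies with the product $\prod_s \Delta(B_s)$, identifies $\Pi^*$ with the product of the corresponding vertex sets, and invokes the identity $\conv\big(\prod_s \sS_s\big) = \prod_s \conv(\sS_s)$. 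This product-of-simplices view is tighter: the paper's ``only if'' direction is somewhat informal (it does not isolate the state at which the non-greedy mass must appear, nor justify that every decomposition of $\pi$ outside $\conv(\Pi^*)$ must involve a non-optimal $\pi_j$ with positive weight), whereas your $Q^\pi = Q^*$ plus equality-in-the-average step pins down $\mathrm{supp}(\pi(\cdot\mid s)) \subseteq B_s$ at every state. The paper's version is marginally shorter; yours is more transparent and carries over verbatim to, e.g., countable state spaces.
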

\begin{proof}
We first show that \textbf{all policies in the convex hull constructed by the set of optimal deterministic policies $\Pi^*$ are optimal policies}. 
By \cref{def:optimal_policy}, the optimal policies share the same state-action value function, that is, $Q^{\pi_i}(s,a)=Q^{\pi_j}(s,a)=Q^*(s,a)$ for any $\pi_i,\pi_j\in\Pi^*$. Since $\pi_i\in\Pi^*$ are deterministic policies, with a little abuse notation of $\pi$, we have $\pi_i(s)=\arg_{a\in\mcalA}\max Q^*(s,a)$ for any $i$. Let $\boldsymbol{\alpha}$ be a $(|\Pi|-1)$-dimension standard simplex. 
Let $\pi$ be the convex combination of $\Pi^*$, that is, $\pi(a|s)=\sum_i \boldsymbol{\alpha}_i\pi_i(a|s)$ for any $s$. 
As each possible choice of $\pi$ at state $s$ maximizes $Q^*(s,a)$, i.e., $\pi_i(\arg_{a\in\mcalA}\max Q^*(s,a)|s)=1$, $\pi$ at state $s$ maximizes $Q^*(s,a)$ and $\pi$ is an optimal policy. Hence any convex combination $\pi$ optimal deterministic policies are also optimal.
% Therefore, for any $s$, $\pi(s)=\sum_i \boldsymbol{\alpha}_i\pi_i=\arg\max Q^*(s,a)$, and $\pi$ is also an optimal policy. Hence any convex combination $\pi$ optimal deterministic policies are also optimal.

Then we want to show \textbf{for any policies not on the convex hull the set of optimal deterministic policies $\Pi$, it cannot be the optimal policy}.
We prove this by contradiction. Suppose an optimal policy $\pi$ is not on the convex hull by all deterministic policies. In that case, it is written as $\pi=\sum_{i,\pi_i\in\Pi^*}\alpha_i\pi_i+\sum_{j,\pi_j\in\Bar{\Pi}}\beta_j\pi_j$, where $\sum_i\alpha_i+\sum_i\beta_j=1$, $\alpha_i\geq 0$,  $\beta_j\geq 0$, and $\Bar{\Pi}$ is a set of non-optimal deterministic policies. Suppose on state $s$ such that $V^{\pi_j}(s)\leq V^*(s)$, then $\pi(s)\neq \arg\max^{\pi}Q^*(s,a)$, which indicates $\pi$ is no longer an optimal policy. This contradicts the assumption that $\pi$ is an optimal policy. 
\end{proof}

%%%%%%%%%%%%% Proof Related %%%%%%%%%%%%%%%%%%%%%%%%%%%%%%%%%%%

\begin{lemma}
\label{lem:Gamma_V_calc}
    Given an MDP $(\mcalS,\mcalA,\mP,\vr,\gamma)$ and a deterministic policy $\pi:\mcalS\rightarrow\mcalA$, let $\pi \in \Pi_N(\pi_1, \gM)$. The transition matrix $\rmP^{\pi}$ and reward vector $\vr^{\pi}$ can be decomposed into block matrices as follows:
\[
\rmP^{\pi} = 
\begin{bmatrix}
\rmA & \rmB \\
\rmC^{\pi} & \rmD^{\pi}
\end{bmatrix}, 
\quad 
\vr^{\pi} = 
\begin{bmatrix}
\bm{X} \\
\bm{Y}^{\pi}
\end{bmatrix}.
\]
Following the definitions of $\Gamma$ and $\rmV$ in \cref{def:tV_Gamma} we have
    \[
\Gamma^{\pi} = 
\begin{bmatrix}
\gamma(\rmI_{S-M} - \gamma \rmA)^{-1}\rmB \\
\rmI_{M}
\end{bmatrix},
\quad
\tilde{\rmV}^{\pi} =  
\begin{bmatrix}
(\rmI_{S-M} - \gamma \rmA)^{-1} \bm{X} \\
\vzero_{M}
\end{bmatrix}.
\]
\end{lemma}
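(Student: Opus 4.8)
# Proof Proposal for Lemma~\ref{lem:Gamma_V_calc}

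The plan is to verify the two claimed block formulas directly, by exploiting the definitions of $\Gamma^{\pi}$ and $\tilde{\rmV}^{\pi}$ from \cref{def:tV_Gamma} (which, from context, must express $\rmV^{\pi}$ in terms of the value on the ``non-$\gM$'' block together with a linear map $\Gamma^{\pi}$ applied to the value on the ``$\gM$'' block — i.e.\ a Schur-complement-style decomposition of the Bellman evaluation equation $\rmV^{\pi} = \vr^{\pi} + \gamma \rmP^{\pi}\rmV^{\pi}$). First I would fix the ordering of states so that the first $S-M$ coordinates are the states \emph{not} in $\gM$ and the last $M$ coordinates are the states in $\gM$, which is exactly the ordering that produces the stated block forms of $\rmP^{\pi}$ and $\vr^{\pi}$. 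The key structural fact I would invoke is that $\rmA = \rmP^{\pi}$ restricted to the non-$\gM$ block is a substochastic submatrix of a stochastic matrix, so $\|\gamma\rmA\|_\infty \le \gamma < 1$; by \cref{lem:spectral_radius_bd} this gives $\rho(\gamma\rmA)<1$, hence $(\rmI_{S-M}-\gamma\rmA)$ is invertible and the Neumann series converges. This is the only ``analytic'' input and it is routine.

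Next I would write out the Bellman evaluation system $\rmV^{\pi} = \vr^{\pi} + \gamma\rmP^{\pi}\rmV^{\pi}$ in block form. Splitting $\rmV^{\pi} = \begin{bmatrix}\rmV^{\pi}_1 \\ \rmV^{\pi}_2\end{bmatrix}$ accordingly, the top block reads $\rmV^{\pi}_1 = \bm{X} + \gamma\rmA\rmV^{\pi}_1 + \gamma\rmB\rmV^{\pi}_2$, which solves to $\rmV^{\pi}_1 = (\rmI_{S-M}-\gamma\rmA)^{-1}\bm{X} + \gamma(\rmI_{S-M}-\gamma\rmA)^{-1}\rmB\,\rmV^{\pi}_2$. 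Comparing this with the defining relation that $\rmV^{\pi}$ decomposes as $\tilde{\rmV}^{\pi} + \Gamma^{\pi}\rmV^{\pi}_2$ (the content of \cref{def:tV_Gamma}, where $\rmV^{\pi}_2$ is the value restricted to $\gM$), the bottom block of $\Gamma^{\pi}$ must be $\rmI_M$ (since the $\gM$-block of $\rmV^{\pi}$ is trivially $\rmV^{\pi}_2$ itself, contributing $\vzero_M$ to $\tilde{\rmV}^{\pi}$ and $\rmI_M$ to $\Gamma^{\pi}$), the top block of $\Gamma^{\pi}$ must be $\gamma(\rmI_{S-M}-\gamma\rmA)^{-1}\rmB$, and $\tilde{\rmV}^{\pi} = \begin{bmatrix}(\rmI_{S-M}-\gamma\rmA)^{-1}\bm{X}\\ \vzero_M\end{bmatrix}$. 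This matches the claim exactly.

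The main obstacle — really the only thing requiring care — is making the identification with \cref{def:tV_Gamma} precise, since that definition is not reproduced in this excerpt: I need to confirm that $\Gamma^{\pi}$ and $\tilde{\rmV}^{\pi}$ are defined so that $\rmV^{\pi} = \tilde{\rmV}^{\pi} + \Gamma^{\pi}\,(\rmV^{\pi}\!\!\restriction_{\gM})$ holds identically, and in particular that the map and the affine term depend only on the transition/reward \emph{restricted to the non-$\gM$ block} plus the cross terms $\rmB$, $\rmC^{\pi}$, $\rmD^{\pi}$, $\bm{Y}^{\pi}$ — note that $\rmC^{\pi}, \rmD^{\pi}, \bm{Y}^{\pi}$ do not appear in the final formulas, which is consistent because the decomposition only uses the top block of the Bellman system. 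Once that definitional bookkeeping is in place, the proof is just the block elimination above together with the invertibility of $\rmI_{S-M}-\gamma\rmA$; I would present it in that order, stating the invertibility first as a one-line consequence of \cref{lem:spectral_radius_bd}, then performing the block solve, then reading off $\Gamma^{\pi}$ and $\tilde{\rmV}^{\pi}$.
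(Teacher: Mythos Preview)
Your guess about the content of \cref{def:tV_Gamma} is the crux, and unfortunately it is wrong in a way that breaks the argument. In the paper, $\Gamma^{\pi}$ and $\tilde{\rmV}^{\pi}$ are \emph{not} defined by a Schur-type splitting of the Bellman system. They are defined probabilistically: $\Gamma^{\pi}(s,\gM,\bar s)=P(S^{\pi}(s,\gM)=\bar s)\,\mathbb{E}[\gamma^{\rmH(s,\gM,\bar s)}]$ is the expected discount accumulated up to the first hit of $\gM$ (weighted by the probability the hit is at $\bar s$), and $\tilde{\rmV}^{\pi}(s,\gM)=\mathbb{E}[\tilde R^{\pi}(s,\gM)]$ is the expected reward collected strictly before that first hit. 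The relation $\rmV^{\pi}=\tilde{\rmV}^{\pi}+\Gamma^{\pi}\rmV^{\pi}(\gM)$ that you treat as the definition is, in the paper, a \emph{consequence} of those definitions (derived by conditioning on the first state hit in $\gM$), established just before the lemma is invoked.

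This matters because your block elimination does not pin down $\Gamma^{\pi}$ and $\tilde{\rmV}^{\pi}$ from that relation alone. The bottom block gives only $\rmV^{\pi}_2=[\tilde{\rmV}^{\pi}]_2+[\Gamma^{\pi}]_2\rmV^{\pi}_2$, which has a one-parameter family of solutions; your ``trivially $\rmI_M$ and $\vzero_M$'' step is exactly what the probabilistic definition supplies, not the algebra. Likewise the top block yields a single linear equation in the two unknown blocks $[\tilde{\rmV}^{\pi}]_1$ and $[\Gamma^{\pi}]_1$, which is underdetermined for a fixed $\pi$. The paper instead computes each block directly from the first-passage definitions: $[\Gamma^{\pi}]_2=\rmI_M$ and $[\tilde{\rmV}^{\pi}]_2=\vzero_M$ because the hitting time from $\gM$ is $0$; $[\Gamma^{\pi}]_1$ by summing over the number of steps before first entering $\gM$, giving the Neumann series $\gamma\sum_{n\ge0}(\gamma\rmA)^n\rmB$; and $[\tilde{\rmV}^{\pi}]_1$ via the absorbed-chain recursion $\tilde{\rmV}^{\pi}_1=\bm{X}+\gamma\rmA\tilde{\rmV}^{\pi}_1$. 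Your invertibility argument for $\rmI_{S-M}-\gamma\rmA$ via \cref{lem:spectral_radius_bd} is fine and is exactly what the paper uses, but the rest of the proof needs to start from the hitting-time definitions rather than from the Bellman block decomposition.
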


\begin{proof}
Since the transition probabilities and rewards starting from $\mcalS\setminus\gM$ are the same for any $\pi \in \Pi_N(\pi_1, \gM)$, we conclude that $\rmA$, $\rmB$, and $\bm{X}$ (the relevant blocks of $\rmP^{\pi}$ and $\vr^{\pi}$) are independent of $\pi$. However, $\rmC^{\pi}$, $\rmD^{\pi}$, and $\bm{Y}^{\pi}$ vary with $\pi$, as indicated by the subscripts.

\paragraph{Analysis of $\Gamma$} 
We decompose $\Gamma^{\pi}$ into block matrix form, i.e., $\Gamma^{\pi} = \left[\left(\Gamma^{\pi}_1\right)^{\top}, \left(\Gamma^{\pi}_2\right)^{\top}\right]^{\top}$, where $\Gamma^{\pi}_1 \in \mathbb{R}^{(S-M) \times M}$ and $\Gamma^{\pi}_2 \in \mathbb{R}^{M \times M}$. 

The definition of $\rmH(\Bar{s}_i, \gM, \Bar{s}_j)$ that for any $\Bar{s}_i, \Bar{s}_j \in \gM$ with $i \neq j$ shows that $\rmH(\Bar{s}_i, \Bar{s}_i) = 0$ with probability $1$ and $\rmH(\Bar{s}_i, \Bar{s}_j) = \infty$ with probability $1$.
Thus, 
\[
[\Gamma^{\pi}_2]_{i,j} = \mathbb{E}\left[\gamma^{\rmH(\Bar{s}_i,\Bar{s}_j)}\right] = 0 \quad \text{for } i \neq j, 
\quad \text{and} \quad [\Gamma_2]_{i,i} = \mathbb{E}\left[\gamma^{\rmH(\Bar{s}_i,\Bar{s}_i)}\right] = 1.
\]

For $\Gamma^{\pi}_1$, starting from $s_i \in \mcalS \setminus \gM$, the probability of reaching $\Bar{s}_j \in \gM$ in one step is $\rmB_{i,j}$. The probability of reaching $\Bar{s}_j$ in two steps is $\rmA_{i,:}\vone_{S-M}$. Repeating this reasoning for subsequent steps, we obtain:
\[
\begin{aligned}
\left[\Gamma^{\pi}_1\right]_{i,j} =& P(S^{\pi}(s_i,\gM)=\Bar{s}_j)\mathbb{E}\left[\mathbb{E}\left[\gamma^{\rmH(s_i, \Bar{s}_j)}\right]\middle|S^{\pi}(s_i,\gM)=\Bar{s}_j\right] \\
=& \gamma \rmB_{i,j} + \gamma^2 \rmA_{i,:} \rmB_{:,j} + \gamma^3 \rmA_{i,:} \rmA \rmB_{:,j} + \cdots
\end{aligned}
\]

Since $ \|\gamma \mathbf{A} \|_{\infty} < 1 $, it follows from \cref{lem:spectral_radius_bd} that the eigenvalues of $ \gamma \mathbf{A} $ are smaller than 1. Therefore, $ \mathbf{I}_{S-M} - \gamma \mathbf{A} $ does not have any zero eigenvalues, implying that it is invertible.
Summing over all steps, we have:
\[
\Gamma^{\pi}_1 = \gamma \sum_{n=0}^{\infty} (\gamma \rmA)^n \rmB = \gamma (\rmI_{S-M} - \gamma \rmA)^{-1} \rmB.
\]
Thus, we conclude:
\[
\Gamma^{\pi} = 
\begin{bmatrix}
\Gamma^{\pi}_1 \\
\Gamma^{\pi}_2
\end{bmatrix} = 
\begin{bmatrix}
\gamma(\rmI_{S-M} - \gamma \rmA)^{-1} \rmB \\
\rmI_{M}
\end{bmatrix}.
\]

\paragraph{Analysis of $\rmV$}
Similarly, we decompose $\rmV$ into block matrix form: $\tilde{\rmV}^{\pi} = \left[\left(\tilde{\rmV}_1^{\pi}\right)^{\top}, \left(\tilde{\rmV}_2^{\pi}\right)^{\top}\right]^{\top}$, where $\tilde{\rmV}^{\pi}_1 \in \mathbb{R}^{(S-M) \times D}$ and $\tilde{\rmV}^{\pi}_2 \in \mathbb{R}^{M \times D}$.

For $\tilde{\rmV}^{\pi}_2$, starting from any $\Bar{s} \in \gM$, we reach $\gM$ with probability 1 in zero steps, so the total reward is 0. Thus, $\tilde{\rmV}^{\pi}_2 = \vzero_{M\times M}$.

Starting from state $s_i\in\mcalS \setminus \gM$, it has immediate reward $[\rmX]_{(i,\cdot)}$. For the next state, it has probability $[\rmA]_{i,j}$ to move to state $s_j\in S\setminus\gM$, and it has probability $\sum_{j}[\rmB]_{i,j}$ to move to $\Bar{s}_j\in\gM$ which terminates the process with future return 0. Therefore, we can write the Bellman equation-like form:
\[
\begin{aligned}
\tilde{\rmV}^{\pi}_1 =& \bm{X} + \gamma \rmA \tilde{\rmV}^{\pi}_1 + \gamma \rmB \vzero_{(S-M)\times D}\\
=& \bm{X} + \gamma \rmA \tilde{\rmV}^{\pi}_1.
\end{aligned}
\]
Solving this equation gives:
\[
\tilde{\rmV}^{\pi}_1 = (\rmI_{S-M} - \gamma \rmA)^{-1} \bm{X}.
\]
Hence, we conclude:
\[
\tilde{\rmV}^{\pi} =
\begin{bmatrix}
\tilde{\rmV}^{\pi}_1 \\
\tilde{\rmV}^{\pi}_2
\end{bmatrix} =
\begin{bmatrix}
(\rmI_{S-M} - \gamma \rmA)^{-1} \bm{X} \\
\vzero_{M}
\end{bmatrix}.
\]
\end{proof}

\begin{lemma}{(Properties of $ \rmF^{\pi} $ and $ \rmE^{\pi} $)}
\label{lem:Fpi_Epi}
The functions $ \rmF^{\pi} $ and $ \rmE^{\pi} $, defined in \cref{eq:Fpi_E_pi}, can be expressed as follows:
\[
\begin{aligned}
    \rmF^{\pi} &\coloneqq \gamma\rmP^{\pi}(\gM)\Gamma, \\
    \rmE^{\pi} &\coloneqq \mathbf{r}^{\pi}(\gM) + \gamma\rmP^{\pi}(\gM)\tilde{\rmV},
\end{aligned}
\]
where $ \Gamma $ and $ \tilde{\rmV} $ are predefined matrices given in \cref{lem:Gamma_V_calc}, $ \gM $ is a subset of states, and $ \rmP^{\pi}(\gM) $ is the transition probability matrix under policy $ \pi $ restricted to the subset $ \gM $.

We can derive the following properties:
\begin{enumerate}
    \item Gradients of $ \rmF^{\pi} $ and $ \rmE^{\pi} $ with respect to $ \pi $. For any $ \bar{s} \in \gM $ and $ a \in \mathcal{A} $, the partial derivatives of $ \rmF^{\pi} $ and $ \rmE^{\pi} $ with respect to the policy $ \pi $ are given by:
   \[
   \frac{\partial \rmF^{\pi}}{\partial \pi(\bar{s},a)} = \gamma\rmP(\delta(\bar{s}), a, \cdot) \Gamma, \quad
   \frac{\partial \rmE^{\pi}}{\partial \pi(\bar{s},a)} = \mathbf{r}(\delta(\bar{s}), a, \cdot) + \gamma\rmP(\delta(\bar{s}), a, \cdot) \tilde{\rmV},
   \]
   where $ \delta(\bar{s}) $ is a unit vector indicating the state $ \bar{s} $, $ Z = \rmP(\delta(\bar{s}), a, \cdot) \in \mathbb{R}^{M \times S} $, with $ Z(\bar{s}, \cdot) = \rmP(\bar{s}, a, \cdot) $, and $ Z(\bar{s}', \cdot) = 0 $ for any $ \bar{s}' \neq \bar{s} $.

   \item Simplification when $ \pi $ is a deterministic policy. If $ \pi $ is a deterministic policy, then $ \rmF^{\pi} $ and $ \rmE^{\pi} $ can be simplified as follows:
   \[
   \begin{aligned}
   \rmF^{\pi} &= \sum_{\bar{s} \in \gM} \gamma\rmP(\delta(\bar{s}), \pi(\bar{s}), \cdot) \Gamma, \\
   \rmE^{\pi} &= \sum_{\bar{s} \in \gM} \left( \mathbf{r}(\delta(\bar{s}), \pi(\bar{s}), \cdot) + \gamma\rmP(\delta(\bar{s}), \pi(\bar{s}), \cdot) \tilde{\rmV} \right).
   \end{aligned}
   \]

    \item Given $\pi=\alpha\pi_1+(1-\alpha)\pi_2$, where $0\leq\alpha\leq 1$ to ensure $\pi$ is a valid policy, we have 
    \[
    \begin{aligned}
        \rmF^{\pi} =& \alpha\rmF^{\pi_1}+(1-\alpha)\rmF^{\pi_2},\\
        \rmE^{\pi} =& \alpha\rmE^{\pi_1}+(1-\alpha)\rmE^{\pi_2}.\\
    \end{aligned}
    \]
\end{enumerate}   
\end{lemma}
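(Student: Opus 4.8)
The plan is to verify the three claimed identities for $\rmF^{\pi}$ and $\rmE^{\pi}$ essentially by direct computation, starting from the definitions in \cref{eq:Fpi_E_pi}, namely $\rmF^{\pi} = \gamma\rmP^{\pi}(\gM)\Gamma$ and $\rmE^{\pi} = \mathbf{r}^{\pi}(\gM) + \gamma\rmP^{\pi}(\gM)\tilde{\rmV}$, where $\Gamma$ and $\tilde{\rmV}$ are the \emph{$\pi$-independent} matrices described in \cref{lem:Gamma_V_calc} (they depend only on the transition/reward blocks $\rmA$, $\rmB$, $\bm X$ associated with states outside $\gM$, which are common to all policies under consideration). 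The key structural observation underpinning everything is that $\rmF^{\pi}$ and $\rmE^{\pi}$ depend on $\pi$ only through the rows of $\rmP^{\pi}$ and $\vr^{\pi}$ indexed by states $\bar s \in \gM$ — i.e.\ through $\rmP^{\pi}(\gM)$ and $\mathbf{r}^{\pi}(\gM)$ — and these are \emph{affine} (indeed linear) functions of $\pi(\bar s, \cdot)$ for each $\bar s \in \gM$.

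First, for part (1), I would write out $\rmP^{\pi}(\gM)$ row-by-row: its $\bar s$-th row is $\sum_{a} \pi(\bar s, a)\,\rmP(\bar s, a, \cdot)$, so $\rmP^{\pi}(\gM) = \sum_{\bar s \in \gM}\sum_a \pi(\bar s,a)\, \rmP(\delta(\bar s), a, \cdot)$ in the notation where $\rmP(\delta(\bar s), a, \cdot)$ is the $M\times S$ matrix that places $\rmP(\bar s, a, \cdot)$ in row $\bar s$ and zeros elsewhere. Similarly $\mathbf{r}^{\pi}(\gM) = \sum_{\bar s \in \gM}\sum_a \pi(\bar s,a)\, \mathbf{r}(\delta(\bar s), a, \cdot)$. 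Substituting into the definitions and differentiating term-by-term with respect to a single coordinate $\pi(\bar s, a)$ — legitimate since $\Gamma$ and $\tilde{\rmV}$ are constants in $\pi$ — immediately gives $\partial \rmF^{\pi}/\partial \pi(\bar s,a) = \gamma\,\rmP(\delta(\bar s), a, \cdot)\Gamma$ and $\partial \rmE^{\pi}/\partial \pi(\bar s,a) = \mathbf{r}(\delta(\bar s), a, \cdot) + \gamma\,\rmP(\delta(\bar s),a,\cdot)\tilde{\rmV}$, as claimed.

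For part (2), if $\pi$ is deterministic then $\pi(\bar s, a) = \mathbbm{1}\{a = \pi(\bar s)\}$, so in the sums above only the term $a = \pi(\bar s)$ survives for each $\bar s$, giving $\rmP^{\pi}(\gM) = \sum_{\bar s \in \gM}\rmP(\delta(\bar s), \pi(\bar s), \cdot)$ and likewise for $\mathbf{r}^{\pi}(\gM)$; plugging these into the definitions yields the two displayed formulas. For part (3), with $\pi = \alpha\pi_1 + (1-\alpha)\pi_2$ and $0 \le \alpha \le 1$, linearity of $\pi \mapsto \rmP^{\pi}(\gM)$ and $\pi \mapsto \mathbf{r}^{\pi}(\gM)$ gives $\rmP^{\pi}(\gM) = \alpha\rmP^{\pi_1}(\gM) + (1-\alpha)\rmP^{\pi_2}(\gM)$ and similarly for the reward block; since $\rmF^{\pi}$ and $\rmE^{\pi}$ are obtained from these by right-multiplication by the fixed matrices $\gamma\Gamma$ and $\gamma\tilde{\rmV}$ (plus the affine term $\mathbf{r}^{\pi}(\gM)$ for $\rmE$), the convex-combination identity propagates through, establishing part (3).

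I do not anticipate a serious obstacle here — the whole lemma is a bookkeeping consequence of the linearity of $\pi \mapsto (\rmP^{\pi}, \vr^{\pi})$ on the relevant rows together with the $\pi$-independence of $\Gamma$ and $\tilde{\rmV}$. The only point that needs care is making the indexing in the $M\times S$ matrix $\rmP(\delta(\bar s), a, \cdot)$ precise and checking that the block decomposition of \cref{lem:Gamma_V_calc} really does keep $\Gamma,\tilde{\rmV}$ constant across all $\pi$ agreeing outside $\gM$ (which is exactly the content of that lemma's proof), so that differentiation and the convex-combination step are valid. Once that is pinned down, parts (1)–(3) follow by direct substitution.
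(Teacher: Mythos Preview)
Your proposal is correct and follows essentially the same approach as the paper: both proofs exploit the linearity of $\pi \mapsto (\rmP^{\pi}(\gM), \mathbf{r}^{\pi}(\gM))$ together with the $\pi$-independence of $\Gamma$ and $\tilde{\rmV}$ (from \cref{lem:Gamma_V_calc}), then verify each part by direct substitution and term-by-term differentiation. The paper's write-up is slightly more elementary in phrasing (using the Kronecker delta $\delta(s=\bar s)$ rather than your double-sum decomposition of $\rmP^{\pi}(\gM)$), but the logical content is identical.
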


\begin{proof}
\textbf{(1)} Recall that the state transition matrix and reward function under policy $ \pi $ are defined as follows:
\[
\rmP^{\pi}(s,s') \coloneqq \sum_{a \in \mathcal{A}} \rmP(s,a,s') \pi(s,a), \quad \mathbf{r}^{\pi}(s) = \sum_{a \in \mathcal{A}} \mathbf{r}(s,a) \pi(s,a),
\]
where $ \rmP(s,a,s') $ is the probability of transitioning from state $ s $ to state $ s' $ given action $ a $, and $ \mathbf{r}(s,a) $ is the reward for taking action $ a $ in state $ s $.

For any $ \bar{s} \in \gM $ and $ a \in \mathcal{A} $, we compute the derivatives of $ \rmP^{\pi}(s,s') $ and $ \mathbf{r}^{\pi}(s) $ with respect to the policy $ \pi(\bar{s},a) $:
\[
\frac{\partial \rmP^{\pi}(s,s')}{\partial \pi(\bar{s},a)} = \rmP(s,a,s') \delta(s = \bar{s}), \quad 
\frac{\partial \mathbf{r}^{\pi}(s)}{\partial \pi(\bar{s},a)} = \mathbf{r}(s,a) \delta(s = \bar{s}),
\]
where $ \delta(s = \bar{s}) $ is the Kronecker delta function, which is 1 if $ s = \bar{s} $ and 0 otherwise.

Next, substituting these expressions into the derivatives of $ \rmF^{\pi} $ and $ \rmE^{\pi} $ with respect to $ \pi(\bar{s},a) $, we obtain:
\[
\begin{aligned}
\frac{\partial \rmF^{\pi}}{\partial \pi(\bar{s},a)} &= \frac{\gamma\partial \rmP^{\pi}(s,s')}{\partial \pi(\bar{s},a)} \Gamma = \gamma\rmP(\delta(\bar{s}),a,\cdot)\Gamma, \\
\frac{\partial \rmE^{\pi}}{\partial \pi(\bar{s},a)} &= \frac{\partial \mathbf{r}^{\pi}(s)}{\partial \pi(\bar{s},a)} + \frac{\partial \gamma\rmP^{\pi}(s,s')}{\partial \pi(\bar{s},a)} \tilde{V} \\
&= \mathbf{r}(\delta(\bar{s}),a) + \gamma\rmP(\delta(\bar{s}),a,\cdot) \tilde{V}.
\end{aligned}
\]

\textbf{(2)} When $ \pi $ is a deterministic policy, we recall that $ \pi $ maps each state $ s \in \mathcal{S} $ to a specific action $ \pi(s) \in \mathcal{A} $. In this case, with a slight abuse of notation $\pi$, the transition matrix and reward function are simplified as follows:
\[
\begin{aligned}
\rmP^{\pi}(s,s') &= \sum_{a \in \mathcal{A}} \rmP(s,a,s') \pi(s,a) = \rmP(s,\pi(s),s'), \\
\mathbf{r}^{\pi}(s) &= \sum_{a \in \mathcal{A}} \mathbf{r}(s,a) \pi(s,a) = \mathbf{r}(s,\pi(s)),
\end{aligned}
\]
where the policy $ \pi $ deterministically selects action $ \pi(s) $ in each state $ s $.

Substituting these expressions into the definitions of $ \rmF^{\pi} $ and $ \rmE^{\pi} $, we obtain the desired results:
\[
\begin{aligned}
\rmF^{\pi} &= \sum_{\bar{s} \in \gM} \gamma\rmP(\delta(\bar{s}), \pi(\bar{s}), \cdot) \Gamma, \\
\rmE^{\pi} &= \sum_{\bar{s} \in \gM} \left( \mathbf{r}(\delta(\bar{s}), \pi(\bar{s})) + \gamma\rmP(\delta(\bar{s}), \pi(\bar{s}), \cdot) \tilde{V} \right).
\end{aligned}
\]

\textbf{(3)} Since the transition probability matrix and reward function under a mixed policy $ \alpha\pi_1 + (1-\alpha)\pi_2 $ can be expressed as linear combinations of the respective components under policies $ \pi_1 $ and $ \pi_2 $, we have:
\[
\rmP^{\alpha\pi_1+(1-\alpha)\pi_2}(\gM) = \alpha\rmP^{\pi_1}(\gM) + (1-\alpha)\rmP^{\pi_2}(\gM),
\]
and 
\[
\mathbf{r}^{\alpha\pi_1+(1-\alpha)\pi_2}(\gM) = \alpha\mathbf{r}^{\pi_1}(\gM) + (1-\alpha)\mathbf{r}^{\pi_2}(\gM).
\]
By applying these expressions in conjunction with the definitions of $ \rmF^{\pi} $ and $ \rmE^{\pi} $, the desired result follows.
\end{proof}

\section{distance-1 property}
\label{appendix:d1_property}

Let $\Pi_{\text{all}}$ be the set of all policies, including both stationary and non-stationary policies, and let $\sJ_{\text{all}}(\mu) = \{J^{\pi}(\mu)|\pi\in\Pi_{\text{all}}\}$ denote the set of achievable long-term returns for all policies in $\Pi_{\text{all}}$. 
Recall that $\sJ(\mu) = \{J^{\pi}(\mu)|\pi\in\Pi\}$ represents the set of achievable long-term returns for all policies in the set of stationary policies $\Pi$. 
By Theorem 3.1 of \cite{altman2021constrained}, it is sufficient to represent the $\sJ(\mu)$ using only stationary policies, i.e., $\sJ_{\text{all}}(\mu)=\sJ(\mu)$. Therefore, we focus on the set of stationary policies to construct the Pareto front in our paper.

The Pareto front of $\sJ(\mu)$ is denoted as  $\pareto(\sJ(\mu))$. 
\cref{lem:Jmu_polytope_vertices_deterministic} shows that the $\sJ(\mu)$ construct a convex polytope. The boundary and vertices of the convex polytope $\sJ(\mu)$ are denoted as $\sB(\sJ(\mu))$ and $\sV(\sJ(\mu))$, respectively.  \cref{lem:Jmu_polytope_vertices_deterministic} also implies the vertices of $\sJ(\mu)$ are deterministic policies, that is, $\sV(\sJ(\mu))\subseteq \PiD$.

\begin{lemma}{(Restatement of \cref{lem:Jmu_polytope_vertices_deterministic})}
% \label{lem:Jmu_polytope_vertices_deterministic}
    $\sJ(\mu)$ is a closed convex polytope, and the vertices of $\sJ(\mu)$ can be achieved by deterministic policies.
\end{lemma}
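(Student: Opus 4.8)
The plan is to establish the two claims — closedness and being a polytope, and the vertex characterization — by passing to the occupancy-measure representation, where the geometry is transparent. First I would recall from \cref{lem:visit_dist_polytope} that the set $L(\mu) = \{\bdmupi \mid \pi \in \Pi\} \subseteq \mathbb{R}^{S \times A}$ of state-action occupancy measures is a closed convex polytope, and moreover $L(\mu) = \conv(L^{\Pi_D}(\mu))$, i.e.\ it is the convex hull of the (finitely many) occupancy measures of deterministic policies. The key linking fact is that the long-term return is a \emph{linear} function of the occupancy measure: for any policy $\pi$,
\[
J^{\pi}(\mu) \;=\; \frac{1}{1-\gamma}\sum_{s,a} \bdmupi(s,a)\, \vr(s,a) \;=\; \frac{1}{1-\gamma}\,\langle \bdmupi, \vr\rangle,
\]
so $\sJ(\mu)$ is the image of $L(\mu)$ under the linear map $T:\mathbb{R}^{S\times A}\to\mathbb{R}^D$ given by $T(\vx) = \tfrac{1}{1-\gamma}\sum_{s,a}\vx(s,a)\vr(s,a)$.

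Next I would use two standard facts about linear images of polytopes: (i) the image of a polytope under a linear map is again a polytope (it is the convex hull of the images of finitely many vertices, hence bounded and polyhedral, hence closed); and (ii) $T(\conv(X)) = \conv(T(X))$ for any finite set $X$. Applying these with $X = L^{\Pi_D}(\mu)$ gives immediately that $\sJ(\mu) = T(L(\mu)) = \conv\big(T(L^{\Pi_D}(\mu))\big) = \conv\big(\{J^{\pi}(\mu) \mid \pi \in \Pi_D\}\big)$, which is a closed convex polytope whose vertex set is a subset of $\{J^{\pi}(\mu) \mid \pi \in \Pi_D\}$. This already proves the lemma: every vertex of $\sJ(\mu)$, being an extreme point of $\conv(\{J^{\pi}(\mu)\mid\pi\in\Pi_D\})$, must coincide with one of the $J^{\pi}(\mu)$ for a deterministic $\pi$, so each vertex is attained by a deterministic policy.

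The main obstacle — really the only place requiring care — is justifying the occupancy-measure representation $J^{\pi}(\mu) = \tfrac{1}{1-\gamma}\langle\bdmupi,\vr\rangle$ and the reduction from all (possibly non-stationary) policies to stationary ones; but both are classical (Theorem~3.1 of \cite{altman2021constrained}, already invoked in the surrounding text, gives $\sJ_{\text{all}}(\mu) = \sJ(\mu)$, and \cref{lem:visit_dist_polytope} supplies the polytope structure of $L(\mu)$). A minor subtlety is that the lemma as stated asserts only that vertices \emph{can be} achieved by deterministic policies, not that every deterministic policy is a vertex — which is consistent with the argument above, since $\conv(Y)$ for a finite set $Y$ has vertex set contained in $Y$ but possibly strictly smaller. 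If one wanted a self-contained proof of the linearity identity rather than citing it, I would expand $\bdmupi$ via its series definition, swap the (absolutely convergent, since $\gamma<1$) sum with the expectation defining $\rmV^\pi$, and use $J^\pi(\mu) = \mathbb{E}_{s\sim\mu}[\rmV^\pi(s)] = \mathbb{E}[\sum_t \gamma^t \vr(s_t,a_t)] = \sum_{s,a}\big(\sum_t\gamma^t \Pr(s_t=s,a_t=a)\big)\vr(s,a)$, recognizing the inner sum as $\tfrac{1}{1-\gamma}\bdmupi(s,a)$.
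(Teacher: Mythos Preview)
Your proposal is correct and follows essentially the same route as the paper: both arguments pass to the occupancy-measure polytope $L(\mu)$ via \cref{lem:visit_dist_polytope}, observe that $J^{\pi}(\mu)$ is a linear function of $\bdmupi$, and then invoke the fact that the linear image of a convex polytope is a convex polytope whose vertices lie among the images of the original vertices. The paper packages this last step as a separate sub-lemma (Lemma~\ref{lem:polytope_lin_trans}), whereas you state it as a standard fact and sketch the reason inline; otherwise the arguments coincide.
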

\begin{proof}
We observe that $\sJ(\mu)$ can be obtained by applying a linear transformation to $L(\mu)$, that is, $\sJ(\mu)=\{\left(\bdmupi\right)^{\top}\vr|\bdmupi\in L(\mu)\}$. 

Then we prove the following lemma, which shows that linear transformations preserve the convex polytope structure and a subset of its vertices.
\begin{lemma}
\label{lem:polytope_lin_trans}
Suppose $\sC$ is a convex polytope in $n$-dimension space whose vertices set is $\sV$, and $\rmA\in\mathbb{R}^{m\times n}$. Let $\Bar{\sC}=\{\rmA\vx|\vx\in\sC\}$, then $\Bar{\sC}$ is also a convex polytope with vertices $\Bar{\sV}$, and $\Bar{\sV}\subseteq \{\rmA\vv|\vv\in\sV\}$. 
\end{lemma}
\begin{proof}
Since $\sC$ is a convex polytope, it is the convex hull of its finite vertex set $\sV = \{\vv_1, \vv_2, \dots, \vv_k\}$. That is, $\sC = \conv(\sV) = \left\{\sum_{i=1}^{k} \boldsymbol{\alpha}_i \vv_i \ \middle|\ \boldsymbol{\alpha}_i \geq 0, \sum_{i=1}^{k} \boldsymbol{\alpha}_i = 1\right\}.$
Applying the linear map $\mathbf{A}$ to $\sC$, we get:
\[
\begin{aligned}
\Bar{\sC} 
=& \mathbf{A} \sC = \left\{ \mathbf{A} \left( \sum_{i=1}^{k} \boldsymbol{\alpha}_i \vv_i \right) \ \middle|\ \boldsymbol{\alpha}_i \geq 0, \sum_{i=1}^{k} \boldsymbol{\alpha}_i = 1\right\}\\
=& \left\{ \sum_{i=1}^{k} \boldsymbol{\alpha}_i (\mathbf{A} \vv_i) \ \middle|\ \boldsymbol{\alpha}_i \geq 0, \sum_{i=1}^{k} \boldsymbol{\alpha}_i = 1\right\}.
\end{aligned}
\]
This shows that $\Bar{\sC}$ is the convex hull of the finite set $\{\mathbf{A} \vv_1, \mathbf{A} \vv_2, \dots, \mathbf{A} \vv_k\}$, which implies that $\Bar{\sC}$ is a convex polytope, and the set of vertices $\Bar{\sV}$ is a subset of $\{\rmA\vv|\vv\in\sV\}=\{\mathbf{A} \vv_1, \mathbf{A} \vv_2, \dots, \mathbf{A} \vv_k\}$.
\end{proof}
The conclusion can be obtained by combining \cref{lem:visit_dist_polytope} and \cref{lem:polytope_lin_trans}.
\end{proof}

By the definition of the Pareto front, the Pareto front is the set of non-dominated policies on $\sJ(\mu)$. By \cref{lem: Pareto_front_boundary}, the Pareto front is on the boundary of the convex polytope.  
% This convexity of the Pareto front in MO-MDP has been proven in various ways~\citep{lu2022multi,shi2024ucb}.

We define two policies, $\pi_1$ and $\pi_2$, as \textit{neighboring policies} if an edge on the boundary of the convex polytope $\sB(\sJ(\mu))$ connects the long-term returns of $\pi_1$ and $\pi_2$. 
% Specifically, if $\pi_1$ and $\pi_2$ are neighbors, then the convex combination of their expected returns, $\alpha J^{\pi_1}(\mu) + (1 - \alpha) J^{\pi_1}(\mu)$ for $\alpha \in (0, 1)$, cannot be expressed as a convex combination involving any other policies in $\PiD$, i.e., it cannot be written as:
% \[
% \sum_{i} \beta_i J^{\pi_i}(\mu), \quad \text{where} \ \beta_i > 0 \text{ if } i\neq 1,2.
% \]
% In other words, the convex combination of $\pi_1$ and $\pi_2$ forms an extreme segment of the convex hull boundary and cannot be represented by any other policies in $\sV(\sJ(\mu))$.

% Intuitively, this means that $\pi_1$ and $\pi_2$ are directly connected by an edge on the boundary of $\sJ(\mu)$, and no other policy can interpolate between them to create the same combination of expected returns.

\begin{lemma}
\label{lem:B(J(mu)) edge}
Under \cref{assumption:initial_coverage}, for any edge on $\sB(\sJ(\mu))$ with endpoints denoted as $J^{\pi_1}(\mu)$ and $J^{\pi_2}(\mu)$, any point on this edge can only be achieved by the long-term return of a convex combination of policies $\pi_1$ and $\pi_2$. Specifically, for any $\alpha \in [0, 1]$, there exists a $\beta \in [0, 1]$ such that
\[
\alpha J^{\pi_1}(\mu) + (1-\alpha) J^{\pi_1}(\mu) = J^{\beta\pi_1 + (1-\beta)\pi_2}(\mu).
\]
\end{lemma}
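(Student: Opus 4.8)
## Proof Proposal for Lemma \ref{lem:B(J(mu)) edge}

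\textbf{Overall approach.} The plan is to use the correspondence between $\sJ(\mu)$ and the occupancy-measure polytope $L(\mu)$ established via \cref{lem:visit_dist_polytope}, together with the fact (\cref{lem:Jmu_polytope_vertices_deterministic} and \cref{lem:polytope_lin_trans}) that $\sJ(\mu) = \{(\bdmupi)^\top \vr \mid \bdmupi \in L(\mu)\}$ is the image of $L(\mu)$ under the linear map $\bm{d} \mapsto \bm{d}^\top \vr$. The key observation I would exploit is that there is a bijection between stationary policies and occupancy measures: given any occupancy measure $\bm{d}$, the induced policy is recovered by $\pi(a\mid s) = \bm{d}(s,a)/\sum_{a'}\bm{d}(s,a')$, and this is well-defined because \cref{assumption:initial_coverage} ($\mu > 0$) forces $\sum_{a'}\bm{d}(s,a') = d^{\pi}_{\mu}(s) > 0$ for every $s$. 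So occupancy measures and policies are in one-to-one correspondence, and linear (convex) combinations of occupancy measures correspond to well-defined policies.

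\textbf{Key steps.} First I would pick a point $\bm{p}$ on the given edge $E$ of $\sB(\sJ(\mu))$, say $\bm{p} = \alpha J^{\pi_1}(\mu) + (1-\alpha) J^{\pi_2}(\mu)$ with $J^{\pi_i}(\mu) = (\bdmu^{\pi_i})^\top \vr$. Consider the point $\bm{q} = \alpha\, \bdmu^{\pi_1} + (1-\alpha)\, \bdmu^{\pi_2} \in L(\mu)$, which maps to $\bm{p}$ under the linear map. Since $L(\mu)$ is a convex polytope, $\bm{q}$ is a genuine element of $L(\mu)$, and hence corresponds to a stationary policy $\pi_{\bm q}$ with $J^{\pi_{\bm q}}(\mu) = \bm{p}$. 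Second — and this is the crux — I need to show that $\pi_{\bm q}$ can in fact be written as $\beta \pi_1 + (1-\beta)\pi_2$ for some $\beta \in [0,1]$, i.e., that mixing in occupancy-measure space forces mixing in policy space along an edge. This is \emph{not} true for arbitrary $\alpha$ and arbitrary policies (the policy recovered from $\alpha \bdmu^{\pi_1} + (1-\alpha)\bdmu^{\pi_2}$ is generally a state-dependent mixture, not a constant mixture), so the edge hypothesis must be used essentially. The argument I would run: because $E$ is an edge (a $1$-dimensional face), by the definition of a face there is a valid inequality $\vw^\top \vx \le c$ with $E = \sJ(\mu) \cap \{\vw^\top\vx = c\}$; pulling this back, $\vw$ together with $\vr$ defines a scalar reward $\vr\vw$ on the MDP, and the preimage of $E$ in $L(\mu)$ is a face $\tilde E$ of $L(\mu)$ consisting of occupancy measures optimal for the single-objective MDP with reward $r_{\vw} := \vr \vw$. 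Then I invoke \cref{lem:optimal_convex_hull}: the set of \emph{all} optimal policies for a single-objective MDP is the convex hull of the optimal deterministic policies.

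\textbf{Finishing the argument.} Using that $\tilde E$, being a face of $L(\mu)$ whose image is the $1$-dimensional edge $E$, projects onto $E$, I would argue that $\tilde E$ has dimension $1$ as well (the linear map restricted to $\tilde E$ cannot collapse dimension because its image $E$ already has dimension $1$, and a face of a polytope that maps onto a segment... here I need that $\tilde E$ is itself a segment — this follows since the vertices of $\tilde E$ are occupancy measures of deterministic policies, and if $\tilde E$ had dimension $\ge 2$ its image would be a face of dimension $\ge 1$ that is not forced to be $1$; more carefully, one shows the extreme points of $\tilde E$ map to the two extreme points $J^{\pi_1}(\mu), J^{\pi_2}(\mu)$ of $E$, and $\pi_1,\pi_2$ can be taken to be the deterministic policies at those vertices). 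Then every occupancy measure in $\tilde E$ is a convex combination of the two deterministic optimal occupancy measures $\bdmu^{\pi_1}, \bdmu^{\pi_2}$, say $\bm q = \beta \bdmu^{\pi_1} + (1-\beta)\bdmu^{\pi_2}$. Finally, I use the standard identity that the policy induced by a convex combination $\beta \bdmu^{\pi_1} + (1-\beta)\bdmu^{\pi_2}$ of \emph{two} occupancy measures is the \emph{occupancy-weighted} mixture $\pi(a|s) = \frac{\beta d^{\pi_1}_\mu(s)\pi_1(a|s) + (1-\beta)d^{\pi_2}_\mu(s)\pi_2(a|s)}{\beta d^{\pi_1}_\mu(s) + (1-\beta)d^{\pi_2}_\mu(s)}$; on an edge connecting two \emph{deterministic} policies this simplifies — at each state $\pi_1$ and $\pi_2$ each put mass on a single action, and the induced policy is some state-dependent mixture of those two actions. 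The remaining technical point is to reconcile this with the claim that $J^{\pi_{\bm q}}(\mu)$ equals the desired convex combination, which is automatic since $J^{\pi_{\bm q}}(\mu) = \bm q^\top \vr = \beta J^{\pi_1}(\mu) + (1-\beta)J^{\pi_2}(\mu)$, and as $\bm q$ ranges over $\tilde E$, $\beta$ ranges over $[0,1]$ and the image covers all of $E$; matching the parameter gives the stated $\beta$ for each $\alpha$.

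\textbf{Expected main obstacle.} The delicate part is passing from ``$\bm q$ is a convex combination of $\bdmu^{\pi_1}$ and $\bdmu^{\pi_2}$ in occupancy space'' to ``the induced policy is a \emph{constant} mixture $\beta\pi_1 + (1-\beta)\pi_2$'' — in general the induced policy is state-dependent. I expect this is where the argument needs care: one likely needs to observe that along the edge $E$, $\pi_1$ and $\pi_2$ (as deterministic policies at the two endpoint vertices, guaranteed by \cref{lem:Jmu_polytope_vertices_deterministic}) must agree on all but a controlled set of states — indeed, this lemma is precisely the stepping stone toward \cref{thm:d1_property_simple}, which says they differ in one state-action pair. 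So I would \emph{not} try to prove the constant-mixture claim in full generality here; instead I would state the lemma's conclusion as ``there exists $\beta$ with $\alpha J^{\pi_1}(\mu)+(1-\alpha)J^{\pi_2}(\mu) = J^{\beta\pi_1+(1-\beta)\pi_2}(\mu)$'' meaning equality of \emph{returns}, which follows from $\bm q^\top\vr$ being linear, and reserve the structural single-state-difference claim for the later theorem. Concretely: since $\bm q = \beta\bdmu^{\pi_1} + (1-\beta)\bdmu^{\pi_2}$, linearity gives $\bm q^\top \vr = \beta J^{\pi_1}(\mu) + (1-\beta)J^{\pi_2}(\mu)$, and separately one checks that $J^{\beta\pi_1+(1-\beta)\pi_2}(\mu)$ lies on the segment $[J^{\pi_1}(\mu), J^{\pi_2}(\mu)]$ — if it were strictly interior/consistent this pins down the parameter. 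I would flag the occupancy-to-policy translation and the dimension-counting for the face $\tilde E$ as the two places that need the most rigor.
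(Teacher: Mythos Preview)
Your core scalarization idea --- use the face definition to extract a weight vector $\vw$ supporting the edge, pass to the single-objective MDP with reward $\vr\vw$, and then invoke \cref{lem:optimal_convex_hull} --- is exactly the paper's route. Where you diverge is the detour through occupancy measures, and that detour is precisely what manufactures the ``main obstacle'' you flag. The paper never touches $L(\mu)$ here; it works directly in policy space and the obstacle never arises.

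The missing ingredient in your assembly is \cref{lem:optimal_consistency}. Once $\vw$ is in hand and $E = \sJ(\mu)\cap\{\vw^\top\vx=c_0\}$, the vertices $J^{\pi_1},J^{\pi_2}$ are the only vertices of $\sJ(\mu)$ achieving $\max_\pi \vw^\top J^\pi(\mu)$. Because $\mu>0$ (\cref{assumption:initial_coverage}), \cref{lem:optimal_consistency} upgrades ``maximizes $\vw^\top J^\pi(\mu)$'' to ``is Bellman-optimal for the scalarized MDP''. Now \cref{lem:optimal_convex_hull} applies on the nose: the set of \emph{all} optimal policies is $\{\beta\pi_1+(1-\beta)\pi_2:\beta\in[0,1]\}$. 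Hence
\[
E \;=\; \{J^\pi(\mu):\pi\text{ optimal for }\vw\} \;=\; \{J^{\beta\pi_1+(1-\beta)\pi_2}(\mu):\beta\in[0,1]\},
\]
which is the lemma. No occupancy measures, no state-dependent mixtures, no dimension-counting on a preimage face $\tilde E$.

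Your attempt to salvage the occupancy route at the end --- ``separately one checks that $J^{\beta\pi_1+(1-\beta)\pi_2}(\mu)$ lies on the segment'' --- is circular as written: that is part of the claim. It \emph{does} follow, but only once you know $\beta\pi_1+(1-\beta)\pi_2$ is optimal for $\vw$ (hence its return sits on $\{\vw^\top\vx=c_0\}\cap\sJ(\mu)=E$), and that is again \cref{lem:optimal_consistency} + \cref{lem:optimal_convex_hull}. So drop the $L(\mu)$ machinery entirely; the two policy-level lemmas already close the argument.
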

\begin{proof}
    Let $\pi_1$ and $\pi_2$ are neighboring points on $\sB(\sJ(\mu))$, and $E(\pi_1,\pi_2)$ is an edge connecting two vertices.  Since $E(\pi_1,\pi_2)$ is a face of $\sJ(\mu)$, by \cref{def: face}, there exists weight vector $\vw$ and scalar $c_0$ such that $E(\pi_1,\pi_2) = \sC \cap \{ \vx \in \mathbb{R}^d : \vw^{\top} \vx = c_0 \}$ and $ \vw^{\top} \vx \leq c_0 $ for any $\vx\in\sJ(\mu)$. Therefore, we have 
    \[
    \{\pi_1,\pi_2\}=\arg\max_{\pi\in \sV(\sJ(\mu))} \vw^{\top}J^{\pi}(\mu).
    \]

  % Let $\pi_1$ and $\pi_2$ are neighboring points on $\sB(\sJ(\mu))$, and denote a point on the edge of $J^{\pi_1}(\mu)$ and $J^{\pi_1}(\mu)$ as $J' = \alpha J^{\pi_1}(\mu)+(1-\alpha)J^{\pi_1}(\mu)$. Since $J'\in\sB(\sJ(\mu))$, by supporting hyperplane theorem shown in \cref{thm:supporting_hyperplane}, there exists a set of preference vectors $\sW$ such that $\forall \vw\in\sW$, we have $\vw^{\top} J'=\max_{\pi\in\Pi}\vw^{\top} J^{\pi}^{\mu}$. 
  
  %  It is obvious that $\vw^{\top} J^{\pi_1}(\mu)=\vw^{\top} J^{\pi_1}(\mu)=\max_{\pi\in\Pi}\vw^{\top} J^{\pi}^{\mu}$ because if either $\vw^{\top} J^{\pi_1}(\mu)<\max_{\pi\in\Pi}\vw^{\top} J^{\pi}^{\mu}$ or $\vw^{\top} J^{\pi_1}(\mu)<\max_{\pi\in\Pi}\vw^{\top} J^{\pi}^{\mu}$ leads to $\vw^{\top} J'= \vw^{\top}(\alpha J^{\pi_1}(\mu)+(1-\alpha)J^{\pi_1}(\mu))<\max_{\pi\in\Pi}\vw^{\top} J^{\pi}^{\mu}$ which leads to a contradiction that $\vw^{\top} J'=\max_{\pi\in\Pi}\vw^{\top} J^{\pi}^{\mu}$.
   
  % We want to show that there exists $\vw$ such that $\vw^{\top} J'>\max_{\pi\in\Pi,\pi\neq \pi_1,\pi_2}\vw^{\top} J^{\pi}(\mu)$. Suppose this $\vw$ does not exist. In that case, we have $\pi(\vw)\in\PiD$ such that $\vw^{\top} J_{\pi(\vw)}(\mu) = \vw^{\top} J'>\max^{\pi_1}(\mu)$, which contradicts \cref{thm:linear_programming} that indicates that there exists a preference vector where the weighted objective on one vertex is larger than any other point within the convex polytope.

    Moreover, since any points within the convex polytope can be written as the convex combination of extreme points, and the solutions of extreme points to $\arg\max_{\pi\in\PiD} \vw^{\top}J^{\pi}(\mu)$ only contains $\pi_1$ and $\pi_2$, therefore
    \begin{equation}
    \label{eq:opt_set_weighted_MDP1}
        \{\alpha J^{\pi_1}(\mu)+(1-\alpha)J^{\pi_1}(\mu)|\alpha\in[0,1]\} = \arg\max_{\pi\in\Pi} \vw^{\top}J^{\pi}(\mu).
    \end{equation}

We construct a single-objective MDP $(\mcalS,\mcalA,\mP,\vr^{\top}\vw, \gamma)$, where we use the preference vector $\vw$ to transform $M$-objective reward tensor $\vr\in\mathbb{R}^{S\times A\times D}$ to single-objective reward $\in\mathbb{R}^{S\times A}$. Under \cref{assumption:initial_coverage}, $\pi_1$ and $\pi_2$ are optimal deterministic policies on $(\mcalS,\mcalA,\mP,\vr^{\top}\vw, \gamma)$ by \cref{lem:optimal_consistency}. By \cref{lem:optimal_convex_hull}, the optimal policies on $(\mcalS,\mcalA,\mP,\vr^{\top}\vw, \gamma)$ lies in the convex hull constructed by $\pi_1$ and $\pi_2$, i.e.,
\begin{equation}
    \label{eq:opt_set_weighted_MDP2}
    \{J^{\beta\pi_1+(1-\beta)\pi_2}(\mu)|\beta\in[0,1]\} = \arg\max_{\pi\in\Pi} \vw^{\top}J^{\pi}(\mu).
\end{equation}
Combining \cref{eq:opt_set_weighted_MDP1} and \cref{eq:opt_set_weighted_MDP2}, we obtain
\[
\{\alpha J^{\pi_1}(\mu)+(1-\alpha)J^{\pi_1}(\mu) \mid \alpha \in [0, 1]\} = \{J^{\beta\pi_1 + (1-\beta)\pi_2}(\mu) \mid \beta \in [0, 1]\}.
\]
This result indicates that, for an initial distribution $\mu$, the edge between neighboring policies in $\sJ(\mu)$ can only be achieved through the long-term return of the convex combination of these neighboring policies. This concludes our proof.  
\end{proof}

\begin{definition}
We define the distance between two deterministic policies, $\pi_1$ and $\pi_2$, as the number of states where the policies differ. This distance is denoted by $d(\pi_1, \pi_2)$.
\end{definition}

\begin{lemma}
\label{lem:d1_straight_line}
Given a discounted finite MDP $(\mcalS,\mcalA,\rmP,\vr,\gamma)$, where $\rmP$ is the transition probability kernel and $\vr\geq 0$ is the reward function. Let $\pi_1$ and $\pi_2$ be two deterministic policies. Under \cref{assumption:initial_coverage}, we have
\begin{enumerate}
    \item $d(\pi_1,\pi_2)=1$: If $\pi_1$ and $\pi_2$ differ by exactly one state-action pair, $\{J_{\alpha\pi_1+(1-\alpha)\pi_2}(\mu)|\alpha\in[0,1]\}$ forms a straight line segment between $J^{\pi_1}(\mu)$ and $J^{\pi_1}(\mu)$.
    \item $d(\pi_1,\pi_2)>1$: If $\pi_1$ and $\pi_2$ differ by more than one state-action pair, for almost all $\rmP$ and $\vr$ in the space of valid transition kernels and rewards (with Lebesgue measure $1$), $\{J_{\alpha\pi_1+(1-\alpha)\pi_2}(\mu)|\alpha\in[0,1]\}$ does not form the straight line segment between $J^{\pi_1}(\mu)$ and $J^{\pi_1}(\mu)$.
\end{enumerate}
\end{lemma}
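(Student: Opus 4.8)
The plan is to parametrize the mixed policy $\pi_\alpha = \alpha\pi_1 + (1-\alpha)\pi_2$ and write its long-term return $J^{\pi_\alpha}(\mu)$ explicitly via the standard closed form $J^{\pi}(\mu) = \mu^\top (\rmI - \gamma \rmP^{\pi})^{-1} \vr^{\pi}$. Since $\pi_1$ and $\pi_2$ are deterministic, $\rmP^{\pi_\alpha}$ and $\vr^{\pi_\alpha}$ are affine in $\alpha$: writing $\rmP^{\pi_\alpha} = \rmP^{\pi_2} + \alpha \Delta_P$ and $\vr^{\pi_\alpha} = \vr^{\pi_2} + \alpha \Delta_r$, the dependence of $J^{\pi_\alpha}$ on $\alpha$ enters only through the matrix inverse $(\rmI - \gamma \rmP^{\pi_2} - \gamma\alpha\Delta_P)^{-1}$. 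The key structural observation is that $\Delta_P$ and $\Delta_r$ have nonzero rows exactly at the set of states $\gM$ where $\pi_1$ and $\pi_2$ differ, so $\Delta_P$ has rank at most $|\gM| = d(\pi_1,\pi_2)$. I would use the machinery already set up in \cref{lem:Gamma_V_calc} and \cref{lem:Fpi_Epi}: restricting attention to the states in $\gM$, the return can be written in terms of $\rmF^{\pi_\alpha}$ and $\rmE^{\pi_\alpha}$, which by part~(3) of \cref{lem:Fpi_Epi} are themselves affine in $\alpha$, namely $\rmF^{\pi_\alpha} = \alpha\rmF^{\pi_1} + (1-\alpha)\rmF^{\pi_2}$ and similarly for $\rmE$.

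For part~(1), when $|\gM| = 1$, the restricted fixed-point equation for the value at the single differing state $\bar s$ becomes a \emph{scalar} rational equation in $\alpha$: the value $\rmV^{\pi_\alpha}(\bar s)$ solves $\rmV^{\pi_\alpha}(\bar s) = \rmE^{\pi_\alpha} + \rmF^{\pi_\alpha}\rmV^{\pi_\alpha}(\bar s)$ where $\rmF^{\pi_\alpha}$ is a scalar strictly less than $1$, so $\rmV^{\pi_\alpha}(\bar s) = \rmE^{\pi_\alpha}/(1 - \rmF^{\pi_\alpha})$, an affine-over-affine function of $\alpha$. The full return $J^{\pi_\alpha}(\mu)$ is then affine in $\rmV^{\pi_\alpha}(\bar s)$ (by \cref{lem:Gamma_V_calc}, since the values at states outside $\gM$ depend affinely on the value at $\bar s$). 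Composing, $J^{\pi_\alpha}(\mu)$ traces a curve of the form $(\vp + \alpha\vq)/(1 + \alpha s)$ for fixed vectors $\vp,\vq$ and scalar $s$ — a Möbius-type reparametrization of a line segment, hence its image is exactly the straight segment joining $J^{\pi_1}(\mu)$ (at $\alpha=1$) and $J^{\pi_2}(\mu)$ (at $\alpha=0$). One checks monotonicity of the scalar reparametrization so the segment is covered without backtracking.

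For part~(2), when $|\gM| = m > 1$, the restricted value $\rmV^{\pi_\alpha}|_{\gM} \in \mathbb{R}^{m\times D}$ solves a genuinely $m$-dimensional linear system $(\rmI_m - \rmF^{\pi_\alpha})\rmV = \rmE^{\pi_\alpha}$, so each component of $J^{\pi_\alpha}(\mu)$ is a ratio of polynomials in $\alpha$ whose numerator and denominator generically have degree $m$ (the denominator being $\det(\rmI_m - \rmF^{\pi_\alpha})$). For the image to be a straight segment, these $D$ rational functions would have to be simultaneously Möbius-equivalent to a common affine parametrization; equivalently, every $2\times 2$ minor of the matrix whose rows are $[1, J^{\pi_\alpha}(\mu)_j]$ as a polynomial identity in $\alpha$ must vanish. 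Each such minor-vanishing is a nontrivial polynomial condition on the entries of $\rmP$ and $\vr$, so I would argue: (i) exhibit one particular $\rmP, \vr$ (e.g., a small explicit example) for which the curve is not a segment, proving the relevant polynomial is not identically zero in the parameters; (ii) conclude that the set of $(\rmP,\vr)$ where it \emph{is} a segment is contained in the zero set of a nonzero polynomial, hence has Lebesgue measure zero. The main obstacle I anticipate is step~(i)–(ii) of part~(2): making the "generic non-collinearity" rigorous requires carefully identifying a single polynomial (or finite family of polynomials) in $(\rmP,\vr)$ whose non-vanishing forces non-collinearity, and then verifying it is not the zero polynomial — the degree-$m$ rational structure makes the bookkeeping delicate, and one must be careful that the witness example respects the constraints (row-stochastic $\rmP$, $\vr \geq 0$, and $\mu > 0$) so it lies in the interior of the parameter space where the measure-zero argument applies.
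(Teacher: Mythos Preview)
Your setup and Part~(1) match the paper: both use the $M\times M$ reduction $\rmV^{\pi}(\gM) = (\rmI_M - \rmF^{\pi})^{-1}\rmE^{\pi}$ from \cref{lem:Gamma_V_calc} and \cref{lem:Fpi_Epi}, and when $M=1$ the scalar denominator yields a M\"obius reparametrization of a segment (the paper verifies this by checking that the directional derivative $\langle \partial\rmV^{\pi}/\partial\pi,\pi_2-\pi_1\rangle$ is a scalar multiple of $\rmV^{\pi_2}-\rmV^{\pi_1}$ for every $\alpha$; your direct parametrization is equivalent and arguably cleaner).

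For Part~(2) the routes diverge. You propose treating collinearity as a polynomial identity in the MDP parameters and eliminating it with a single explicit witness, which you correctly flag as the main obstacle. The paper sidesteps the witness entirely: it takes as a \emph{necessary} condition only that the directional derivative at the endpoint $\pi=\pi_1$ be parallel to $\rmV^{\pi_2}-\rmV^{\pi_1}$, which (after using that $\Gamma$ has full column rank) becomes
\[
\bigl((\rmI_M - \rmF^{\pi_1})^{-1}(\rmI_M - \rmF^{\pi_2}) - k\rmI_M\bigr)\Lambda = 0
\]
for some scalar $k$, where $\Lambda = \rmV^{\pi_2}(\gM) - \rmV^{\pi_1}(\gM)$. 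It then case-splits on $\operatorname{rank}(\Lambda)$ and directly counts the dimension of the set of transition rows $\rmP^{\pi_2}(\gM)$ (or reward rows $\vr^{\pi_2}(\gM)$ when $\Lambda=0$) satisfying this linear constraint, showing that for $M>1$ the solution set has dimension strictly below the ambient dimension $M(S-1)$, hence Lebesgue measure zero. This dimension-counting argument dissolves precisely the obstacle you anticipated---no example is ever constructed---at the cost of a somewhat heavier derivative computation up front. Your polynomial-witness route would also work in principle, but completing it rigorously requires exactly the bookkeeping you identified as delicate, whereas the paper's approach trades that for a routine rank/dimension analysis.
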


\begin{proof}
Let $\gM$ be the set of states that $\pi_1$ and $\pi_2$ differ on. By definition of distance between two deterministic policies, $M \coloneqq|\gM|=d(\pi_1, \pi_2)$.

% By the ergodicity assumption of MDP, starting from any state, the probability of arriving $\gM$ is $1$. 
Recall that \(\Pi_{N}(\pi, \gM)\) denotes the set of deterministic policies that differ from \(\pi\) only at states within \(\gM\), while matching \(\pi\) at all other states. In other words, all policies in \(\Pi_{N}(\pi, \gM)\) take the same actions for states outside the set \(\gM\). For states that cannot reach \(\gM\) within a finite number of steps, the value function is identical across all such policies. Conversely, for states that can reach \(\gM\) in finite steps, the value function depends on the policy within \(\gM\). With some slight abuse of notation, in the remainder of this proof, we focus only on the set of states that can reach \(\gM\) within a finite number of steps, denoted as $\gS$.

Starting from state $s$ under policy $\pi$, we define \textbf{hitting time of the set} $ \gM \subseteq \mathcal{S}$, denoted as $\rmH^\pi(s,\gM)$, as the first time step at which the process reaches any state in $\gM$. It is defined as:
\[
\rmH^\pi(s,\gM) \coloneqq \inf \left\{ t \geq 0 \mid s_0 = s, s_t \in \gM \right\},
\]
where $s_t$ denotes the state of the process at time $t$. 
For any $\Bar{s} \in \gM$, we define the random variable $\rmH^\pi(\Bar{s}, \gM)=0$ as it takes $0$ steps from $\Bar{s}$ to $\gM.$
Moreover, we define {\textbf{the total return of $d$-th objective before hitting $\gM$}} from state $s$ under policy $\pi$ as the sum of discounted rewards until the process reaches any state in $\gM$. It is given by:
\[
\tilde{R}^{\pi}_d(s,\gM) \coloneqq \sum_{t=0}^{\rmH^\pi(s,\gM)-1} \gamma^t \vr_d(s_t, a_t),
\]
where $r(s_t, a_t)$ is the reward at time step $t$.

Similarly, Starting from state $s$ under policy $\pi$, we denote the \textbf{hitting time of a specific state $ \Bar{s} \in \gM $, conditioned on avoiding all other states in $ \gM $ before reaching $\Bar{s}$} as $\rmH^\pi(s,\gM,\Bar{s})$ with slight abuse of notation. It is defined as:
\begin{equation}
\label{eq:N_s_bars)}
    \rmH^\pi(s,\gM,\Bar{s}) \coloneqq \inf \left\{ t \geq 0 \mid s_0 = s, s_t = \Bar{s}, \, s_k \notin \gM ,  \forall k < t \right\}.
\end{equation}
For any $\Bar{s}', \Bar{s} \in \gM$ with $\Bar{s}' \neq \Bar{s}$, the random variable $\rmH^\pi(\Bar{s}', \gM, \Bar{s})$ are defined as:
\begin{itemize}
    \item $\rmH^\pi(\Bar{s}, \gM, \Bar{s}) = 0$ with probability 1, as it takes 0 steps to reach $\Bar{s}$ from $\Bar{s}$,
\item $\rmH^\pi(\Bar{s}', \gM, \Bar{s}) = \infty$ with probability 1, as it is impossible to reach $\Bar{s}$ first from $\Bar{s}'$ (since we are already at $\Bar{s}'\in\gM$).
\end{itemize}
The total return before hitting $\Bar{s}\in\gM$, conditioned on avoiding all other states in $\gM$ before reaching $\Bar{s}$, denoted as $\tilde{R}^{\pi}_d(s,\gM, \Bar{s})$:
\begin{equation}
\label{eq:R_s_bars}
\tilde{R}^{\pi}_d(s,\gM, \Bar{s}) \coloneqq \sum_{t=0}^{\rmH^\pi(s,\gM,\Bar{s})-1} \gamma^t \vr_d(s_t, a_t).
\end{equation}

The \textbf{arrival state} is defined as the first state in $\gM$ reached by the process, starting from $s$ under policy $\pi$. Denote the arrival state as:
\begin{equation}
\label{S_s_bars}
    S^{\pi}(s,\gM) \coloneqq s_{\rmH^\pi(s, \gM)},
\end{equation}
where $s_{\rmH^\pi(s, \gM)}$ is the state of the process at the hitting time $\rmH^\pi(s, \gM)$. Since $ S^\pi(s, \gM) $ is the first state in $ \gM $ that is reached, the hitting time of $ S^\pi(s, \gM) $ is indeed the same as the hitting time of $ \gM $. Similarly, the total return before hitting $\gM$ is the same as the return before hitting $S^{\pi}(s,\gM)$ conditioned on avoiding hitting other states in $\gM$,  that is:
\begin{equation}
\label{eq:eq_hitting_M_S}
\begin{aligned}
    \rmH^\pi(s, \gM) =& \rmH^\pi(s, \gM, S^\pi(s, \gM)),\\
    \tilde{R}^{\pi}_d(s,\gM)=& \sum_{t=0}^{\rmH^\pi(s,\gM)-1} \gamma^t \vr_d(s_t, a_t)= \sum_{t=0}^{\rmH^\pi(s,\gM,S^{\pi}(s,\gM))-1} \gamma^t \vr_d(s_t, a_t) = \tilde{R}^{\pi}_d(s,\gM, S^{\pi}(s,\gM)).
\end{aligned}
\end{equation}

Denote the expectation of the total return and discount before hitting $\Bar{s}$ conditioned on avoiding all other states in $\gM$ before reaching $\Bar{s}$ as $\tilde{\rmV}^{\pi}_d=[\tilde{V}^{\pi}_d(s_0,\gM),\cdots,\tilde{V}^{\pi}_d(s_{|\mcalS|},\gM)]^T \in \mathbb{R}^{S \times 1}$,  $\tilde{\rmV}^{\pi} \in \mathbb{R}^{S \times D}$, and  $\tilde{\Gamma} \in \mathbb{R}^{S \times M}$, respectively. For $s \in \mathcal{S}$, $s' \in \gM$, $d\in\{1,\cdots,D\}$, we define 
\begin{equation}
\label{def:tV_Gamma}
\begin{aligned}
    \tilde{V}^{\pi}_d(s,\gM) \coloneqq& \mathbb{E}\left[ \tilde{R}^{\pi}_d(s, \gM) \right] , \\
    \tilde{\rmV}^{\pi}=&[\tilde{\rmV}^{\pi}_1, \dots, \tilde{\rmV}^{\pi}_D],\\
\Gamma^{\pi}(s,\gM,\Bar{s}) =& P(S^{\pi}(s,\gM)=\Bar{s})\mathbb{E}\left[\gamma^{\rmH(s,\gM,\Bar{s})}\right].\\
\end{aligned}
\end{equation}

Recall $\rmV^{\pi}\in\mathbb{R}^{S\times D}$ is the value function with initial state distribution $\mu$, where $\mu$ is ignored for the notation simplicity. Moreover, we use $\rmV^{\pi}_d(s)$ and $\rmV^{\pi}_d(\gM)$ to denote the $d$-th objective value function at state $s$ and at state set $\gM$, respectively. By the definition of value function, $\rmV^{\pi}_d(s)$ can be written as
\[
\begin{aligned}
\rmV^{\pi}_{d}(s)=&\mathbb{E}\left[\sum_{t=0}^{\infty}\gamma^t\vr_{d}(s_t,a_t)\middle|s_0=s\right]\\
=&\mathbb{E}\left[\mathbb{E}\left[\sum_{t=0}^{\infty}\gamma^t\vr_{d,t}\middle| S^{\pi}(s,\gM)\right]\middle|s_0=s\right].\\
\end{aligned}
\]
We next want to calculate the conditional expectation of long-term return given $S^{\pi}(s,\gM)=\Bar{s}$.
\[
\begin{aligned}
\mathbb{E}\left[\sum_{t=0}^{\infty}\gamma^t\vr_{d,t}\middle| S^{\pi}(s,\gM)=\Bar{s}\right]
=&\mathbb{E}\left[\sum_{t=0}^{\rmH(s,\gM,\Bar{s})-1}\gamma^t\vr_{d,t}+\gamma^{\rmH(s,\gM,\Bar{s})}\rmV^{\pi}_{d}(\Bar{s})\right]\\
=&\mathbb{E}\left[\sum_{t=0}^{\rmH(s,\gM,\Bar{s})-1}\gamma^t\vr_{d,t}\right]
+\mathbb{E}\left[\gamma^{\rmH(s,\gM,\Bar{s})}\right]\rmV^{\pi}_{d}(\Bar{s})\\
&\text{(By linearity of expectations and $\rmV^{\pi}_{d}(\Bar{s})$ is a constant)}\\
=&\mathbb{E}\left[\tilde{R}^{\pi}_d(s,\gM,\Bar{s})\right]+\mathbb{E}\left[\gamma^{\rmH(s,\gM,\Bar{s})}\right]\rmV^{\pi}_{d}(\Bar{s})\quad\text{(by definition of $\tilde{R}^{\pi}_d(s,\gM,\Bar{s})$ in \cref{eq:R_s_bars})}\\
\end{aligned}
\]
Plugging the above equation into $\rmV^{\pi}_{d}(s)$, we have
\[
\begin{aligned}
\rmV^{\pi}_{d}(s)
=&\sum_{\Bar{s}\in\gM}P(S^{\pi}(s,\gM)=\Bar{s})\mathbb{E}\left[\sum_{t=0}^{\infty}\gamma^t\vr_{d,t}\middle| S^{\pi}(s,\gM)=\Bar{s}\right]\\
=&\sum_{\Bar{s}\in\gM}P(S^{\pi}(s,\gM)=\Bar{s})\mathbb{E}\left[\tilde{R}^{\pi}_d(s,\gM,\Bar{s})\right]+\sum_{\Bar{s}\in\gM}P(S^{\pi}(s,\gM)=\Bar{s})\mathbb{E}\left[\gamma^{\rmH(s,\gM,\Bar{s})}\right]\rmV^{\pi}_{d}(\Bar{s})\\
\overset{\circled{1}}{=}&\mathbb{E}\left[\tilde{R}^{\pi}_d(s,\gM)\right]+\sum_{\Bar{s}\in\gM}P(S^{\pi}(s,\gM)=\Bar{s})\mathbb{E}\left[\gamma^{\rmH(s,\gM,\Bar{s})}\right]\rmV^{\pi}_{d}(\Bar{s})\\
=&\Tilde{\rmV}^{\pi}_d(s,\gM)+\sum_{\Bar{s}\in\gM}\Gamma^{\pi}(s,\gM,\Bar{s})\rmV^{\pi}_{d}(\Bar{s})
\quad\text{(by \cref{def:tV_Gamma}),}
\end{aligned}
\]
where $\circled{1}$ is because
\[
\begin{aligned}
\sum_{\Bar{s}\in\gM}P(S^{\pi}(s,\gM)=\Bar{s})\mathbb{E}\left[\tilde{R}^{\pi}_d(s,\gM,\Bar{s})\right]
=&\mathbb{E}\left[\mathbb{E}\left[\sum_{t=0}^{\rmH(s,\gM,S^{\pi}(s,\gM))-1}\gamma^t\vr_{d}(s_t,a_t)\right]\middle|S^{\pi}(s,\gM)\middle|s_0=s\right]\\
=&\mathbb{E}\left[\mathbb{E}\left[\sum_{t=0}^{\rmH(s,\gM)-1}\gamma^t\vr_{d}(s_t,a_t)\right]\middle|S^{\pi}(s,\gM)\middle|s_0=s\right]\quad\text{(by \cref{eq:eq_hitting_M_S})}\\
=&\mathbb{E}\left[\sum_{t=0}^{\rmH(s,\gM)-1}\gamma^t\vr_{d}(s_t,a_t)\middle|s_0=s\right]\\
=&\tilde{R}^{\pi}_d(s,\gM).
\end{aligned}
\]

\cref{lem:Gamma_V_calc} gives the value of $\Gamma^{\pi}$ and $\tilde{\rmV}^{\pi}$ as following
\[
\Gamma^{\pi} = 
\begin{bmatrix}
\gamma(\rmI_{S-M} - \gamma \rmA)^{-1}\rmB \\
\rmI_{M}
\end{bmatrix},
\quad
\tilde{\rmV}^{\pi} =  
\begin{bmatrix}
(\rmI_{S-M} - \gamma \rmA)^{-1} \bm{X} \\
\vzero_{M}
\end{bmatrix}.
\]
Note that $\Gamma^{\pi}$ and $\tilde{\rmV}^{\pi}$ are the same for any policy in $\Pi_{N}(\pi_1, \gM)$. Since we only consider policies within $\Pi_{N}(\pi_1, \gM)$, for simplicity of notation, we omit the superscripts $\pi$ in $\Gamma^{\pi}$ and $\tilde{\rmV}^{\pi}$ in the following sections, as they remain constant for any policies within $\Pi_{N}(\pi_1, \gM)$.

Rearranging $\rmV^{\pi}$, we have
\begin{equation}
\label{eq:V_pi0}
\rmV^{\pi}=\tilde{\rmV}+\Gamma\rmV^{\pi}(\gM).
\end{equation}

By the Bellman equation and \cref{eq:V_pi0}, we can rewrite the $\rmV^{\pi}(\gM)$ as
\[
\begin{aligned}
\rmV^{\pi}(\gM) =& \vr^{\pi}(\gM)+ \gamma\rmP^{\pi}(\gM)\rmV^{\pi} \quad\text{(by Bellman equation)}\\
=&\vr^{\pi}(\gM)+ \gamma\rmP^{\pi}(\gM)\left(\tilde{\rmV}+\Gamma\rmV^{\pi}(\gM)\right) \quad\text{(by \cref{eq:V_pi0})}\\
=&\left(\vr^{\pi}(\gM)+ \gamma\rmP^{\pi}(\gM)\tilde{\rmV}\right)+\gamma\rmP^{\pi}(\gM)\Gamma\rmV^{\pi}(\gM). 
\end{aligned}
\]
Since $ [\mathbf{A}, \mathbf{B}] $ is a stochastic matrix,  we have $ (\mathbf{I}_{S-M} - \gamma \mathbf{A})^{-1} \mathbf{B} \mathbf{1}_M < (\mathbf{I}_{S-M} - \mathbf{A})^{-1} \mathbf{B} \mathbf{1}_M = \mathbf{1}_{S-M} $, where the inequality means $(\mathbf{I}_{S-M} - \gamma \mathbf{A})^{-1} \mathbf{B} \mathbf{1}_M$ is smaller than $(\mathbf{I}_{S-M} - \mathbf{A})^{-1} \mathbf{B} \mathbf{1}_M$ component-wisely. 
Moreover, since each element of $\rmA$ and $\rmB$ are larger than zero, we also have $(\mathbf{I}_{S-M} - \gamma \mathbf{A})^{-1} \mathbf{B} \mathbf{1}_M>\vzero_{S-M}$.  
As each element of $\mathbf{C}^{\pi}$ and $\mathbf{D}^{\pi}$ are not smaller than zero, it follows that $\vzero_{M}\leq \left( \mathbf{C}^{\pi} (\mathbf{I}_{S-M} - \gamma \mathbf{A})^{-1} \mathbf{B} + \mathbf{D}^{\pi} \right) \mathbf{1}_M < \left( \mathbf{C}^{\pi} + \mathbf{D}^{\pi} \right) \mathbf{1}_M = \mathbf{1}_M $. Hence, $ \| \mathbf{C}^{\pi} (\mathbf{I}_{S-M} - \gamma \mathbf{A})^{-1} \mathbf{B} + \mathbf{D}^{\pi} \|_{\infty} < 1 $. By \cref{lem:spectral_radius_bd}, $ \mathbf{P}^{\pi}(\gM) \Gamma = \gamma \mathbf{C}^{\pi} (\mathbf{I}_{S-M} - \gamma \mathbf{A})^{-1} \mathbf{B} + \mathbf{D}^{\pi} $ has eigenvalues all smaller than 1.
Hence the eigenvalue of $\left(\rmI_{M}-\gamma\rmP^{\pi}(\gM)\Gamma\right)$ are larger than zero, implying it is invertible.  
Rearranging the above equation, we have
\begin{equation}
\label{eq:V_pi(M)}
\rmV^{\pi}(\gM) = \left(\rmI_{M}-\gamma\rmP^{\pi}(\gM)\Gamma\right)^{-1}\left(\vr^{\pi}(\gM)+ \gamma\rmP^{\pi}(\gM)\tilde{\rmV}\right).
\end{equation}
Define 
\begin{equation}
    \begin{aligned}
    \label{eq:Fpi_E_pi}
        \rmF^{\pi}\coloneqq&\gamma\rmP^{\pi}(\gM)\Gamma,\\
        \rmE^{\pi}\coloneqq&\vr^{\pi}(\gM)+ \gamma\rmP^{\pi}(\gM)\tilde{\rmV}.
    \end{aligned}
\end{equation}
Plugging \cref{eq:V_pi(M)} into \cref{eq:V_pi0}, we obtain
\begin{equation}
\label{eq:V_pi}
\begin{aligned}
\rmV^{\pi}=&\tilde{\rmV}+\Gamma\cdot\left(\rmI_{M}-\gamma\rmP^{\pi}(\gM)\Gamma\right)^{-1}\left(\vr^{\pi}(\gM)+ \gamma\rmP^{\pi}(\gM)\tilde{\rmV}\right)\\
=&\tilde{\rmV}+\Gamma\cdot\left(\rmI_{M}-\rmF^{\pi}\right)^{-1}\rmE^{\pi}.
\quad\text{(by definition of $\rmF^{\pi}$ and $\rmE^{\pi}$)}
\end{aligned}
\end{equation}

We move to investigate the derivative of $\rmV^{\pi}$ with respect to $\pi(\Bar{s},a)$. 
\[
\begin{aligned}
\frac{\partial \rmV^{\pi}}{\partial \pi(\Bar{s},a)} 
=& \Gamma\cdot\frac{\partial \left( \rmI_{M} - \rmF^{\pi} \right)^{-1}}{\partial \pi(\Bar{s},a)}\rmE^{\pi}  + \Gamma\cdot\left( \rmI_{M} - \rmF^{\pi} \right)^{-1} \frac{\partial \rmE^{\pi}}{\partial \pi(\Bar{s},a)}\\
=&\Gamma\cdot\left( \rmI_{M} - \rmF^{\pi} \right)^{-1}\frac{\partial \rmF^{\pi}}{\partial \pi(\Bar{s},a)}\left( \rmI_{M} - \rmF^{\pi} \right)^{-1}\rmE^{\pi}
+\Gamma\cdot\left( \rmI_{M} - \rmF^{\pi} \right)^{-1}\frac{\partial \rmE^{\pi}}{\partial \pi(\Bar{s},a)}\\
&\text{(by $\frac{\partial\rmY^{-1}}{\partial x}=-\rmY^{-1}\frac{\partial\rmY}{\partial x}\rmY^{-1}$ \citep{petersen2008matrix})}\\
=& \Gamma\cdot\left( \rmI_{M} - \rmF^{\pi} \right)^{-1}\rmP(\delta(\bar{s}), a, \cdot) \Gamma\cdot\left( \rmI_{M} - \rmF^{\pi} \right)^{-1}\rmE^{\pi}\\
&\quad+\Gamma\cdot\left( \rmI_{M} - \rmF^{\pi} \right)^{-1}\left(\mathbf{r}(\delta(\bar{s}), a, \cdot) + \rmP(\delta(\bar{s}), a, \cdot) \tilde{\rmV}\right).
\quad\text{(by \cref{lem:Fpi_Epi})}
\end{aligned}
\]

Then we can calculate the directional derivative on the direction of $ \pi_2 - \pi_1 $, where $ \pi_2 $ and $ \pi_1 $ are deterministic policies. Moreover, $ \pi_1(\Bar{s}) \neq \pi_2(\Bar{s}) $ for any state $\Bar{s}$ belonging to $ \gM$, and $ \pi_1(\Bar{s}) =\pi_2(\Bar{s}) $ for any state other than $\gM$. The calculation of the directional derivative, denoted as $\left\langle \frac{\partial \rmV^{\pi}}{\partial \pi} , \pi_2 - \pi_1 \right\rangle$, is as follows:
\[
\begin{aligned}
 \left\langle \frac{\partial \rmV^{\pi}}{\partial \pi} , \pi_2 - \pi_1 \right\rangle 
=& \sum_{\Bar{s}\in\gM,a\in\mcalA} \frac{\partial \rmV^{\pi}}{\partial \pi(\Bar{s},a)} \left( \pi_2(\Bar{s},a) - \pi_1(\Bar{s},a) \right)   \\
=&  \Gamma\cdot\left( \rmI_{M} - \rmF^{\pi} \right)^{-1} \sum_{\Bar{s}\in\gM}\left(\rmP(\delta(\bar{s}), \pi_2(\bar{s}), \cdot)-\rmP(\delta(\bar{s}), \pi_1(\bar{s}), \cdot)\right)\Gamma\cdot\left( \rmI_{M} - \rmF^{\pi} \right)^{-1}\rmE^{\pi}\\
& +\Gamma\cdot\left( \rmI_{M} - \rmF^{\pi} \right)^{-1}
\sum_{\Bar{s}\in\gM}\left(\left(\mathbf{r}(\delta(\bar{s}), \pi_2(\bar{s}), \cdot) + \rmP(\delta(\bar{s}), \pi_2(\bar{s}), \cdot) \tilde{\rmV}\right)\right.\\
&\quad\quad\left.-\left(\mathbf{r}(\delta(\bar{s}), \pi_1(\bar{s}), \cdot) + \rmP(\delta(\bar{s}), \pi_1\bar{s}), \cdot) \tilde{\rmV}\right)\right)\\
&\text{(by (1) of \cref{lem:Fpi_Epi})}\\
=&  \Gamma\cdot\left( \rmI_{M} - \rmF^{\pi} \right)^{-1} \left(\rmF^{\pi_2}-\rmF^{\pi_1}\right)\left( \rmI_{M} - \rmF^{\pi} \right)^{-1}\rmE^{\pi}
+\Gamma\cdot\left( \rmI_{M} - \rmF^{\pi} \right)^{-1}
\left(\rmE^{\pi_2}-\rmE^{\pi_1}\right)\\
&\text{(by (2) of \cref{lem:Fpi_Epi})}.
\end{aligned}
\]
Consequently, the directional derivative along the direction of $\pi_2 - \pi_1$ at a convex combination of $\pi_1$ and $\pi_2$, namely $\alpha\pi_1+(1-\alpha)\pi_2$ with $0\leq \alpha\leq 1$, can be expressed as follows:
\begin{equation}
\label{eq:partial_der_pi}
\begin{aligned}
&\left\langle \frac{\partial \rmV^{\pi}}{\partial \pi}\big|\pi=\alpha\pi_1+(1-\alpha)\pi_2, \pi_2 - \pi_1 \right\rangle\\ 
&\quad=\Gamma\cdot\left( \rmI_{M} - \rmF^{\pi} \right)^{-1} \left(\rmF^{\pi_2}-\rmF^{\pi_1}\right)\left( \rmI_{M} - \rmF^{\pi} \right)^{-1}\rmE^{\pi}
+\Gamma\cdot\left( \rmI_{M} - \rmF^{\pi} \right)^{-1}
\left(\rmE^{\pi_2}-\rmE^{\pi_1}\right)\\
&\quad=\Gamma\cdot\left( \rmI_{M} - \rmF^{\pi} \right)^{-1} \left(\rmF^{\pi_2}-\rmF^{\pi_1}\right)\left( \rmI_{M} - \rmF^{\pi} \right)^{-1}\left(\alpha\rmE^{\pi_1}+(1-\alpha)\rmE^{\pi_2}\right)\\
&\quad\quad+\Gamma\cdot\left( \rmI_{M} - \rmF^{\pi} \right)^{-1}
\left(\rmE^{\pi_2}-\rmE^{\pi_1}\right)\\
&\quad\quad\text{(by $\rmE^{\pi}=\alpha\rmE^{\pi_1}+(1-\alpha)\rmE^{\pi_2}$ from (3) of \cref{lem:Fpi_Epi})}\\
&\quad=\Gamma\cdot\left( \rmI_{M} - \rmF^{\pi} \right)^{-1} \left((1-\alpha)\left(\rmF^{\pi_2}-\rmF^{\pi_1}\right)+\left( \rmI_{M} - \rmF^{\pi} \right)\right)\left( \rmI_{M} - \rmF^{\pi} \right)^{-1}\rmE^{\pi_2} \\
&\quad\quad+\Gamma\cdot\left( \rmI_{M} - \rmF^{\pi} \right)^{-1} \left(\alpha\left(\rmF^{\pi_2}-\rmF^{\pi_1}\right)-\left( \rmI_{M} - \rmF^{\pi} \right)\right)\left( \rmI_{M} - \rmF^{\pi} \right)^{-1}\rmE^{\pi_1}\\
&\quad=\Gamma\cdot\left( \rmI_{M} - \rmF^{\pi} \right)^{-1} \left(\rmI_{M}-\rmF^{\pi_1}\right)\left( \rmI_{M} - \rmF^{\pi} \right)^{-1}\rmE^{\pi_2} \\
&\quad\quad-\Gamma\cdot\left( \rmI_{M} - \rmF^{\pi} \right)^{-1} \left(\rmI_{M}-\rmF^{\pi_2}\right)\left( \rmI_{M} - \rmF^{\pi} \right)^{-1}\rmE^{\pi_1}\\
&\quad\quad\text{(by $\rmF^{\pi}=\alpha\rmF^{\pi_1}+(1-\alpha)\rmF^{\pi_2}$ from (3) of \cref{lem:Fpi_Epi})}.
\end{aligned}
\end{equation}
Furthermore, the directional derivative along the direction $\pi_2-\pi_1$ at the point $\pi_1$ can be written as:
\begin{equation}
    \label{eq:partial_der_pi1}
\begin{aligned}
\left\langle \frac{\partial \rmV^{\pi}}{\partial \pi}\big|\pi=\pi_1, \pi_2 - \pi_1 \right\rangle 
=&\Gamma\cdot\left( \rmI_{M} - \rmF^{\pi_1} \right)^{-1} \left(\rmI_{M}-\rmF^{\pi_1}\right)\left( \rmI_{M} - \rmF^{\pi_1} \right)^{-1}\rmE^{\pi_2} \\
&-\Gamma\cdot\left( \rmI_{M} - \rmF^{\pi_1} \right)^{-1} \left(\rmI_{M}-\rmF^{\pi_2}\right)\left( \rmI_{M} - \rmF^{\pi_1} \right)^{-1}\rmE^{\pi_1}\\
=&\Gamma\cdot\left( \rmI_{M} - \rmF^{\pi_1} \right)^{-1}\rmE^{\pi_2}-\Gamma\cdot\left( \rmI_{M} - \rmF^{\pi_1} \right)^{-1} \left(\rmI_{M}-\rmF^{\pi_2}\right)\left( \rmI_{M} - \rmF^{\pi_1} \right)^{-1}\rmE^{\pi_1}\\
=&\Gamma\cdot\left( \rmI_{M} - \rmF^{\pi_1} \right)^{-1} \left(\rmI_{M}-\rmF^{\pi_2}\right)
\left(\left( \rmI_{M} - \rmF^{\pi_2} \right)^{-1}\rmE^{\pi_2}-\left( \rmI_{M} - \rmF^{\pi_1} \right)^{-1}\rmE^{\pi_1}\right).
\end{aligned}
\end{equation}
% Similarly, the directional derivative along the direction $\pi_2-\pi_1$ at the point $\pi_2$ is
% \[
% \left\langle \frac{\partial \rmV^{\pi}}{\partial \pi}\big|\pi=\pi_2, \pi_2 - \pi_1 \right\rangle
% =\Gamma\left( \rmI_{M} - \rmF^{\pi_2} \right)^{-1} \left(\rmI_{M}-\rmF^{\pi_1}\right)
% \left(\left( \rmI_{M} - \rmF^{\pi_2} \right)^{-1}\rmE^{\pi_2}-\left( \rmI_{M} - \rmF^{\pi_1} \right)^{-1}\rmE^{\pi_1}\right).
% \]

The statement that $\rmV^{\alpha\pi_1+(1-\alpha)\pi_2}$ lies on the straight line between $\rmV^{\pi_1}$ and $\rmV^{\pi_2}$ is equivalent to that the directional derivative of $\rmV^{\pi}$ along the direction $\pi_2-\pi_1$ at the point $\alpha\pi_1+(1-\alpha)\pi_2$ aligns with the direction of $\rmV^{\pi_2}-\rmV^{\pi_1}$. 
That is, for any $0\leq\alpha\leq 1$, there exists $k\in\mathbb{R}$ which is related to $\alpha$ such that the following equation holds
\begin{equation}
\label{eq:condition_0}
\left\langle\frac{\partial \rmV^{\pi}}{\partial \pi}\big|_{\pi=\alpha\pi_1+(1-\alpha)\pi_2}, \pi_2 - \pi_1 \right\rangle=k\left(\rmV^{\pi_2}-\rmV^{\pi_1}\right).
\end{equation}
Plugging \cref{eq:V_pi} and \cref{eq:partial_der_pi} into \cref{eq:condition_0}, $\LHS$ and $\RHS$ can be written as
\[
\begin{aligned}
\LHS 
=& \Gamma\cdot\left( \rmI_{M} - \rmF^{\pi} \right)^{-1} \left(\rmI_{M}-\rmF^{\pi_1}\right)\left( \rmI_{M} - \rmF^{\pi} \right)^{-1}\rmE^{\pi_2} \\
&-\Gamma\cdot\left( \rmI_{M} - \rmF^{\pi} \right)^{-1} \left(\rmI_{M}-\rmF^{\pi_2}\right)\left( \rmI_{M} - \rmF^{\pi} \right)^{-1}\rmE^{\pi_1},\\
\RHS 
=& k\left(\rmV^{\pi_2}-\rmV^{\pi_1}\right)\\
=& k\left(\left(\tilde{\rmV}+\Gamma\cdot\left(\rmI_{M}-\rmF^{\pi_2}\right)^{-1}\rmE^{\pi_2}\right)-\left(\tilde{\rmV}+\Gamma\cdot\left(\rmI_{M}-\rmF^{\pi_1}\right)^{-1}\rmE^{\pi_1}\right)\right)\\
=& k\Gamma\cdot\left(\left(\rmI_{M}-\rmF^{\pi_2}\right)^{-1}\rmE^{\pi_2}-\left(\rmI_{M}-\rmF^{\pi_1}\right)^{-1}\rmE^{\pi_1}\right).
\end{aligned}
\]
Note that when $M=1$, $\rmI_{M} - \rmF^{\pi}$ is a scalar and $\rmE^{\pi}\in\mathbb{R}^{d}$. In this case, for any $0\leq\alpha\leq 1$, there exists $k$ such that $\LHS=\RHS$. To show this, define scalars $b\coloneqq\left(\rmI_{M} - \rmF^{\pi}\right)$, $b_1\coloneqq\left(\rmI_{M} - \rmF^{\pi_1}\right)$, $b_2\coloneqq\left(\rmI_{M} - \rmF^{\pi_2}\right)$. Then we can express $\LHS=b_1b_2/b^2\Gamma\left(\rmE^{\pi_2}/b_2-\rmE^{\pi_1}/b_1\right)$ and $\RHS=\Gamma\left(\rmE^{\pi_2}/b_2-\rmE^{\pi_1}/b_1\right)$. Clearly letting $k=b_1b_2/b^2$ makes \cref{eq:condition_0} hold. As a result, \textit{\textbf{$\rmV^{\alpha\pi_1+(1-\alpha)\pi_2}$ lies on the straight line between $\rmV^{\pi_1}$ and $\rmV^{\pi_2}$ when $d(\pi_1,\pi_2)=1$.} }

A necessary condition for the statement $\rmV^{\alpha\pi_1+(1-\alpha)\pi_2}$ lies on the straight line between $\rmV^{\pi_1}$ and $\rmV^{\pi_2}$ is that the directional derivative of $\rmV^{\pi}$ along the direction $\pi_2-\pi_1$ at the point $\pi_1$ aligns with the direction of $\rmV^{\pi_2}-\rmV^{\pi_1}$, i.e.,  \cref{eq:condition_0} when $\pi=\pi_1$. Formally, there exists a scalar $k$ such that 
\begin{equation}
\label{eq:nec_condition_0}
\left\langle\frac{\partial \rmV^{\pi}}{\partial \pi}\big|_{\pi=\pi_1}, \pi_2 - \pi_1 \right\rangle=k\left(\rmV^{\pi_2}-\rmV^{\pi_1}\right).
\end{equation}
Define $\Lambda \coloneqq \left(\rmI_{M}-\rmF^{\pi_2}\right)^{-1}\rmE^{\pi_2}-\left(\rmI_{M}-\rmF^{\pi_1}\right)^{-1}\rmE^{\pi_1}$. Plugging \cref{eq:partial_der_pi1} into $\LHS$, we have
\[
\begin{aligned}
\LHS =&\Gamma\cdot\left( \rmI_{M} - \rmF^{\pi_1} \right)^{-1} \left(\rmI_{M}-\rmF^{\pi_2}\right)
\left(\left( \rmI_{M} - \rmF^{\pi_2} \right)^{-1}\rmE^{\pi_2}-\left( \rmI_{M} - \rmF^{\pi_1} \right)^{-1}\rmE^{\pi_1}\right)\\
=& \Gamma\cdot\left( \rmI_{M} - \rmF^{\pi_1} \right)^{-1} \left(\rmI_{M}-\rmF^{\pi_2}\right)\Lambda,
\end{aligned}
\]
$\RHS$ can be rewritten as
\[
\begin{aligned}
\RHS =&k\Gamma\cdot\Lambda.
\end{aligned}
\]
By \cref{lem:Gamma_V_calc}, $\Gamma = \left[\left(\gamma(\rmI_{S-M} - \gamma \rmA)^{-1}\rmB\right)^{\top},\rmI_{M}\right]^{\top}\in\mathbb{R}^{S\times M}$ and $\Gamma$ has rank $M\leq S$. Therefore, $\Gamma$ has full column rank. 
By \cref{{lem:full_col_rank_ABACsame_BCsame}}, \cref{eq:nec_condition_0} is equivalent to the following equation: 
\begin{equation}
\label{eq:nec_condition_1}
\left(\left( \rmI_{M} - \rmF^{\pi_1} \right)^{-1} \left(\rmI_{M}-\rmF^{\pi_2}\right)-k\rmI_{M}\right)\Lambda=\vzero.
\end{equation}
Note that $\Lambda\in\mathbb{R}^{M\times D}$, and $\rank(\Lambda)\leq \min(M,D)$.

\textbf{(1) When $\rank(\Lambda)= M$}, $\Lambda$ has full row rank. This case also implies $M\leq D$. By \cref{thm: rank_nullity thm}, the only solution to $\rmX\Lambda=\vzero$ is $\rmX=\vzero$. Therefore, \cref{eq:nec_condition_1} is equal to 
\[
\left( \rmI_{M} - \rmF^{\pi_1} \right)^{-1} \left(\rmI_{M}-\rmF^{\pi_2}\right)-k\rmI_{M}=\vzero.
\]
Substituting the definition of $\rmF^{\pi}$ as $\rmF^{\pi} =\gamma\rmP^{\pi}(\gM)\Gamma$ in the above equation and rearrange the equation, we have
\[
\rmP^{\pi_2}(\gM)\Gamma=\frac{1}{\gamma}\left(\rmI_{M}-k\left(\rmI_{M}-\gamma\rmP^{\pi_1}(\gM)\Gamma\right)\right).
\]
Moreover, to make $\rmP^{\pi_2}$ a valid probability distribution vector, $\rmP^{\pi_2}$ should satisfy $\rmP^{\pi_2}\vone_{S}=\vone_{M}$. Therefore, $\rmP^{\pi_2}(\gM)$ should satisfy $\rmP^{\pi_2}(\gM)\left[\Gamma,\vone_{S}\right]=[\rmI_{M}-k\left(\rmI_{M}-\gamma\rmP^{\pi_1}(\gM)\Gamma\right)/\gamma,\vone_{M}]$. 

% When $M=S$, $\left[\Gamma,\vone_{S}\right]\in\mathbb{R}^{S\times (S+1)}$ has rank $S$. When $k$ is fixed, By \cref{thm: rank_nullity thm} and \cref{lem: nonhomo_homo_solsp_dim}, for $\Bar{s}\in\gM$, the dimension of the solution space of $\rmP^{\pi_2}(\Bar{s})$ is $0$. Therefore, the dimension of solution space of $\rmP^{\pi_2}(\gM)$ is also $0$. 

When $M=S$, $\left[\Gamma,\vone_{S}\right]\in\mathbb{R}^{S\times (S+1)}$ has rank $S$. When $M<S$, Since $\left[\Gamma,\vone_{S}\right]$ has $S$ rows and rank at least $M$.  Hence, $\left[\Gamma,\vone_{S}\right]$ has $S$ rows and rank at least $M$ in both cases. Fix $k$. By the rank-nullity theorem \cref{thm: rank_nullity thm} and \cref{lem: nonhomo_homo_solsp_dim}, for $\Bar{s}\in\gM$, if the solution of $\rmP^{\pi_2}(\Bar{s})$ exists, the dimension of the solution $\rmP^{\pi_2}(\Bar{s})$ is at most $S-M$. On the other hand, if no solution to $\rmP^{\pi_2}(\Bar{s})$ exists, the dimension of the solution space is $0$. Therefore, the dimension of solution space of $\rmP^{\pi_2}(\gM)$ is at most $M\times(S-M)$.

By varying $k\in\mathbb{R}$, the dimension of solution space of $\rmP^{\pi_2}(\gM)$ is $M\times(S-M)+1$.

Since valid probability transition matrix $\rmP^{\pi_2}(\gM)$ requires that the row sums equal one, the ambient space $\rmP^{\pi_2}(\gM)$ lies in has dimension $M\times(S-1)$. 
Since the solution space of $ \rmP^{\pi_2}(\gM) $ that satisfies \cref{eq:nec_condition_1} has dimension at most $M\times(S-M)+1$ with $\rank(\Lambda)>0$, while the ambient space where $ \rmP^{\pi_2}(\gM) $ resides has dimension $ M \times (S-1) $.
When $M> 1$, the dimension of the solution space is significantly lower than that of the full space. Therefore, the set of points where $ \rmP^{\pi_2}(\gM) $ satisfies \cref{eq:nec_condition_1} forms a subset of Lebesgue measure 0 in the $ M \times (S-1) $-dimensional space.

\textbf{(2) When $0<\rank(\Lambda)< M$}, the dimension of the solution set to $\rmX\Lambda = \vzero$ is $M\times (M-\rank(\Lambda))$ by \cref{thm: rank_nullity thm}. 
% the dimension of the solution set to $\rmX\Lambda = \vzero$ denoted as $\nullsp(\Lambda^{\top})$ is $M\times (M-\rank(\Lambda))$ by \cref{thm: rank_nullity thm}. 
Let $\rmH$ be a possible solution to $\rmX\Lambda = \vzero$. Then we have
\[
\left( \rmI_{M} - \rmF^{\pi_1} \right)^{-1} \left(\rmI_{M}-\rmF^{\pi_2}\right)-k\rmI_{M}=\rmH.
\]
Rewrite the above equation, we have
\[
\rmP^{\pi_2}(\gM)\Gamma=\frac{1}{\gamma}\left(\rmI_{M}-k\left(\rmI_{M}-\gamma\rmP^{\pi_1}(\gM)\Gamma\cdot\left(\rmH+k\rmI_{M}\right)\right)\right).
\]
Define $\sG = \{\rmG|\rmG = \frac{1}{\gamma}\left(\rmI_{M}-k\left(\rmI_{M}-\gamma\rmP^{\pi_1}(\gM)\Gamma\left(\rmH+k\rmI_{M}\right)\right)\right), \rmH\Lambda = \vzero,k\in\mathbb{R}\}$.
Since the dimension of the solution set to $\rmX\Lambda = \vzero$ is $M\times (M-\rank(\Lambda))$ and $\rmI_{M}$ is not a solution to $\rmX\Lambda = \vzero$, we have $\dim(\sG)= M\times (M-\rank(\Lambda))+1$.
% Since $\dim(\nullsp(\Lambda^{\top}))=M\times (M-\rank(\Lambda))$ and $\rmI_{M}\notin \nullsp(\Lambda^{\top})$, then $\dim(\sG)= M\times (M-\rank(\Lambda))+1$.

Fix $G$ in $\sG$. Similar to the analysis of (1), the dimension of solution space of $\rmP^{\pi_2}(\gM)$ is at most $M\times(S-M)$.

Combining with the dimension of $\sG$, the dimension of solution space of $\rmP^{\pi_2}(\gM)$ that satisfies \cref{eq:nec_condition_1} is at most $M\times(S-M)+M\times (M-\rank(\Lambda))+1=M\times (S-\rank(\Lambda))+1$. 
Since the solution space of $ \rmP^{\pi_2}(\gM) $ that satisfies \cref{eq:nec_condition_1} has dimension at most $M\times (S-\rank(\Lambda))+1$ with $\rank(\Lambda)>0$, while the ambient space where $ \rmP^{\pi_2}(\gM) $ resides has dimension $ M \times (S-1) $.
When $M> 1$, the dimension of the solution space is significantly lower than that of the full space. Therefore, according to results from measure theory, the set of points where $ \rmP^{\pi_2}(\gM) $ satisfies \cref{eq:nec_condition_1} forms a subset of Lebesgue measure 0 in the $ M \times (S-1) $-dimensional space.

\textbf{(3) When $\rank(\Lambda)=0$}, we have
\[
\left(\rmI_{M}-\rmF^{\pi_2}\right)^{-1}\rmE^{\pi_2}-\left(\rmI_{M}-\rmF^{\pi_1}\right)^{-1}\rmE^{\pi_1}=\vzero.
\]
Plugging $\rmE^{\pi} = \mathbf{r}^{\pi}(\gM) + \gamma\rmP^{\pi}(\gM)\tilde{\rmV}$ into the above equation and rearrange it, we have 
\[
\mathbf{r}^{\pi_2}(\gM) = \left(\rmI_{M}-\rmF^{\pi_2}\right)\left(\rmI_{M}-\rmF^{\pi_1}\right)^{-1}\left(\mathbf{r}^{\pi_1}(\gM) + \gamma\rmP^{\pi_1}(\gM)\tilde{\rmV}\right)-\gamma\rmP^{\pi_2}(\gM)\tilde{\rmV}.
\]
When $\rmP$ and $\vr^{\pi_1}(\gM)$ are given, $\mathbf{r}^{\pi_2}(\gM)$ has only one solution.
Hence, the solution space of $ \vr^{\pi_2}(\gM) $ that satisfies \cref{eq:nec_condition_1} has dimension 0, while the ambient space where $ \vr^{\pi_2}(\gM) $ resides has dimension $ M \times D $. The dimension of the solution space is significantly lower than that of the full space. Therefore, according to results from measure theory, the set of points where $ \vr^{\pi_2}(\gM) $ satisfies \cref{eq:nec_condition_1} forms a subset of Lebesgue measure 0 in the $ M \times D $-dimensional space.

\end{proof}

\begin{theorem}
\label{thm:d1_property}
Given a finite MDP $(\mcalS,\mcalA,\rmP,\vr,\gamma)$, where $\rmP$ is the transition probability kernel and $\vr\geq 0$ is the reward function. 
The ending notes for any edges on $\sB(\sJ(\mu))$ are deterministic policies and only differ in one state-action pair almost surely.  
\end{theorem}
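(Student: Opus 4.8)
The plan is to assemble the theorem from three results already in hand: \cref{lem:Jmu_polytope_vertices_deterministic}, that $\sJ(\mu)$ is a convex polytope whose vertices are realized by deterministic policies; \cref{lem:B(J(mu)) edge}, that every edge of $\sJ(\mu)$ is exactly the set of long-term returns of convex combinations of the two deterministic policies sitting at its endpoints; and \cref{lem:d1_straight_line}, that the returns of the convex combinations of two deterministic policies trace out a straight line segment whenever the policies differ in a single state-action pair, while for policies differing in more than one pair this happens only on a Lebesgue-null set of transition kernels and rewards.

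First I would record the elementary polytope fact that the endpoints of an edge --- a $1$-dimensional face --- are themselves vertices (faces of faces are faces), so by \cref{lem:Jmu_polytope_vertices_deterministic} each endpoint of an edge of $\sB(\sJ(\mu))$ equals $J^{\pi}(\mu)$ for a deterministic policy $\pi$. Fix such an edge, with endpoints $J^{\pi_1}(\mu)\neq J^{\pi_2}(\mu)$ and $\pi_1,\pi_2$ deterministic; distinctness of the endpoints forces $d(\pi_1,\pi_2)\ge 1$. By \cref{lem:B(J(mu)) edge} the edge, which is the segment $\{\alpha J^{\pi_1}(\mu)+(1-\alpha)J^{\pi_2}(\mu):\alpha\in[0,1]\}$, coincides with $\{J^{\beta\pi_1+(1-\beta)\pi_2}(\mu):\beta\in[0,1]\}$; hence the returns of the convex combinations of $\pi_1$ and $\pi_2$ do form a straight line segment between $J^{\pi_1}(\mu)$ and $J^{\pi_2}(\mu)$. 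This part uses no genericity and holds for every $(\rmP,\vr)$.

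The genericity enters through a finite-union argument. \cref{lem:d1_straight_line} (case $d>1$) gives, for each fixed pair of deterministic policies at distance strictly greater than one, a Lebesgue-null set of pairs $(\rmP,\vr)$ on which their convex-combination returns form a line segment. Since $\mcalS$ and $\mcalA$ are finite there are only finitely many deterministic policies, hence finitely many such pairs; let $B$ be the union of the corresponding null sets, still Lebesgue-null. For any $(\rmP,\vr)\notin B$ and any edge of $\sB(\sJ(\mu))$, the previous paragraph shows the endpoint policies $\pi_1,\pi_2$ are deterministic with convex-combination returns forming a line segment, so $\pi_1,\pi_2$ cannot be a distance-$>1$ pair; combined with $d(\pi_1,\pi_2)\ge 1$ this yields $d(\pi_1,\pi_2)=1$. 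That is exactly the almost-sure statement.

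I expect the only delicate point to be bookkeeping rather than new mathematics: verifying that $\{J^{\beta\pi_1+(1-\beta)\pi_2}(\mu):\beta\in[0,1]\}$ is the \emph{whole} edge and not merely a subset (the forward inclusion is \cref{lem:B(J(mu)) edge}, and the reverse holds because convex combinations of policies that are jointly optimal for the edge's supporting weight $\vw$ remain optimal, so their returns stay on that face, with continuity and connectedness filling in the segment), and making sure the exceptional set $B$ is taken as a single finite union over \emph{all} distance-$>1$ pairs so that one null set serves simultaneously for every edge of the polytope. The analytically substantial content --- differentiating $\rmV^{\pi}$ along $\pi_2-\pi_1$ and the dimension count that makes the $d>1$ case non-generic --- is entirely inside \cref{lem:d1_straight_line}, so the theorem is essentially its corollary together with \cref{lem:Jmu_polytope_vertices_deterministic} and \cref{lem:B(J(mu)) edge}.
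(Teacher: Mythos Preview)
Your proposal is correct and follows essentially the same route as the paper's proof: invoke \cref{lem:Jmu_polytope_vertices_deterministic} to make the endpoints deterministic, use \cref{lem:B(J(mu)) edge} to identify the edge with the returns of convex combinations of the endpoint policies, and then contradict \cref{lem:d1_straight_line} if $d(\pi_1,\pi_2)>1$. Your explicit finite-union bookkeeping for the null set $B$ is in fact more careful than the paper, which applies \cref{lem:d1_straight_line} to a single pair and asserts the contradiction without spelling out the union over all distance-$>1$ pairs.
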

\begin{proof}
Suppose $\pi_1$ and $\pi_2$ are two neighbouring vertices in $\sJ(\mu)$.
By \cref{lem:Jmu_polytope_vertices_deterministic}, $\pi_1$ and $\pi_2$ that are vertices of $\sJ(\mu)$ are deterministic policies.

\cref{lem:B(J(mu)) edge} shows that the edge connecting $\pi_1$ and $\pi_2$ can only be formed by the long-term return of policies that are convex combination of $\pi_1$ and $\pi_2$, that is, 
\[
\{\alpha J^{\pi_1}(\mu)+(1-\alpha)J^{\pi_1}(\mu) \mid \alpha \in [0, 1]\} = \{J^{\beta\pi_1 + (1-\beta)\pi_2}(\mu) \mid \beta \in [0, 1]\}.
\]
However, if $d(\pi_1,\pi_2)>1$, \cref{lem:d1_straight_line} shows that it is almost surely that
\[
\{\alpha J^{\pi_1}(\mu)+(1-\alpha)J^{\pi_1}(\mu) \mid \alpha \in [0, 1]\} \neq \{J^{\beta\pi_1 + (1-\beta)\pi_2}(\mu) \mid \beta \in [0, 1]\}.
\]
This leads to a contradiction. Therefore, we must have $d(\pi_1, \pi_2) = 1$. This concludes the proof.
\end{proof}

\begin{theorem}
\label{thm:Pareto_subset_convhull}
Let $\sC$ be a convex polytope, and let $\sV$ be the set of vertices of $\sC$. Let $\sP$ denote the Pareto front of $\sC$, with $\sV(\sP)$ and $\sF(\sP)$ representing the set of vertices and faces on the Pareto front, respectively. Let $\sV_P \subseteq \sV(\sP)$ and $\sF_P \subseteq \sF(\sP)$ be subsets of the vertices and faces of the Pareto front, such that every face in $\sF_P$ is composed of vertices in $\sV_P$. Given a set of vertices $\sT \subseteq \sV$ such that $\sV_P \subseteq \sT$, then $\sF_P \subseteq \sF(\conv(\sT))$.
\end{theorem}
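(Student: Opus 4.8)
The plan is to show, for an arbitrary face $F\in\sF_P$, that the supporting hyperplane of $\sC$ that exposes $F$ also exposes $F$ as a face of the smaller polytope $\conv(\sT)$. Concretely, since $F$ is a face of $\sC$, \cref{def: face} provides a vector $\vw\in\mathbb{R}^d$ and a scalar $c_0\in\mathbb{R}$ with $F=\sC\cap H$, where $H\coloneqq\{\vx\in\mathbb{R}^d:\vw^{\top}\vx=c_0\}$ and $\vw^{\top}\vy\le c_0$ for all $\vy\in\sC$. Because $\sT\subseteq\sV$ and $\sC=\conv(\sV)$, we have $\conv(\sT)\subseteq\sC$, so the inequality $\vw^{\top}\vx\le c_0$ is still valid on $\conv(\sT)$; hence $G\coloneqq\conv(\sT)\cap H$ is, by \cref{def: face}, a face of $\conv(\sT)$.

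It then remains to prove $F=G$, which combined with the previous step yields $F\in\sF(\conv(\sT))$, and therefore $\sF_P\subseteq\sF(\conv(\sT))$ since $F$ was arbitrary. The inclusion $G\subseteq F$ is immediate: $G=\conv(\sT)\cap H\subseteq\sC\cap H=F$. For the reverse inclusion $F\subseteq G$, I will use the standard polytope fact that a face of a polytope is again a polytope and equals the convex hull of the vertices it contains, and that those vertices are vertices of the ambient polytope; thus $F=\conv(\sV(F))$ with $\sV(F)\subseteq\sV$. By the hypothesis that every face in $\sF_P$ is composed of vertices in $\sV_P$, we have $\sV(F)\subseteq\sV_P\subseteq\sT$, whence $F=\conv(\sV(F))\subseteq\conv(\sT)$. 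Since also $F\subseteq H$ (every point of $F$ satisfies $\vw^{\top}\vx=c_0$), we conclude $F\subseteq\conv(\sT)\cap H=G$, as desired.

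I expect no genuine obstacle here: the only nontrivial ingredient is the classical fact that a face of a polytope is the convex hull of the vertices it contains, so that restricting attention to the vertex subset $\sT$ cannot destroy any face whose vertices are retained; the rest is a two-line set manipulation on top of \cref{def: face}. It is worth noting that the Pareto-optimality hypotheses $\sV_P\subseteq\sV(\sP)$ and $\sF_P\subseteq\sF(\sP)$ are not actually needed for the argument — what is used is only that each $F\in\sF_P$ is a face of $\sC$ all of whose vertices lie in $\sT$, which is exactly what the compatibility condition together with $\sV_P\subseteq\sT$ guarantees. The Pareto-front language is simply the application context: the theorem certifies that all Pareto-front faces determined by a given set of vertices survive when one passes from $\sJ$ to the convex hull of a locally relevant vertex subset, which is what \cref{alg:pareto_front_searching} relies on.
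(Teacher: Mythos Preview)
Your proof is correct and follows the same supporting-hyperplane idea as the paper: pick the pair $(\vw,c_0)$ that exposes $F$ as a face of $\sC$, observe that the linear inequality remains valid on $\conv(\sT)\subseteq\sC$, and conclude that $F$ is a face of $\conv(\sT)$.

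There are two noteworthy differences. First, the paper verifies validity of $\vw^{\top}\vx\le c_0$ on $\conv(\sT)$ by contradiction: it assumes some $\vx\in\conv(\sT)$ violates the inequality, uses that $F$ lies on the Pareto front to take $\vw>\vzero$, and then invokes \cref{lem:pareto_scalarization} to argue $\vx$ would dominate a point of $F$. You instead obtain the inequality directly from $\conv(\sT)\subseteq\sC$, which is simpler and shows --- as you remark --- that the Pareto hypotheses play no role; the result holds for any face of $\sC$ whose vertex set is contained in $\sT$. Second, you explicitly check both inclusions $G\subseteq F$ and $F\subseteq G$, whereas the paper only argues that $F$ is contained in $\conv(\sT)$ and that the inequality is valid there, leaving the equality $F=\conv(\sT)\cap H$ implicit. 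Your version is therefore both more elementary and more carefully stated, at no extra cost.
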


\begin{proof}
Pick a face $F$ from $\sF_P$. By \cref{lem: Pareto_front_boundary}, $F$ is also on the boundary of $\sC$. Since the vertices of $F$ belong to $\sV_P$, we can express $F$ as:
\[
F = \sP \cap \{ \vx \in \mathbb{R}^d : \vw^{\top} \vx = c \},
\]
where and $\vw^{\top} \vx \leq c$ for all $\vx \in \sC$. Moreover, since $F$ is on the Pareto front, we have $\vw > 0$.

Given that $\sV_P \subseteq \sT$, $F$ is a slice of the convex polytope $\conv(\sT)$. $F$ is a valid face of $\conv(\sT)$ only if $\vw^{\top} \vx \leq c$ for all $\vx \in \conv(\sT)$.

We proceed by contradiction. Suppose there exists $\vx \in \conv(\sT)$ such that $\vw^{\top} \vx > c$. Since $\sT \subseteq \sV$ and $\sC = \conv(\sV)$, it follows that $\conv(\sT) \subseteq \sC$. This would imply that $\vw^{\top}(\vx - \vy) > 0$ for some $\vy \in F$ and $\vx \in \sC$. As $\vw > 0$, by \cref{lem:pareto_scalarization}, this would imply that $\vx$ dominates $\vy$, which contradicts the assumption that $\vy$ is a Pareto optimal point of $\sC$. Therefore, no such $\vx$ can exist, and $F$ is a valid face of $\conv(\sT)$.

Thus, we conclude that $\sF_P \subseteq \sF(\conv(\sT))$.
\end{proof}

Consider a special case of \cref{thm:Pareto_subset_convhull} when $\sV_P=\sV(\sP)$ and $\sF_P=\sF(\sP)$, we have the following proposition.
\begin{proposition}
\label{prop:whole_Pareto_subset_convhull}
Let $\sC$ be a convex polytope, and let $\sV$ be the set of vertices of $\sC$. Let $\sP$ denote the Pareto front of $\sC$, with $\sV(\sP)$ and $\sF(\sP)$ representing the set of vertices and faces on the Pareto front, respectively. Given a set of vertices $\sT \subseteq \sV$ such that $\sV(\sP) \subseteq \sT$, then $\sF(\sP) \subseteq \sF(\conv(\sT))$.
\end{proposition}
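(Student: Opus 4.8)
The plan is to obtain this proposition as an immediate corollary of \cref{thm:Pareto_subset_convhull} by instantiating that theorem with the specific choices $\sV_P = \sV(\sP)$ and $\sF_P = \sF(\sP)$. The hypotheses of \cref{thm:Pareto_subset_convhull} that must be checked are: (i) $\sV_P \subseteq \sV(\sP)$ and $\sF_P \subseteq \sF(\sP)$, which hold trivially with equality under these choices; (ii) every face in $\sF_P$ is composed of vertices lying in $\sV_P$; and (iii) $\sV_P \subseteq \sT \subseteq \sV$, which is exactly the assumption $\sV(\sP) \subseteq \sT$ of the present proposition. So the only nontrivial point is condition (ii): that every face on the Pareto front has all of its vertices among the Pareto-optimal vertices $\sV(\sP)$.

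To verify (ii), I would argue as follows. Let $F \in \sF(\sP)$; by definition this is a face of $\sC$ contained in the Pareto front $\sP$, so every point of $F$ — in particular every vertex of $F$ — is Pareto optimal in $\sC$. Next I would invoke the elementary polytope fact that a vertex (a $0$-dimensional face) of a face $F$ of $\sC$ is itself a vertex of $\sC$. Combining these two observations, each vertex of $F$ is a vertex of $\sC$ that is also Pareto optimal, hence an element of $\sV(\sP)$. Therefore every face in $\sF(\sP)$ is composed of vertices lying in $\sV(\sP)$, which is precisely condition (ii) with $\sV_P = \sV(\sP)$ and $\sF_P = \sF(\sP)$.

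With all hypotheses in place, I would simply apply \cref{thm:Pareto_subset_convhull} to conclude $\sF_P \subseteq \sF(\conv(\sT))$, that is, $\sF(\sP) \subseteq \sF(\conv(\sT))$, which is the claim.

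I do not expect a genuine obstacle here, since the statement is a direct specialization of an already-proven theorem; the one place that warrants a sentence of care is spelling out that the vertices of a Pareto-optimal face are themselves in $\sV(\sP)$ rather than merely arbitrary points of $\sP$, and this rests only on the standard fact that faces of faces of a polytope are again faces of the polytope. As an alternative, one could re-run the short contradiction argument from the proof of \cref{thm:Pareto_subset_convhull} verbatim with $\sV_P, \sF_P$ replaced by $\sV(\sP), \sF(\sP)$ and the supporting hyperplane taken with strictly positive normal $\vw$, but invoking the general theorem is cleaner.
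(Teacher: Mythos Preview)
Your proposal is correct and matches the paper's approach exactly: the paper simply introduces this proposition with the sentence ``Consider a special case of \cref{thm:Pareto_subset_convhull} when $\sV_P=\sV(\sP)$ and $\sF_P=\sF(\sP)$'' and provides no further proof. Your extra sentence verifying condition (ii) --- that vertices of a Pareto-optimal face are themselves Pareto-optimal vertices of $\sC$ --- is a welcome bit of care that the paper leaves implicit.
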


\section{Sufficiency of Traversing over Edges}
\label{appendix:d1_traversing}

\begin{lemma} 
(Existence of Neighborhood Edges on Pareto front of $\sJ(\mu)$)
\label{lem:suf_ccs_pareto}
Suppose $\sJ(\mu)$ contains multiple Pareto optimal vertices. Let $\pi$ be a Pareto optimal policy, and let $\N(J^{\pi}(\mu), \sJ(\mu))$ represent all neighboring points of $J^{\pi}(\mu)$ on the boundary of $\sJ(\mu)$. Then, there exists $J \in \N(J^{\pi}(\mu), \sJ(\mu))$ such that the edge connecting $J$ and $J^{\pi}(\mu)$ lies on the Pareto front.
\end{lemma}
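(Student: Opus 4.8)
The plan is to argue by contradiction exactly along the lines sketched after \cref{lem:suf_ccs_pareto_simple}. Write $v := J^{\pi}(\mu)$, which is a vertex of the polytope $\sJ(\mu)$ (a polytope by \cref{lem:Jmu_polytope_vertices_deterministic}). Assume that \emph{no} edge of $\sJ(\mu)$ incident to $v$ lies on the Pareto front; I aim to conclude that $v$ weakly dominates every point of $\sJ(\mu)$, i.e. $v-y\ge\vzero$ coordinatewise for all $y\in\sJ(\mu)$. Then every other vertex $v'\neq v$ satisfies $v\succ v'$ and is not Pareto optimal, so $v$ would be the \emph{unique} Pareto optimal vertex, contradicting the hypothesis that $\sJ(\mu)$ has several. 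Hence the assumption fails and some incident edge is Pareto optimal, whose far endpoint is the required $J\in\N(J^{\pi}(\mu),\sJ(\mu))$.

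First I would record the elementary fact that a strictly positive supporting direction yields a Pareto-optimal face. Since $\pi$ is Pareto optimal, \cref{lem:pareto_scalarization} supplies $\vw\succ\vzero$ with $\vw^{\top}v=\max_{y\in\sJ(\mu)}\vw^{\top}y$; let $F_{\vw}=\{y\in\sJ(\mu):\vw^{\top}y=\vw^{\top}v\}$ be the maximizing face. If some $z$ strictly dominated a point $y\in F_{\vw}$, then $\vw^{\top}z>\vw^{\top}y=\vw^{\top}v$, impossible; so $F_{\vw}\subseteq\pareto(\sJ(\mu))$. Now $v$, being a vertex of $\sJ(\mu)$, is also a vertex of the subpolytope $F_{\vw}$, and every polytope of positive dimension has an edge at each of its vertices; such an edge is a face of a face, hence an edge of $\sJ(\mu)$ at $v$, and it lies on the Pareto front — contradicting our assumption. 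Therefore $F_{\vw}=\{v\}$, i.e. $v$ is the \emph{unique} maximizer of $\vw^{\top}$ over $\sJ(\mu)$; moreover, the same argument shows that \emph{every} $\vw\succ\vzero$ for which $v$ maximizes $\vw^{\top}$ is in fact a unique maximizer.

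The key step is then to upgrade ``one good positive weight'' to ``all positive weights''. Let $W=\{\vw\succ\vzero:\ v\text{ is the unique maximizer of }\vw^{\top}\text{ over }\sJ(\mu)\}$. By the previous paragraph $W$ is nonempty and coincides with $\{\vw\succ\vzero:\ \vw^{\top}(v-y)\ge 0\ \forall y\in\sJ(\mu)\}$; since the latter is cut out by finitely many closed halfspaces (one per vertex $y$), $W$ is relatively closed in $\mathbb{R}^{D}_{>0}$. On the other hand, as $\sJ(\mu)$ has finitely many vertices, ``unique maximizer'' is an open condition on $\vw$ (a routine perturbation estimate applied to the finitely many strict inequalities $\vw^{\top}v>\vw^{\top}v'$), so $W$ is open. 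Because $\mathbb{R}^{D}_{>0}$ is connected, a nonempty relatively clopen subset must be all of it, so $W=\mathbb{R}^{D}_{>0}$. Consequently, for each fixed $y\in\sJ(\mu)$ we have $\vw^{\top}(v-y)\ge 0$ for \emph{all} $\vw\succ\vzero$; taking $\vw$ with one coordinate arbitrarily large forces $v-y\ge\vzero$ coordinatewise, which is exactly the domination claimed above, and the contradiction follows.

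I expect this clopen/connectedness upgrade to be the main obstacle: the naive belief that every Pareto optimal vertex carries a Pareto optimal \emph{edge} is false, and the ``multiple Pareto optimal vertices'' hypothesis is precisely what excludes the degenerate situation in which $v$ dominates the whole polytope — the argument above is where that hypothesis gets consumed. The only background facts needed are \cref{lem:pareto_scalarization}, the polytope facts that faces of faces are faces and that each vertex of a positive-dimensional polytope lies on an edge, and \cref{lem:Jmu_polytope_vertices_deterministic} (so that \cref{assumption:initial_coverage} enters only through the latter); note that no full-dimensionality of $\sJ(\mu)$ in $\mathbb{R}^{D}$ is required.
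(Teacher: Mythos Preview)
Your proof is correct and reaches the same contradiction as the paper (no Pareto edge at $v$ forces $v$ to dominate the entire polytope, contradicting multiplicity of Pareto vertices), but the route is genuinely different. The paper argues pointwise on the neighbors: it claims that if an edge $[v,J]$ admits no strictly positive supporting weight then $v$ and $J$ must be comparable, concludes $v$ dominates every neighbor $J_i$, and then uses the conic representation $\vx=v+\sum_i\alpha_i(J_i-v)$ to push domination to all of $\sJ(\mu)$. You instead work in weight space: from the assumption you first deduce that whenever $v$ maximizes $\vw^{\top}$ for some $\vw\succ\vzero$ it does so uniquely (else the maximizing face gives a Pareto edge at $v$), and then run a clopen argument on $\mathbb{R}^D_{>0}$ to conclude that $v$ is the $\vw$-maximizer for \emph{every} positive $\vw$, whence $v\ge y$ coordinatewise for all $y$. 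Your approach sidesteps the paper's edge-by-edge analysis of normal cones and the somewhat delicate claim that absence of a positive supporting weight forces comparability of endpoints; the price is the connectedness/clopen machinery, which is elementary but less direct than the paper's conic-hull step. Both arguments use \cref{lem:pareto_scalarization} and \cref{lem:Jmu_polytope_vertices_deterministic} at the same points.
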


\begin{proof}
By \cref{lem:Jmu_polytope_vertices_deterministic}, $\sJ(\mu)$ is a convex polytope. Given $J \in \N(J^{\pi}(\mu), \sJ(\mu))$, $J$ and $J^{\pi_1}(\mu)$ are neighboring vertices on $\sJ(\mu)$, and the edge between $J$ and $J^{\pi_1}(\mu)$ is a $1$-dimensional face of $\sJ(\mu)$. By the definition of \cref{def: face}, there exists a vector $\vw_J$ such that $\vw_J^{\top} J^{\pi_1}(\mu) = \vw_J^{\top} J$, and for any $\vx \in \sJ(\mu)$ that not on the edge between $J$ and $J^{\pi_1}(\mu)$, we have $\vw_J^{\top} J> \vw_J^{\top} \vx$.

\paragraph{We first prove that there exists $J \in \N(J^{\pi}(\mu), \sJ(\mu))$ such that $\vw_J > \vzero$.}
We prove this by contradiction. Assume that for every $J \in \N(J^{\pi}(\mu), \sJ(\mu))$, there does not exist $\vw_J > \vzero$.

By the definition of the Pareto front, if $J^{\pi}(\mu)$ is not strictly dominated by $J$, meaning that $J^{\pi}(\mu)$ is greater than or equal to $J$ in some dimensions and strictly smaller in others, then there must exist a positive vector $\vw_J > \vzero$. Hence, if for every $J \in \N(J^{\pi}(\mu), \sJ(\mu))$, $\vw_J > \vzero$ does not exist, it implies that either $J^{\pi}(\mu)$ is strictly dominated by $J$ or $J$ is strictly dominated by $J^{\pi}(\mu)$. 
Since $J^{\pi}(\mu)$ is on the Pareto front, no point in $\sJ(\mu)$ can strictly dominate $J^{\pi}(\mu)$. Therefore, it must be the case that $J$ is strictly dominated by $J^{\pi}(\mu)$ for all $J \in \N(J^{\pi}(\mu), \sJ(\mu))$. This implies that for all $J \in \N(J^{\pi}(\mu), \sJ(\mu))$ and any $\vw > \vzero$, we have $\vw^{\top} (J - J^{\pi_1}(\mu)) < 0$.

We then show that $J^{\pi}(\mu)$ dominates every point within $\sJ(\mu)$.
% Since $J^{\pi_1}(\mu)$ is a vertex of the $d$-dimensional polytope $\sJ(\mu)$, there are $d$ independent facets meeting at $J^{\pi_1}(\mu)$, and each facet is associated with a neighboring point in $\N(J^{\pi}(\mu), \sJ(\mu))$. Let $\vw_1, \dots, \vw_d$ be the normal vectors to these facets, and assume they are linearly independent.
For any point $\vx\in\sJ(\mu)$, we can express $\vx$ as a linear combination of $J^{\pi}(\mu)$ and the neighboring points in $\N(J^{\pi}(\mu), \sJ(\mu))$ with non-negative weight vector $\boldsymbol{\alpha}$:
\[
\vx = J^{\pi}(\mu) + \sum_{J_i\in\N(J^{\pi}(\mu), \sJ(\mu))} \boldsymbol{\alpha}_i (J_i - J^{\pi}(\mu)).
\]
% where $\alpha_i \geq 0$, and $\sum_i \alpha_i = 1$, with $J_i \in \N(J^{\pi}(\mu), \sJ(\mu))$.

Consider any such point $\vx$ and any vector $\vw > \vzero$. We have:
\[
\begin{aligned}
    \vw^{\top} (\vx - J^{\pi}(\mu)) &= \vw^{\top} \left(J^{\pi}(\mu) + \sum_{J_i\in\N(J^{\pi}(\mu), \sJ(\mu))} \boldsymbol{\alpha}_i (J_i - J^{\pi}(\mu)) - J^{\pi}(\mu)\right) \\
    &= \vw^{\top} \left(\sum_{J_i\in\N(J^{\pi}(\mu), \sJ(\mu))} \boldsymbol{\alpha}_i (J_i - J^{\pi_1}(\mu))\right) \\
    &= \sum_{J_i\in\N(J^{\pi}(\mu), \sJ(\mu))} \boldsymbol{\alpha}_i \vw^{\top} (J_i - J^{\pi_1}(\mu))\\
    &< 0.
\end{aligned}
\]
where the last inequality is because $\vw^{\top} (J_i - J^{\pi_1}(\mu)) < 0$ for any $J_i \in \N(J^{\pi}(\mu), \sJ(\mu))$ from previous analysis and $\boldsymbol{\alpha}>\vzero$ component-wisely. 
% Therefore, the expression above is strictly negative for any $\vx$ that is not $J^{\pi_1}(\mu)$.

This implies that $J^{\pi}(\mu)$ dominates every point within $\sJ(\mu)$. However, this contradicts the condition that the multi-objective MDP has more than one deterministic Pareto optimal policy.
Hence, our assumption is wrong and there exists $J \in \N(J^{\pi}(\mu), \sJ(\mu))$ such that $\vw_J > \vzero$.

\paragraph{We next show that $J$, as well as the edge between $J$ and $J^{\pi_1}(\mu)$, is on the Pareto front.}
Since there exists a $\bw_J>0$ such that $\bw_J^{\top} (J - \vx) \geq \vzero$ for any $\vx \in \sJ(\mu)$, it follows \cref{lem:pareto_scalarization} that $J$ is also Pareto optimal. Moreover, the convex combination of $J$ and $J^{\pi_1}(\mu)$, written as $\alpha J + (1 - \alpha) J^{\pi_1}(\mu)$ with $0 \leq \alpha \leq 1$, is also a solution to the linear scalarization problem with weight vector $\vw_J > 0$. Hence, the convex combination of $J$ and $J^{\pi_1}(\mu)$, i.e., the edge connecting $J$ and $J^{\pi_1}(\mu)$, is not strictly dominated by any point in $\sJ(\mu)$ and thus lies on Pareto front.
\end{proof}

% \begin{remark}
% \Cref{lem:suf_ccs_pareto} demonstrates that for any vertex on $\sJ(\mu)$, there exists an edge connecting this vertex that is also on the Pareto front. This ensures that all vertices on the Pareto front can be identified by starting from a single point on the Pareto front and traversing along the edges of $\sJ(\mu)$.
% \end{remark}

% \begin{remark}
% A facet of $\sJ(\mu)$ lies on the Pareto front only if its normal vector is strictly positive, meaning that all points on the facet are not dominated by any other points within $\sJ(\mu)$. However, the normal vectors of all facets meeting at a Pareto front vertex are not necessarily strictly positive. The reasoning used in the proof of \cref{lem:suf_ccs_pareto} does not imply that one of the neighboring facets of a Pareto front vertex must also lie on the Pareto front. \Cref{fig:neighbour_facet_doesnot_exist} provides an example: the red line and red planes form the Pareto front of the convex polytope. The point $(0, 0, 1)$ lies on the Pareto front, but the Pareto front only includes the edge connected to $(0, 0, 1)$, while all neighboring facets are not on the Pareto front.
% \end{remark}

% \begin{figure}
%     \centering
%     \includegraphics[width=0.3\linewidth]{figs/paper_figs/neighbour_facet_doesnot_exist.png}
%     \caption{Convex polytope and its Pareto front}
%     \label{fig:neighbour_facet_doesnot_exist}
% \end{figure}

% \section{Locality Property of the boundary of $\sJ$ and Pareto front}
\section{Locality Property of boundary of \texorpdfstring{$\sJ(\mu)$}{J(mu)} and Pareto front}
\label{appendix:locality_property}

This section provides the proofs for locality properties of the boundary of $\sJ(\mu)$ and the Pareto front.
We will ignore the initial state distribution $\mu$ in this section for simplicity of notations.

Before presenting the proofs, we first recall some key notations.
We denote the set of long-term returns by policies in $\Pi$ as $\sJ(\mu)$. 
By \cref{lem:Jmu_polytope_vertices_deterministic}, $\sJ$ is a convex polytope and vertices are the long-return of deterministic policies. 
Given a polytope $\sC$ and a vertex on the polytope $\vx$, we denote the set of neighbors on the polytope to $\vx$ as $\N(\vx,\sC)$.
Suppose deterministic policy $\pi$ whose long term return $J^{\pi}$ is a vertex on  $\sJ$, denote the set of deterministic policy whose long-term returns composes $\N(J^{\pi},\sJ)$ as $\Npi(J^{\pi},\sJ)$.
Let $\conv(J^{\Pi_{1}}\cup \{J^{\pi}\})$ be the convex hull constructed by $J^{\Pi_{1}}\cup \{J^{\pi}\}$. We denote the set of deterministic policies whose long-term returns compose $\N(J^{\pi},\conv(J^{\Pi_{1}}\cup \{J^{\pi}\}))$ as $\Npi(J^{\pi},\conv(J^{\Pi_{1}}\cup \{J^{\pi}\}))$.
Let $\Pi_1(\pi)$ be the set of deterministic policies that differ from $\pi$ by only one state-action pair.

\begin{theorem}
\label{thm: cvxhull_CCS_same_neighbour}
Let the long-term return of deterministic policy $\pi$, i.e., $J^{\pi}$, be a vertex on $\sJ$. Then, $\N(J^{\pi}, \conv(J^{\Pi_{1}}\cup \{J^{\pi}\})) = \N(J^{\pi}, \sJ)$ and $\Npi(J^{\pi}, \conv(J^{\Pi_{1}}\cup \{J^{\pi}\})) = \Npi(J^{\pi}, \sJ)$.
\end{theorem}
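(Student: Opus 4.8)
## Proof Proposal

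\textbf{Overall strategy.} The theorem asserts that the local neighborhood structure of the vertex $J^{\pi}$ is identical whether computed in the full polytope $\sJ$ or in the ``small'' polytope $\conv(J^{\Pi_1}\cup\{J^{\pi}\})$. The plan is to prove two inclusions for the neighbor set $\N(J^{\pi}, \cdot)$, and then deduce the corresponding statement for $\Npi$ by invoking \cref{thm:d1_property} (the distance-one property): once we know that the vertex-neighbors coincide as points, the fact that each genuine edge of $\sJ$ incident to $J^{\pi}$ connects to a deterministic policy differing from $\pi$ in exactly one state-action pair tells us that the policy achieving each neighbor is (almost surely) one of the $\Pi_1(\pi)$ policies, which is precisely what is needed to match the policy-level neighbor sets.

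\textbf{Step 1: $\conv(J^{\Pi_1}\cup\{J^{\pi}\}) \subseteq \sJ$, and $J^{\pi}$ is a vertex of it.} Each $J^{\tilde\pi}$ for $\tilde\pi\in\Pi_1(\pi)$ lies in $\sJ$ since $\tilde\pi\in\Pi$; hence the convex hull of these points together with $J^{\pi}$ is contained in the convex polytope $\sJ$. Since $J^{\pi}$ is a vertex of the larger polytope $\sJ$, it is not a convex combination of other points of $\sJ$, so a fortiori it is not a convex combination of the points of $J^{\Pi_1}\cup\{J^{\pi}\}\setminus\{J^{\pi}\}$; thus $J^{\pi}$ remains a vertex of the smaller polytope. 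This containment immediately gives one easy direction: any edge of $\sJ$ incident to $J^{\pi}$ whose other endpoint happens to be one of the $J^{\Pi_1}$ points is still an edge (or at least a face containing $J^{\pi}$ and that point) of the small polytope, giving $\N(J^{\pi},\sJ)\subseteq \N(J^{\pi},\conv(J^{\Pi_1}\cup\{J^{\pi}\}))$ provided we know every $\sJ$-neighbor of $J^{\pi}$ is a $\Pi_1$ point — which is exactly \cref{thm:d1_property}. I would make this precise using the supporting-cone characterization: $J'$ is a neighbor of $J^{\pi}$ in a polytope $\sC$ iff the segment $[J^{\pi},J']$ is a face of $\sC$, iff there is a valid inequality $\vw^\top\vx\le c$ tight exactly on that segment.

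\textbf{Step 2 (the crux): $\N(J^{\pi},\conv(J^{\Pi_1}\cup\{J^{\pi}\})) \subseteq \N(J^{\pi},\sJ)$.} This is the harder inclusion, since adding more vertices could in principle destroy an edge of the small polytope. Suppose $J^{\tilde\pi}$, $\tilde\pi\in\Pi_1(\pi)$, is a neighbor of $J^{\pi}$ in the small polytope. Then there is a weight $\vw$ with $\vw^\top J^{\pi}=\vw^\top J^{\tilde\pi}=:c$ maximal over $J^{\Pi_1}\cup\{J^{\pi}\}$. I want to show this same $\vw$ (or a perturbation) is maximized over all of $\sJ$ exactly on $[J^{\pi},J^{\tilde\pi}]$. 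The key device is \cref{lem:B(J(mu)) edge} together with the optimal-policy structure: the $\vw$-scalarized MO-MDP is a single-objective MDP; its optimal value is attained on a face of $\sJ$ that is the convex hull of optimal deterministic policies (\cref{lem:optimal_convex_hull}, \cref{lem:optimal_consistency}). Since $\pi$ is a vertex and is $\vw$-optimal, and since by \cref{thm:d1_property} the $\vw$-optimal deterministic policies adjacent to $\pi$ differ from it in one state-action pair, every deterministic policy on this optimal face that is adjacent to $J^\pi$ already appears among $J^{\Pi_1}$. Choosing $\vw$ generically (still keeping $J^{\pi}$ and $J^{\tilde\pi}$ as the two maximizers among $\Pi_1$), the optimal face of $\sJ$ is exactly the edge $[J^{\pi},J^{\tilde\pi}]$, so $J^{\tilde\pi}\in\N(J^{\pi},\sJ)$. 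The delicate point I expect to wrestle with is ensuring the scalarizing weight $\vw$ witnessing an edge in the small polytope can be chosen so that no \emph{other} vertex of $\sJ$ (one not in $\Pi_1$) ties for the maximum — i.e. ruling out that a far-away vertex of $\sJ$ ``cuts off'' the putative edge. Here I would argue that if some $J^{\pi'}\in\sJ$ had $\vw^\top J^{\pi'}>c$, then walking along $\sJ$ from $J^\pi$ in the direction of increasing $\vw$ would exit through an edge of $\sJ$ at $J^\pi$, whose other endpoint is (by \cref{thm:d1_property}) in $\Pi_1$ and has strictly larger $\vw$-value than $c$ — contradicting maximality of $c$ over $\Pi_1$.

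\textbf{Step 3: from points to policies and to faces.} Once $\N(J^{\pi},\conv(J^{\Pi_1}\cup\{J^{\pi}\}))=\N(J^{\pi},\sJ)$ as sets of points, the identity $\Npi(J^{\pi},\conv(J^{\Pi_1}\cup\{J^{\pi}\}))=\Npi(J^{\pi},\sJ)$ follows because, by \cref{thm:d1_property}, each neighboring vertex of $J^{\pi}$ on $\sJ$ is the return of a \emph{unique} deterministic policy in $\Pi_1(\pi)$ (almost surely the edge determines the one differing state-action pair), and that policy is exactly the one realizing it in the small polytope's vertex set $\Pi_1(\pi)\cup\{\pi\}$. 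For the faces statement $\sF(J^{\pi},\conv(J^{\Pi_1}\cup\{J^{\pi}\}))=\sF(J^{\pi},\sJ)$ (which the excerpt defers to \cref{prop:vertices_same_faces_same}), I would note that every proper face incident to a vertex is an intersection of facets, equivalently the convex hull of a subset of that vertex's polytope-neighbors (\cref{lem: face_facets_intersection}); since the neighbor sets agree and the small polytope is a subset of $\sJ$ sharing the vertex $J^{\pi}$, the incident face lattices at $J^{\pi}$ must coincide. The main obstacle remains Step 2 — specifically, the generic-weight argument ruling out interference from distant vertices of $\sJ$ — and I would lean on \cref{thm:d1_property} as the essential structural input there.
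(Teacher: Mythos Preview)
Your proposal is correct and follows essentially the same two-inclusion strategy as the paper, with \cref{thm:d1_property} serving as the key structural input in both directions; the paper phrases Step~2 as a contradiction via ``the edge lies below all facets of $\sJ$ incident to $J^\pi$'' rather than your direct simplex-walking argument, but the content is the same. One simplification for the obstacle you flag: the ``generic $\vw$'' perturbation is unnecessary --- once your walking argument gives $\vw^\top \vx \le c$ on all of $\sJ$, the $\vw$-maximal face $F\subseteq\sJ$ has $J^\pi$ as a vertex, and every neighbor of $J^\pi$ \emph{inside $F$} is also a $\sJ$-neighbor of $J^\pi$, hence lies in $J^{\Pi_1}$ by \cref{thm:d1_property}; but $J^{\tilde\pi}$ is the unique point of $J^{\Pi_1}$ attaining value $c$, so $J^\pi$ has exactly one neighbor in $F$, which forces $\dim F = 1$ and thus $F=[J^\pi,J^{\tilde\pi}]$, giving $J^{\tilde\pi}\in\N(J^\pi,\sJ)$ directly.
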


\begin{proof}
It is enough to get $\N(J^{\pi}, \conv(J^{\Pi_{1}}\cup \{J^{\pi}\})) = \N(J^{\pi}, \sJ)$ from $\Npi(J^{\pi}, \conv(J^{\Pi_{1}}\cup \{J^{\pi}\})) = \Npi(J^{\pi}, \sJ)$, so we will focus on proving $\Npi(J^{\pi}, \conv(J^{\Pi_{1}}\cup \{J^{\pi}\})) = \Npi(J^{\pi}, \sJ)$.

We will prove $\Npi(J^{\pi}, \conv(J^{\Pi_{1}}\cup \{J^{\pi}\})) = \Npi(J^{\pi}, \sJ)$ in two steps: first prove $\Npi(J^{\pi}, \sJ) \subseteq \Npi(J^{\pi}, \conv(J^{\Pi_{1}}\cup \{J^{\pi}\}))$, and then prove $\Npi(J^{\pi}, \conv(J^{\Pi_{1}}\cup \{J^{\pi}\})) \subseteq \Npi(J^{\pi}, \sJ)$.

\paragraph{(1) $\Npi(J^{\pi}, \sJ) \subseteq \Npi(J^{\pi}, \conv(J^{\Pi_{1}}\cup \{J^{\pi}\}))$.}
We prove this by contradiction. 
Suppose there exists a deterministic policy $u$ such that $u \in \Npi(J^{\pi}, \sJ)$ and $u \notin \Npi(J^{\pi}, \conv(J^{\Pi_{1}}\cup \{J^{\pi}\}))$.  
% By the definitions of $\N(J^{\pi}, \sJ)$ and $\Npi(J^{\pi}, \sJ)$,  it is equivalent to state that $J^{u} \in \N(J^{\pi}, \sJ)$ and $J^{u} \notin \N(J^{\pi}, \conv(J^{\Pi_{1}}\cup \{J^{\pi}\}))$.

Suppose $u\in\Pi_{1}(\pi)$. Since $u \in \Npi(J^{\pi}, \sJ)$, $J^{u}$ lies above all the facets formed by $J^{\pi}$ and the long-term return of other policies, including the policies in $\Pi_{1}(\pi)$. Since $u\in\Pi_{1}(\pi)$, then $J^{u}$ would also belong to $\N(J^{\pi}, \conv(J^{\Pi_{1}}\cup \{J^{\pi}\}))$ by the definition of the convex hull, contradicting the assumption that $J^{u} \notin \N(J^{\pi}, \conv(J^{\Pi_{1}}\cup \{J^{\pi}\}))$. Hence, $u\notin\Pi_{1}(\pi)$.

Thus, $u \in \Npi(J^{\pi}, \sJ)$ and $u\notin\Pi_{1}(\pi)$ simutaneously, which contradicts \cref{thm:d1_property} that $\Npi(J^{\pi},\sJ)\subseteq \Pi_{1}(\pi)$ almost surely. This causes a contradiction, and therefore, $\Npi(J^{\pi}, \sJ) \subseteq \Npi(J^{\pi}, \conv(J^{\Pi_{1}}\cup \{J^{\pi}\}))$. 
% By the equivalence between $\Npi$ and $\N$, we also have $\N(J^{\pi}, \sJ) \subseteq \N(J^{\pi}, \conv(J^{\Pi_{1}}\cup \{J^{\pi}\}))$.

\paragraph{(2) $\Npi(J^{\pi}, \conv(J^{\Pi_{1}}\cup \{J^{\pi}\})) \subseteq \Npi(J^{\pi}, \sJ)$.}
We again proceed by contradiction. Suppose there exists a policy $v$ such that $v \in \Npi(J^{\pi}, \conv(J^{\Pi_{1}}\cup \{J^{\pi}\}))$ and $v \notin \Npi(\pi,\sJ)$. 
We denote the straight line between $J^{v}$ and $J^{\pi}$ as $E(J^{v},J^{\pi})$.
Since $v\in\Npi(\pi, \conv(J^{\Pi_{1}}\cup \{J^{\pi}\}))$, $E(J^{v}, J^{\pi})$ is an edge of $\conv(J^{\Pi_{1}}\cup \{J^{\pi}\}))$, implying $E(J^{\tilde{\pi}}, J^{\pi})$ cannot lie below all facets whose vertices are $J^{\pi}$ and long-term returns of policies belonging to $\Npi(\pi, \conv(J^{\Pi_{1}}\cup \{J^{\pi}\}))$.
% Again, this is equivalent to $J^{\tilde{\pi}} \in \N(J^{\pi}, \conv(J^{\Pi_{1}}\cup \{J^{\pi}\}))$ and $J^{\tilde{\pi}} \notin \N(\pi,\sJ)$.

Given the condition that $v \notin \Npi(\pi,\sJ)$, it follows that $E(J^{v},J^{\pi})$ is not on the boundary of $\sJ$, that is, $E(J^{v},J^{\pi})$ lies below all facets of $\sJ$, including the facets whose vertices correspond to $\pi$ and policies belonging to $\Npi(\pi, \sJ)$. 
By $\Npi(J^{\pi}, \sJ) \subseteq \Npi(J^{\pi}, \conv(J^{\Pi_{1}}\cup \{J^{\pi}\}))$ from (1), we have that $E(J^{v},J^{\pi})$ also lies below all facets whose vertices are $J^{\pi}$ and the long-term returns of policies belonging to $\Npi(\pi, \conv(J^{\Pi_{1}}\cup \{J^{\pi}\}))$. 
This contradicts the fact that $E(J^{\tilde{\pi}}, J^{\pi})$ cannot lie below all facets whose vertices are $J^{\pi}$ and long-term returns of policies belonging to $\Npi(\pi, \conv(J^{\Pi_{1}}\cup \{J^{\pi}\}))$.

The assumption that there exists a policy $v$ such that $v \in \Npi(J^{\pi}, \conv(J^{\Pi_{1}}\cup \{J^{\pi}\}))$ and $v \notin \Npi(\pi,\sJ)$ leads a contradiction. Hence, $\Npi(\pi,\conv(J^{\Pi_{1}}\cup \{J^{\pi}\})) \subseteq \Npi(\pi,\sJ)$.

Combining steps (1) and (2), we have $\Npi(\pi,\conv(J^{\Pi_{1}}\cup \{J^{\pi}\})) = \Npi(\pi,\sJ)$ and $\N(\pi,\conv(J^{\Pi_{1}}\cup \{J^{\pi}\})) = \N(\pi,\sJ)$. This concludes the proof.
\end{proof}

\begin{lemma}
\label{lem:vertices_same_faces_same}
Let $ \sP $ be a convex polytope, and let $ \vx $ be a vertex of $ \sP $. Denote by $ \sV $ the set of all vertices of $ \sP $ that are neighbors to $ \vx $. Let $ \sC $ be the convex hull constructed by $ \sV \cup \{ \vx \} $, i.e., $ \conv(\sV \cup \{ \vx \}) $, where $ \sV \cup \{ \vx \} $ are the vertices of $ \sC $. Then, the set of faces of $ \sP $ that intersect at $ \vx $, denoted as $ \sF(\vx, \sP) $, is the same as the set of faces of $ \sC $ that intersect at $ \vx $, denoted as $ \sF(\vx, \sC) $, i.e., $ \sF(\vx, \sP) = \sF(\vx, \sC) $.
\end{lemma}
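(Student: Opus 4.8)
The plan is to prove the two inclusions $\sF(\vx,\sP) \subseteq \sF(\vx,\sC)$ and $\sF(\vx,\sC) \subseteq \sF(\vx,\sP)$ separately, exploiting the fact (Lemma~\ref{lem: face_facets_intersection}) that every proper face of a polytope is an intersection of facets, so that it suffices to understand the facets incident to $\vx$. A cleaner route is to work with the \emph{vertex figure} of $\vx$: the combinatorial structure of faces of $\sP$ containing $\vx$ is determined entirely by the local cone at $\vx$, i.e. by the edge directions emanating from $\vx$ and the facets containing $\vx$. Since Theorem~\ref{thm: cvxhull_CCS_same_neighbour} (in the form stated abstractly here) gives $\N(\vx,\sC) = \N(\vx,\sP)$, the two polytopes have the same set of edges through $\vx$, and the goal is to upgrade this agreement of $1$-faces to agreement of all incident faces.

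For the inclusion $\sF(\vx,\sP) \subseteq \sF(\vx,\sC)$: take a face $F \in \sF(\vx,\sP)$. By Definition~\ref{def: face} there is a valid inequality $\vw^\top \vz \le c_0$ for $\sP$ with $F = \sP \cap \{\vw^\top \vz = c_0\}$ and $\vw^\top \vx = c_0$. I would first observe that the vertices of $F$ are among the neighbors of $\vx$ in $\sP$ together with $\vx$ itself — indeed any vertex $\vy$ of $F$ with $\vy \ne \vx$ is joined to $\vx$ by an edge of $F$ (walk along $F$), hence by an edge of $\sP$, hence $\vy \in \N(\vx,\sP) = \sV$. So $F \subseteq \conv(\sV \cup \{\vx\}) = \sC$. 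It remains to check $\vw^\top \vz \le c_0$ holds for all $\vz \in \sC$: since $\sC = \conv(\sV \cup\{\vx\})$ and $\sV \cup \{\vx\} \subseteq \sP$, every point of $\sC$ is a convex combination of points of $\sP$, on which the inequality already holds; so the inequality is valid for $\sC$ and $F = \sC \cap \{\vw^\top\vz = c_0\}$ is a face of $\sC$ through $\vx$.

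For the reverse inclusion $\sF(\vx,\sC) \subseteq \sF(\vx,\sP)$: take a face $G \in \sF(\vx,\sC)$ with valid inequality $\vu^\top \vz \le d_0$ for $\sC$, $\vu^\top\vx = d_0$, $G = \sC \cap \{\vu^\top\vz = d_0\}$. The subtlety is that a valid inequality for $\sC$ need not be valid for the larger polytope $\sP$. The plan is to \emph{locally} fix this: replace $\vu$ by $\vu' = \vu - \lambda \vn$ where $\vn$ is a suitable positive combination of the outer normals of the facets of $\sP$ that pass through $\vx$ and $\lambda > 0$ is small. Because the facets of $\sP$ through $\vx$ cut out exactly the (translated) tangent cone of $\sP$ at $\vx$, and every vertex of $\sC$ lies either on $\vx$ or on an edge of $\sP$ from $\vx$ (hence inside that tangent cone), one can choose $\lambda$ small enough that $\vu'$ still attains its max over $\sC$ exactly on $G$, while $\vu'$ now also does not exceed $d_0$ on a neighborhood of $\vx$ in $\sP$; combined with a global argument — using that the only vertices of $\sP$ where a small perturbation could matter are the neighbors of $\vx$, which are shared — one gets that $\vu'$ (possibly after a further correction) is valid for $\sP$ with equality set containing $G$, so $G$ extends to a face of $\sP$ through $\vx$, and by the vertex argument above this face equals $G$.

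The main obstacle is precisely this last step: passing from a supporting inequality of the \emph{local} convex hull $\sC$ to a supporting inequality of the \emph{global} polytope $\sP$. The content to nail down is that the tangent (support) cone of $\sP$ at $\vx$ coincides with the tangent cone of $\sC$ at $\vx$ — this follows from $\N(\vx,\sP)=\N(\vx,\sC)$ plus the fact that for a polytope the tangent cone at a vertex is generated by the edge directions at that vertex — and then that faces of a polytope incident to a vertex are in bijection with faces of its tangent cone at that vertex. Once the tangent cones are identified, $\sF(\vx,\sP)$ and $\sF(\vx,\sC)$ are both identified with the face lattice of this common cone, and the lemma follows. I would present the tangent-cone identification as the key claim, prove it from $\N(\vx,\sP) = \N(\vx,\sC)$, and then invoke the standard correspondence between faces through a vertex and faces of the tangent cone.
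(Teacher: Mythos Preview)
Your two-inclusion plan mirrors the paper's exactly. For $\sF(\vx,\sP)\subseteq\sF(\vx,\sC)$ both arguments use $\sC\subseteq\sP$ so that the defining valid inequality for $\sP$ remains valid on $\sC$. For the reverse inclusion the paper is considerably more direct than your perturbation route: it simply asserts that every $\vz\in\sP$ can be written as $\vz=\vx+\sum_i\alpha_i(\vv_i-\vx)$ with $\alpha_i\ge 0$ and $\vv_i\in\sV$, whence $\vw^\top\vz\le c$ follows in one line from $\vw^\top\vx=c$ and $\vw^\top\vv_i\le c$. That conic representation is precisely the tangent-cone identification you single out at the end as ``the key claim,'' so your final framing and the paper's argument coincide; your intermediate $\vu'=\vu-\lambda\vn$ perturbation is an unnecessary detour.

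One caveat on your forward direction: the assertion that every vertex $\vy\neq\vx$ of $F$ is joined to $\vx$ by an edge of $F$ (``walk along $F$'') is not correct in general --- a square facet of a cube through $\vx$ contains the diagonally opposite vertex, which is not a neighbor of $\vx$. The paper glosses over this same point by simply asserting the vertices of $F$ lie in $\sV$.
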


\begin{proof}
We will prove the two set inclusions: $ \sF(\vx, \sP) \subseteq \sF(\vx, \sC) $ and $ \sF(\vx, \sC) \subseteq \sF(\vx, \sP) $.

\textbf{(1) $ \sF(\vx, \sP) \subseteq \sF(\vx, \sC) $.}

Suppose $ F $ is a face in $ \sF(\vx, \sP) $. Then, there exists a vector $ \vw $ and a scalar $ c $ such that $ \vw^{\top} \vx = c $, and the face can be written as $F = \sP \cap \{ \vz \in \mathbb{R}^d : \vw^{\top} \vz = c \}$, where the vertices of $ F $ are a subset of $ \sV $. Since $ \sC $ is the convex hull constructed by $ \sV \cup \{ \vx \} $, $ F $ is a slice contained in $ \sC $. 

For $ F $ to be a valid face of $ \sC $, it must satisfy $ \vw^{\top} \vz \leq c $ for all $ \vz \in \sC $. We prove this by contradiction: suppose there exists $ \vy \in \sC $ such that $ \vw^{\top} \vy > c $. Therefore, $ \vy $ lies above all points in $ \sP $ in the direction of $ \vw $. Since $ \sC $ is a subset of $ \sP $ (because $ \sC $ is the convex hull of a subset of the vertices of $ \sP $), this would imply that $ \vy $ lies above all points in $ \sC $ in the direction of $ \vw $, which contradicts the assumption that $ \vy \in \sC $. 

Therefore, $ F $ is also a valid face of $ \sC $, implying $ \sF(\vx, \sP) \subseteq \sF(\vx, \sC) $.

\textbf{(2) $ \sF(\vx, \sC) \subseteq \sF(\vx, \sP) $.}

The argument is analogous to part (1). Suppose $ F $ is a face of $ \sC $ that intersects at $ \vx $. Then $ F $ can be written as $F = \sC \cap \{ \vy \in \mathbb{R}^d : \vw^{\top} \vy = c \}$, where $ \vw^{\top} \vx = c $, and $\vw^{\top}\vy\leq c$ for any $\vy\in \sC$. With the same analysis as in (1), $ F $ is a slice contained in $ \sC $. Since any point $ \vz \in \sP $ can be expressed as a linear combination of $ \vx $ and its neighboring vertices $ \vv_i \in \sV $, we have:
\[
\vw^{\top} \vz = \vw^{\top} \left( \vx + \sum_{i} \alpha_i (\vv_i - \vx) \right) \leq c,
\]
where $ \alpha_i > 0 $, and $ \vw^{\top} \vx = c $ and $ \vw^{\top} \vv_i \leq c $. Therefore, $ F $ is also a face of $ \sP $.

Combining (1) and (2), we conclude that $ \sF(\vx, \sP) = \sF(\vx, \sC) $.
\end{proof}

\begin{proposition}
\label{prop:vertices_same_faces_same}
Let the long-term return of deterministic policy $\pi$, i.e., $J^{\pi}$, be a vertex on $\sJ$. Let $\sF(J^{\pi}, \conv(J^{\Pi_{1}}\cup \{J^{\pi}\})$ be the set of faces of the convex hull constructed by $J^{\Pi_{1}}\cup \{J^{\pi}\}$ that intersect at $J^{\pi}$. Similarly, let $\sF(J^{\pi}, \sJ)$ be the set of faces of $\sJ$ that intersect at $J^{\pi}$. Then, $\sF(J^{\pi}, \conv(J^{\Pi_{1}}\cup \{J^{\pi}\})) = \sF(J^{\pi}, \sJ)$.
\end{proposition}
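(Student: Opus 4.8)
## Proof proposal for Proposition~\ref{prop:vertices_same_faces_same}

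The plan is to combine the two earlier structural results into one short argument. The key observation is that Proposition~\ref{prop:vertices_same_faces_same} is exactly an instance of Lemma~\ref{lem:vertices_same_faces_same}, \emph{provided} we first know that the vertex set of $\conv(J^{\Pi_{1}}\cup\{J^{\pi}\})$ adjacent to $J^{\pi}$ coincides with the vertex set of $\sJ$ adjacent to $J^{\pi}$. That identification is precisely what Theorem~\ref{thm: cvxhull_CCS_same_neighbour} gives us: $\N(J^{\pi},\conv(J^{\Pi_{1}}\cup\{J^{\pi}\})) = \N(J^{\pi},\sJ)$, and moreover these common neighbors are themselves realized as a subset of the generating points $J^{\Pi_1}$, so the ``neighbor convex hull'' in the statement of Lemma~\ref{lem:vertices_same_faces_same} is literally $\conv(\N(J^{\pi},\sJ)\cup\{J^{\pi}\})$.

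The steps I would carry out, in order: (1) By Lemma~\ref{lem:Jmu_polytope_vertices_deterministic}, $\sJ$ is a convex polytope with $J^{\pi}$ a vertex, so all the hypotheses of Lemma~\ref{lem:vertices_same_faces_same} are met for $\sP := \sJ$ and $\vx := J^{\pi}$. Let $\sV := \N(J^{\pi},\sJ)$ be the neighboring vertices and set $\sC_1 := \conv(\sV\cup\{J^{\pi}\})$. Lemma~\ref{lem:vertices_same_faces_same} then yields $\sF(J^{\pi},\sJ) = \sF(J^{\pi},\sC_1)$. (2) Next I would argue $\sF(J^{\pi},\sC_1) = \sF(J^{\pi},\conv(J^{\Pi_{1}}\cup\{J^{\pi}\}))$. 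Here I invoke Theorem~\ref{thm: cvxhull_CCS_same_neighbour}: the vertices of $\conv(J^{\Pi_{1}}\cup\{J^{\pi}\})$ adjacent to $J^{\pi}$ are exactly $\Npi(J^{\pi},\conv(J^{\Pi_{1}}\cup\{J^{\pi}\})) = \Npi(J^{\pi},\sJ)$, whose returns are $\sV$; the remaining points among $J^{\Pi_1}$ are non-neighbors of $J^{\pi}$ and hence do not affect any face incident to $J^{\pi}$. To make this rigorous, I would apply Lemma~\ref{lem:vertices_same_faces_same} a second time, now with $\sP := \conv(J^{\Pi_{1}}\cup\{J^{\pi}\})$ and $\vx := J^{\pi}$: its neighboring-vertex convex hull is again $\conv(\sV\cup\{J^{\pi}\}) = \sC_1$, so $\sF(J^{\pi},\conv(J^{\Pi_{1}}\cup\{J^{\pi}\})) = \sF(J^{\pi},\sC_1)$. (3) Chain the two equalities: $\sF(J^{\pi},\sJ) = \sF(J^{\pi},\sC_1) = \sF(J^{\pi},\conv(J^{\Pi_{1}}\cup\{J^{\pi}\}))$, which is the claim.

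One subtlety I would want to check carefully is that $\conv(J^{\Pi_{1}}\cup\{J^{\pi}\})$ genuinely has $J^{\pi}$ as a vertex and that its vertex set is a subset of $J^{\Pi_1}\cup\{J^{\pi}\}$ — both needed so that Lemma~\ref{lem:vertices_same_faces_same}'s hypotheses hold when applied to this polytope. That $J^{\pi}$ remains a vertex follows since $J^{\pi}$ is a vertex of the larger polytope $\sJ \supseteq \conv(J^{\Pi_{1}}\cup\{J^{\pi}\})$ and a vertex of a polytope that lies in a sub-polytope containing it stays a vertex; the vertex-subset claim is a standard fact about convex hulls of finite point sets. The only real obstacle is bookkeeping: making sure the ``set of neighboring vertices'' appearing in the hypothesis of Lemma~\ref{lem:vertices_same_faces_same} is matched correctly on both applications (it is $\sV$ in both, by Theorem~\ref{thm: cvxhull_CCS_same_neighbour}), and that degenerate cases — e.g.\ $\sJ$ being lower-dimensional, or $J^{\pi}$ having no neighbors — are handled (if $\sJ$ is a single point or a segment the statement is trivial). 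No new estimates or constructions are needed; this is purely an assembly of Theorem~\ref{thm: cvxhull_CCS_same_neighbour} and Lemma~\ref{lem:vertices_same_faces_same}.
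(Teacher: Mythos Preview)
Your proposal is correct and follows essentially the same approach as the paper: invoke Theorem~\ref{thm: cvxhull_CCS_same_neighbour} to identify the neighbor sets, then apply Lemma~\ref{lem:vertices_same_faces_same} to pass from equal neighbors to equal incident faces. Your version is in fact slightly more careful than the paper's, since you explicitly apply Lemma~\ref{lem:vertices_same_faces_same} twice (once with $\sP=\sJ$ and once with $\sP=\conv(J^{\Pi_1}\cup\{J^{\pi}\})$) to route both through the common intermediate polytope $\sC_1=\conv(\N(J^{\pi},\sJ)\cup\{J^{\pi}\})$, whereas the paper compresses this into a single sentence.
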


\begin{proof}
    By \cref{thm: cvxhull_CCS_same_neighbour}, we know that $\N(J^{\pi}, \conv(J^{\Pi_{1}}\cup \{J^{\pi}\})) = \N(J^{\pi}, \sJ)$. According to \cref{lem:vertices_same_faces_same}, if two sets of vertices have the same set of neighbor vertices, then the sets of faces formed by those vertices are identical. Hence, we conclude that $\sF(J^{\pi}, \conv(J^{\Pi_{1}}\cup \{J^{\pi}\})) = \sF(J^{\pi}, \sJ)$.
\end{proof}

\begin{proposition}
\label{prop:find_one_distance_pareto}
    Let $\pi$ be a policy on the Pareto front of $\sJ$. Let $\Pi_{1,\nd}(\pi)\subseteq\Pi_1(\pi)$ be the set of policies belonging to $\Pi_1(\pi)$ that are not dominated by any other policies in $\Pi_1(\pi)$, and let $\N(J^{\pi}, \conv(J^{\Pi_{1,\nd}}\cup\{J^{\pi}\}))$ be the set of neighbors of $J^{\pi}$ in the convex hull constructed by $J^{\Pi_{1,\nd}}\cup\{J^{\pi}\}$. Similarly, let $\N(J^{\pi}, \pareto(\sJ))$ be the set of neighbors of $J^{\pi}$ on the Pareto front of $\sJ$. Then we have
    \[
    \N(J^{\pi}, \pareto(\sJ)) \subseteq \N(J^{\pi}, \conv(J^{\Pi_{1,\nd}}\cup\{J^{\pi}\})).
    \]
\end{proposition}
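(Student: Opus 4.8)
The plan is to combine the three structural facts already established in the excerpt. First, since $\pi$ lies on the Pareto front, $J^\pi$ is on the boundary of $\sJ$, so by \cref{thm:d1_property} (the distance-one property) every neighbor of $J^\pi$ on $\sJ$ is the return of a policy in $\Pi_1(\pi)$, and hence every neighbor of $J^\pi$ on $\pareto(\sJ)$ is too, because $\pareto(\sJ)\subseteq \sB(\sJ)$ by \cref{lem: Pareto_front_boundary}. So I would start from: if $J^u \in \N(J^\pi,\pareto(\sJ))$ then $u \in \Pi_1(\pi)$ almost surely. The first thing to check is that such a $u$ is in fact \emph{non-dominated} within $\Pi_1(\pi)$, i.e. $u \in \Pi_{1,\nd}(\pi)$: this holds because the edge $E(J^\pi,J^u)$ lies on $\pareto(\sJ)$, so $J^u$ is itself a Pareto-optimal point of $\sJ$ (a vertex of a Pareto-optimal edge), hence cannot be dominated by any policy in $\Pi_1(\pi) \subseteq \Pi$.

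Next I would transfer the ``neighbor on $\sJ$'' relation to ``neighbor on the local convex hull''. By \cref{thm: cvxhull_CCS_same_neighbour}, $\Npi(J^\pi,\conv(J^{\Pi_1}\cup\{J^\pi\})) = \Npi(J^\pi,\sJ)$, so $J^u$ is a neighbor of $J^\pi$ on $\conv(J^{\Pi_1}\cup\{J^\pi\})$. The remaining gap is to pass from the full local hull $\conv(J^{\Pi_1}\cup\{J^\pi\})$ to the pruned hull $\conv(J^{\Pi_{1,\nd}}\cup\{J^\pi\})$. The point is that discarding dominated policies from $\Pi_1(\pi)$ does not remove any vertex of $J^\pi$'s neighborhood that lies on the boundary facets incident to $J^\pi$: a dominated policy's return $J^v$ satisfies $J^v \preceq J^w$ for some $w\in\Pi_1(\pi)$, and (using \cref{lem:pareto_scalarization} / the fact that the facets incident to the boundary neighbors of $J^\pi$ near the Pareto front have strictly positive parts of their normal vectors) such $J^v$ cannot be a vertex of any incident facet of $J^\pi$ that touches the Pareto front. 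Since $u\in\Pi_{1,\nd}(\pi)$, $J^u$ survives the pruning and remains a vertex of $\conv(J^{\Pi_{1,\nd}}\cup\{J^\pi\})$; and since $J^u$ was a neighbor of $J^\pi$ on the larger hull, and the smaller hull is contained in the larger, removing only interior/dominated points, $J^u$ is still a neighbor of $J^\pi$ on $\conv(J^{\Pi_{1,\nd}}\cup\{J^\pi\})$. That gives $J^u \in \N(J^\pi,\conv(J^{\Pi_{1,\nd}}\cup\{J^\pi\}))$, which is the claimed inclusion.

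I expect the main obstacle to be the last transfer step, i.e.\ carefully justifying that pruning the dominated policies from $\Pi_1(\pi)$ preserves exactly the neighbors of $J^\pi$ that can lie on the Pareto front. One has to argue both that nothing Pareto-relevant is dropped (a dominated return is below some facet incident to $J^\pi$ whose normal has positive part, hence not a Pareto-neighbor) and that $J^u$ does not get ``absorbed'' into the interior when other vertices are deleted — for the latter it suffices that $\conv(J^{\Pi_{1,\nd}}\cup\{J^\pi\}) \subseteq \conv(J^{\Pi_1}\cup\{J^\pi\})$ and $J^u$ is still an extreme point (it is, since $u\in\Pi_{1,\nd}(\pi)$), together with the fact that an edge of the big hull incident to $J^\pi$ with both endpoints surviving remains an edge of the sub-hull (supporting hyperplane of the big hull restricts to one for the sub-hull). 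I would phrase this cleanly using a supporting-hyperplane / valid-inequality argument analogous to the proof of \cref{thm:Pareto_subset_convhull}, applied to the $1$-dimensional face $E(J^\pi,J^u)$: the valid inequality for $\sJ$ that defines this Pareto edge restricts to a valid inequality for the sub-hull, and its equality set there is exactly $E(J^\pi,J^u)$.
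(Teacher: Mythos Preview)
Your proposal is correct and follows essentially the same route as the paper: show $u\in\Pi_1(\pi)$ via the distance-one property, show $u\in\Pi_{1,\nd}(\pi)$ because $J^u$ is Pareto-optimal in $\sJ$, invoke \cref{thm: cvxhull_CCS_same_neighbour} to get that $J^u$ is a neighbor of $J^\pi$ on the full local hull $\conv(J^{\Pi_1}\cup\{J^\pi\})$, and then pass to the pruned hull $\conv(J^{\Pi_{1,\nd}}\cup\{J^\pi\})$. Your supporting-hyperplane argument for the last step (the valid inequality defining the edge $E(J^\pi,J^u)$ on the big hull restricts to one on the sub-hull, with the same equality set since both endpoints survive) is in fact cleaner than the paper's version, which argues somewhat loosely that $J^u$ ``does not lie below all facets'' of the smaller hull because $\Pi_{1,\nd}(\pi)\subseteq\Pi_1(\pi)$.
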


\begin{proof}
By \cref{lem: Pareto_front_boundary}, the Pareto front lies on the boundary of $\sJ$. Therefore, the neighbors of $\pi$ on the Pareto front, $\N(J^{\pi}, \pareto(\sJ))$, are the intersection of the set of vertices on $\sJ$ neighboring $J^{\pi}$ and the Pareto front. Formally, we have $\N(J^{\pi}, \pareto(\sJ)) = \N(J^{\pi}, \sJ) \cap \pareto(\sJ)$.

Define $\nd(\Pi_1)$ as the set of all policies (including stochastic and deterministic policies) that are not dominated by any other policies in $\Pi_1$. 
% Note that $\nd(\Pi_1)$ may contain policies that are not in $\Pi_{1}$. 
Since the Pareto front consists of the long-term return of policies that are not dominated by any other policies in $\Pi$, it follows that $\pareto(\sJ)\subseteq J^{\nd(\Pi_1)}$. Thus, we have $\N(J^{\pi}, \pareto(\sJ)) \subseteq \N(J^{\pi}, \sJ) \cap J^{\nd(\Pi_1)}$.

By \cref{thm: cvxhull_CCS_same_neighbour}, we know that $\N(J^{\pi}, \conv(J^{\Pi_{1}}\cup \{J^{\pi}\})) = \N(J^{\pi}, \sJ)$. Therefore, combining with $\N(J^{\pi}, \pareto(\sJ)) \subseteq \N(J^{\pi}, \sJ) \cap J^{\nd(\Pi_1)}$, we can conclude:
\[
\N(J^{\pi}, \pareto(\sJ)) \subseteq \N(J^{\pi}, \conv(J^{\Pi_{1}}\cup \{J^{\pi}\})) \cap J^{\nd(\Pi_1)}.
\]

Next, consider any point $J^{u} \in \N(J^{\pi}, \conv(J^{\Pi_{1}}\cup \{J^{\pi}\})) \cap J^{\nd(\Pi_1)}$. By $\N(J^{\pi}, \conv(J^{\Pi_{1}}\cup \{J^{\pi}\}))\subseteq J^{\Pi_1}$, we have $J^{u}\in  J^{\Pi_1} \cap J^{\nd(\Pi_1)}$. This means that $u$ is in $\Pi_1(\pi)$ and is not dominated by any policy in $\Pi_1(\pi)$, thus  $u \in \Pi_{1,\nd}(\pi)$. And since $J^{u}$ is also a vertex of the convex hull constructed by $J^{\Pi_1}\cup\{J^{\pi}\}$, it does not lie below all facets formed by the long-term return of any other policies belonging to $\Pi_{1}(\pi)$ and $\pi$. Since $\Pi_{1,\nd}(\pi)\subseteq\Pi_{1}(\pi)$, it follows that $J^{u}$ does not lie below all facets formed by the long-term return of any other policies in $\Pi_{1,\nd}(\pi)$ and $\pi$. Therefore, $J^{u} \in \N(J^{\pi}, \conv(J^{\Pi_{1,\nd}}\cup \{J^{\pi}\}))$.
Thus, we have:
\[
\N(\pi, \conv(J^{\Pi_{1}}\cup \{J^{\pi}\})) \cap J^{\nd(\Pi_1)} \subseteq \N(J^{\pi}, \conv(J^{\Pi_{1,\nd}}\cup \{J^{\pi}\})).
\]

Combining this with the previous result, we conclude:
\[
\N(J^{\pi}, \pareto(\sJ)) \subseteq \N(J^{\pi}, \conv(J^{\Pi_{1,\nd}}\cup \{J^{\pi}\})).
\]
Then we also have $\Npi(J^{\pi}, \pareto(\sJ)) \subseteq \Npi(J^{\pi}, \conv(J^{\Pi_{1,\nd}}\cup \{J^{\pi}\}))$.
\end{proof}

\begin{proposition}
\label{prop:Pareto_face_subset}
Let $\pi$ be a policy on the Pareto front of $\sJ$. Let $\Pi_{1,\nd}(\pi)\subseteq\Pi_1(\pi)$ be the set of policies belonging to $\Pi_1(\pi)$ that are not dominated by any other policies in $\Pi_1(\pi)$, and let $\sF(J^{\pi}, \conv(J^{\Pi_{1,\nd}} \cup \{J^{\pi}\}))$ be the set of faces of the convex hull constructed by $J^{\Pi_{1,\nd}} \cup \{J^{\pi}\}$ that intersect at $J^{\pi}$. Similarly, let $\sF(J^{\pi}, P(\sJ))$ be the set of faces on the Pareto front of $\sJ$ that intersect at $J^{\pi}$. Then, we have:
\[
\sF(J^{\pi}, P(\sJ)) \subseteq \sF(J^{\pi}, \conv(J^{\Pi_{1,\nd}}\cup\{J^{\pi}\})).
\]
\end{proposition}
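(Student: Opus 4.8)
<br>

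The plan is to obtain this as a bookkeeping corollary of the boundary‑locality result \cref{prop:vertices_same_faces_same}, extending the neighbour‑level statement \cref{prop:find_one_distance_pareto} from edges to faces of every dimension. First I would fix a face $F\in\sF(J^{\pi},\pareto(\sJ))$ and check it is an honest face of $\sJ$: by \cref{lem: Pareto_front_boundary} the Pareto front lies on $\sB(\sJ)$, and by \cref{lem:pareto_scalarization} every Pareto‑optimal point maximizes some strictly positive linear functional over $\sJ$, whose maximizer set is a face of $\sJ$; hence $\pareto(\sJ)$ is a union of faces of $\sJ$, and in particular $F$ is a face of $\sJ$ with $J^{\pi}\in F$, i.e.\ $F\in\sF(J^{\pi},\sJ)$.

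Next I would pass to the local convex hull and prune. By \cref{prop:vertices_same_faces_same}, $\sF(J^{\pi},\sJ)=\sF(J^{\pi},\conv(J^{\Pi_{1}}\cup\{J^{\pi}\}))$, so $F$ is also a face of $\conv(J^{\Pi_{1}}\cup\{J^{\pi}\})$ containing $J^{\pi}$; consequently every vertex of $F$ is one of the points of $J^{\Pi_{1}(\pi)}\cup\{J^{\pi}\}$. Now upgrade $\Pi_{1}$ to $\Pi_{1,\nd}$: since $F\subseteq\pareto(\sJ)$, every vertex of $F$ is Pareto optimal in $\sJ$, so if a vertex of $F$ equals $J^{u}$ with $u\in\Pi_{1}(\pi)$, then $J^{u}$ is dominated by no return in $\sJ$, a fortiori by no return in $J^{\Pi_{1}(\pi)}$, whence $J^{u}\in J^{\Pi_{1,\nd}(\pi)}$. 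Thus all vertices of $F$ lie in $J^{\Pi_{1,\nd}(\pi)}\cup\{J^{\pi}\}$, and since a face equals the convex hull of its own vertices, $F\subseteq\conv(J^{\Pi_{1,\nd}}\cup\{J^{\pi}\})$.

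Finally I would apply the elementary fact, used repeatedly in this appendix, that a face survives restriction to a sub‑polytope containing it: if $\sC'\subseteq\sC$ are convex polytopes and $F=\sC\cap\{\vx:\vw^{\top}\vx=c\}$ is a face of $\sC$ (with $\vw^{\top}\vx\le c$ valid on $\sC$) such that $F\subseteq\sC'$, then $\vw^{\top}\vx\le c$ is valid on $\sC'$ and $\sC'\cap\{\vx:\vw^{\top}\vx=c\}$ both contains and is contained in $F$, hence equals $F$, so $F$ is a face of $\sC'$. Taking $\sC=\conv(J^{\Pi_{1}}\cup\{J^{\pi}\})$ and $\sC'=\conv(J^{\Pi_{1,\nd}}\cup\{J^{\pi}\})$ (a sub‑polytope since $\Pi_{1,\nd}(\pi)\subseteq\Pi_{1}(\pi)$, with $J^{\pi}$ still a vertex of $\sC'$), this gives $F\in\sF(J^{\pi},\conv(J^{\Pi_{1,\nd}}\cup\{J^{\pi}\}))$, and since $F$ was arbitrary,
\[
\sF(J^{\pi},\pareto(\sJ))\subseteq\sF(J^{\pi},\conv(J^{\Pi_{1,\nd}}\cup\{J^{\pi}\})).
\]

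There is no genuine analytic obstacle here; the only point needing care is the claim that every vertex of a Pareto face incident to $J^{\pi}$ is captured by $\Pi_{1,\nd}(\pi)$. This rests on (i) the almost‑sure validity of \cref{prop:vertices_same_faces_same} — ultimately the distance‑one theorem \cref{thm:d1_property} — which confines the vertices of such a face to $J^{\Pi_{1}(\pi)}\cup\{J^{\pi}\}$, and (ii) the fact that a return that is non‑dominated in all of $\sJ$ is a fortiori non‑dominated within the smaller set $J^{\Pi_{1}(\pi)}$, so the PPrune step cannot discard it. One should also note that what matters for the convex hull and its faces is the \emph{set} of returns $J^{\Pi_{1,\nd}(\pi)}$, which contains $J^{u}$ for every Pareto‑optimal $u\in\Pi_{1}(\pi)$ no matter how PPrune breaks ties between returns of equal value.
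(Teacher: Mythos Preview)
Your proof is correct and close in spirit to the paper's, but the two are organized around different auxiliary results. The paper argues at the neighbour level: it invokes \cref{prop:find_one_distance_pareto} to obtain $\N(J^{\pi},\pareto(\sJ))\subseteq\N(J^{\pi},\conv(J^{\Pi_{1,\nd}}\cup\{J^{\pi}\}))$, and then appeals to the ready-made \cref{thm:Pareto_subset_convhull} (if a set $\sT\subseteq\sJ$ contains all vertices of a family of Pareto faces, those faces are faces of $\conv(\sT)$) to promote this to the face inclusion. You instead work at the face level from the outset: you use \cref{prop:vertices_same_faces_same} to identify $\sF(J^{\pi},\sJ)$ with $\sF(J^{\pi},\conv(J^{\Pi_{1}}\cup\{J^{\pi}\}))$, deduce that every vertex of a Pareto face $F$ already lies in $J^{\Pi_{1}(\pi)}\cup\{J^{\pi}\}$, observe that Pareto optimality in $\sJ$ forces each such vertex into $J^{\Pi_{1,\nd}(\pi)}$, and finish with the elementary ``a face survives restriction to a sub-polytope containing it'' step---which is precisely the argument inside the proof of \cref{thm:Pareto_subset_convhull}. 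Both routes rest on the same distance-one machinery; yours is a bit more self-contained (it inlines the restriction argument rather than citing \cref{thm:Pareto_subset_convhull} as a black box and bypasses \cref{prop:find_one_distance_pareto}), while the paper's is more modular. Your closing remark that PPrune ties cannot remove a needed return from $J^{\Pi_{1,\nd}}$ is a useful point the paper leaves implicit.
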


\begin{remark}
Note that in this case, the faces of the convex hull $\conv(\Pi_{1,\nd} \cup \{\vx\})$ that intersect at $\vx$ may not necessarily lie on $\sJ$. However, we can use the condition $\N(\pi, \pareto(\sJ)) \subseteq \N(\pi, \conv(\Pi_{1,\nd}))$ from \cref{prop:find_one_distance_pareto} and the characteristics of the Pareto front from \cref{thm:Pareto_subset_convhull} to show that faces of the convex hull $\conv(\Pi_{1,\nd} \cup \{\vx\})$ that intersect at $\vx$ lie on the Pareto front of $\sJ$.
\end{remark}

\begin{proof}
Since $\Npi(J^{\pi}, \conv(J^{\Pi_{1,\nd}}\cup\{J^{\pi}\}))\in\PiD$, then $\N(J^{\pi}, \conv(J^{\Pi_{1,\nd}}\cup\{J^{\pi}\}))\subseteq \sJ$.
By \cref{prop:find_one_distance_pareto}, we have $\N(J^{\pi}, \pareto(\sJ)) \subseteq \N(J^{\pi}, \conv(J^{\Pi_{1,\nd}}\cup \{J^{\pi}\}))$. 
By \cref{thm:Pareto_subset_convhull}, the set of faces in Pareto front constructed by $J^{\pi}$ and vertices belonging to $\N(J^{\pi}, \pareto(\sJ))$ is a subset of the set of faces of the convex hull constructed by $J^{\Pi_{1,\nd}}\cup\{J^{\pi}\}$, i.e., $\sF(J^{\pi}, P(\sJ)) \subseteq \sF(J^{\pi}, \conv(J^{\Pi_{1,\nd}}\cup\{J^{\pi}\}))$. This concludes our proof.

% Pick a face $F$ from all faces on the Pareto front that intersect at $\vx$, i.e., $F \in \sF(\pi, \conv(\Pi_{1,\nd}))$, and denote the set of vertices of $\sJ$ contained in $F$ as $\sV\subseteq \sF(\pi, P(\sJ))$. There exists a scalar $c$ and a weight vector $\vw$ such that $\vw^{\top} \vx = c$ for any $\vx \in \sV$. The face can be represented as $F = \pareto(\sJ) \cap \{ \vx \in \mathbb{R}^d : \vw^{\top} \vx = c \}$.
% Moreover, since $F$ is a face on the Pareto front, we have $\vw > 0$.

% By \cref{prop:find_one_distance_pareto}, $\sV\subseteq \N(\pi, \pareto(\sJ)) \subseteq \N(\pi, \conv(\Pi_{1,\nd}))$. Therefore, $F = \pareto(\sJ) \cap \{ \vx \in \mathbb{R}^d : \vw^{\top} \vx = c \}$ is a slice within $\conv(\Pi_{1,\nd})$. $F$ is a valid face of $\conv(\Pi_{1,\nd})$ if and only if $\vw^{\top} \vx \leq c$ for all $\vx \in \conv(\Pi_{1,\nd})$.

% We prove this by contradiction. Suppose there exists $\vx \in \conv(\Pi_{1,\nd})$ such that $\vw^{\top} \vx > c$. This implies that $\vw^{\top} (\vx - \vy) > 0$ for some $\vy \in \sV$. Since $\vw > 0$, this would imply that $\vx$ dominates $\vy$, contradicting the assumption that $\vy$ is a Pareto optimal point. Therefore, such an $\vx$ cannot exist, and $F$ is a valid face of $\conv(\Pi_{1,\nd})$.

% Hence, we conclude that $\sF(\pi, P(\sJ)) \subseteq \sF(\pi, \conv(\Pi_{1,\nd}))$.
\end{proof}

\begin{proposition}
(Restatement of \cref{prop:pareto_equality_simple})
\label{prop:pareto_equality}
Let $\sF_P(J^{\pi}, \conv(J^{\Pi_{1,\nd}} \cup \{J^{\pi}\}))$ denote the set of faces of $\sF(J^{\pi}, \conv(J^{\Pi_{1,\nd}} \cup \{J^{\pi}\}))$ that satisfying \cref{lem:find_pareto_from_cvxhull_simple}. Then, we have 
\[
\sF(J^{\pi}, P(\sJ)) = \sF_P(J^{\pi}, \conv(J^{\Pi_{1,\nd}} \cup \{J^{\pi}\}))
\]
\end{proposition}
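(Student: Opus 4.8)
The plan is to establish the two inclusions separately, leveraging the facts already proved. For $\sF(J^{\pi}, P(\sJ)) \subseteq \sF_P(J^{\pi}, \conv(J^{\Pi_{1,\nd}} \cup \{J^{\pi}\}))$, I would start from \cref{prop:Pareto_face_subset}, which already gives $\sF(J^{\pi}, P(\sJ)) \subseteq \sF(J^{\pi}, \conv(J^{\Pi_{1,\nd}}\cup\{J^{\pi}\}))$. So it suffices to check that every face $F$ in $\sF(J^{\pi}, P(\sJ))$ also satisfies the Pareto criterion of \cref{lem:find_pareto_from_cvxhull_simple}. But $F$ lies on $\pareto(\sJ)$, hence it lies on $\pareto(\conv(J^{\Pi_{1,\nd}}\cup\{J^{\pi}\}))$ as well: indeed $\conv(J^{\Pi_{1,\nd}}\cup\{J^{\pi}\}) \subseteq \sJ$ (because the relevant vertices are genuine returns of deterministic policies, so these points lie in $\sJ$), and a point not dominated within the larger set $\sJ$ is certainly not dominated within the subset. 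Then \cref{lem:find_pareto_from_cvxhull_simple}, applied to the polytope $\conv(J^{\Pi_{1,\nd}}\cup\{J^{\pi}\})$, says precisely that $F$ being on the Pareto front of that polytope is equivalent to the linear-programming condition \eqref{eq:pareto_finding} having a solution yielding a strictly positive combination of normal vectors, i.e.\ $F \in \sF_P(J^{\pi}, \conv(J^{\Pi_{1,\nd}} \cup \{J^{\pi}\}))$.

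For the reverse inclusion $\sF_P(J^{\pi}, \conv(J^{\Pi_{1,\nd}} \cup \{J^{\pi}\})) \subseteq \sF(J^{\pi}, P(\sJ))$, take $F \in \sF_P$. By \cref{lem:find_pareto_from_cvxhull_simple} (now read in the ``if'' direction), $F$ lies on the Pareto front of $\conv(J^{\Pi_{1,\nd}}\cup\{J^{\pi}\})$, meaning no point of that local convex hull dominates any point of $F$. The task is to promote this to non-domination within all of $\sJ$. The key observation is that the non-dominated points of $\Pi_1(\pi)$ already ``shield'' $J^\pi$ in every direction that matters: a dominating point in $\sJ$ would correspond (via \cref{thm: cvxhull_CCS_same_neighbour} and \cref{thm:d1_property}, which force all boundary-neighbors of $J^\pi$ to lie in $\Pi_1(\pi)$ almost surely) to some neighbor of $J^\pi$ on $\sJ$, and after pruning dominated policies that neighbor must be realized within $\Pi_{1,\nd}(\pi)$. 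More carefully, I would argue by contradiction: if some $\vy \in \sJ$ strictly dominated a point $\vz \in F$, then since $F$ is incident to $J^\pi$ and the relevant supporting normal vector $\vw = \sum_i \alpha_i \vw_i$ is strictly positive, we would get $\vw^\top \vy > \vw^\top \vz = \vw^\top J^\pi = c$, contradicting that $\vw^\top \vx \le c$ holds for all $\vx \in \sJ$ — and this last fact is exactly what \cref{thm:Pareto_subset_convhull}/\cref{prop:Pareto_face_subset} machinery guarantees, namely that a strictly-positive-normal face of the local convex hull through $J^\pi$ is actually a valid supporting face of the whole polytope $\sJ$. Hence $\vz$ is not dominated in $\sJ$, so $F \subseteq \pareto(\sJ)$, and being incident to $J^\pi$ it lies in $\sF(J^{\pi}, P(\sJ))$.

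The main obstacle I anticipate is the reverse inclusion, specifically making rigorous the claim that a face of the \emph{local} convex hull whose normal cone contains a strictly positive vector is genuinely a supporting face of the \emph{global} polytope $\sJ$ (not just a face slicing through it). The subtlety is that $\conv(J^{\Pi_{1,\nd}}\cup\{J^{\pi}\})$ is generally \emph{not} contained in a single halfspace that also contains $\sJ$ unless we know the omitted vertices of $\sJ$ do not poke above the supporting hyperplane of $F$. The cleanest route is probably to invoke \cref{prop:find_one_distance_pareto} together with \cref{thm:Pareto_subset_convhull}: those results are designed precisely to transfer the "strictly positive normal $\Rightarrow$ on the Pareto front of $\sJ$'' property from neighbor data to the global polytope, so I would phrase the argument to route through the already-established $\N(J^{\pi}, \pareto(\sJ)) \subseteq \N(J^{\pi}, \conv(J^{\Pi_{1,\nd}}\cup\{J^{\pi}\}))$ and the fact (\cref{lem: Pareto_front_boundary}) that Pareto-optimal faces are boundary faces, rather than re-deriving a separating-hyperplane statement from scratch. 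Once that transfer is cited, both inclusions close and the equality follows.
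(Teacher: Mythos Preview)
Your argument for the inclusion $\sF(J^{\pi}, P(\sJ)) \subseteq \sF_P(J^{\pi}, \conv(J^{\Pi_{1,\nd}} \cup \{J^{\pi}\}))$ is correct and matches the paper's proof essentially verbatim.

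For the reverse inclusion, you have correctly pinpointed the obstacle, but your proposed resolution does not close the gap. The results you suggest invoking, \cref{thm:Pareto_subset_convhull} and \cref{prop:Pareto_face_subset}, both transfer information in the \emph{wrong direction}: they say that a Pareto face of the big polytope $\sJ$ is a face of a smaller hull, not that a positive-normal face of the small hull is supporting for $\sJ$. So the contradiction $\vw^\top \vy > c$ you set up cannot be completed with those tools; you never establish that $\vw^\top \vx \le c$ for all $\vx \in \sJ$.

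The paper's route for this direction inserts an intermediate polytope you skipped: the full $\Pi_1$ hull, $\conv(J^{\Pi_1}\cup\{J^{\pi}\})$. First one argues that $F$, being Pareto in the $\Pi_{1,\nd}$ hull with witness $\vw>0$, is also Pareto in the $\Pi_1$ hull (every extra point in $J^{\Pi_1}\setminus J^{\Pi_{1,\nd}}$ is dominated by some point of $J^{\Pi_{1,\nd}}$, so $\vw^\top(\vx-\vz)\ge 0$ persists). Now $F$ can be written as an intersection of facets of the $\Pi_1$ hull whose normals admit a strictly positive combination. The crucial step is then \cref{prop:vertices_same_faces_same} (the face-equality part of \cref{thm:cvxhull_CCS_same_neighbour_simple}): $\sF(J^{\pi}, \conv(J^{\Pi_1}\cup\{J^{\pi}\})) = \sF(J^{\pi}, \sJ)$. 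Hence those very facets are facets of $\sJ$ itself, the same positive combination of their normals works, and \cref{lem:find_pareto_from_cvxhull_simple} applied to $\sJ$ gives $F\in\sF(J^{\pi},P(\sJ))$. You cite \cref{thm: cvxhull_CCS_same_neighbour} but only use its neighbor-vertex statement; the face equality is the piece that actually performs the local-to-global transfer you need.
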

\begin{proof}
    We prove this by two inclusions: $\sF(J^{\pi}, P(\sJ)) \subseteq \sF_P(J^{\pi}, \conv(J^{\Pi_{1,\nd}} \cup \{J^{\pi}\}))$ and $\sF_P(J^{\pi}, \conv(J^{\Pi_{1,\nd}} \cup \{J^{\pi}\})) \subseteq \sF(J^{\pi}, P(\sJ))$.

    (1) $\sF(J^{\pi}, P(\sJ)) \subseteq \sF_P(J^{\pi}, \conv(J^{\Pi_{1,\nd}} \cup \{J^{\pi}\}))$. 
    Let $F\in\sF(J^{\pi},P(\sJ))$. 
    Since \cref{prop:Pareto_face_subset} proves that $\sF(J^{\pi}, P(\sJ)) \subseteq \sF(J^{\pi}, \conv(J^{\Pi_{1,\nd}} \cup \{J^{\pi}\}))$, we have $F\in\sF(J^{\pi}, \conv(J^{\Pi_{1,\nd}} \cup \{J^{\pi}\}))$.
    
    Since $F$ is a face on the Pareto front of $\sJ$ and $\conv(J^{\Pi_{1,\nd}} \cup \{J^{\pi}\})$ is a subset of $\sJ$, $F$ is also the Pareto front of $\conv(J^{\Pi_{1,\nd}} \cup \{J^{\pi}\})$. 
    % by \cref{lem:find_pareto_from_cvxhull_simple}, there exists a non-negative linear combination of facets that intersects on $F$ such that the combination is strictly positive. 
    Therefore, $F$ satisfies \cref{lem:find_pareto_from_cvxhull_simple} in $\conv(J^{\Pi_{1,\nd}} \cup \{J^{\pi}\})$ and $F\in \sF_P(J^{\pi}, \conv(J^{\Pi_{1,\nd}} \cup \{J^{\pi}\}))$.
    Thus, we have $\sF(J^{\pi}, P(\sJ)) \subseteq \sF_P(J^{\pi}, \conv(J^{\Pi_{1,\nd}} \cup \{J^{\pi}\}))$.

    (2) $\sF_P(J^{\pi}, \conv(J^{\Pi_{1,\nd}} \cup \{J^{\pi}\})) \subseteq \sF(J^{\pi}, P(\sJ))$. Let $F \in \sF_P(J^{\pi}, \conv(J^{\Pi_{1,\nd}} \cup \{J^{\pi}\}))$. By \cref{lem:pareto_scalarization}, there exists a weight vector $\vw>0$ such that $\vw^{\top}(\vx-\vy)\geq 0$ for any $\vx\in F$ and $\vy\in \conv(J^{\Pi_{1,\nd}} \cup \{J^{\pi}\})$. Since $J^{\Pi_{1,\nd}}\subseteq \conv(J^{\Pi_{1,\nd}} \cup \{J^{\pi}\})$, the inequality also holds for any $\vx\in F$ and $\vy\in J^{\Pi_{1,\nd}}$. 
    For any $\vz\in J^{\Pi_{1}}$, $\vz$ either belongs to $J^{\Pi_{1,\nd}}$ or is dominated by $\vy\in J^{\Pi_{1,\nd}}$. In both cases, $\vw^{\top}(\vx-\vz)\geq 0$ implying $\vx$ is not dominated by any $\vz\in J^{\Pi_{1}}$. 
    Thus, $F$ is a Pareto optimal face of $\conv(J^{\Pi_{1}}\cup \{J^{\pi}\})$. 
    Suppose that $F$ written as the intersection of $n\geq 1$ facets, i.e., $F = \cap_{i=1}^n F_i$, where $F_i = \conv(J^{\Pi_{1}}\cup \{J^{\pi}\}) \cap \{ \vx \in \mathbb{R}^d : \vw_i^{\top} \vx = c_i \}$, we can express $F$ as:
    \[
    F = \conv(J^{\Pi_{1}}\cup \{J^{\pi}\})  \cap \left\{ \vx \in \mathbb{R}^d : \sum_i \boldsymbol{\alpha}_i \vw_i^{\top} \vx = \sum_i \boldsymbol{\alpha}_i c_i \right\},
    \]
    where for any $\boldsymbol{\alpha}\geq 0$. As $F = \conv(J^{\Pi_{1}}\cup \{J^{\pi}\}) \cap \{ \vx \in \mathbb{R}^d : \vw^{\top} \vx = c \}$, there exists a non-negative $\boldsymbol{\alpha}$ such that $\vw=\sum_i \boldsymbol{\alpha}_i \vw_i> \vzero$. That is, there exists a non-negative linear combination of these facets $\conv(J^{\Pi_{1}}\cup \{J^{\pi}\})$ that intersects on $J^{\pi}$ such that the linear combination is strictly positive.
    % By \cref{lem:find_pareto_from_cvxhull_simple}, there exists a non-negative linear combination of these facets $\conv(J^{\Pi_{1}}\cup \{J^{\pi}\})$ that intersects on $J^{\pi}$ such that the linear combination is strictly positive.
    
    As $\sF(\conv(J^{\Pi_{1}}\cup \{J^{\pi}\}))=\sF(J^{\pi},\sJ)$ by \cref{prop:vertices_same_faces_same}, there exists a non-negative linear combination of facets of $\sF(J^{\pi},\sJ)$ that intersects on $J^{\pi}$ such that the combination is strictly positive.
    By \cref{lem:find_pareto_from_cvxhull_simple} again, $F\in \sF(J^{\pi}, P(\sJ))$.

    Combining (1) and (2) concludes the proof.
\end{proof}

\section{Find Pareto front on CCS}
\label{appendix:proof_find_pareto_front}

By \cref{lem: face_facets_intersection}, each nonempty face $\sF$ of a convex polytope can be written as the intersection of facets $\{\sF_i\}_{i}$, i.e., $\sF=\cap_i(\sF_i)$.

\begin{lemma}
\label{lem:find_pareto_from_cvxhull}
(Restatement of \cref{lem:find_pareto_from_cvxhull_simple}) Given a convex polytope $\sC$, a face $\sF$ of $\sC$ lies on the Pareto front if and only if the following conditions hold:
\begin{itemize}
    \item When $\sF$ is a facet of $\sC$, i.e., $\text{dim}(\sF) = \text{dim}(\sC) - 1$, there exists $\vw > \vzero$ such that $\sF = \sC \cap \{ \vx \in \mathbb{R}^d : \vw^{\top} \vx = c_0 \}$.
    \item When $0 \leq \text{dim}(\sF) < \text{dim}(\sC) - 1$, and $\sF$ is the intersection of facets of $\sC$, i.e., $\sF = \cap_i \sF_i$, where each facet is written as $\sF_i = \sC \cap \{ \vx \in \mathbb{R}^d : \vw_i^{\top} \vx = c_i \}$, there exists a linear combination of the normal vectors with weight vector $\boldsymbol{\alpha} \geq 0$, such that $\sum_{i} \boldsymbol{\alpha}_i \vw_i > 0$.
\end{itemize}
\end{lemma}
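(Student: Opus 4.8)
The plan is to deduce both implications from the scalarization characterization of Pareto optimality (\cref{lem:pareto_scalarization}) together with the structural fact that a nonempty proper face of a polytope is the intersection of the facets containing it (\cref{lem: face_facets_intersection}). Throughout I take $\{\sF_i\}_{i=1}^{n}$ to be \emph{all} facets of $\sC$ incident to $\sF$, with outer normals $\vw_i$ (so $\sF=\bigcap_{i=1}^{n}\sF_i$); when $\dim\sF=\dim\sC-1$ this reduces to the single facet $\sF$ itself ($n=1$), which is exactly the first bullet of the statement, and in general it matches how \cref{alg:add_faces} accumulates the incident facet normals $w(H)$.

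For the \textbf{``if'' direction}, suppose $\boldsymbol{\alpha}\ge\vzero$ satisfies $\vw:=\sum_{i}\alpha_i\vw_i>\vzero$. Because $\sF\subseteq\sF_i$ we have $\vw_i^{\top}\vx=c_i$ for every $\vx\in\sF$, so $\vw^{\top}\vx=\sum_i\alpha_i c_i=:c$ is constant on $\sF$, while the validity of each facet inequality gives $\vw^{\top}\vy\le\sum_i\alpha_i c_i=c$ for all $\vy\in\sC$. Hence $\vw^{\top}(\vx-\vy)\ge0$ for every $\vx\in\sF$ and $\vy\in\sC$, and since $\vw>\vzero$, \cref{lem:pareto_scalarization} shows each $\vx\in\sF$ is Pareto optimal, i.e.\ $\sF\subseteq\pareto(\sC)$; the facet case is the instance $n=1$.

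For the \textbf{``only if'' direction}, assume $\sF\subseteq\pareto(\sC)$ and choose $\vx_0$ in the relative interior of $\sF$. By \cref{lem:pareto_scalarization} there is $\vw>\vzero$ with $\vw^{\top}(\vx_0-\vy)\ge0$ for all $\vy\in\sC$; put $c:=\vw^{\top}\vx_0=\max_{\vy\in\sC}\vw^{\top}\vy$, so $\sG:=\sC\cap\{\vx:\vw^{\top}\vx=c\}$ is a face of $\sC$ with $\vx_0\in\sG$. Then $\sF\cap\sG$ is a face of $\sF$ containing the relative-interior point $\vx_0$, hence $\sF\cap\sG=\sF$, i.e.\ $\sF\subseteq\sG$, so every facet incident to $\sG$ is incident to $\sF$. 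I then invoke the standard normal-cone description for polytopes: $N_{\sF}(\sC):=\{\vu:\vu^{\top}\vx\ge\vu^{\top}\vy\ \text{for all }\vx\in\sF,\ \vy\in\sC\}$ is generated as a cone by the outer normals of exactly the facets incident to $\sF$ \citep{ziegler2012lectures}. Since $\sF\subseteq\sG$ gives $N_{\sG}(\sC)\subseteq N_{\sF}(\sC)$ and $\vw\in N_{\sG}(\sC)$ by construction, we get $\vw=\sum_{i=1}^{n}\alpha_i\vw_i$ with all $\alpha_i\ge0$ and $\vw>\vzero$, which is the asserted condition. In the facet case one argues instead that $\sG$ is a proper face containing the facet $\sF$ and facets are the maximal proper faces, so $\sG=\sF=\sC\cap\{\vx:\vw^{\top}\vx=c\}$ with $\vw>\vzero$.

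The routine ingredients are the two scalarization computations; the main obstacle I expect is twofold: (i) the normal-cone identity $N_{\sF}(\sC)=\mathrm{cone}\{\vw_i\}$ together with the lattice fact that any face meeting $\mathrm{relint}(\sF)$ must contain $\sF$, and (ii) the bookkeeping that $\{\sF_i\}$ has to be \emph{all} incident facets — with a strict subset the exposing functional $\vw$ need not lie in the cone spanned by that subset's normals, and the ``only if'' direction genuinely fails. A minor point is degeneracy when $\sC$ is not full-dimensional: the facet normals are then determined only modulo the lineality directions, so the identification $\vw=\sum_i\alpha_i\vw_i$ should be read within $\mathrm{aff}(\sC)$; for the MO-MDP application this is harmless because $\sJ$ always contains dominated points, so the strictly positive $\vw$ exposes a proper face.
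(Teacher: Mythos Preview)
Your proposal is correct. The ``if'' direction is exactly the paper's argument: form $\vw=\sum_i\alpha_i\vw_i>\vzero$, observe it defines a valid inequality tight on $\sF$, and invoke \cref{lem:pareto_scalarization}.

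For the ``only if'' direction you go further than the paper. The paper's proof simply writes $\sF=\sC\cap\{\vx:\sum_i\alpha_i\vw_i^\top\vx=\sum_i\alpha_i c_i\}$ and appeals to the word ``equivalent'' in \cref{lem:pareto_scalarization}; it never explicitly justifies why the strictly positive $\vw$ produced by that lemma must lie in the cone generated by the incident facet normals. Your argument via the normal cone $N_{\sF}(\sC)=\mathrm{cone}\{\vw_i\}$ and the relative-interior trick ($\sF\cap\sG$ a face of $\sF$ meeting $\mathrm{relint}(\sF)$ forces $\sF\subseteq\sG$) is the standard way to close this gap, and it is genuinely needed. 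Your caveat that $\{\sF_i\}$ must comprise \emph{all} incident facets is also on point: the paper's statement and proof are silent on this, but with a proper subset the ``only if'' direction can indeed fail, so your reading is the correct one for the biconditional to hold.
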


\begin{proof}
When $\sF = \sC \cap \{ \vx \in \mathbb{R}^d: \vw^{\top} \vx = c_0 \}$ is a facet of $\sC$, by the definition of a valid linear inequality, the inequality $\vw^{\top} \vy \leq c_0$ holds for all points $\vy \in \sC$. Since $\vw > 0$, for any $\vx$ on $\sF$ and any $\vy \in \sC$, we have $\vw^{\top}(\vy - \vx) \leq 0$. By \cref{lem:pareto_scalarization}, this is equivalent to that no point in $\sC$ dominates $\vx$ and $\sF$ lies on the Pareto front.

When $\sF$ is a lower-dimensional face of $\sC$, written as the intersection of $n$ facets, i.e., $\sF = \cap_{i=1}^n \sF_i$, where $\sF_i = \sC \cap \{ \vx \in \mathbb{R}^d : \vw_i^{\top} \vx = c_i \}$, we can express $\sF$ as:
\[
\sF = \sC \cap \left\{ \vx \in \mathbb{R}^d : \sum_i \boldsymbol{\alpha}_i \vw_i^{\top} \vx = \sum_i \boldsymbol{\alpha}_i c_i \right\},
\]
where $\boldsymbol{\alpha} \geq 0$. Since $\sum_i \boldsymbol{\alpha}_i \vw_i > 0$, for any $\vx$ on $\sF$ and any $\vy \in \sC$, there exists a vector $\vw > 0$ such that $\vw^{\top} (\vy - \vx) \leq 0$. Therefore, by \cref{lem:pareto_scalarization}, this is equivalent to no point in $\sC$ strictly dominates $\vx$ and $\sF$ lies on the Pareto front.

\end{proof}

\begin{remark}
(3D case) When $d=3$, if the convex combination of the normal vectors of all facets adjacent to an edge points towards the interior of $\sC$, then the edge lies on the Pareto front.
\end{remark}

\begin{lemma}
\label{lem:find_pareto_opt_policies}
    Let $F$ be a face of $\sJ$, and let $\Pi_F$ be the set of deterministic policies corresponding to the vertices of $F$. Then $F$ can be constructed by the long-term returns of the convex combinations of $\Pi_F$, i.e., 
    \[
    F=\left\{J^{\pi} \middle| \pi(a|s)=\sum_{\pi_i\in\Pi_F}\boldsymbol{\alpha}_i\pi_i(a|s), \boldsymbol{\alpha} \geq 0, \sum_{i}\boldsymbol{\alpha}_i = 1\right\}.
    \]
\end{lemma}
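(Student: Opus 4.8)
\textbf{Proof proposal for Lemma~\ref{lem:find_pareto_opt_policies}.}

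The plan is to establish the set equality $F = \{J^{\pi} \mid \pi(a|s) = \sum_{\pi_i \in \Pi_F} \boldsymbol{\alpha}_i \pi_i(a|s),\ \boldsymbol{\alpha} \geq 0,\ \sum_i \boldsymbol{\alpha}_i = 1\}$ by two inclusions. Denote the right-hand side by $G$. For the inclusion $G \subseteq F$, I would first recall from \cref{lem:visit_dist_polytope} that the occupancy-measure polytope $L(\mu)$ satisfies $L(\mu) = \conv(L^{\Pi_D}(\mu))$, and that $\sJ$ is the image of $L(\mu)$ under the linear map $\bdmupi \mapsto (\bdmupi)^{\top}\vr$ (as used in the proof of \cref{lem:Jmu_polytope_vertices_deterministic}). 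The key subtlety is that the convex-combination map on \emph{policies} $\pi = \sum_i \boldsymbol{\alpha}_i \pi_i$ is \emph{not} the same as the convex-combination map on occupancy measures; however, the set of occupancy measures realizable by such policy mixtures is exactly a subset of $L(\mu)$, so each $J^{\pi}$ for $\pi \in G$ lies in $\sJ$. To pin it down to $F$ specifically, I would use that $F$ is a face: by \cref{def: face} there is a valid inequality $\vw^{\top}\vx \leq c_0$ with $F = \sJ \cap \{\vw^{\top}\vx = c_0\}$, and the vertices $J^{\pi_i}$, $\pi_i \in \Pi_F$, all attain equality. Then I would invoke the argument already used in \cref{lem:B(J(mu)) edge} and \cref{lem:optimal_convex_hull}: the policies maximizing $\vw^{\top}J^{\pi}(\mu)$ over $\Pi$ form exactly the convex hull (in policy space) of the optimal deterministic policies, which here are precisely $\Pi_F$. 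Hence every $\pi \in G$ is optimal for the scalarized MDP $(\mcalS,\mcalA,\rmP,\vr^{\top}\vw,\gamma)$, so $\vw^{\top}J^{\pi} = c_0$, giving $J^{\pi} \in F$.

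For the reverse inclusion $F \subseteq G$: take any point $J \in F$. Since $F$ is a face of the convex polytope $\sJ$, $J$ is a convex combination of the vertices of $F$, i.e.\ $J = \sum_i \boldsymbol{\beta}_i J^{\pi_i}$ with $\pi_i \in \Pi_F$, $\boldsymbol{\beta} \geq 0$, $\sum_i \boldsymbol{\beta}_i = 1$. Now I need to produce a single policy $\pi$ with $J^{\pi} = J$ that is a convex combination (in policy space) of the $\pi_i$. This is where the scalarization argument is essential again: choosing $\vw > \vzero$ (or $\vw$ appropriate to the face, extended to a direction that still selects $F$) such that $F$ is the optimal set, \cref{lem:optimal_convex_hull} tells us that the set of \emph{all} optimal policies of the scalarized MDP equals $\conv(\Pi_F)$ in policy space, and its image under $\pi \mapsto J^{\pi}$ is exactly $F$ (again by \cref{lem:B(J(mu)) edge}-style reasoning generalized from edges to arbitrary faces). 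Therefore $J = J^{\pi}$ for some $\pi \in \conv(\Pi_F) = G$.

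The main obstacle I anticipate is the generalization of \cref{lem:B(J(mu)) edge} from edges to faces of arbitrary dimension: that lemma is stated only for $1$-dimensional faces, and I would need the analogous statement that $\{J^{\pi} : \pi \in \conv(\Pi_F)\} = F$ for every face $F$. The argument should go through essentially verbatim — a face $F$ is cut out by a valid inequality $\vw^{\top}\vx \le c_0$, the vertices attaining equality are exactly $\Pi_F$, and under \cref{assumption:initial_coverage} \cref{lem:optimal_consistency} upgrades "maximizes $\vw^{\top}J^{\pi}(\mu)$" to "is an optimal policy of the scalarized MDP," after which \cref{lem:optimal_convex_hull} identifies the full optimal-policy set with $\conv(\Pi_F)$. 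A secondary point requiring care is the choice of $\vw$: if $F$ is a low-dimensional face, the natural $\vw$ defining it may not be strictly positive, but since $\sJ$ is a polytope any face is an intersection of facets and one can perturb/combine normal vectors (as in \cref{lem:find_pareto_from_cvxhull}) to obtain a valid supporting functional; strict positivity of $\vw$ is not actually needed for this lemma, only that $\vw^{\top}\vx \le c_0$ is valid on $\sJ$ with equality set $F$, so \cref{lem:optimal_consistency} still applies because $\mu > \vzero$. I would keep the exposition at the level of these reductions rather than re-deriving the scalarization machinery.
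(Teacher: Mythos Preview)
Your proposal is correct and follows essentially the same route as the paper: use the defining supporting functional $\vw$ of the face $F$ to scalarize, identify $\Pi_F$ with $\arg\max_{\pi\in\Pi_D}\vw^{\top}J^{\pi}$, and then invoke \cref{lem:optimal_consistency} (via $\mu>\vzero$) together with \cref{lem:optimal_convex_hull} to conclude that the set of all optimal policies is exactly $\conv(\Pi_F)$ in policy space and hence its image under $\pi\mapsto J^{\pi}$ is $F$. The paper's proof is just a terser version of this argument (it does not split into two inclusions and does not separately discuss the sign of $\vw$, but the logic is the same), and your remark that strict positivity of $\vw$ is unnecessary is correct and matches the paper's implicit use of \cref{lem:optimal_consistency}.
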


\begin{proof}
    By \cref{lem:Jmu_polytope_vertices_deterministic}, $\sJ$ is a convex polytope, and its vertices correspond to deterministic policies. Since $F$ is a face of the convex polytope $\sJ$, by the definition of a face, there exist $\vw$ and $c$ such that $F = \left\{ \vx \in \mathbb{R}^D \mid \vw^{\top}\vx = c \right\} \cap \sJ$, 
    and for any $\vy \in \sJ$, $\vw^{\top} \vy \leq c$.

    Therefore, $\Pi_F = \arg\max_{\pi\in\Pi_D} \vw^{\top} J^{\pi}$, where $\Pi_D$ denotes the set of all deterministic policies. By \cref{lem:optimal_convex_hull}, the convex hull of $\Pi_D$, i.e., 
    $ \left\{\pi \mid \pi(a|s)=\sum_{\pi_i\in\Pi_F} \boldsymbol{\alpha}_i \pi_i(a|s), \boldsymbol{\alpha} \geq 0, \sum_{i}\boldsymbol{\alpha}_i = 1\right\}$,
    constructs all policies (including stochastic and deterministic) that maximize $\vw^{\top} J^{\pi}$. Thus, the long-term returns of $ \left\{\pi \mid \pi(a|s)=\sum_{\pi_i\in\Pi_F} \boldsymbol{\alpha}_i \pi_i(a|s), \boldsymbol{\alpha} \geq 0, \sum_{i}\boldsymbol{\alpha}_i = 1\right\}$ construct $F$.
    Formally, $F = \left\{ J^{\pi} \mid \pi(a|s) = \sum_{\pi_i \in \Pi_F} \boldsymbol{\alpha}_i \pi_i(a|s), \boldsymbol{\alpha} \geq 0, \sum_{i} \boldsymbol{\alpha}_i = 1 \right\}$.
    This concludes the proof.
\end{proof}

\section{Complexity Comparison between OLS and \texorpdfstring{\cref{alg:pareto_front_searching}}{Algorithm 1}}
\label{appendix:comp_complexity}
% {\color{blue}
Let $S$, $A$, and $D$ represent the state space size, the action space size, and the reward dimension, respectively.  Let $N$ denote the number of vertices on the Pareto front. 

For each iteration, OLS solves the single-objective MDP with a preference vector, removes obsolete preference vectors with the new points, and computes new preference vectors. Solving the single-objective planning problem incurs a complexity of $\mathcal{O}(C_{\text{so}}+C_{\text{pe}})$, where $C_{\text{so}}$ and $C_{\text{pe}}$ represent the computational complexity of single-objective MDP solver and policy evaluation, respectively. The complexity of comparing with the previous candidate Pareto optimal policies and removing obsolete ones is $\mathcal{O}(ND)$, and the complexity of calculating a new preference vector is denoted as $C_{\text{we}}$. 
The computational complexity of OLS is $\mathcal{O}(N^2D+NC_{\text{we}}+NC_{\text{so}}+NC_{\text{pe}})$.

\cref{alg:pareto_front_searching} requires only a one-time single-objective planning step during the initialization phase to obtain the initial deterministic Pareto-optimal policy.  
The overall computational complexity of our algorithm is $\mathcal{O}(C_{\text{so}}+NSAC_{\text{pe}}+NC_{\nd}+NC_{\text{cvx}})$, where $C_{\text{ND}}$ and $C_{\text{cvx}}$ represent the computational complexity to select the non-dominated policies and computing faces of the convex hull, respectively.

Specifically, we dive deeper into the computational complexity of each module. 
\begin{itemize}
    \item $C_{\text{so}}$. The computational complexity of single-objective planning is $C_{\text{so}} = \mathcal{O}\left(S^4A+S^3A^2\right)$.
    \item $C_{\text{pe}}$. The computational complexity of the policy evaluation is $C_{\text{pe}} = \mathcal{O}(S^3D)$.
    \item $C_{\text{ND}}$. $C_{\text{ND}}$ refers to the complexity of finding $\Pi_{1,\text{ND}}$, the set of non-dominated policies from $\Pi_1$. Let $N_{\text{ND}} = |\Pi_{1,\text{ND}}| = $. The non-dominated policies searching complexity is $C_{\text{ND}} = \mathcal{O}(SAN_{\text{ND}}D)$.
    \item $C_{\text{cvx}}$. $C_{\text{cvx}}$ is the complexity of computing the convex hull faces formed by $N_{\text{ND}}+1$ points that intersect at the current policy. If the number of vertices of the output convex hull is $N_{\text{cvx}}$. By [1], the complexity of the calculation of convex hull is $\mathcal{O}(N_{\text{ND}}\log N_{\text{cvx}})$ when $D\leq 3$ and $\mathcal{O}(N_{\text{ND}}f_{N_{\text{cvx}}}/N_{\text{cvx}})$, where $f_L=\mathcal{O}(L^{\lfloor D/2\rfloor}/(\lfloor D/2\rfloor!))$.
\end{itemize}
As \cref{alg:pareto_front_searching} directly searches the Pareto front, the total number of iterations corresponds to the number of vertices on the Pareto front, avoiding the quadratic complexity introduced by the preference vector updates in OLS.
Moreover, note that calculating new weight in OLS requires calculating the vertices of the convex hull constructed by $N+D+2$ hyperplanes of dimension $D+1$, making it significantly more expensive than the local convex hull construction of neighboring policies in our proposed algorithm. Hence, $C_{we}$ is much larger than $C_{\text{cvx}}$.

In conclusion, our algorithm induces total computational complexity of $\mathcal{O}(S^4A+S^3A^2+NS^4A+NSAN_{\text{ND}}D+C_{\text{cvx}})$, while OLS has computation complexity $\mathcal{O}(N(S^4A+S^3A^2)+NS^3+N^2D+NC_{\text{we}})$. Given $C_{\text{cvx}}<C_{\text{we}}$, our algorithm is more efficient than OLS in all cases.

% }

\section{Supporting Algorithms}
For completeness, we provide the PPrune algorithm~\citep{roijers2016multi}, which identifies the non-dominated vector value functions $\sV^*$ from a given set of vector value functions $\sV$. Note that, with slight abuse of notation, 
$\sV$ here refers to the set of vector value functions rather than the set of vertices used in the main text.
\begin{algorithm}[htbp]
\caption{PPrune($\sV$)~\citep{roijers2016multi}}
\label{alg:PPrune}
\begin{algorithmic}[1]
\Statex \textbf{Input:} A set of value vectors $\sV$
\State $\sV^* \gets \varnothing$
\While{$\sV \neq \varnothing$}
    \State $V \gets$ the first element of $\sV$
    \ForAll{$V' \in \sV$}
        \If{$V' \succ V$}
            \State $V \gets V'$ 
            % \Comment{Continue with $V'$ instead of $V$}
        \EndIf
    \EndFor
    \State Remove $V$ and all vectors dominated by $V$ from $\sV$
    \State Add $V$ to $\sV^*$
\EndWhile
\State \Return $\sV^*$
\end{algorithmic}
\end{algorithm}

The benchmark algorithm retrieves the Pareto front in two steps: (1) evaluate all deterministic policies and compute the convex hull of all non-dominated deterministic policies and (2) apply \cref{alg:add_faces} to identify the Pareto front. The details of the benchmark algorithm are shown in \cref{alg:benchmark_algorithm}.

\begin{algorithm}[htbp]
\caption{Benchmark Algorithm}
\begin{algorithmic}[1]
\Statex \textbf{Input:} MDP settings: $(\mcalS,\mcalA,\rmP,\vr,\gamma)$
\Statex \textbf{Output:} the set of Pareto optimal faces $\pareto(\Pi)$ and the set of deterministic Pareto optimal policies $\sV(\Pi)$
\State Generate all deterministic policies $\Pi_D$.
\State Select non-dominated policies $\Pi_{D,\nd}$ from $\Pi_D$.
\State Calculate the convex hull formed by $\{J^{\pi}\mid \pi \in \Pi_{D,\nd}\}$.
\State Extract Pareto optimal faces $\pareto(\Pi)$ and vertices $\sV(\Pi)$ from the convex hull with \cref{alg:add_faces}.
\end{algorithmic}
\label{alg:benchmark_algorithm}
\end{algorithm}

\section{Experiment Results}
We adopt the deep-sea-treasure setting from \cite{yang2019generalized} and construct a scenario where the agent navigates a grid world, making trade-offs between different objectives. At each step, the agent can move up, down, left, or right, and the reward for each grid is generated randomly. We compare the performance of the OLS algorithm and the proposed Pareto front searching algorithm under different grid sizes (varying numbers of rows and columns). The experiments were conducted on smaller grid sizes to ensure that the running time of OLS remained manageable. As shown in the \cref{fig:grid}, the proposed algorithm consistently outperforms OLS in terms of efficiency across all tested settings. 

We evaluate the trend of the number of vertices $N$ in terms of $S$ and $A$ with the state space size ranging from 10 to 50 and the action space sizes ranging from 10 to 20. The reward dimension is 3. As shown in \cref{fig:action_vertice_num}, the result shows that the number of vertices grows approximately as $N\approx kS^3A$, where $k$ is a constant number. Combined with the computational complexity analysis $\mathcal{O}(S^4A+S^3A^2+NS^4A+NSAN_{\text{ND}}D+C_{\text{cvx}})$, this demonstrates the scalability of our algorithm.

\begin{figure}[htbp]
    \centering
    \begin{minipage}[t]{0.45\linewidth}
        \centering
        \includegraphics[width=\linewidth]{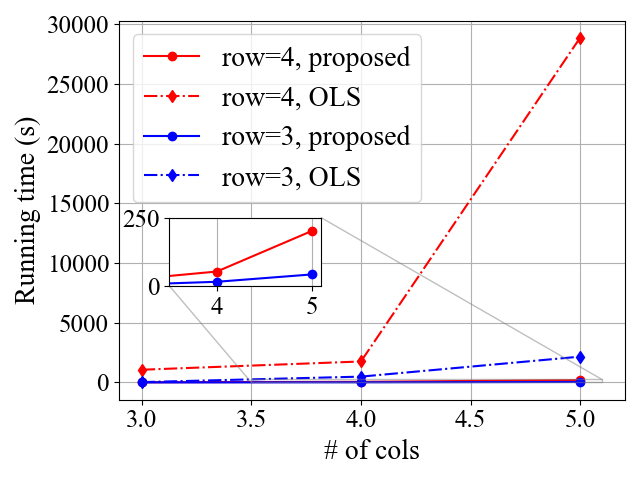}  
        \caption{Pareto front of a simple MDP with $S=4$, $A=3$, and $D=3$.}
        \label{fig:grid} 
    \end{minipage}
    \hfill
    \begin{minipage}[t]{0.45\linewidth}
        \centering
        \includegraphics[width=\linewidth]{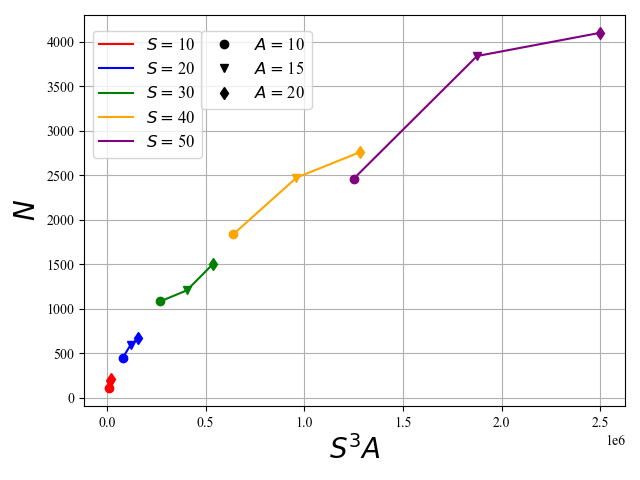}  
        \caption{Comparison between the proposed Pareto front searching algorithm and the benchmark algorithm when $D=3$.}
        \label{fig:action_vertice_num}  
    \end{minipage}
    \hfill  
    \hfill  
\end{figure}

\end{document}